\newcommand{\ArchCondText}{Architecture condition}
\newtheorem{theorem}{Theorem}
\newtheorem{corollary}{Corollary}
\newtheorem{proposition}{Proposition}
\newtheorem{lemma}{Lemma}
\newtheorem{fact}{Fact}
\theoremstyle{definition}
\newtheorem{definition}{Definition}
\theoremstyle{remark}
\newtheorem*{remark}{Remark}
\renewcommand{\cite}{\citep}
\newcommand\blfootnote[1]{%
  \begingroup
  \renewcommand\thefootnote{}\footnote{#1}%
  \addtocounter{footnote}{-1}%
  \endgroup
}
\author{
Isao Ishikawa\textsuperscript{$\dagger$}\\
Ehime University, RIKEN\\
\texttt{ishikawa.isao.zx@ehime-u.ac.jp}\\
\And
Takeshi Teshima\textsuperscript{$\dagger$}\thanks{This work was done when the author was with The University of Tokyo and RIKEN.}\\
The University of Tokyo, RIKEN\\
\texttt{takeshi.teshima@a.riken.jp}\\
\And
Koichi Tojo\\
RIKEN\\
\texttt{koichi.tojo@riken.jp}\\
\AND
Kenta Oono\\
RIKEN\\
\texttt{kenta.oono@a.riken.jp}\\
\And
Masahiro Ikeda\\
RIKEN\\
\texttt{masahiro.ikeda@riken.jp}\\
\And
Masashi Sugiyama\\
RIKEN, The University of Tokyo\\
\texttt{sugi@k.u-tokyo.ac.jp}
}
\newcommand{\titleSentenceCase}{Universal approximation property of invertible neural networks}
\title{\titleSentenceCase}
\title{Universal approximation property of invertible neural networks}
\newcommand{\acknowledgmentContent}{
We would also like to thank Dr.~Taiji Suzuki, Associate Professor of the University of Tokyo, for his valuable comments and fruitful discussions on the distributional universality.
TT was supported by RIKEN Junior Research Associate Program and Masason Foundation.
II and MI were supported by CREST: JPMJCR1913.
II was supported by ACTX: JPMJAX2004.
MS was supported by KAKENHI 20H04206.
}
\date{}
\newcommand{\ARFINN}{CF-INN}
\newcommand{\ARFINNs}{\ARFINN{}s}
\newcommand{\INNs}{INNs}
\newcommand{\CF}{CF}
\newcommand{\CFs}{\CF{}s}
\newcommand{\ACF}{ACF}
\newcommand{\ACFs}{\ACF{}s}
\newcommand{\Gronwall}{Gr\"{o}nwall}
\newcommand{\status}[1]{} 
\newcommand{\Restrict}[2]{#1\vert_{#2}}
\newcommand{\Jac}[1]{D#1}
\newcommand{\vol}[1]{{\rm vol}(#1)}
\newcommand{\pushforward}[2]{{#1}_{*}{#2}}
\newcommand{\upto}[2]{{#2_{{}\leq #1}}}
\newcommand{\from}[2]{{#2_{{}> #1}}}
\newcommand{\LipConst}[1]{L_{#1}}
\newcommand{\supp}[1]{\mathrm{supp}\ #1}
\newcommand{\pdiff}[2]{\partial^{#1}#2}
\newcommand{\truncate}[2]{#2\vert_{#1}}
\newcommand{\diam}[1]{\mathrm{diam}(#1)}
\newcommand{\interior}[1]{{#1}^\circ}
\newcommand{\Identity}{\mathrm{Id}}
\newcommand{\Indicator}[1]{\mathbf{1}_{#1}}
\newcommand{\x}{\mbox{\boldmath $x$}}
\newcommand{\y}{\mbox{\boldmath $y$}}
\newcommand{\ba}{\mbox{\boldmath $a$}}
\def \R {\mathbb{R}}
\def \Re {\mathbb{R}}
\newcommand{\ReD}{\Re^d}
\newcommand{\ReDminus}{\Re^{d-1}}
\newcommand{\Na}{\mathbb{N}}
\newcommand{\Aff}[3]{\Psi_{#1,#2,#3}}
\newcommand{\CcinftyRDminus}{C^\infty_c(\ReDminus)}
\newcommand{\Lipsp}{{\rm Lip}}
\newcommand{\IPM}[2]{\mathrm{IPM}_{#1}{\left(#2\right)}}
\newcommand{\norm}[2]{\left\Vert#1\right\Vert_{#2}}
\newcommand{\inftynorm}[1]{\norm{#1}{\sup}}
\newcommand{\LpKnorm}[1]{\norm{#1}{K,0,p}}
\newcommand{\supRangenorm}[2]{\norm{#2}{#1,0,\infty}}
\newcommand{\LpRangenorm}[2]{\norm{#2}{#1,0, p}}
\newcommand{\supKnorm}[1]{\supRangenorm{K}{#1}}
\newcommand{\Euclideannorm}[1]{\norm{#1}{}}
\newcommand{\opnorm}[1]{\norm{#1}{\mathrm{op}}}
\newcommand{\WspKnorm}[4]{\norm{#4}{#1,#2,#3}}
\newcommand{\Lipnorm}[1]{\norm{#1}{\mathrm{Lip}}}
\newcommand{\bddLipnorm}[1]{\norm{#1}{\mathrm{BL}}}
\newcommand{\zeroMeasure}{\mathbf{0}}
\newcommand{\INNModelGeneric}{\mathcal{M}}
\newcommand{\FLin}{\mathrm{Aff}}
\newcommand{\FGL}{\mathrm{GL}}
\newcommand{\INN}[1]{\mathrm{INN}_{#1}}
\newcommand{\AutoRegressiveFn}{h}
\newcommand{\ACFINNUniversalClass}{\mathcal{H}}
\newcommand{\ARFINNFlow}{\mathcal{G}}
\newcommand{\FDSF}{\mathrm{DSF}}
\newcommand{\FSoS}{\mathrm{SoS}}
\newcommand{\HSoS}{\mathcal{H}\text{-}{\rm SoS}}
\newcommand{\FSACFH}{\FSACFcmd{\ACFINNUniversalClass}}
\newcommand{\FACFHINN}{\INN{\FSACFH}}
\newcommand{\ACFHINN}{\(\FACFHINN\)}
\newcommand{\AutoRegressive}[3]{\AutoRegressiveFn{}_{#1,#2,#3}}
\newcommand{\FSACFcmd}[1]{#1\text{-}\mathrm{ACF}}
\newcommand{\NODEJacobiFuncClass}{\mathcal{H}}
\newcommand{\INNHNODE}{\INN{\Psi(\mathcal{H})}}
\newcommand{\IVPFunc}[1]{\mathrm{IVP}[#1]}
\newcommand{\IVP}[3]{\IVPFunc{#1}(#2, #3)}
\newcommand{\ODEFlowEnds}[1]{\Psi(#1)}
\newcommand{\targetError}[1]{\Delta_{\x_0}(#1)}
\newcommand{\BoundDef}{\delta e^{\LipConst{F}}}
\newcommand{\Bound}{B}
\newcommand{\CtwoDomainDiff}{{\mathcal{D}^2}}
\newcommand{\CrDomainDiff}{{\mathcal{D}^r}}
\newcommand{\DrV}[2]{{\mathcal{D}^{#1}_{#2}}}
\newcommand{\DcRDCmd}[1]{\mathrm{Diff}^{#1}_\mathrm{c}}
\newcommand{\DcRD}{\DcRDCmd{2}}
\newcommand{\DcRDr}{\DcRDCmd{r}}
\newcommand{\FlowEnds}[1]{\Xi^{#1}}
\newcommand{\OneDimTriangularCmd}[1]{\mathcal{S}^{#1}_\mathrm{c}}
\newcommand{\CinftyOneDimTriangular}{\OneDimTriangularCmd{\infty}}
\newcommand{\ConeOneDimTriangular}{\OneDimTriangularCmd{1}}
\newcommand{\CzeroOneDimTriangular}{\OneDimTriangularCmd{0}}
\newcommand{\OneDimTriangular}{\CinftyOneDimTriangular}
\newcommand{\CtwoOneDimTriangular}{\OneDimTriangularCmd{2}}
\newcommand{\CrOneDimTriangular}{\OneDimTriangularCmd{r}}
\newcommand{\Triangular}{\mathcal{T}^\infty}
\newcommand{\vIPMClass}{\mathcal{F}}
\newcommand{\vIPMFn}{f}
\newcommand{\vIPMBaseSp}{\mathcal{X}}
\newcommand{\vIPMBaseDist}{\rho}
\newcommand{\vRKHS}{\mathcal{H}}
\newcommand{\RKHSnorm}[1]{\norm{#1}{\vRKHS}}
\newcommand{\vRKHSkern}{k}
\newcommand{\cTextDudley}{\mathrm{Dud}}
\newcommand{\cTextOneWasserstein}{W_1}
\newcommand{\cTextTV}{\mathrm{TV}}
\newcommand{\cTextMMD}{{\mathrm{MMD}}}
\newcommand{\cIPMDudley}{\vIPMClass_{\cTextDudley}}
\newcommand{\cIPMOneWasserstein}{\vIPMClass_{\cTextOneWasserstein}}
\newcommand{\cOneWassersteinDists}{\mathcal{P}_{\cTextOneWasserstein}}
\newcommand{\cIPMTV}{{\vIPMClass_{\cTextTV}}}
\newcommand{\cIPMMMD}{{\vIPMClass_{\cTextMMD}}}
\begin{document}

\maketitle
\blfootnote{\textsuperscript{$\dagger$}Equal contribution.}
\begin{abstract}
Invertible neural networks (INNs) are neural network architectures with invertibility by design.
Thanks to their invertibility and the tractability of Jacobian, INNs have various machine learning applications such as probabilistic modeling, generative modeling, and representation learning.
However, their attractive properties often come at the cost of restricting the layer designs, which poses a question on their representation power: can we use these models to approximate sufficiently diverse functions?
To answer this question, we have developed a general theoretical framework to investigate the representation power of INNs, building on a structure theorem of differential geometry.
The framework simplifies the approximation problem of diffeomorphisms, which enables us to show the universal approximation properties of INNs.
We apply the framework to two representative classes of INNs, namely Coupling-Flow-based INNs (CF-INNs) and Neural Ordinary Differential Equations (NODEs), and elucidate their high representation power despite the restrictions on their architectures.
\end{abstract}

\section{Introduction}
\emph{Invertible neural networks} (INNs) are neural network architectures with invertibility by design.
They are often endowed with tractable algorithms to compute the inverse map and the Jacobian determinant, such as their explicit formulas.
These characteristics of INNs have enabled a series of new techniques in various machine learning tasks, e.g., generative modeling \citep{DinhDensity2016a,KingmaGlow2018,OordParallel2018,KimFloWaveNet2019,ZhouDensity2019}, probabilistic inference \citep{pmlr-v89-bauer19a,WardImproving2019,pmlr-v70-louizos17a}, solving inverse problems \citep{ArdizzoneAnalyzing2018b}, feature extraction and manipulation \citep{KingmaGlow2018,NalisnickHybrid2019,IzmailovSemisupervised2020,TeshimaFewshot2020}, quantum field theory \citep{AlbergoFlowbased2019}, modeling non-linear dynamics \citep{BevandaKoopmanizingFlows2021,BevandaLearning2021}, and 3D point cloud generation \citep{YangPointFlow2019,KimSoftFlow2020,KimuraChartPointFlow2021}.

INNs have been realized by the careful designs of the special invertible layers called the \emph{flow layers}.
Examples of flow layer designs include \emph{coupling flows} (CFs; \citealp{PapamakariosNormalizing2019,KobyzevNormalizing2019}) and \emph{neural ordinary differential equations} (NODEs; \citealp{ChenNeural2018a}).
CFs employ a highly restricted network architecture in which only some of the input variables undergo some transformations, and the rest of the input variables become the output as-is without being transformed (Section~\ref{sec: CF-INN definitions}).
Also, NODEs offer flow layers by indirectly modeling an invertible function by transforming an input vector through an ordinary differential equation (ODE).
To construct more flexible INNs, multiple such flow layers are composed as well as invertible affine transformation layers.
Moreover, a variety of CF layer designs have been proposed to construct \ARFINNs{} with high representation power, e.g., the affine coupling flow \citep{DinhNICE2014a,DinhDensity2016a,KingmaGlow2018,PapamakariosMasked2017a,KingmaImproved2016}, the neural autoregressive flow \citep{HuangNeural2018b,CaoBlock2019,HoFlow2018}, and the polynomial flow \citep{DBLP:conf/icml/JainiSY19}, each demonstrating enhanced empirical performance.

However, despite the diversity of flow-layer designs \citep{PapamakariosNormalizing2019,KobyzevNormalizing2019}, and their popularity in practice, the theoretical understanding of the representation power of INNs had been limited.
Indeed, the most basic property as a function approximator, namely the \emph{universal approximation property} (or \emph{universality} for short) \citep{CybenkoApproximation1989,HornikMultilayer1989}, had not been elucidated until recently \citep{TeshimaCouplingbased2020,TeshimaUniversal2020,FunahashiApproximate1989}. The universality can be crucial when INNs are used to learn an invertible transformation such as feature extraction \citep{NalisnickHybrid2019} or independent component analysis \citep{TeshimaFewshot2020} because, informally speaking, lack of universality implies that there exists an invertible transformation, even among well-behaved ones, that the INN can never approximate. It would render the model class unreliable for the task of function approximation.

In this work, we show the high representation power of some representative architectures of CF-based INNs and NODE-based INNs by showing their universal approximation properties for a fairly large class of \emph{diffeomorphisms}, i.e., smooth invertible maps with smooth inverse.
The present article is an extended version of \citet{TeshimaCouplingbased2020} and \citet{TeshimaUniversal2020}, but with substantial extensions.
First, we extend the theoretical framework of \citet{TeshimaCouplingbased2020} by taking into account the approximation of the \emph{derivatives} in addition to the function values.
Investigating the representation power to approximate the derivatives can be important in providing machine learning methods with theoretical guarantees. For example, in \citet[Appendix~C.7.]{TeshimaFewshot2020}, the Sobolev norm has been used to characterize the approximation error of an invertible model.

By such an extension, we also strengthen the theoretical guarantees for the distributional approximation using INNs.
Whereas the preliminary version of the framework in \citet{TeshimaCouplingbased2020} could only guarantee the approximation capability in terms of the weak convergence topology, the present framework can elucidate the universality in terms of the \emph{total variation} distance of distributions.
Approximation in total variation distance is a stronger notion that can be useful in providing machine learning algorithms with theoretical guarantees.
See Remark~\ref{rem: IPM implications} in Appendix~\ref{appendix: IPM}.

The difficulty in proving the universality of INNs comes from two complications. (i)~Only function composition can be leveraged to make accurate approximators (e.g., a linear combination of sub-networks is not allowed, as opposed to standard fully-connected neural networks).
(ii)~INNs have architecture-specific inflexibility: CF layers have restricted function forms and NODE layers can only model functions that can be realized by differential equations.
We overcome these complications by problem reduction: we decompose a general diffeomorphism into much simpler ones by using a structural theorem of differential geometry that untangles the structure of a certain diffeomorphism group.
By showing that CF layers and NODE layers can approximate the simple components of the target diffeomorphism, we prove the universality results.

We first provide a general theorem that shows the equivalence of the universality for certain diffeomorphism classes, which can be used to reduce the approximation of a general diffeomorphism to that of a much simpler one.
Then, by leveraging this problem reduction, we show that certain example CF layer designs and NODE result in universal approximators for a general class of diffeomorphisms.

\paragraph{Our contributions.}
Our contributions are summarized as follows.
\begin{enumerate}
  \item We present a theorem to show the equivalence of universal approximation properties for certain classes of functions. The result enables the reduction of the task of proving the universality for general diffeomorphisms to that for much simpler coordinate-wise ones (Theorem~\ref{thm:sobolev-universality-equivalence}.)
        It generalizes and unifies the equivalence theorems previously shown by \citet{TeshimaCouplingbased2020} and \citet{TeshimaUniversal2020}.
  \item We relate functional universality (i.e., universality for approximating functions) to distributional universality (i.e., universality for approximating distributions by pushforward). We introduce a new type of functional approximation property, namely \emph{Sobolev universality}, which is a stronger notion of what has been previously considered by \citet{TeshimaCouplingbased2020} and \citet{TeshimaUniversal2020}. Then, we show Sobolev universality implies the distributional universality in terms of the \emph{weak} topology (Corollary \ref{cor: distributional universality for INN}) and the topology induced by the \emph{total variation} norm (Corollary \ref{cor: total variation distributional universality for INN}) under appropriate assumptions.
  \item We show that the INNs based on certain CF architectures have the Sobolev universality, implying they may be more suitable choices for obtaining theoretical guarantees in the machine learning tasks that require the approximation of derivatives.
\end{enumerate}

\paragraph{Notation}
We list the mathematical notations we use in this paper in the notation tables in Appendix.
We also summarize several mathematical notions and their properties in Appendix \ref{sec:appendix:piecewise diffeo}.

\section{Preliminaries and Related Work}
\label{sec:preliminary}
In this section, we describe the models analyzed in this study, the notion of universality, and related work. 

\subsection{Invertible Layers}
We introduce several invertible layers we consider in this paper, which constitute invertible neural networks.

\subsubsection{Coupling-flow Based Invertible Neural Networks (CF-INNs)}
\label{sec: CF-INN definitions}
We fix $d \in \Na$ and assume $d \geq 2$.
For a vector \(\x \in \ReD\) and \(k\in[d-1]\), we define \(\upto{k}{\x}\) as the vector \((x_1, \ldots, x_k)^\top \in \Re^k\) and \(\from{k}{\x}\) the vector \((x_{k+1}, \ldots, x_d)^\top \in \Re^{d-k}\).

\begin{definition}[Coupling flows]
We define a coupling flow (\CF{}) \citep{PapamakariosNormalizing2019} $\AutoRegressive{k}{\tau}{\theta}$ by
\(\AutoRegressive{k}{\tau}{\theta}(\upto{k}{\x}, \from{k}{\x}) 
= (\upto{k}{\x}, \tau(\from{k}{\x}, \theta(\upto{k}{\x})))\),
where \(k\in[d-1]\), \(\theta\colon \Re^{k} \to \Re^l\) and \(\tau: \Re^{d-k} \times \Re^l \to \Re^{d-k}\) are maps, and \(\tau(\cdot, \theta(\bm{y}))\) is an invertible map for any $\bm{y}\in \Re^{k}$.
\end{definition}

One of the most standard types of \CFs{} is \emph{affine coupling flows} \citep{DinhDensity2016a, KingmaGlow2018, KingmaImproved2016, PapamakariosMasked2017a}.
\begin{definition}[Affine coupling flows]
We define an affine coupling (ACF) flow by the map \(\Aff{k}{s}{t}\) from $\ReD$ to  \( \ReD\) such that \[\Aff{k}{s}{t}(\upto{k}{\x}, \from{k}{\x}) = (\upto{k}{\x}, \from{k}{\x} \odot \exp(s(\upto{k}{\x})) + t(\upto{k}{\x})),\]
where \(k \in [d-1]\), $\odot$ is the Hadamard product, \(\exp\) is applied in an element-wise manner, and \(s,t:\mathbb{R}^{k}\to \mathbb{R}^{d-k}\) are maps.
\end{definition}
The maps $s$ and $t$ are typically parametrized by neural networks.
 
\begin{definition}[Single-coordinate affine coupling flows]
Let $\ACFINNUniversalClass$ be a set of functions from $\R^{d-1}$ to \(\Re\). We define the set of \emph{$\ACFINNUniversalClass$-single-coordinate affine coupling flows} as a subclass of ACFs by $\FSACFH:=\{\Aff{d-1}{s}{t}: s,t\in\ACFINNUniversalClass\}$.
\end{definition}
\(\FSACFH\) is the least expressive flow design appearing in this paper. However, we show in Section~\ref{sec: main result 2} that it can form a \ARFINN{} with universality.
Later, we require various regularity conditions on \(\ACFINNUniversalClass\) depending on the type of universality we want to show.

\subsubsection{Neural ordinary differential equations (NODEs)}
Here, we define the family of NODEs considered in the present paper.
NODE is based on the following fact that any \emph{autonomous} ODE (i.e., an ODE is defined by a time-invariant vector field) with a Lipschitz continuous vector field has a solution and that the solution is unique:
\begin{fact}[Existence and uniqueness of a global solution to an ODE]\label{fact:ODE solution exists for Lip}
Let $\,f \in  \Lipsp{}$.
Then, a solution $z\colon \Re\to \R^d$ to the following ODE exists and it is unique:
\begin{equation}\label{eq:initial value problem}
z(0) = \x, \quad \dot{z}(t)= f(z(t)), \quad t \in \Re,
\end{equation}
where $\x\in \R^d$, and $\dot{z}$ denotes the derivative of $z$ (see \citet{Derrickglobal1976} for example).
\end{fact}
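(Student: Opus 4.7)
The plan is to prove Fact~\ref{fact:ODE solution exists for Lip} by the classical Picard--Lindel\"of argument, upgraded from local to global using the \emph{global} Lipschitz hypothesis \(f\in\Lipsp\) to rule out finite-time blow-up. The proof splits naturally into three stages: local existence via a contraction mapping, uniqueness via Gr\"onwall's inequality, and a continuation argument yielding a solution on all of \(\Re\).

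For local existence, I would reformulate the initial value problem as the integral equation \(z(t)=\x+\int_0^t f(z(s))\,ds\) and, for a small \(T>0\) to be chosen, define the Picard operator \((\mathcal{T} z)(t):=\x+\int_0^t f(z(s))\,ds\) on the Banach space \(C([-T,T];\ReD)\) equipped with the supremum norm. Writing \(\LipConst{f}\) for the Lipschitz constant of \(f\), the estimate \(\sup_{|t|\leq T}|\mathcal{T}z_1(t)-\mathcal{T}z_2(t)|\leq \LipConst{f}\,T\sup_{|t|\leq T}|z_1(t)-z_2(t)|\) follows immediately from the Lipschitz bound, so choosing \(T<1/\LipConst{f}\) makes \(\mathcal{T}\) a strict contraction. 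Banach's fixed point theorem then supplies a unique continuous fixed point, which is automatically \(C^1\) by the integral equation and solves the ODE on \([-T,T]\).

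For uniqueness, given any two solutions \(z_1,z_2\) on a common interval containing \(0\), the nonnegative function \(u(t):=|z_1(t)-z_2(t)|\) satisfies \(u(t)\leq \LipConst{f}\int_0^{|t|}u(s)\,ds\) by subtracting the two integral equations and applying the Lipschitz bound, so Gr\"onwall's inequality forces \(u\equiv 0\). In particular, any two local solutions agree on their common domain of existence, giving global (in the sense of uniqueness on any interval of existence) uniqueness.

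Finally, to extend to all of \(\Re\), I would use an a priori bound to exclude finite-time blow-up: from \(|f(z)|\leq |f(0)|+\LipConst{f}|z|\) together with the integral equation, Gr\"onwall yields a bound \(|z(t)|\leq C_T\) uniformly on any bounded interval \(|t|\leq T\) of existence. If the maximal interval of existence were \((a,b)\) with \(b<\infty\), this bound would keep \(z(t)\) in a compact set as \(t\to b^-\), so reapplying the local existence theorem at a time just before \(b\) would extend the solution past \(b\), contradicting maximality; symmetric reasoning handles the left endpoint, yielding a solution defined on all of \(\Re\). The only subtle point in the whole argument is precisely this step: it is the \emph{global} nature of the Lipschitz bound that is essential, since a merely locally Lipschitz \(f\) can produce solutions that escape every compact set in finite time (e.g.\ the scalar example \(\dot z=z^2\)).
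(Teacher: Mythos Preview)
Your argument is correct and is the standard Picard--Lindel\"of plus continuation proof. The paper does not actually give its own proof of this statement: it is presented as a \emph{Fact} with a citation to \citet{Derrickglobal1976}, so there is no paper-proof to compare against beyond noting that your write-up supplies exactly the classical argument that reference would contain.
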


In view of Fact~\ref{fact:ODE solution exists for Lip}, we use the following notation.
\begin{definition}[Autonomous-ODE flow endpoints; \citet{LiDeep2020}]
\label{def:ivp}
\label{def: autonomous ODE flow endpoints}
\sloppy
For $f \in \Lipsp{}$, $\x \in \R^d$, and $t \in \Re$, we define
\[
\IVP{f}{\x}{t} := z(t),
\]
where $z: \Re \to \R^d$ is the unique solution to Equation~\eqref{eq:initial value problem}.
Then, for $\mathcal{F}\subset \Lipsp{}$, we define
\[
\ODEFlowEnds{\mathcal{F}}:= \{\IVP{f}{\cdot}{1} \ |\ f\in \mathcal{F}\}.
\]
\end{definition}
Note that the elements of $\ODEFlowEnds{\mathcal{F}}$ are invertible.

\subsection{Invertible Neural Networks (INNs)}
We consider the INN architectures constructed by composing flow layers, defined as follows.
\begin{definition}[INNs]
\label{def: INNM}
Let $\ARFINNFlow$ be a set consisting of bijective maps on $\ReD$. We define the set of INNs based on \(\ARFINNFlow\) as
\begin{align}
    \INN{\ARFINNFlow} := \left\{ W_1\circ g_1\circ\cdots\circ W_n\circ g_n : \ n \in \Na, g_i\in \ARFINNFlow, W_i \in \FLin\right\}. \label{eq: def of INN}
\end{align}
\end{definition}

\begin{remark}
Previous studies such as \citet{KingmaGlow2018} used \(\FGL\) (see Table \ref{tbl:notation-table:specific} for its definition) in place of \(\FLin\) in the definition of \(\INN{\ARFINNFlow}\). This difference is not a problem in most cases.
For example, if there exists finite elements of \(\ARFINNFlow\) such that their composition equals the map $x \mapsto x + b$ for an arbitrary vector $b \in \ReD$, then, replacing \(\FLin\) with \(\FGL\) does not change the function set \(\INN{\ARFINNFlow}\).
In fact, when \(\ARFINNFlow\) contains \(\FSACFH\) with minimal requirements on $\mathcal{H}$,  we can further reduce the set of linear transformations for INNs from \(\FLin\) to the symmetric group $\mathfrak{S}_d$, that is, the permutations of variables.
See Appendix~\ref{sec:appendix:elementary matrix} for details.
\end{remark}

\subsection{Universal Approximation Properties}
Here, we clarify the notions of universality in this paper.
The definitions use general topological terms, generalizing the $L^p$-universality and $\sup$-universality in \citet{TeshimaCouplingbased2020,TeshimaUniversal2020}.

\subsubsection{Functional universality}
We define the notion of universality for sets of functions, which is a key notion in this paper.
Roughly speaking, a model class is universal for a set of target functions if one can always find a model in the proximity of any target function.
The notion of proximity is stated in general terms of topology.

\begin{definition}[General functional universality]
\label{def: Lp univ. approx.}
Let $U$ be a subset of $\Re^m$ and let $\mathcal{F}_0$ be an $\Re^n$-valued function space on $U$ with some topology and let $\mathcal{F} \subset \mathcal{F}_0$ be a subset.
Let $\INNModelGeneric$ be a model, which is a set of measurable maps from $\Re^m$ to $\Re^n$.  
We say that $\INNModelGeneric$ is an $\mathcal{F}_0$-universal approximator for $\mathcal{F}$ (or has an  $\mathcal{F}_0$-universal approximation property for $\mathcal{F}$),
if $\{g|_U : g \in \INNModelGeneric\}$ is a subset of $\mathcal{F}_0$ and its closure contains $\mathcal{F}$. 
\end{definition}

It is well-known that 2-layer neural networks with suitable activation functions are universal, namely, they can approximate any continuous functions on any compact set in $\ReD$ (see, e.g., \citet{CybenkoApproximation1989}).
In the manner of Definition \ref{def: Lp univ. approx.}, we can translate this fact into the $C^0(\ReD)$-universal approximation property of 2-layer neural networks for $C^0(\ReD)$, where we equip $C^0(\ReD)$ with the topology with semi-norms composed of the sup norms on compact sets.

As an example of $\mathcal{F}_0$, we typically use the $\mathbb{R}^n$-valued {\em local Sobolev space} $W^{r,p}_{\rm loc}(U, \mathbb{R}^n)$, which is roughly speaking the space of $r$-times (weakly-) differentiable measurable functions $f$ such that for any compact set $K \subset U$, $\|f\|_{K,r,p} < \infty$,
where
\[\|f\|_{K,r,p}
:=
\begin{cases}
\displaystyle \sum_{|\alpha|\le r } \left(\int_K \|\pdiff{\alpha}{f}(x)\|^p dx\right)^{1/p} & \text{ if }p<\infty,\\[3pt]
\displaystyle\sum_{|\alpha|\le r } {\rm ess.sup}_{x \in K} \|\partial^\alpha f(x)\| & \text{ if }p = \infty. 
\end{cases}
\]
Formally, we define the local Sobolev space as follows.
\begin{definition}[{\citealp[Appendix~B]{McDuffJholomorphic2004}}]
\sloppy
Let $U$ be a subset of $\mathbb{R}^m$, $r$ a non-negative integer, and $p\in [1, \infty]$.
We define the local Sobolev space $W^{r,p}_{\rm loc}(U, \mathbb{R}^n)$ by
\begin{align*}
W^{r,p}_{\rm loc}(U, \mathbb{R}^n) &:= \lim_{\underset{V}{\longleftarrow}} W^{r,p}(V, \mathbb{R}^n),
\end{align*}
where the right hand side is explicitly defined as the following set:
\begin{align*}
\left\{
(f_V)_V \in \prod_{\substack{V \subset U\text{\rm : open} \\ \overline{V} \subset U}} W^{r,p}(V, \Re^n)
:
f_{V_1}|_{V_2} = f_{V_2} \text{ if }V_2 \subset V_1
\right\}.
\end{align*}
Here, $W^{r,p}(V, \mathbb{R}^n)$ is the $\mathbb{R}^n$-valued Sobolev space on $V$.
We denote $W^{0,p}_{\rm loc}(U, \mathbb{R}^n)$ by $L^p_{\rm loc}(U, \Re^n)$.
\end{definition}

\begin{proposition}
Let $r \ge 1$ be an integer and let $U \subset \mathbb{R}^m$ be an open subset.
Let $f : U \rightarrow \mathbb{R}^n$ be locally $C^{r-1,1}$ (see Table~\ref{tbl:notation-table:specific} for the definition).
Then,  $f \in W^{r,\infty}_{\rm loc}(U, \Re^n)$.
\end{proposition}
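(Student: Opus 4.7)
The plan is to verify directly that $f$ defines an element of the inverse limit by showing, for every open $V \subset U$ with $\overline{V}$ compact and $\overline{V} \subset U$, that the restriction $f|_V$ lies in $W^{r,\infty}(V, \mathbb{R}^n)$. The compatibility condition $f_{V_1}|_{V_2} = f_{V_2}$ in the definition of the inverse limit is automatic since each $f_V$ is the restriction of the single global function $f$, so only the finiteness of the Sobolev norm on each $V$ is at issue.

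Fix such a $V$. Because $f$ is locally $C^{r-1,1}$ and $\overline{V}$ is compact, all classical partial derivatives $\partial^\alpha f$ with $|\alpha| \le r - 1$ exist, are continuous on a neighbourhood of $\overline{V}$, and are in particular bounded on $V$; moreover, for every multi-index $\alpha$ with $|\alpha| = r - 1$, the derivative $\partial^\alpha f$ is Lipschitz on $\overline{V}$ with some finite constant $L_\alpha$. This already controls every contribution of order $|\alpha| \le r-1$ to the Sobolev seminorm $\|f\|_{\overline{V},r,\infty}$.

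To handle the top-order terms $|\alpha| = r$, I would invoke the standard equivalence between Lipschitz continuity and membership in $W^{1,\infty}$, which follows from Rademacher's theorem together with its converse: a Lipschitz function on an open subset of $\mathbb{R}^m$ is differentiable almost everywhere, its classical derivative agrees almost everywhere with its distributional derivative, and the latter belongs to $L^\infty$ with essential supremum bounded by the Lipschitz constant. Applied componentwise to each Lipschitz function $\partial^\alpha f$ with $|\alpha| = r - 1$, this produces essentially bounded weak derivatives $\partial^\beta f$ for all $|\beta| = r$, with $L^\infty$ norm at most the corresponding $L_\alpha$. Combined with the previous bounds, this gives $\|f\|_{\overline{V}, r, \infty} < \infty$, hence $f|_V \in W^{r,\infty}(V, \mathbb{R}^n)$.

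The sole non-trivial ingredient is the Lipschitz--$W^{1,\infty}$ correspondence; everything else is routine bookkeeping across multi-indices and across the sub-domains $V$. I do not anticipate any substantive obstacle: the main care is simply to keep track of the varying $V$ and to identify, within the inverse-limit formulation, the global function $f$ with the collection $(f|_V)_V$.
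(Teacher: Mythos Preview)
Your proposal is correct and follows essentially the same route as the paper. The paper's proof simply cites a standard reference (Remark~2.12 of Ern--Guermond, \emph{Finite Elements I}, 2021) for the Lipschitz--$W^{1,\infty}$ correspondence and invokes induction on $r$; you have spelled out that same argument explicitly, handling all derivative orders at once rather than inductively. One minor remark: the claim that $\partial^\alpha f$ is globally Lipschitz on $\overline{V}$ is slightly stronger than what you actually need and can fail for geometrically bad $V$, but what the Sobolev norm requires is only that the weak $r$-th derivatives lie in $L^\infty(V)$, and this follows directly from the local Lipschitz bounds being uniform on a finite cover of $\overline{V}$.
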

\begin{proof}
It follows from Remark~2.12 of \citet{ErnFinite2021} and induction on \(r\).
\end{proof}

This proposition implies that usual models, for example, Multilayer perceptron (MLP) with rectifier linear unit (ReLU) activation functions, are contained in $W^{1,p}_{\rm loc}$ as they are usually locally Lipschitz (note that locally $C^{0, 1}$ means locally Lipschitz).
We call $W^{r,p}_{\rm loc}(U, \Re^n)$-universality the \emph{Sobolev} universality and introduce a special notion for simplicity:
\begin{definition}[$W^{r,p}$-universality and $L^p$-universality]
Notations are as in Definition \ref{def: Lp univ. approx.}.
Let $r$ be a non-negative integer and let $p \in [1,\infty]$.
We say a model $\mathcal{M}$ is a $W^{r,p}$-universal approximator for $\mathcal{F}$ (or has a $W^{r,p}$-universal approximation property for $\mathcal{F}$) if the model $\mathcal{M}$ is a $W^{r,p}_{\rm loc}(U, \mathbb{R}^n)$-universal approximator
for $\mathcal{F}$.
In the case of $r=0$, we use $L^p$- instead of $W^{0,p}$-, for example, we say an $L^p$-universal approximator instead of a $W^{0,p}$-universal approximator.
\end{definition}
\begin{remark}
\label{rem: remark for sup universality}
If $\mathcal{F}^0$ in Definition~\ref{def: Lp univ. approx.} is the space of locally bounded measurable maps with seminorms of \(\sup\) (not ess.sup) norms on compact sets, a model with $\mathcal{F}^0$-universal approximation property is called a $\sup$-universal approximator.
The notion of $\sup$-universality was introduced in \citet{TeshimaCouplingbased2020} and \citet{TeshimaUniversal2020} and is a slightly different concept from $L^\infty$-universality.
We mainly deal with $L^\infty$-universality in this paper.
\end{remark}

\subsubsection{Distributional universality}
We define the notion of distributional universality.
Distributional universality has been used as a notion of theoretical guarantees in the literature on normalizing flows, i.e., probability distribution models constructed using INNs~\citep{KobyzevNormalizing2019}.
We here provide a generalized version of the classical distributional universality as follows:
\begin{definition}[General distributional universality]
\label{def: dist. univ. approx.}
Let $\INNModelGeneric$ be a model which is a set of measurable maps from $\R^m$ to $\R^n$.
Let $\mathcal{P}_0$ be a set of probability measures on $\R^n$ with some topology.
Let $\mathcal{Q} \subset \mathcal{P}_0$ be a subset.
Fix probability measure $\mu_0$ on $\R^m$.
We say that a model $\INNModelGeneric$ is a \emph{$(\mathcal{P}_0, \mu_0)$-distributional universal approximator} for $\mathcal{Q}$ (or \emph{has the $(\mathcal{P}_0, \mu_0)$-distributional universal approximation property} for $\mathcal{Q}$) if $\{\pushforward{g}{\mu_0}: g \in \INNModelGeneric\} \subset \mathcal{P}_0$ and the closure of the set $\{ g_*\mu_0 : g \in \INNModelGeneric\}$ in $\mathcal{P}_0$ contains $\mathcal{Q}$. 
Here, $g_*\mu_0$ denotes the pushforward of $\mu_0$ by $g$.  
\end{definition}
\begin{remark}
\sloppy
When $\mathcal{P}_0 = \mathcal{Q} = \mathcal{P}^{\rm w}$ (see Table~\ref{tbl:notation-table:specific} for the definition of $\mathcal{P}^{\rm w}$), $(\mathcal{P}_0, \mu_0)$-distributional universality for $\mathcal{Q}$ is equivalent to the sequential convergence, that is, the existence of a sequence $\{g_i\}_{i=1}^\infty\subset\INNModelGeneric$ for each $\nu \in \mathcal{P}$ such that $(g_i)_*\mu_0$ converges to $\nu$ in distribution as $i\rightarrow\infty$. 
\end{remark}

\begin{remark}
The distributional universality described in Definition \ref{def: dist. univ. approx.} is a generalized notion considered in existing work.
For example, the distributional universality in~\citet{DBLP:conf/icml/JainiSY19} is rephrased as a $(\mathcal{P}^{\rm w},\nu)$-distributional universal approximation property for $\mathcal{P}_{\rm ab}$ for any $\nu \in \mathcal{P}_{\rm ab}$ in our terminology.
\citet{TeshimaCouplingbased2020} extended the definition by~\citet{DBLP:conf/icml/JainiSY19}. Their distributional universality is a $(\mathcal{P}^{\rm w},\nu)$-distributional universal approximation property for $\mathcal{P}$ for any $\nu \in \mathcal{P}_{\rm ab}$.
It is worth noting that these two concepts of distributional universal approximation are equivalent.
This is essentially because absolutely continuous probability measures are dense in the set of all the probability measures. We prove this fact as Lemma \ref{lem:appendix: abs aprox any} in Appendix \ref{appendix: from lp to dist}.
\end{remark}

\sloppy
The different notions of universality are interrelated.
Most importantly, the \(L^p\)-universality for a certain function class implies the distributional universality (see Proposition~\ref{lem:body:distributional-universality}).
Moreover, if a model \(\INNModelGeneric\) is a \(\sup\)-universal approximator for \(\mathcal{F}\), it is also an $L^p$-universal approximator for \(\mathcal{F}\) for any $p \in [1, \infty]$.

\subsection{Related Work}

\sloppy
Several studies showed the functional or distributional universality of INNs other than CF-INNs and NODEs.
They are not competitive with but complementary to ours as their problem settings are different from ours in target models and evaluation norms.
\citet{GopalELF2021} proposed a type of INNs named Exact-Lipschitz Flows (ELF) and proved their functional universality (more specifically, \(\sup\)-universality in our terminology).
\citet{KongUniversal2021} showed the universality of residual flows in terms of the maximum mean discrepancy (MMD).
They quantitatively evaluated the number of layers needed to approximate a target function with prescribed precision.

Another line of work is to study the expressive power of specific forms of CF-INNs and NODEs.
\citet{HuangConvex2021} introduced Convex Potential Flows, which is a parameterization of invertible models inspired by the optimal transport theory. They proved its distributional universality.
\citet{Ruiz-BaletNeural2021} analyzed a NODE coming from the following form:
\begin{align*}
    \dot{x}(t)=W(t)\sigma(A(t)x(t)+b(t)), 
\end{align*}
where $A$, $W$, and $b$ are time-dependent matrices and a vector. 
They showed that, despite the restricted form, the flow generated by the ODE above has the $L^2$-universal approximation property.
It is an interesting research direction to develop a general theory to broaden the applicability of our results to models like theirs

Since the publication of our previous work~\citep{TeshimaCouplingbased2020,TeshimaUniversal2020}, several researchers have studied the universality of INNs based on our theory.
\citet{PuthawalaUniversal2022} showed that injective flows between $\mathbb{R}^n$ and $\mathbb{R}^m$ ($n\leq m$) universally approximate measures supported on the images of \emph{extendable embeddings}, which is a composition of a full-rank linear transformation followed by a diffeomorphism, in terms of the Wasserstein distance. Their results were built on our previous result of the \(\sup\)-universality of neural autoregressive flows.
\citet{AbeAbelian2021} proposed a novel network architecture called \emph{Abelian group networks} that employs INNs as building blocks.
They proved that Abelian group networks have a functional universal approximation property for Abelian Lie group operations on a Euclidean space. They essentially used the universality of INNs in the proof of the theorem.
Also, concurrently with the present work, \citet{LyuUniversality2022} showed the universality of CF-INNs in the \(C^k\)-norm, i.e., a notion of universality taking into account the approximation of derivatives.
Their result on the \emph{\(C^k\)-universality}, namely Theorem~3.5 in \citet{LyuUniversality2022}, can be reproduced as a special case in our Theorem~\ref{thm:sobolev-universality-equivalence} by selecting \(p = \infty\) and \(\mathcal{G}\) to be a set of diffeomorphisms.
While their proof has the advantage of being more concise thanks to focusing on this special case, they require the models to be smooth everywhere.
On the other hand, our result can accommodate those flow layers which are not smooth everywhere, e.g., CF layers with ReLU activation function which are prevalent in applications.
On a more technical side, our result provides a finer understanding of the diffeomorphism group \(\DcRDr\), which allows us to provide a theoretical guarantee of NODE-based INNs.
More concretely, their proof directly uses the fact that the elements of \(\DcRDr\) can be decomposed into near-\(\Identity\) diffeomorphisms, while our Theorem~\ref{thm:sobolev-universality-equivalence} indicates that \(\DcRDr\) can be decomposed into the elements of \(\Xi^r\), which can be further decomposed into near-\(\Identity\) diffeomorphisms.

As for theoretical limitations of INNs, \citet{OkunoMinimax2021} showed the lower bound (in a minimax sense) of estimation risks in non-parametric regression problems for estimating invertible functions on a plane. Although they constructed an estimator that achieved the lower bound, it is not known whether INNs of any kind can achieve this optimality.

\section{General Framework}
\label{sec:main-results}
In this section, we present the main results (Theorems~\ref{thm:sobolev-universality-equivalence} and \ref{thm: sup universality}) of this paper on the universality of \INNs{}.
The main theorem breaks down the functional universality for a general class of diffeomorphisms into that for a much simpler class of diffeomorphisms.
We also explain the implication of the main theorem to the distributional universality.
The results in this section are derived and stated in a general setup so that it is not limited to the representation power analyses of specific INN architectures.

\subsection{Equivalence of Universal Approximation Properties}
Our first main theorem allows us to lift a universality result for a restricted set of diffeomorphisms to the universality for a fairly general class of diffeomorphisms by showing a certain equivalence of universalities.
Thanks to this problem reduction, we can essentially circumvent the major complication in proving the universality of \ARFINNs{}, namely that only function composition can be leveraged to make complex approximators (e.g., a linear combination is not allowed).

We define the following classes of invertible functions: \(C^r\)-diffeomorphisms $\mathcal{D}^r$, flow endpoints $\FlowEnds{r}$, triangular transformations $\Triangular$, and single-coordinate transformations \(\CrOneDimTriangular\). Our main theorem later reveals an equivalence of \(W^{r,p}\)-universality for these classes.

First, we define the set of \(C^r\)-diffeomorphisms.
\begin{definition}[\(C^r\)-diffeomorphisms: $\mathcal{D}^r$]\label{def: D}
Let $0 \leq r \leq \infty$. For each open subset $U \subset \ReD$, we define $\DrV{r}{U}$ to be the set of maps from $U$ to $\ReD$ which are $C^r$-diffeomorphisms from $U$ to their images.
We denote $\mathcal{D}^r := \sqcup_U \DrV{r}{U}$ (the formal disjoint union of the sets), where $U \subset \ReD$ runs over the set of all open subsets which are $C^r$-diffeomorphic to $\ReD$.
Let $s \le r$. We say that a model $\mathcal{M}$ is a $W^{s,p}$-universal approximator for $\mathcal{D}^r$ if $\mathcal{M}$ is a $W^{s,p}$-universal approximator for $\DrV{r}{U}$ for any open subset $U \subset \ReD$ that is $C^r$-diffeomorphic to $\ReD$.
\end{definition}
We require the domain $U$ to be $C^r$-diffeomorphic to $\ReD$ for technical reasons. However, this constraint would not be too strong: the entire \(\ReD\), any open convex set, and, more generally, any star-shaped open set, all satisfy this condition.
In addition, it is known that if $d \ge 5$, any connected and simply connected open subset in $\ReD$ is always $C^\infty$-diffeomorphic to $\ReD$.

Before going to the second class, we define the set of \emph{compactly-supported} diffeomorphisms on \(\ReD\) as its container.
\begin{definition}[Compactly supported diffeomorphism: $\DcRDr$]
We say a diffeomorphism $f$ on $\mathbb{R}^d$ is {\em compactly supported} if 
there exists a compact subset $K\subset \mathbb{R}^d$ such that for any $x\notin K$, $f(x)=x$.
We use $\DcRDr$ to denote the set of all compactly supported \(C^r\)-diffeomorphisms ($1 \leq r \leq \infty$) from $\R^d$ to $\R^d$.
We regard $\DcRDr$ as a group whose group operation is function composition.
For $f \in \DcRDr$, we define ${\rm supp}f \subset \mathbb{R}^d$ by the closure of the set  $\{ x \in \mathbb{R}^d : f(x) \neq x \}$, which is compact by definition.
\end{definition}

Our second class is a subset $\FlowEnds{r}$ of $\DcRDr$ consisting of \emph{flow endpoints}.

\begin{definition}[Flow endpoints: $\FlowEnds{r}$]
\label{def: flow endpoints}
Let $1 \leq r \leq \infty$. Let $\FlowEnds{r}\subset\DcRDr$ be the set of diffeomorphisms $g$ of the form $g(\bm{x})=\Phi(\bm{x},1)$ for some map $\Phi:\mathbb{R}^d\times U\rightarrow\mathbb{R}^d$ such that
    \begin{itemize}
        \item $U \subset \R$ is an open interval containing $[0, 1]$,
        \item $\Phi(\bm{x},0)=\bm{x}$,
        \item $\Phi(\cdot,t)\in\DcRDr$ for any $t\in U$,
        \item $\Phi(\bm{x},s+t)=\Phi(\Phi(\bm{x},s),t)$ for any $s,t\in U$ with $s+t\in U$,
        \item $\Phi$ is $C^r$ on $\R^d \times U$,
        \item there exists a compact subset $K_\Phi \subset \R^d$ such that $\cup_{t \in U} \mathrm{supp}{\Phi(\cdot, t)} \subset K_\Phi$.
    \end{itemize}
\end{definition}

\begin{remark}
Definition~\ref{def: flow endpoints} is the same as Definition 7 of \citet{TeshimaUniversal2020}.
A similar definition of flow endpoints can be found in Definition 9 of \citet{TeshimaCouplingbased2020}.
The difference between Definition~\ref{def: flow endpoints} and the one of \citet{TeshimaCouplingbased2020} mainly lies in the last two conditions.
Technically, these two conditions are used in Theorem~\ref{thm: NODE is sup-universal} for showing that the partial derivative of $\Phi$ in $t$ at $t=0$ is Lipschitz continuous.
We can prove the universality of \ARFINNs{} without these two conditions, as done in~\citet{TeshimaCouplingbased2020}.
\end{remark}

Finally, we define two subclasses of $\mathcal{D}^r_{\ReD}$ as follows:
\begin{definition}[Triangular transformations: $\Triangular$]
\label{def: T}
We define $\Triangular$ as the set of all 
\emph{increasing triangular} $C^\infty$-maps from \(\ReD\) to \(\ReD\). Here, we say a map $\tau=(\tau_1, \ldots, \tau_d):\ReD\to\ReD$ is increasing triangular if each \(\tau_k(\x)\) depends only on \(\upto{k}{\x}\) and is strictly increasing with respect to \(x_k\).
\end{definition}
\begin{definition}[Single-coordinate transformations: \(\CrOneDimTriangular\)]
\label{def: Ts}
We define $\CrOneDimTriangular$ as the set of all compactly-supported $C^r$-diffeomorphisms $\tau$ satisfying $\tau(\x)=(x_1, \ldots, x_{d-1}, \tau_d(\x))$, i.e., those which alter only the last coordinate.
\end{definition}

Note that for any $r \ge 1$, we have 
\[
\begin{array}{ccccc}
\DrV{0}{\Re^d} & \supset & \DcRDCmd{0} & &~\\[2pt]
\rotatebox{90}{$\subset$}&&\rotatebox{90}{$\subset$}&&\rotatebox{90}{}\\
\DrV{r}{\Re^d} & \supset & \DcRDr & \supset &~\Xi^r \\
\rotatebox{90}{$\subset$}&&\rotatebox{90}{$\subset$}&&\\[2pt]
\Triangular& \supset & \CinftyOneDimTriangular &  & \\
\end{array}
\]
Remark that $\tau_d$ for $\tau \in \CrOneDimTriangular$ ($r\geq 0$) is strictly increasing with respect to $x_d$ since the $C^r$- diffeomorphism $\tau$ is compactly supported. 
Among the above classes of invertible functions, $\mathcal{D}^r$ is our main approximation target, and it is a fairly large class. 
The class \(\Triangular\) relates to the distributional universality as we will see in Proposition~\ref{lem:body:distributional-universality}.
The class \(\OneDimTriangular\) is a much simpler class of diffeomorphisms that we use as a stepladder for showing the universality for \(\mathcal{D}^r\).

Now we are ready to state the first main theorem. It reveals an equivalence among the universalities for \(\mathcal{D}^r\), \(\FlowEnds{\infty}\), \(\Triangular\), and \(\OneDimTriangular\), under mild regularity conditions. We can use the theorem to lift up the universality for \(\OneDimTriangular\) to that for \(\mathcal{D}^r\).

\begin{theorem}[Equivalence for Sobolev universality]\label{thm:sobolev-universality-equivalence}
Let $p \in [1, \infty]$ and let $r \ge0$ be a nonnegative integer.
Let $\ARFINNFlow$ be a set of invertible functions from $\mathbb{R}^d$ to $\mathbb{R}^d$.
\begin{enumerate}[(A)]
\item $p<\infty$ case \label{main thm: A}

Assume that all elements of $\ARFINNFlow$ are piecewise $C^{r+ 1}$-diffeomorphisms (and $C^{r}$ if $r \ge 1$).
Then, the following statements are equivalent:
\begin{enumerate}[1.]
    \item \label{main thm: Dtwo} $\INN{\ARFINNFlow}$ is a  $W^{r,p}$-universal approximator for $\mathcal{D}^{r}$,
    \item \label{main thm: Xitwo} $\INN{\ARFINNFlow}$ is a  $W^{r,p}$-universal approximator for $\FlowEnds{\infty}$,
    \item \label{main thm: Tinfty} $\INN{\ARFINNFlow}$ is a  $W^{r,p}$-universal approximator for $\Triangular$,
    \item \label{main thm: Scinfty} $\INN{\ARFINNFlow}$ is a  $W^{r,p}$-universal approximator for $\OneDimTriangular$.
\end{enumerate}
Moreover,  we may replace $\mathcal{D}^r$ in \eqref{main thm: Dtwo} with ``$C^0(U, \ReD)$ for any open subset of $U\subset \ReD$'' in the case of $r=0$.

\item $p=\infty$ case\label{main thm: B}

Assume the following two conditions: (i) all elements of $\ARFINNFlow$ are locally $C^{r-1,1}$ if $r\ge1$ or locally $L^\infty$ if $r=0$ and (ii) their inverse image of a nullset is again a nullset.
Then, the following statements are equivalent:
\begin{enumerate}[1.] 
    \item \label{main thm: Dtwo p infty} $\INN{\ARFINNFlow}$ is a  $W^{r,p}$-universal approximator for $\mathcal{D}^{\max\{r,1\}}$,
    \item \label{main thm: Xitwo p infty} $\INN{\ARFINNFlow}$ is a  $W^{r,p}$-universal approximator for $\FlowEnds{\infty}$,
    \item \label{main thm: Tinfty p infty} $\INN{\ARFINNFlow}$ is a  $W^{r,p}$-universal approximator for $\Triangular$,
    \item \label{main thm: Scinfty p infty} $\INN{\ARFINNFlow}$ is a  $W^{r,p}$-universal approximator for $\OneDimTriangular$.
\end{enumerate}
\end{enumerate}
\label{thm:body:diffeo-universal-equivalences}
\label{theorem:main:1}
\end{theorem}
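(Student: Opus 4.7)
The plan is to establish the equivalence via the cycle of implications $(1) \Rightarrow (2), (3), (4)$ and $(4) \Rightarrow (3) \Rightarrow (2) \Rightarrow (1)$. The forward direction follows immediately from the set inclusions $\FlowEnds{\infty} \subset \DcRDr \subset \DrV{r}{\ReD}$, $\Triangular \subset \DrV{\infty}{\ReD} \subset \DrV{r}{\ReD}$, and $\OneDimTriangular \subset \Triangular$, together with the local character of the $W^{r,p}_{\rm loc}$ topology (so universality on a larger class restricts to any subclass on each compact test set). The bulk of the work then lies in closing the cycle via the reverse chain.

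For $(4) \Rightarrow (3)$, I would decompose an arbitrary increasing triangular $C^\infty$-map $\tau = (\tau_1, \ldots, \tau_d)$ as a composition of coordinate permutations $P_k \in \mathfrak{S}_d \subset \FLin$ and single-coordinate factors: by conjugating with a permutation moving the $k$-th coordinate to the last position, each sub-map that alters only the $k$-th coordinate becomes a single-coordinate map in the sense of Definition~\ref{def: Ts}. After restricting to a compact test set and truncating each factor to be compactly supported, hypothesis (4) supplies INN approximators for each piece. For $(3) \Rightarrow (2)$, given a flow endpoint $g = \Phi(\cdot, 1) \in \FlowEnds{\infty}$ generated by a compactly supported vector field $F$, I would apply an operator-splitting (Lie--Trotter) argument: $g$ is approximated on compacts by the $n$-fold iterate of the composition of the time-$1/n$ flows of the coordinate components $F^k e_k$, each of which alters only one coordinate and, up to a coordinate permutation, is triangular. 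The compact support condition in Definition~\ref{def: flow endpoints} provides uniform $W^{r,p}$-control of the splitting error.

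For $(2) \Rightarrow (1)$, given $f \in \DrV{r}{U}$ with $U$ $C^r$-diffeomorphic to $\ReD$, I would first transport the problem to $\ReD$ via the ambient diffeomorphism and approximate $f$ on any compact set $K \subset U$ by a compactly supported $\tilde{f} \in \DcRDr$ using a smooth cutoff. A fragmentation lemma of the Thurston--Mather type then decomposes $\tilde{f}$ as a finite product of diffeomorphisms each supported in a small ball, and each such small-support factor can be realized as the time-$1$ map of a smooth compactly supported vector field, i.e., an element of $\FlowEnds{\infty}$. Hypothesis (2) closes the argument after composing the INN approximations of each factor. In case (A), the $r = 0$ extension to $C^0(U, \ReD)$-universality uses additionally that any continuous map can be sup-approximated by smooth diffeomorphisms after a small perturbation.

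The principal obstacle is controlling composition errors in the $W^{r,p}$ topology rather than the sup-norm topology used in the preliminary version~\citep{TeshimaCouplingbased2020}. One must invoke the chain rule $D(f \circ g) = (Df \circ g) \cdot Dg$ and its higher-order analogues (Fa\`a di Bruno), which require uniform bounds on derivatives of the intermediate approximants and preservation of null sets by the inner maps. The regularity hypotheses are calibrated exactly for this purpose: ``piecewise $C^{r+1}$ and $C^r$'' in case (A) tolerates a measure-zero singular set suitable for $p < \infty$, while ``locally $C^{r-1,1}$ with null-set-preserving preimages'' in case (B) gives the uniform bounds needed in $L^\infty$. The weaker target $\mathcal{D}^{\max\{r,1\}}$ in (B) reflects that $L^\infty$-approximation from within a class of Lipschitz maps cannot generally reach non-Lipschitz continuous diffeomorphisms.
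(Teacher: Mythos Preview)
Your high-level plan and your analysis of the composition-stability issue (the role of the chain rule and the two regularity hypotheses, corresponding to Corollary~\ref{cor: compatibility of approximation, higher derivatives}) are on target, but there is a genuine gap in your $(2)\Rightarrow(1)$ step. The assertion that ``each such small-support factor can be realized as the time-$1$ map of a smooth compactly supported vector field'' is not justified and is in fact false in general: the one-parameter group law in Definition~\ref{def: flow endpoints} forces a flow endpoint to be the exponential of an autonomous field, and the exponential map on the diffeomorphism group is not surjective---smallness of support does not help. The paper closes this gap by invoking the simplicity of $\DcRDr$ for $r\neq d+1$ (Fact~\ref{thm: simplicity}, due to Herman, Thurston, Epstein, and Mather): one checks that finite compositions of elements of $\FlowEnds{r}$ form a non-trivial normal subgroup of $\DcRDr$ (Lemma~\ref{lem: diffc2 is generated by flow endpoints}), which by simplicity must be all of $\DcRDr$. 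Combined with the reduction of $\mathcal{D}^r$ to $\DcRDCmd{\infty}$ on compacta (Lemmas~\ref{red Cr to Cinf} and~\ref{red to comp. supp. diff}), this yields an \emph{exact} identity $f|_K=W\circ g_1\circ\cdots\circ g_m|_K$ with $g_i\in\FlowEnds{\infty}$, after which only the composition lemma is needed.

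Your route from $\FlowEnds{\infty}$ down to $\OneDimTriangular$ via Lie--Trotter splitting is also genuinely different from the paper's and would require controlling the Trotter error in $W^{r,p}$, which is delicate for $r\ge 1$. The paper again avoids any approximation at this stage: a flow endpoint is written as an $n$-fold power of a near-$\Identity$ map $\tilde h=\Phi(\cdot,1/n)$ with $\opnorm{D\tilde h-I}<1$ (Corollary~\ref{corollary: from dcrd to nearId}); this bound forces every trailing principal minor of $D\tilde h$ to be invertible (Lemma~\ref{lem: operator norm to principal minors}), and under that Jacobian condition an induction factors $\tilde h$ \emph{exactly} into maps each altering a single coordinate (Lemma~\ref{lemma:red_to_one_dim}, Theorem~\ref{thm: singles and permutations}). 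Thus the only approximation in the entire proof is the final application of hypothesis $(4)$ to each $\CinftyOneDimTriangular$ factor, which makes the $W^{r,p}$ bookkeeping considerably lighter than in your outline.
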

\sloppy
The proof is provided in Appendix~\ref{sec:appendix:universality-proof}.
For the definitions of the piecewise $C^r$-diffeomorphisms, locally $C^{r-1,1}$, and locally $L^\infty$, see Appendix~\ref{sec:appendix:piecewise diffeo}.
The regularity conditions in (\ref{main thm: A}) and (\ref{main thm: B}) assure that the functional composition within \(\ARFINNFlow\) is compatible with approximations (see Appendix~\ref{appendix: compatibility of approximation and composition} for details).
These conditions are usually satisfied.

The key step of the proof of this theorem is a decomposition of $f$ into flow endpoints, which is realized by relying on a structure theorem of $\DcRDCmd{\infty}$ (Fact~\ref{thm: simplicity} in Appendix~\ref{sec:appendix:universality-proof}) attributed to \citet{HermanGroupe1973}, \citet{ThurstonFoliations1974}, \citet{Epsteinsimplicity1970}, and \citet{MatherCommutators1974, MatherCommutators1975}.

\begin{remark}
In the case of $p=\infty$,  the ``$\max\{r,1\}$'' on $\mathcal{D}$ is essential, i.e., the target function class cannot be relaxed to $\mathcal{D}^0$.
We can show this by contradiction. If we supposed the equivalence of universality between $\mathcal{D}^0$ and $\mathcal{S}_c^\infty$, then we could see that a diffeomorphism on $\ReD$ can arbitrarily approximate a homeomorphism $\ReD$, but it is not true, namely there exists a homeomorphism that cannot be approximated by any diffeomorphism.
\end{remark}

As for the $\sup$-universality (Remark \ref{rem: remark for sup universality}), we have a similar result:
\begin{theorem}\label{thm: sup universality}
Assume that $\mathcal{G}$ consists of locally bounded measurable mappings.
The equivalence of \eqref{main thm: B} in Theorem \ref{theorem:main:1} is valid if we replace ``$W^{r,p}$-'' with ``$\sup$-'' and set $r=0$. 
\end{theorem}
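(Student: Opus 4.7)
The plan is to mirror the proof of Theorem~\ref{theorem:main:1}\eqref{main thm: B} at \(r=0\), \(p=\infty\), replacing essential-supremum control with genuine $\sup$-norm control throughout. The key enabling observation is that the target classes \(\mathcal{D}^1\), \(\FlowEnds{\infty}\), \(\Triangular\), and \(\OneDimTriangular\) all consist of at least \(C^1\) (hence continuous) maps, so for the targets $\sup$-norm and \(L^\infty\) convergence agree; the care is needed only for the approximators in \(\INN{\ARFINNFlow}\), which under the present hypothesis are locally bounded measurable. This condition matches the pointwise nature of the $\sup$-norm and replaces the pair of hypotheses (local \(L^\infty\) plus nullset-inverse-image) used in Theorem~\ref{theorem:main:1}\eqref{main thm: B}.

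The easy implications \((1) \Rightarrow (2), (3), (4)\) and \((3) \Rightarrow (4)\) follow at once from the set inclusions \(\FlowEnds{\infty} \subset \DcRDCmd{\infty} \subset \mathcal{D}^1\), \(\Triangular \subset \mathcal{D}^\infty \subset \mathcal{D}^1\), and \(\OneDimTriangular \subset \Triangular \cap \mathcal{D}^1\), since $\sup$-approximability of a larger target class implies $\sup$-approximability of any subclass. The substantive direction is \((4) \Rightarrow (1)\), which I would obtain via the chain \((4) \Rightarrow (3) \Rightarrow (2) \Rightarrow (1)\) using the same three-step decomposition as in Theorem~\ref{theorem:main:1}\eqref{main thm: B}: (a) for \((4) \Rightarrow (3)\), decompose a triangular map coordinate-by-coordinate and use coordinate permutations (which lie in \(\FLin \subset \INN{\ARFINNFlow}\)) to move each altered coordinate into the last slot, reducing to elements of \(\OneDimTriangular\); (b) for \((3) \Rightarrow (2)\), use a Knothe--Rosenblatt-type rearrangement to factor a flow endpoint, restricted to a compact set, into finitely many triangular pieces; (c) for \((2) \Rightarrow (1)\), invoke the simplicity theorem of \(\DcRDCmd{\infty}\) (Fact~\ref{thm: simplicity}) to write any compactly-supported diffeomorphism as a finite composition of flow endpoints, and reduce the noncompact \(\mathcal{D}^1\) case to the compact one by local extension on an open \(U \subset \ReD\) that is \(C^1\)-diffeomorphic to \(\ReD\).

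At every composition step, the key tool is the $\sup$-norm analogue of the composition-compatibility lemma of Appendix~\ref{appendix: compatibility of approximation and composition}: if \(f\) is continuous, \(g_n \to f\) uniformly on a compact set \(K\), and \(g'_n \to f'\) uniformly on a compact neighborhood \(K'\) of \(f(K)\) with \(f'\) continuous, then \(g'_n \circ g_n \to f' \circ f\) uniformly on \(K\). The local-boundedness hypothesis on elements of \(\ARFINNFlow\) guarantees that finite compositions remain locally bounded measurable, so \(g'_n \circ g_n\) is well-defined and $\sup$-bounded on \(K\); the uniform continuity of \(f'\) on \(K'\) then converts uniform error on the inner approximator into uniform error on the composition. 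The main obstacle I anticipate is in step (c): in the \(L^\infty\) setting of Theorem~\ref{theorem:main:1}\eqref{main thm: B}, smoothing together with the nullset-inverse-image condition allows one to absorb pointwise defects of the approximators on nullsets, whereas in the $\sup$-norm setting these defects must be controlled genuinely pointwise. However, the simplicity theorem produces only a target-dependent finite number of flow-endpoint factors, each with compact support, and each is uniformly approximated by \(\INN{\ARFINNFlow}\) thanks to statement (2), so the cumulative $\sup$-norm error over a finite composition remains controlled and the argument closes.
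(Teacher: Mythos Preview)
Your overall plan—a $\sup$-norm composition-compatibility lemma together with an exact finite decomposition of each target—matches the paper's; the composition lemma you describe is precisely Lemma~\ref{lem: sup compatibility of composition} (invoked through the last clause of Corollary~\ref{cor: compatibility of approximation, higher derivatives}), and your step~(c) is correct and coincides with the paper's use of Lemmas~\ref{red Cr to Cinf}, \ref{red to comp. supp. diff}, and~\ref{lem: diffc2 is generated by flow endpoints}.

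The genuine gap is step~(b). A Knothe--Rosenblatt rearrangement produces a triangular map $T$ satisfying $T_*\mu = g_*\mu$ for a chosen reference measure $\mu$; it does \emph{not} yield $T=g$ as functions, nor any finite factorization of the diffeomorphism $g$ into triangular maps. Equality of pushforwards is far weaker than uniform proximity of maps, so no $\sup$-approximation of the flow endpoint $g$ can be extracted this way. The paper does not route the hard direction through statement~(3) at all: it proves $(4)\Rightarrow(1)$ directly, and the ingredient you are missing is Theorem~\ref{thm: singles and permutations}, which gives an \emph{exact} factorization of every $g\in\FlowEnds{\infty}$ as a finite composition of elements of $\CinftyOneDimTriangular$ and coordinate permutations. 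The mechanism is differential, not measure-theoretic: (i)~slice the flow into near-$\Identity$ factors $\Phi(\cdot,1/n)$ (Corollary~\ref{corollary: from dcrd to nearId}); (ii)~the near-$\Identity$ condition forces all trailing principal minors of the Jacobian to be nonzero (Lemma~\ref{lem: operator norm to principal minors}); (iii)~under that minor condition, peel off one coordinate at a time by an LU-type elimination (Lemma~\ref{lemma:red_to_one_dim}). Stacking this behind your step~(c) yields, for any $f\in\mathcal{D}^1$ and compact $K$, an exact identity $f|_K = W_1\circ\tau_1\circ\cdots\circ W_k\circ\tau_k$ with $\tau_i\in\CinftyOneDimTriangular$ and $W_i\in\FLin$, after which your composition lemma finishes. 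Your step~(a) then becomes unnecessary, since $\Triangular\subset\DrV{1}{\ReD}$ already gives $(1)\Rightarrow(3)$ for free.
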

This theorem slightly strengthens Theorem~1 in \citet{TeshimaCouplingbased2020} which provides the equivalence of the universality between $\CinftyOneDimTriangular$ and $\mathcal{D}^2$ instead of $\mathcal{D}^1$.

\subsection{Implications of the Main Theorem for Distributional Universality}

Next, we give two consequences of Theorem~\ref{thm:sobolev-universality-equivalence} (namely, Corollary~\ref{cor: distributional universality for INN} and Corollary~\ref{cor: total variation distributional universality for INN}).
We first note the relationship between functional universality (Definition \ref{def: Lp univ. approx.}) and distributional universality (Definition \ref{def: dist. univ. approx.}).
\begin{proposition}
\label{lem:body:distributional-universality}
Let $p \in [1,\infty]$.
An \(L^p\)-universal approximator for \(\Triangular\) is a $(\mathcal{P}^{\rm w}, \nu)$-distributional universal approximator for $\mathcal{P}$ for any $\nu \in \mathcal{P}_{\rm ab}$
\end{proposition}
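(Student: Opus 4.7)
The strategy is to compose two approximation steps. I would first establish (or invoke) that \(\Triangular\) itself is distributionally universal in the required sense: for any \(\nu \in \mathcal{P}_{\rm ab}\) and any \(\mu \in \mathcal{P}\) there is a sequence \(\tau_k \in \Triangular\) with \((\tau_k)_* \nu \to \mu\) in the weak topology. This amounts to a Knothe--Rosenblatt style construction (which produces a smooth increasing triangular diffeomorphism transporting one sufficiently regular absolutely continuous measure to another, and therefore an element of \(\Triangular\)) combined with Lemma~\ref{lem:appendix: abs aprox any}, asserting the weak density of \(\mathcal{P}_{\rm ab}\) in \(\mathcal{P}\), together with a mollification/diagonal argument to handle general \(\mu\) and general \(\nu \in \mathcal{P}_{\rm ab}\). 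This is essentially the distributional universality of \(\Triangular\) from \citet{DBLP:conf/icml/JainiSY19}.

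For each such \(\tau_k\), the \(L^p\)-universality hypothesis supplies \(g_{k,j} \in \INNModelGeneric\) with \(g_{k,j} \to \tau_k\) in \(W^{0,p}_{\rm loc}(\ReD, \ReD)\). The core technical step, which I expect to be the main obstacle, is to transfer this functional convergence to weak convergence of the pushforwards, \((g_{k,j})_* \nu \to (\tau_k)_* \nu\). I would argue as follows. By tightness of \(\nu\), for each \(\varepsilon > 0\) pick a compact \(K \subset \ReD\) with \(\nu(K^c) < \varepsilon\). The \(L^p(K, dx)\)-convergence of \(g_{k,j}\) to \(\tau_k\) implies convergence in Lebesgue measure on \(K\); thus along a subsequence \(\{j_\ell\}\) we obtain pointwise convergence \(dx\)-a.e.\ on \(K\), and hence also \(\nu\)-a.e.\ on \(K\) because \(\nu \ll dx\). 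For any bounded continuous \(f\), the bounded convergence theorem gives \(\int_K f \circ g_{k,j_\ell}\, d\nu \to \int_K f \circ \tau_k\, d\nu\), while the tail contribution \(\bigl|\int_{K^c} (f \circ g_{k,j} - f \circ \tau_k)\, d\nu\bigr|\) is uniformly bounded by \(2\|f\|_\infty \varepsilon\). Since the weak topology on \(\mathcal{P}\) is metrizable (e.g.\ by the L\'evy--Prokhorov metric), the standard principle ``every subsequence has a further subsequence converging to the same limit implies convergence'' upgrades the conclusion to weak convergence of \((g_{k,j})_* \nu \to (\tau_k)_* \nu\) along the full sequence in \(j\).

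Finally, a diagonal extraction selects indices \(j_k\) so that \((g_{k,j_k})_* \nu\) lies within L\'evy--Prokhorov distance \(1/k\) of \((\tau_k)_* \nu\); combined with the first step, this yields \((g_{k,j_k})_* \nu \to \mu\) weakly, so \(\INNModelGeneric\) has the \((\mathcal{P}^{\rm w}, \nu)\)-distributional universality for \(\mathcal{P}\). The density and diagonal ingredients are essentially standard; the substantive content lies in the \(L^p\)-to-pushforward-weak transfer, which crucially relies on \(\nu \ll dx\) to convert \(dx\)-a.e.\ limits obtained from the \(L^p\) hypothesis into \(\nu\)-a.e.\ limits suitable for dominated convergence.
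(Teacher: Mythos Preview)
Your proposal is correct and takes a genuinely different route from the paper in the key technical step.

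Both arguments share the same scaffolding: reduce to absolutely continuous targets via density of \(\mathcal{P}_{\rm ab}\) in \(\mathcal{P}^{\rm w}\), mollify to obtain smooth positive densities, and invoke a Knothe--Rosenblatt construction to produce a map \(T \in \Triangular\) transporting (a mollified) source to (a mollified) target. The substantive divergence is in how the \(L^p\)-closeness of \(g\) to \(T\) is converted into weak closeness of the pushforwards. The paper works in the bounded-Lipschitz metric and uses the Lipschitz property of the test functions directly: \(\int_K |f(T(x)) - f(g(x))|\,d\mu_{t_0}(x) \le L_f \sup_K(\phi_{t_0}*p)\,\|T-g\|_{L^1(K)}\). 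This is a one-line quantitative bound, but it needs the source density to be bounded on \(K\), which is precisely why the paper mollifies the source measure first. Your route---\(L^p(K,dx)\) convergence \(\Rightarrow\) convergence in Lebesgue measure \(\Rightarrow\) convergence in \(\nu\)-measure (using \(\nu\ll dx\) and absolute continuity of the integral) \(\Rightarrow\) bounded convergence for \(\int_K f\circ g_j\,d\nu\)---is softer and works for arbitrary \(\nu\in\mathcal{P}_{\rm ab}\) without mollifying the source, at the cost of being non-quantitative. A small clean-up: rather than extracting an a.e.\ subsequence for a single \(K\) and then invoking the subsequence principle (which, as written, only gives the limit up to the tail error \(2\|f\|_\infty\varepsilon\)), it is tidier either to pass directly from convergence in \(\nu\)-measure on \(K\) to \(L^1(K,\nu)\) convergence of \(f\circ g_j\) (bounded convergence in measure), or to run a diagonal over an exhausting sequence \(K_n\uparrow\) to obtain \(\nu\)-a.e.\ convergence globally before applying DCT; either variant closes the argument without the \(\varepsilon\)-dependence of the subsequence.
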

The proof is based on the existence of a triangular map connecting two absolutely continuous distributions \citep{BogachevTriangular2005}. See Appendix~\ref{appendix: from lp to dist} for details.
Note that the previous studies \citep{DBLP:conf/icml/JainiSY19,HuangNeural2018b} have discussed the distributional universality of some flow architectures essentially via showing the \(\sup\)-universality for \(\Triangular\).
Proposition~\ref{lem:body:distributional-universality} clarifies that the weaker notion of \(L^p\)-universality is sufficient for the distributional universality since \(\sup\)-universality implies \(L^p\)-universality.

Proposition~\ref{lem:body:distributional-universality} can be combined with both cases of (\ref{main thm: A}) and (\ref{main thm: B}) in Theorem~\ref{thm:body:diffeo-universal-equivalences}, namely, we have the following corollary:
\begin{corollary}[Sobolev universality implies weak topology universality] \label{cor: distributional universality for INN}
Notations and assumptions are as in Theorem~\ref{theorem:main:1}.
Then, if $\INN{\ARFINNFlow}$ is a $W^{r,p}$-universal approximator for $\OneDimTriangular$, then it is a $(\mathcal{P}^{\rm w}, \nu)$-distributional universal approximator for $\mathcal{P}$ for any $\nu \in \mathcal{P}_{\rm ab}$.
\end{corollary}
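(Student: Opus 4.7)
The plan is to chain together the two earlier results: Theorem~\ref{theorem:main:1} upgrades the hypothesized $W^{r,p}$-universality for $\OneDimTriangular$ to $W^{r,p}$-universality for the richer class $\Triangular$, and then Proposition~\ref{lem:body:distributional-universality} converts functional universality for $\Triangular$ into the desired distributional universality. Since Proposition~\ref{lem:body:distributional-universality} is phrased in terms of $L^p$-universality, the one auxiliary observation needed is that $W^{r,p}$-universality implies $L^p$-universality.

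First, I would invoke Theorem~\ref{theorem:main:1}. The hypotheses of Corollary~\ref{cor: distributional universality for INN} are exactly those of Theorem~\ref{theorem:main:1}, so under the stated regularity conditions on $\ARFINNFlow$ we have the equivalence of items (\ref{main thm: Tinfty})--(\ref{main thm: Scinfty}) (in case \eqref{main thm: A}) or of items (\ref{main thm: Tinfty p infty})--(\ref{main thm: Scinfty p infty}) (in case \eqref{main thm: B}). In either case, $W^{r,p}$-universality of $\INN{\ARFINNFlow}$ for $\OneDimTriangular$ entails $W^{r,p}$-universality of $\INN{\ARFINNFlow}$ for $\Triangular$.

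Next, I would note that $W^{r,p}$-universality implies $L^p$-universality. Indeed, for any compact $K$ and any $f,g \in W^{r,p}_{\rm loc}$ one has $\|f-g\|_{K,0,p} \le \|f-g\|_{K,r,p}$, so convergence in the local Sobolev seminorms implies convergence in the local $L^p$ seminorms. Applying this to the approximation of elements of $\Triangular$ yields that $\INN{\ARFINNFlow}$ is an $L^p$-universal approximator for $\Triangular$.

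Finally, Proposition~\ref{lem:body:distributional-universality} directly converts this into the conclusion: $\INN{\ARFINNFlow}$ is a $(\mathcal{P}^{\rm w}, \nu)$-distributional universal approximator for $\mathcal{P}$ for any $\nu \in \mathcal{P}_{\rm ab}$. There is no real obstacle here; the corollary is essentially a packaging of Theorem~\ref{theorem:main:1} with Proposition~\ref{lem:body:distributional-universality}, and the proof is a one-line composition once the trivial $W^{r,p} \Rightarrow L^p$ remark is made. All substantive content (the decomposition into flow endpoints and one-coordinate maps on the functional side, and the triangular transport argument on the distributional side) has already been carried out in those two results.
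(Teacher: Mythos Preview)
Your proposal is correct and matches the paper's own argument: the corollary is presented there as an immediate combination of Theorem~\ref{theorem:main:1} (to pass from $\OneDimTriangular$ to $\Triangular$) with Proposition~\ref{lem:body:distributional-universality}, and the $W^{r,p}\Rightarrow L^p$ step you spell out is exactly the trivial bridge needed to invoke the latter.
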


If the model can also universally approximate the derivatives, then it is guaranteed to have a stronger distributional universality in terms of the total variation distance, as we see in the following proposition:

\begin{proposition}
\label{proposition:body:generalized-distributional-universality}
Let $r \ge 1$.
Let $\mathcal{F}_0 := W^{0,\infty}_{\rm loc}(\ReD,\mathbb{R}^d) \cap W^{1,1}_{\rm loc}(\ReD,\mathbb{R}^d)$, where we define the topology $\mathcal{F}_0$ to be the weakest topology such that the inclusion maps $\dotlessi_0: \mathcal{F}_0 \xhookrightarrow{} W_{\rm loc}^{0,\infty}(\ReD, \mathbb{R}^d)$ and $\dotlessi_1: \mathcal{F}_0 \xhookrightarrow{} W_{\rm loc}^{1,1}(\ReD, \mathbb{R}^d)$ are both continuous.
Suppose any element in model $\mathcal{M}$ is 
locally $C^{0,1}$ and 
a piecewise $C^1$-diffeomorphism.
If $\mathcal{M}$ is an $\mathcal{F}_0$-universal approximator for $\Triangular$, 
then $\mathcal{M}$ is a $(\mathcal{P}^{\rm TV}, \nu)$-distributional universal approximator for $\mathcal{P}_{\rm ab}$ for any $\nu \in \mathcal{P}_{\rm ab}$.
\end{proposition}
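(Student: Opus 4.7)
My overall plan mirrors the scheme of Proposition~\ref{lem:body:distributional-universality}, but with total variation in place of the weak topology. The strategy has two pieces: (i) approximate any target $\mu\in\mathcal{P}_{\rm ab}$ by a pushforward of $\nu$ under a smooth triangular map $T\in\Triangular$ in $\TVnorm{\cdot}$, and (ii) show that when $f_n\to T$ in $\mathcal{F}_0$, one also has $(f_n)_*\nu\to T_*\nu$ in $\TVnorm{\cdot}$. The identity used throughout is that for absolutely continuous measures $\TVnorm{\cdot}$ equals half the $L^1$-distance of densities, so everything reduces to $L^1$-convergence of densities.

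\emph{Step (i): reduction to a smooth triangular transport.} Given $\nu,\mu\in\mathcal{P}_{\rm ab}$, I would approximate both in $\TVnorm{\cdot}$ by measures $\tilde\nu_k,\tilde\mu_k\in\mathcal{P}_{\rm ab}$ having $C^\infty$, everywhere strictly positive densities on $\ReD$, via Gaussian mollification of their densities (which converges in $L^1$, hence in $\TVnorm{\cdot}$). The Knothe--Rosenblatt rearrangement \cite{BogachevTriangular2005} then yields $T_k\in\Triangular$ with $(T_k)_*\tilde\nu_k=\tilde\mu_k$, and since pushforward is a contraction for $\TVnorm{\cdot}$,
\[
\TVnorm{(T_k)_*\nu-\mu}\;\le\;\TVnorm{\tilde\nu_k-\nu}+\TVnorm{\tilde\mu_k-\mu}\longrightarrow 0.
\]
After this reduction it suffices, for each fixed $T\in\Triangular$, to find $\{f_n\}\subset\mathcal{M}$ with $\TVnorm{(f_n)_*\nu-T_*\nu}\to 0$; a standard diagonal extraction then finishes the argument.

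\emph{Step (ii): stability of pushforward in TV.} Using the $\mathcal{F}_0$-universality hypothesis, pick $f_n\in\mathcal{M}$ with $f_n\to T$ in $W^{0,\infty}_{\rm loc}$ (uniformly on compacts) and $Df_n\to DT$ in $L^1_{\rm loc}$. Since $T$ is a global $C^\infty$-diffeomorphism, the substitution $\psi_n:=T^{-1}\circ f_n$ reduces the problem to the case $T=\Identity$ and $\psi_n\to\Identity$ in $\mathcal{F}_0$. Each $\psi_n$ is an invertible piecewise $C^1$-diffeomorphism, so the change-of-variable formula gives that $(\psi_n)_*\nu$ has a density $h_n$ satisfying $h_n(\psi_n(x))\,|\det D\psi_n(x)|=g(x)$ a.e., where $g=d\nu/dx$. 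Pulling back by $\psi_n$ converts the $L^1$-distance of densities into
\[
\tfrac{1}{2}\,\|h_n-g\|_{L^1}\;=\;\tfrac{1}{2}\int_{\ReD}\bigl|\,g(x)-g(\psi_n(x))\,|\det D\psi_n(x)|\,\bigr|\,dx,
\]
which I would split via the triangle inequality into $\int|g-g\circ\psi_n|\,dx$ and $\int (g\circ\psi_n)\cdot\bigl|1-|\det D\psi_n|\bigr|\,dx$. The first summand tends to $0$ by the $L^1$-continuity of translation, approximating $g$ in $L^1$ by $C_c$-functions and using $\psi_n\to\Identity$ uniformly on compacts.

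\emph{Main obstacle.} The hard part is the second summand. The $W^{1,1}_{\rm loc}$-convergence $Df_n\to DT$ gives $\det D\psi_n\to 1$ only in $L^1_{\rm loc}$ (via multilinearity of the determinant together with an a.e.-convergent subsequence), while $g\in L^1$ has no pointwise bound, so $g\circ\psi_n$ is not uniformly bounded in $L^\infty$. Controlling the product $(g\circ\psi_n)\cdot\bigl|1-|\det D\psi_n|\bigr|$ therefore requires a Vitali-type uniform-integrability argument. Here the $W^{0,\infty}_{\rm loc}$-component of the $\mathcal{F}_0$-topology is indispensable: it keeps $\psi_n^{-1}$ uniformly close to $\Identity$ on compacts and so prevents mass from escaping to infinity when we transport the integral back to a $\nu$-integral via $y=\psi_n(x)$. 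Once both summands are shown to vanish along a subsequence, Scheff\'e's lemma (applicable since $\int h_n=\int g=1$) upgrades the resulting a.e.\ convergence of densities to genuine $L^1$-convergence, equivalently to $\TVnorm{\cdot}$-convergence of pushforwards; combined with Step (i) this yields the required distributional universality in total variation.
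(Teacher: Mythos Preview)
Your two-step scheme matches the paper's, and Step~(i) is essentially the same reduction. The real gap is in Step~(ii). Your argument for $\int_{\ReD}|g-g\circ\psi_n|\,dx\to 0$ does not close: after approximating $g\in L^1$ by $\phi\in C_c$, the cross term $\int|(\phi-g)\circ\psi_n|\,dx=\int|\phi-g|\,|\det D\psi_n^\dagger|\,dy$ needs a bound on $|\det D\psi_n^\dagger|$ that you do not have, since $D\psi_n\to I$ only in $L^1_{\rm loc}$ and $\det D\psi_n$ may vanish on sets of positive measure. The same lack of any pointwise Jacobian control undermines the Vitali sketch for the second summand: with $g$ merely in $L^1$ there is no reason $(g\circ\psi_n)\cdot\bigl|1-|\det D\psi_n|\bigr|$ should be uniformly integrable, even on a fixed compact. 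You also assert, without proof, that $\psi_n^{-1}$ stays uniformly close to $\Identity$ on compacts; this does not follow automatically from $\psi_n\to\Identity$ in $W^{0,\infty}_{\rm loc}$, and the paper isolates it as a separate lemma. (The Scheff\'e remark is superfluous: once both summands vanish you already have $L^1$ convergence directly.)

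The paper's fix is to mollify the \emph{base} measure so that its density $w$ is smooth, hence locally Lipschitz and locally bounded; this is permissible by the same TV-contraction of pushforward you already invoke in Step~(i). It then compares densities on the \emph{target} side, writing $d(f_*\nu)=w(f^\dagger)\,J_{f^\dagger}\,dx$ and $d(T_*\nu)=w(T^{-1})\,J_{T^{-1}}\,dx$, restricts to a large compact $K'$, and uses the Lipschitz constant of $w$ (rather than an $L^1$ approximation of $w$) together with the identity $|f^\dagger(x)-T^{-1}(x)|\le L_{T^{-1}}\,|(T-f)(f^\dagger(x))|$ to bound the density difference by $\|f-T\|_{K,0,\infty}$. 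The Jacobian difference is handled by pulling the integral back via $f^\dagger$ so that it becomes $\int_{f^\dagger(K')}|J_T-J_f|$, controlled by $\|f-T\|_{K,1,1}$. A separate lemma produces a single compact $K\supset f^{-1}(K')$ valid for every $f$ with $\|f-T\|_{K,0,\infty}$ small; this is precisely where the $W^{0,\infty}_{\rm loc}$ component of $\mathcal{F}_0$ enters. With $w$ smooth all constants are finite on compacts and no uniform-integrability argument is needed.
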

Since $W^{1,\infty}_{\rm loc}(\ReD,\ReD)$ is continuously included in the space $\mathcal{F}_0$ defined in Proposition \ref{proposition:body:generalized-distributional-universality}, we immediately have
\begin{corollary}[Sobolev universality implies total variation universality] \label{cor: total variation distributional universality for INN}
Notation is the same as Theorem \ref{theorem:main:1}.
Assume that any element of $\mathcal{G}$ is locally $C^{0,1}$ and a piecewise $C^1$-diffeomorphism.
Then, if $\INN{\ARFINNFlow}$ is a $W^{1,\infty}$-universal approximator for $\CinftyOneDimTriangular$, then so is a $(\mathcal{P}^{\rm TV}, \nu)$-distributional universal approximator for $\mathcal{P}_{\rm ab}$ for any $\nu \in \mathcal{P}_{\rm ab}$.
\end{corollary}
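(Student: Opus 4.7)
The plan is to derive Corollary~\ref{cor: total variation distributional universality for INN} by chaining Theorem~\ref{theorem:main:1}\,(B) (with $r=1$, $p=\infty$) and Proposition~\ref{proposition:body:generalized-distributional-universality}, linked by a continuous-inclusion argument between the function spaces involved. First I would apply part~(B) of Theorem~\ref{theorem:main:1} with $r=1$ and $p=\infty$. Condition~(i) of that part (each $g\in\mathcal{G}$ is locally $C^{0,1}$) is already the corollary's hypothesis, and condition~(ii) (preimages of nullsets are nullsets) follows from the piecewise $C^1$-diffeomorphism hypothesis: on each piece the map is a $C^1$-diffeomorphism and hence enjoys Luzin's $N$-property, and a countable union of nullsets remains a nullset. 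Thus the given $W^{1,\infty}$-universality of $\INN{\ARFINNFlow}$ for $\CinftyOneDimTriangular$ lifts to $W^{1,\infty}$-universality for $\Triangular$.

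Next I would upgrade this functional statement to $\mathcal{F}_0$-universality for $\Triangular$, where $\mathcal{F}_0 = W^{0,\infty}_{\rm loc}(\ReD,\ReD)\cap W^{1,1}_{\rm loc}(\ReD,\ReD)$ is equipped with the topology from Proposition~\ref{proposition:body:generalized-distributional-universality}. On every compact $K\subset \ReD$, the $W^{1,\infty}(K)$-seminorm dominates the $W^{0,\infty}(K)$-seminorm and, since $|K|<\infty$, also dominates the $W^{1,1}(K)$-seminorm up to the factor $|K|$. Hence the inclusion $W^{1,\infty}_{\rm loc}(\ReD,\ReD)\hookrightarrow \mathcal{F}_0$ is continuous for the initial topology defining $\mathcal{F}_0$, so any $W^{1,\infty}$-approximation of $f\in\Triangular$ by an element of $\INN{\ARFINNFlow}$ is automatically an $\mathcal{F}_0$-approximation.

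Finally I would apply Proposition~\ref{proposition:body:generalized-distributional-universality} with $\mathcal{M}=\INN{\ARFINNFlow}$. This requires checking that every element of $\INN{\ARFINNFlow}$ is itself locally $C^{0,1}$ and a piecewise $C^1$-diffeomorphism; each flow layer $g_i\in\mathcal{G}$ satisfies both by assumption, each affine layer $W_i\in\FLin$ is smooth and hence satisfies both trivially, and both properties pass to finite compositions (local Lipschitzness is elementary, while composition closure for piecewise $C^1$-diffeomorphisms is expected to be available from Appendix~\ref{sec:appendix:piecewise diffeo}). Proposition~\ref{proposition:body:generalized-distributional-universality} then delivers the desired $(\mathcal{P}^{\rm TV},\nu)$-distributional universality for $\mathcal{P}_{\rm ab}$ for every $\nu\in\mathcal{P}_{\rm ab}$. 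The main obstacle I anticipate is the rigorous justification that piecewise $C^1$-diffeomorphisms are closed under composition — tracking how the domain partitions interact under composition with each other and with affine maps — but this is essentially foundational bookkeeping that should already be handled in the appendix on piecewise diffeomorphisms; the remainder of the argument is a direct plug-and-play.
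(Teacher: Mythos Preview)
Your proposal is correct and follows essentially the same route as the paper: lift $W^{1,\infty}$-universality from $\CinftyOneDimTriangular$ to $\Triangular$ via Theorem~\ref{theorem:main:1}(B), use the continuous inclusion $W^{1,\infty}_{\rm loc}\hookrightarrow\mathcal{F}_0$, and then apply Proposition~\ref{proposition:body:generalized-distributional-universality}. The composition-closure you flag as a potential obstacle is indeed handled in the appendix (Proposition~\ref{prop: basic properties}\,\ref{composition} and~\ref{finvzero}), so nothing is missing.
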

We defer their proofs to Appendix~\ref{sec:proof-of-generalized-distributional-universality}.

\section{Application of the General Framework}
In this section, we show several crucial results for the universalities of INNs with certain flow layers.
\subsection{Affine Coupling Flows (ACFs)}\label{sec: main result 2}

Here, we reveal the $L^p$-universality of $\INN{\FSACFH}$.
This result affirmatively answers an unsolved problem for the distributional universality of ACF-based invertible neural networks.
\begin{theorem}[$L^p$-universality of \(\INN{\FSACFH}\)]
\label{prop:body:acfinn-Lp}\label{theorem:main:2}
Let \(p \in [1, \infty)\).
Assume that \(\ACFINNUniversalClass\) is an $L^\infty$-universal approximator for $C^0(\Re^{d-1})$ and that it consists of piecewise \(C^1\)-functions.
Then, \(\INN{\FSACFH}\) is an \(L^p\)-universal approximator for 
$C^0(U, \ReD)$ for any open subset $U \subset \ReD$.
\end{theorem}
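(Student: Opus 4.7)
The plan is to invoke Theorem~\ref{thm:sobolev-universality-equivalence} case~(\ref{main thm: A}) with $r = 0$ and the given $p \in [1, \infty)$, reducing the problem to the much simpler task of establishing $L^p$-universality for $\CinftyOneDimTriangular$ (item~\ref{main thm: Scinfty}), the class of compactly supported single-coordinate transformations. To verify the regularity hypothesis, note that every $\Aff{d-1}{s}{t} \in \FSACFH$ is a piecewise $C^1$-diffeomorphism: the defining formula and its closed-form inverse are compositions of $s, t \in \ACFINNUniversalClass$ (piecewise $C^1$ by assumption) with smooth operations. With the equivalence established, the ``Moreover'' clause for $r = 0$ then upgrades the conclusion from $\mathcal{D}^0$-universality to $C^0(U, \ReD)$-universality for every open $U \subset \ReD$, delivering the stated result.

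It thus remains to show: for any $\tau \in \CinftyOneDimTriangular$, compact $K \subset \ReD$, and $\epsilon > 0$, there exists $\Phi \in \INN{\FSACFH}$ with $\|\Phi - \tau\|_{K, 0, p} < \epsilon$. Such $\tau$ equals the identity outside a compact set and perturbs only the last coordinate via a $C^\infty$ map $\tau_d(\x)$ that is strictly increasing in $x_d$. A single $\Aff{d-1}{s}{t}$ is only affine in $x_d$, so by itself cannot approximate a generic nonlinear $\tau_d$. The construction I envisage alternates ACFs with coordinate permutations drawn from $\FLin$: by permuting $x_d$ into one of the first $d - 1$ conditioning slots, the next ACF's maps $s, t$ become functions that depend on the original $x_d$, thereby producing genuine nonlinearity in $x_d$ after a permutation back. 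Iterating such alternations and leveraging the hypothesis that $\ACFINNUniversalClass$ $L^\infty$-approximates every $f \in C^0(\ReDminus)$, one can realize an approximation of $\tau_d$ by a smooth piecewise-affine-in-$x_d$ map whose slopes and breakpoints depend continuously on $\upto{d-1}{\x}$, a family that is dense in the target class for the $L^p(K)$ topology.

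The main obstacle is controlling error propagation through compositions. Unlike in settings where sub-networks can be combined additively, here only function composition is available, so errors at intermediate layers can amplify through subsequent layers. The key estimate I would need is a Lipschitz-type bound roughly of the form
\[
\|\Phi_n \circ \cdots \circ \Phi_1 - \widetilde{\Phi}_n \circ \cdots \circ \widetilde{\Phi}_1\|_{K,0,p} \;\leq\; \sum_{i=1}^n C_i \, \|\Phi_i - \widetilde{\Phi}_i\|_{K_i, 0, p},
\]
where $K_i$ is the image of $K$ under the first $i-1$ layers and $C_i$ depends on the Lipschitz moduli of the remaining layers. Because $\tau$ is compactly supported and the intermediate maps can be designed to keep trajectories in a fixed bounded region, the $K_i$ lie in a common bounded set and the $C_i$ stay under control. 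Combining this error-control with the $L^\infty$-approximation power of $\ACFINNUniversalClass$ (which, after restriction to compact sets, passes to $L^p$) closes the approximation step, and the reduction of the first paragraph then delivers the full $L^p$-universality for $C^0(U, \ReD)$.
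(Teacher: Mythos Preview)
Your reduction via Theorem~\ref{thm:sobolev-universality-equivalence}(\ref{main thm: A}) with $r=0$ and the invocation of its ``moreover'' clause match the paper exactly, as does the observation that composition errors are controlled by the piecewise-$C^1$ regularity (the paper packages this as Lemma~\ref{prop: compatibility of approximation}). The genuine gap is your middle paragraph: you assert that alternating ACFs with permutations produces an approximation of $\tau_d$ by a ``piecewise-affine-in-$x_d$'' map with $\upto{d-1}{\x}$-dependent slopes and breakpoints, but you give no construction. A single $\FSACFH$ layer is \emph{globally} affine in $x_d$, and composing several with permutations does not obviously yield a \emph{controlled} piecewise-affine family; you never say which $s,t\in\ACFINNUniversalClass$ to take, how many layers are needed, or why the composite lands near $\tau_d$. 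This is precisely the hard step of the theorem, and hand-waving here leaves the proof incomplete.

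The paper's construction is different and proceeds in two explicit stages. First, for the \emph{coordinate-independent} special case $(\bm{x},y)\mapsto(\bm{x},u(y))$ (Lemma~\ref{lem: univ. approx.}), it exhibits a concrete four-map composite $\psi_3\circ\psi_2\circ\psi_1\circ\psi_0$ depending on a small parameter $\delta$: three of the $\psi_i$ are ACFs acting on the $(d{-}1)$-st coordinate with coupling functions built from $\log u'$, $u(y)-y$, and $y-u^{-1}(y)$, while $\psi_2\in\FGL$; a Taylor expansion of $u^{-1}$ shows this equals $(\bm{x},u(y))+O(\delta)$ uniformly on compacts. Second (Lemma~\ref{lem:appendix:lp-univ for ACF}), the general $\tau\in\CzeroOneDimTriangular$ is reduced to this one-dimensional case by a grid-indexing trick: a \emph{discontinuous} last-coordinate shift $\psi_n$ (itself $L^p$-approximable by $\FSACFH$ because step functions lie in the $L^\infty$-closure of $\ACFINNUniversalClass$) sends each cell of an $n^{d-1}$ grid on $[0,1]^{d-1}$ to a disjoint integer translate in the $y$-direction, so that a single one-dimensional monotone map $v_n$ simultaneously encodes all the sampled maps $u(\text{grid point},\cdot)$; conjugating back by $\psi_n^{-1}$ gives $f_n\to\tau$ uniformly on $K$. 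Neither the $\delta$-limit trick nor the indexing trick appears in your sketch, and without something of this kind the $\CinftyOneDimTriangular$ step is not established.
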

We remark that the universality is still valid if we restrict the affine layers of $\INN{\FSACFH}$ to elements in $\mathfrak{S}_d$, the permutations of variables. 
For the definition of piecewise \(C^1\)-functions, see Appendix~\ref{sec:appendix:piecewise diffeo}.
We provide the proof of Theorem \ref{theorem:main:2} by combining Theorem \ref{theorem:main:1} with Theorem \ref{thm:strong approximation property} and a slightly general result, which is an $L^p$-universal approximation property of $\INN{\FSACFH}$ for $\CzeroOneDimTriangular$, in Appendix~\ref{appendix: theorem 2 proof}.
Examples of \(\ACFINNUniversalClass\) satisfying the condition of Theorem~\ref{prop:body:acfinn-Lp} include MLP models with ReLU activation \citep{LeCunDeep2015} and a linear-in-parameter model with smooth universal kernels \citep{MicchelliUniversal2006a}.

By combining Theorem~\ref{theorem:main:1}, Theorem~\ref{theorem:main:2}, and Proposition~\ref{lem:body:distributional-universality}, we can affirmatively answer a previously unsolved problem \citep[p.13]{PapamakariosNormalizing2019}, the distributional universality of \ARFINN{} based on \ACFs{}, and we can confirm the theoretical plausibility of using it for normalizing flows.
\begin{theorem}[Distributional universality of \(\INN{\FSACFH}\)]
Under the conditions of Theorem~\ref{theorem:main:2}, \(\INN{\FSACFH}\) is a $(\mathcal{P}^{\rm w}, \nu)$-distributional universal approximator for $\mathcal{P}$ for any $\nu \in \mathcal{P}_{\rm ab}$.
\label{cor:body:acfinn-dist-universal}
\label{cor:body:afinn-distribution-universal}
\end{theorem}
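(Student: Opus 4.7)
The plan is to chain together three results already established in the excerpt, so the proof is essentially a two-line deduction once the right invocations are lined up. First, I would apply Theorem~\ref{theorem:main:2} under the hypothesis on $\ACFINNUniversalClass$ to obtain, for every $p \in [1,\infty)$, that $\INN{\FSACFH}$ is an $L^p$-universal approximator for $C^0(U, \ReD)$ on every open subset $U \subset \ReD$. Specializing to $U = \ReD$ gives that the closure of $\{g|_{\ReD} : g \in \INN{\FSACFH}\}$ in $L^p_{\rm loc}(\ReD, \ReD)$ contains $C^0(\ReD, \ReD)$.

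Second, I would observe that $\Triangular \subset C^0(\ReD, \ReD)$: by Definition~\ref{def: T}, any $\tau \in \Triangular$ is a $C^\infty$ map from $\ReD$ to $\ReD$, hence continuous. Consequently, the $L^p$-universality for $C^0(\ReD, \ReD)$ established in the first step yields, as an immediate corollary, that $\INN{\FSACFH}$ is an $L^p$-universal approximator for $\Triangular$ in the sense of Definition~\ref{def: Lp univ. approx.}. Here I should only double-check that the topology on $L^p_{\rm loc}(\ReD, \ReD)$ (seminorms on compact sets) is the right one, but this is exactly what Definition~\ref{def: Lp univ. approx.} asks for.

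Third, I would invoke Proposition~\ref{lem:body:distributional-universality}: any $L^p$-universal approximator for $\Triangular$ is a $(\mathcal{P}^{\rm w}, \nu)$-distributional universal approximator for $\mathcal{P}$ for every $\nu \in \mathcal{P}_{\rm ab}$. Combining this with the previous step yields the stated theorem.

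There is no serious obstacle here; the content of the statement was already absorbed into the earlier results. The only point that requires a brief sanity check is that we do not actually need to invoke Theorem~\ref{theorem:main:1} again in this corollary --- Theorem~\ref{theorem:main:2} already delivers universality for a function class large enough to contain $\Triangular$, so the equivalence machinery of Theorem~\ref{theorem:main:1} has, in effect, already been spent inside the proof of Theorem~\ref{theorem:main:2}. The remark that one may restrict the affine layers to permutations $\mathfrak{S}_d$ also transfers directly, since the $L^p$-universality in Theorem~\ref{theorem:main:2} is already stated with this strengthening and the reduction to distributional universality via Proposition~\ref{lem:body:distributional-universality} is insensitive to which subset of invertible affine maps is permitted.
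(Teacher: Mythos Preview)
Your proposal is correct and follows essentially the same route as the paper, which simply states that the result follows by combining Theorem~\ref{theorem:main:1}, Theorem~\ref{theorem:main:2}, and Proposition~\ref{lem:body:distributional-universality}. Your observation that Theorem~\ref{theorem:main:1} need not be invoked separately here---because Theorem~\ref{theorem:main:2} already yields $L^p$-universality for $C^0(\ReD,\ReD) \supset \Triangular$, so Proposition~\ref{lem:body:distributional-universality} applies directly---is a valid and slightly cleaner packaging of the same argument.
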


\subsection{Neural Ordinary Differential Equations (NODEs)}

The following shows that the INNs based on NODEs can approximate diffeomorphisms with respect to the $W^{r,\infty}$-norm.
We denote by $\Lipsp{} \cap C^r$ the space of Lipschitz and $C^r$ maps from $\ReD$ to $\ReD$ and we equip it with the relative topology of $W^{r,\infty}_{\rm loc}(\ReD, \ReD)$.
\begin{theorem}[Universality of NODEs]\label{thm: NODE is sup-universal}
Let $r \ge 0$.
Assume \(\NODEJacobiFuncClass \subset \Lipsp{}\cap C^r \) is a $W^{r,\infty}$-universal approximator for $\Lipsp{} \cap C^r$.
Then, \(\INNHNODE\) is a $W^{r,\infty}$-universal approximator for $\mathcal{D}^{\max(r,1)}$.
\end{theorem}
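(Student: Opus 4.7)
The plan is to reduce the statement, via part~(B) of Theorem~\ref{theorem:main:1}, to the $W^{r,\infty}$-universal approximation of flow endpoints, and then to approximate each such endpoint by solving an ODE whose vector field is close to the generator of the target flow. First I would verify the regularity hypotheses of Theorem~\ref{theorem:main:1}(B) for $\ARFINNFlow = \Psi(\mathcal{H})$: for any $f \in \mathcal{H} \subset \Lipsp{} \cap C^r$, the endpoint $\IVP{f}{\cdot}{1}$ is a $C^r$-diffeomorphism of $\ReD$ and hence is locally $C^{r-1,1}$ when $r \ge 1$ and locally $L^\infty$ when $r = 0$; moreover, being a diffeomorphism, its inverse image of any nullset is again a nullset. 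Theorem~\ref{theorem:main:1}(B) therefore reduces the claim to the $W^{r,\infty}$-universal approximation of $\FlowEnds{\infty}$ by $\INN{\Psi(\mathcal{H})}$, and since $\Psi(\mathcal{H}) \subset \INN{\Psi(\mathcal{H})}$ (take $n = 1$ and $W_1 = \Identity$) it suffices to approximate by $\Psi(\mathcal{H})$ alone.

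Fix $g \in \FlowEnds{\infty}$ with generating flow $\Phi$ and compact support set $K_\Phi$, and define the autonomous generator $f_\Phi(\bm{y}) := \partial_t \Phi(\bm{y}, t)|_{t=0}$. Because $\Phi$ is $C^\infty$ on $\ReD \times U$ and $\Phi(\bm{y}, t) = \bm{y}$ whenever $\bm{y} \notin K_\Phi$, $f_\Phi$ is $C^\infty$ with support in $K_\Phi$, and hence Lipschitz; so $f_\Phi \in \Lipsp{} \cap C^r$ and the uniqueness clause of Fact~\ref{fact:ODE solution exists for Lip} gives $\IVP{f_\Phi}{\cdot}{1} = \Phi(\cdot, 1) = g$. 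The hypothesis on $\mathcal{H}$ then produces, for any compact $K' \subset \ReD$ and any $\epsilon > 0$, some $\tilde{f} \in \mathcal{H}$ with $\|\tilde{f} - f_\Phi\|_{K', r, \infty} < \epsilon$. Choosing $K'$ large enough to contain all trajectories of both $f_\Phi$ and $\tilde{f}$ issuing from a target compact $K$ (which one can do a posteriori once $\epsilon$ is small), it remains to show that $\IVP{\tilde{f}}{\cdot}{1}$ converges to $g$ in $W^{r,\infty}(K, \ReD)$ as $\epsilon \to 0$.

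The core technical step is this continuous dependence of flow endpoints on the vector field in $W^{r,\infty}_{\rm loc}$, which I would obtain by iterated \Gronwall\ estimates. For $r = 0$, standard \Gronwall\ gives $\sup_{\bm{x} \in K} \|\IVP{\tilde{f}}{\bm{x}}{1} - \IVP{f_\Phi}{\bm{x}}{1}\| \le e^{\LipConst{f_\Phi}}\, \|\tilde{f} - f_\Phi\|_{K', \infty}$. For $r \ge 1$, the Jacobian $\Jac{\phi_t}$ of $\phi_t := \IVP{f}{\cdot}{t}$ satisfies the first variational equation $\partial_t \Jac{\phi_t}(\bm{x}) = \Jac{f}(\phi_t(\bm{x}))\, \Jac{\phi_t}(\bm{x})$, while the higher spatial derivatives of $\phi_t$ obey analogous equations that are linear in the top derivative with lower-order terms polynomial in the lower spatial derivatives of $\phi_t$ and in compositions $D^j f \circ \phi_t$ for $j \le r$. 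Subtracting the equations for $\tilde{f}$ and $f_\Phi$ and applying \Gronwall\ inductively on the derivative order bounds every error by $\|\tilde{f} - f_\Phi\|_{K', r, \infty}$ together with the already-controlled lower-order errors, yielding uniform convergence on $K$ of all spatial derivatives of $\IVP{\tilde{f}}{\cdot}{1}$ up to order $r$ to those of $g$.

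The main obstacle is the bookkeeping in this induction: the composition $D^j f \circ \phi_t$ must be compared with $D^j \tilde{f} \circ \tilde{\phi}_t$ by splitting the difference into a vector-field error $(D^j f - D^j \tilde{f}) \circ \phi_t$, bounded by the hypothesis on $\mathcal{H}$, and a trajectory error $D^j \tilde{f} \circ \phi_t - D^j \tilde{f} \circ \tilde{\phi}_t$, bounded by a modulus of continuity of $D^j \tilde{f}$ on $K'$ (uniform in $\tilde{f}$ because $\|\tilde{f}\|_{K', r, \infty} \le \|f_\Phi\|_{K', r, \infty} + \epsilon$) combined with the previously established $C^0$-closeness of the trajectories. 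Once this step closes, the desired $W^{r,\infty}_{\rm loc}$-approximation of $g$ by elements of $\Psi(\mathcal{H})$ follows, and the theorem is concluded via Theorem~\ref{theorem:main:1}(B).
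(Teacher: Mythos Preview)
Your overall plan coincides with the paper's: invoke Theorem~\ref{theorem:main:1}(B) to reduce to approximating $\FlowEnds{\infty}$, realize each $g \in \FlowEnds{\infty}$ as $\IVP{f_\Phi}{\cdot}{1}$ for the compactly supported $C^\infty$ generator $f_\Phi$, and then pass from $W^{r,\infty}$-closeness of vector fields to $W^{r,\infty}$-closeness of time-$1$ maps. The paper does not carry out the last step via variational \Gronwall\ estimates when $r\ge 1$; instead it invokes an abstract smooth-dependence result for ODE solutions on parameters (Duistermaat--Kolk, Theorem~B.3) together with the Berge maximum theorem to get continuity of $f \mapsto \WspKnorm{K}{r}{\infty}{\IVP{f}{\cdot}{1} - \IVP{F}{\cdot}{1}}$ in one stroke. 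For $r=0$ the paper does use \Gronwall, and it handles the trajectory-containment issue you label ``a posteriori'' by fixing $K'$ explicitly as a fattening of $\IVP{f_\Phi}{K}{[0,1]}$ and proving by contradiction that the approximate trajectory cannot escape it before time $1$.

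Your hands-on variational route is a legitimate alternative, but there is a gap at the top order $j=r$. You assert that the trajectory term $D^j \tilde f \circ \phi_t - D^j \tilde f \circ \tilde\phi_t$ is controlled by a modulus of continuity of $D^j\tilde f$ that is ``uniform in $\tilde f$ because $\|\tilde f\|_{K', r, \infty} \le \|f_\Phi\|_{K', r, \infty} + \epsilon$''. For $j \le r-1$ that is fine, since the bound on $D^{j+1}\tilde f$ supplies a uniform Lipschitz constant for $D^j\tilde f$. For $j=r$, however, a $W^{r,\infty}$ bound gives no equicontinuity whatsoever for $D^r\tilde f$, so this term is uncontrolled as written. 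The remedy is simply to reverse the splitting,
\[
D^j f_\Phi\circ\phi_t - D^j\tilde f\circ\tilde\phi_t
= \bigl(D^j f_\Phi\circ\phi_t - D^j f_\Phi\circ\tilde\phi_t\bigr)
+ \bigl(D^j f_\Phi - D^j\tilde f\bigr)\circ\tilde\phi_t,
\]
so that the modulus of continuity needed is that of $D^j f_\Phi$, which is globally Lipschitz since $f_\Phi \in C^\infty_c(\ReD)$. With that change (and once trajectory containment in $K'$ is secured as above so the second term is bounded by $\|\tilde f - f_\Phi\|_{K',r,\infty}$), the induction closes.
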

Theorem~\ref{thm: NODE is sup-universal} is shown by applying Theorem~\ref{thm:body:diffeo-universal-equivalences} in combination with Lemma~\ref{red to comp. supp. diff} (Appendix~\ref{sec:appendix:reduction to cpt supp}) to approximate the elements of $\FlowEnds{\infty}$ by NODEs.
A proof is in Appendix~\ref{sec:appendix:universality-proof-NODE-version}.
We remark that the universality in this theorem still holds if we restrict the affine layers of $\INNHNODE$ to identity except the last one, which is denoted by $W_1$ in Definition \ref{def: INNM} (see Proposition \ref{prop: strong ver. of NODE universality}.
Examples of $\mathcal{H}$ include the MLP with finite weights and Lipschitz-continuous activation functions such as ReLU activation \citep{LeCunDeep2015,ChenNeural2018a}, as well as the \emph{Lipschitz Networks} \citep[Theorem~3]{AnilSorting2019}.

\subsection{Sum-of-Squares Polynomial Flows (SoS Flows)} \label{subsec: SOS}
The sum-of-squares polynomial flow (SoS flow) \citep{DBLP:conf/icml/JainiSY19} is an important example of the flow layer for INNs (see also Section \ref{appendix: SOS}).
Here, we consider a special class of SoS flow layers $\HSoS$ where only the last dimension is converted (for the general description of SoS flow layers, see Section \ref{appendix: SOS}).
\begin{definition}
Let $\mathcal{H}$ be a set of measurable functions on $\Re^{d-1}$.
For $c \in \Re$ and $h_1,\dots, h_k \in \mathcal{H}$, let
\[g(\x; c,h_1,\dots, h_k) := c + \int_0^{x_{d}} \sum_{l=0}^k h_l(\x_{\le d-1} )u^l du.\]
Then, we define $\HSoS$ to be the set of all maps of the form $\x \mapsto (\x_{\le d-1}, g(\x; c, h_1,\dots, h_k))$ where $k \ge 1$, $c \in \Re$, and $h_1,\dots, h_k \in \mathcal{H}$.
\end{definition}
Although the universality for SoS based INN was proved in \citet{DBLP:conf/icml/JainiSY19}, we prove a much stronger universality for the architecture (Proposition \ref{prop: universality for sos}):
\begin{theorem} \label{thm: universality for sos}
Let $r \ge 0$ and let $\mathcal{H}$ be a set of measurable functions on $\Re^{d-1}$.
Assume that all elements of $\mathcal{H}$ are locally $C^{r-1,1}$ if $r \ge 1$ or locally $L^\infty$ if $r=0$ and that $\mathcal{H}$ is a $W^{r,\infty}$-universal approximator for the set of $(d-1)$-variable polynomials.
Then, \(\INN{\HSoS}\) is a $W^{r,\infty}$-universal approximator for $\mathcal{D}^{\max(r,1)}$.
\end{theorem}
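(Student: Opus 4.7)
The plan is to apply Theorem~\ref{thm:sobolev-universality-equivalence}(B) to reduce the problem of $W^{r,\infty}$-approximating $\mathcal{D}^{\max(r,1)}$ to the much simpler task of approximating the single-coordinate class $\CinftyOneDimTriangular$, which structurally matches the form of an SoS layer. To invoke that theorem, one first checks the regularity hypotheses: an $\HSoS$ layer has the closed form $\x\mapsto(\upto{d-1}{\x},\,c+\sum_{l=0}^{k}h_l(\upto{d-1}{\x})\tfrac{x_d^{l+1}}{l+1})$, so it inherits from $\mathcal{H}$ the property of being locally $C^{r-1,1}$ when $r\ge 1$ (resp.\ locally $L^\infty$ when $r=0$); being a bijective $C^r$-diffeomorphism on $\ReD$, its inverse has the same regularity and therefore preserves nullsets.

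Once reduced, I would fix $\tau\in\CinftyOneDimTriangular$, a compact set $K\subset\ReD$, and an accuracy $\epsilon>0$. Since $\tau-\Identity$ is compactly supported, I would first conjugate by a translation $T(\x)=(\upto{d-1}{\x},\,x_d-A)\in\FLin$ with $A$ large enough that $\tilde\tau:=T^{-1}\circ\tau\circ T$ has support disjoint from $\{x_d=0\}$; then $\tilde\tau_d(\upto{d-1}{\x},0)=0$ identically in $\upto{d-1}{\x}$, and $\tilde\tau_d(\x)=\int_0^{x_d}\phi(\upto{d-1}{\x},u)\,du$ where $\phi:=\partial_u\tilde\tau_d$ is smooth, positive, and identically $1$ outside a compact set. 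Choose $0<c_0<\inf\phi$ and use the multivariate Stone--Weierstrass theorem to approximate $\sqrt{\phi-c_0}$ in $W^{r,\infty}$ on a suitably enlarged compact set by a polynomial $Q(\upto{d-1}{\x},u)$. Standard Sobolev algebra estimates give that $c_0+Q^2$ is a good $W^{r,\infty}$-approximant of $\phi$, and crucially $c_0+Q^2\ge c_0>0$ globally in $u$, so any SoS layer with this integrand is globally bijective. Expanding $c_0+Q^2$ in powers of $u$ yields coefficients that are polynomials in $\upto{d-1}{\x}$; approximating each such coefficient by $\tilde h_l\in\mathcal{H}$ via the assumed universality, and composing the resulting SoS layer with $T^{\pm 1}$, would produce an element of $\INN{\HSoS}$ within $W^{r,\infty}(K)$-distance $\epsilon$ of $\tau$.

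The main technical obstacle is preserving global positivity of the polynomial integrand in $u$ after its coefficients are perturbed by $\mathcal{H}$-approximations, since otherwise the candidate layer is not bijective and falls outside $\INN{\HSoS}$. The natural fix is to push the approximation inside the square and approximate the coefficients of $Q$ rather than of $Q^2$, so that the integrand $c_0+(\sum_j\tilde q_j u^j)^2$ remains manifestly bounded below by $c_0$ regardless of how close $\tilde q_j$ is to $q_j$. Rewriting this in the form $\sum_l h_l u^l$ requires the $h_l$ to be quadratic combinations of elements of $\mathcal{H}$, so one must either impose a mild product-closure property on $\mathcal{H}$, or work with the slightly more general SoS parametrization handled in the appendix version Proposition~\ref{prop: universality for sos}; the Sobolev error control through squaring, coefficient substitution, and conjugation by $T$ then reduces to routine estimates on compact sets.
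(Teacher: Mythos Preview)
Your approach is essentially the same as the paper's: reduce via Theorem~\ref{thm:sobolev-universality-equivalence}(B) to approximating $\CinftyOneDimTriangular$, then use Stone--Weierstrass on the square root of the derivative and approximate the resulting polynomial coefficients by elements of $\mathcal{H}$; this is exactly the content of Proposition~\ref{prop: universality for sos}, which the paper invokes in one line together with Theorem~\ref{theorem:main:1}.

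A few remarks on the differences. Your translation conjugation to force $\tilde\tau_d(\upto{d-1}{\x},0)=0$ is a clean way to handle the fact that the constant $c$ in the $\HSoS$ layer is a scalar rather than a function of $\upto{d-1}{\x}$; the paper's Proposition~\ref{prop: universality for sos} writes $s_d(\upto{d-1}{\x},0)$ as the additive constant and declares its replacement ``straightforward,'' so your trick makes this step explicit. On the other hand, your $c_0$ shift is unnecessary if you work with the squared-integrand parametrization $\tilde{\mathfrak{B}}$ used in the appendix: there one approximates $\sqrt{\phi}$ directly by a polynomial $\pi$, and since the layer's integrand is $(\sum_l h_l(\upto{d-1}{\x})u^l)^2$, it stays nonnegative \emph{automatically} after replacing the polynomial coefficients $\xi_l$ by nearby $\tilde h_l\in\mathcal{H}$, so the positivity-after-perturbation concern you raise does not arise. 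Conversely, your integrand $c_0+Q^2$ is not a single square and hence does not literally fit the appendix $\HSoS$ with $B=1$; working with $\pi^2$ alone, as the paper does, avoids this mismatch and the need for product closure on $\mathcal{H}$.
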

This theorem immediately follows from Proposition \ref{prop: universality for sos} and Theorem \ref{theorem:main:1}.
As a direct corollary of Theorem \ref{thm: universality for sos}, Corollary \ref{cor: distributional universality for INN}, and Proposition \ref{proposition:body:generalized-distributional-universality}, we have the following.
\begin{corollary}
Let us use the same notation as in Theorem~\ref{thm: universality for sos}.
Then, $\INN{\HSoS}$ is a $(\mathcal{P}^{\rm w}, \nu)$-distributional universal approximator for $\mathcal{P}$ for any $\nu \in \mathcal{P}_{\rm ab}$.
Moreover, if $r \ge 1$, $\INN{\HSoS}$ is a $(\mathcal{P}^{\rm TV}, \nu)$-distributional universal approximator for $\mathcal{P}_{\rm ab}$ for any $\nu \in \mathcal{P}_{\rm ab}$.
\end{corollary}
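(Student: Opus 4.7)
The plan is to obtain both conclusions by concatenating the $W^{r,\infty}$-universality given by Theorem~\ref{thm: universality for sos} with the distributional consequences of functional universality we have already established, namely Corollary~\ref{cor: distributional universality for INN} for the weak-topology part and Corollary~\ref{cor: total variation distributional universality for INN} (equivalently Proposition~\ref{proposition:body:generalized-distributional-universality}) for the total variation part. No new structural argument is required beyond a brief regularity check on the layers in $\HSoS$.

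First, I would apply Theorem~\ref{thm: universality for sos} to conclude that $\INN{\HSoS}$ is a $W^{r,\infty}$-universal approximator for $\mathcal{D}^{\max(r,1)}$. In order to translate this into a $W^{r,\infty}$-universality for $\OneDimTriangular$ via Theorem~\ref{theorem:main:1}(B), I must verify that every element of $\HSoS$ is locally $C^{r-1,1}$ (or locally $L^\infty$ when $r=0$) and that its inverse image of a nullset is a nullset. This is immediate from the definition: each such map has the form $\x \mapsto (\upto{d-1}{\x}, g(\x;c,h_1,\dots,h_k))$ with $g$ polynomial in $x_d$ and with coefficients built from $\mathcal{H}$, so the prescribed regularity on $\mathcal{H}$ passes through the integration, and the strict monotonicity of $g(\cdot)$ in $x_d$ coming from the sum-of-squares integrand (detailed in Section~\ref{appendix: SOS}) makes the layer a piecewise $C^r$-diffeomorphism, whence the nullset condition. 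Feeding the resulting $W^{r,\infty}$-universality for $\OneDimTriangular$ into Corollary~\ref{cor: distributional universality for INN} then yields that $\INN{\HSoS}$ is a $(\mathcal{P}^{\rm w},\nu)$-distributional universal approximator for $\mathcal{P}$ for every $\nu \in \mathcal{P}_{\rm ab}$, establishing the first assertion.

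For the second assertion, assume additionally $r \ge 1$. The same regularity check now shows each element of $\HSoS$ to be locally $C^{0,1}$ and a piecewise $C^1$-diffeomorphism. Since we already have $W^{r,\infty}$-universality for $\CinftyOneDimTriangular$, and in particular $W^{1,\infty}$-universality, all hypotheses of Corollary~\ref{cor: total variation distributional universality for INN} are met, and we conclude that $\INN{\HSoS}$ is a $(\mathcal{P}^{\rm TV},\nu)$-distributional universal approximator for $\mathcal{P}_{\rm ab}$ for every $\nu \in \mathcal{P}_{\rm ab}$. The only genuinely nontrivial step in the whole argument is verifying the monotonicity/invertibility and the inherited regularity of the SoS layers, i.e., confirming the hypotheses of Theorem~\ref{theorem:main:1}(B) and Corollary~\ref{cor: total variation distributional universality for INN} for this particular flow family; once that is in place, both parts follow as direct corollaries by concatenation of the quoted results.
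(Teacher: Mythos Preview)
Your proposal is correct and matches the paper's approach, which likewise derives the corollary directly from Theorem~\ref{thm: universality for sos}, Corollary~\ref{cor: distributional universality for INN}, and Proposition~\ref{proposition:body:generalized-distributional-universality} (your Corollary~\ref{cor: total variation distributional universality for INN} is just the packaged form of the latter). One minor redundancy: passing from $W^{r,\infty}$-universality for $\mathcal{D}^{\max(r,1)}$ to that for $\OneDimTriangular$ does not need Theorem~\ref{theorem:main:1}(B), since $\OneDimTriangular\subset\mathcal{D}^{\max(r,1)}_{\ReD}$ is just a subset inclusion; the regularity check on the $\HSoS$ layers is still required, however, as it is a standing hypothesis of the two cited corollaries.
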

\subsection{Other Examples of Flow Layers}

Theorem~\ref{prop:body:acfinn-Lp} can be interpreted as providing a convenient criterion to check the universality of a \ARFINN{}: if the flow architecture \(\ARFINNFlow\) contains \ACFs{} (or even just \(\FSACFH\) with sufficiently expressive \(\ACFINNUniversalClass\)) as special cases, then \(\INN{\ARFINNFlow}\) is an \(L^p\)-universal approximator for $C^0(U,\ReD)$ for any open subset $U \subset \ReD$.
Such examples of \(\ARFINNFlow\) include the \emph{nonlinear squared flow} \citep{ZieglerLatent2019a}, \emph{Flow++} \citep{HoFlow2018}, and the \emph{neural autoregressive flow} \citep{HuangNeural2018b}.

The result may not immediately apply to the typical \emph{Glow} \citep{KingmaGlow2018} architecture for image data that uses the 1x1 invertible convolution layers and convolutional neural networks for the coupling layers.
However, the Glow architecture for non-image data \citep{ArdizzoneAnalyzing2018b,TeshimaFewshot2020} can also be interpreted as \(\INN{\ARFINNFlow}\) with ACF layers, and hence it is an \(L^p\)-universal approximator for $C^0(U,\ReD)$ for any open subset $U \subset \ReD$.
\section{Conclusion}
In this paper, we provided a general framework to analyze the theoretical representation power of a family of invertible function models.
The key idea is to simplify the problem of approximating a general \(C^r\)-diffeomorphism by decomposing it into a finite set of simpler invertible maps by using the structure theorem of the diffeomorphism group.

The general framework was applied to two representative architectures of INNs: the CF-INNs and the NODEs, and we showed the high representation power of these architectures contrary to their apparent limitations on expressiveness.

For future work, it is important to quantitatively evaluate how many flow layers are required to approximate a given target map to assess the efficiency of the approximation.
It includes exploring efficient approximation of well-behaved target functions (e.g., the subset of \(\CtwoDomainDiff\) consisting of bi-Lipschitz diffeomorphisms).
Also, comparing the approximation efficiency of different flow layer designs is an important issue.
We expect that answering these questions provides principled design choices of invertible models tailored for a given task.

\begin{ack}
\acknowledgmentContent{}
\end{ack}

\printbibliography

\newpage
\appendix
\global\csname @topnum\endcsname 0
\global\csname @botnum\endcsname 0

This is the Supplementary~Material for ``\titleSentenceCase{}.''
We provide the proofs for statements in the paper.

Table~\ref{tbl:abbreviation-table} is the list of abbreviations we use in the paper.
Tables~\ref{tbl:notation-table:generic} and \ref{tbl:notation-table:specific} summarize the symbols we employed in the paper.

\begin{table}[tbph]
  \caption{Abbreviations in the paper}
  \label{tbl:abbreviation-table}
  \centering
  \begin{tabular}{ll}
    \toprule
    Abbreviation & Meaning \\
    \midrule
    INN & Invertible neural network\\
    CF-INN & Invertible neural network based on coupling flow\\
    IAF & Inverse autoregressive flow\\
    DSF & Deep sigmoidal flow\\
    SoS & Sum-of-squares polynomial flow\\
    MLP & Multi-layer perceptron\\
    NODE & Neural ordinary differential equation\\
    \bottomrule
  \end{tabular}
\end{table}

\begin{table}[tbph]
  \caption{Notation table (part 1 of 2)}
  \label{tbl:notation-table:generic}
  \centering
  \begin{tabular}{ll}
    \toprule
    Notation & Meaning \\
    \midrule
    $\Re$ & Set of all real numbers \\
    $\Na$ & Set of all positive integers \\
    $[n]$ & Set $\{1,2, \dots, n\}$ \\
    $\Euclideannorm{\cdot}$ & Euclidean norm\\
    $\opnorm{\cdot}$ & Operator norm\\
    $\LpKnorm{\cdot}$ & $L^p$-norm ($p\in [1,\infty)$) on a subset $K\subset \mathbb{R}^d$\\
    \(\Indicator{A}\) & Indicator (characteristic) function of \(A\) \\
    \(\Identity\) & Identity map \\
    \(\supp{}\) & Support of a map or measure\\
    $Df(x)$ & Jacobian matrix of $f$ at $x$ \\
    \bottomrule
  \end{tabular}
\end{table}

\begin{table}[tbph]
  \caption{Notation table (part 2 of 2)}
  \label{tbl:notation-table:specific}
  \centering
  \begin{tabular}{ll}
    \toprule
    Notation & Meaning \\
    \midrule
    CF, $\AutoRegressive{k}{\tau}{\theta}$ & Coupling flow \\
    ACF, $\Aff{k}{s}{t}$ & Affine coupling flow\\
    $\ACFINNUniversalClass$ & Generic notation for a set of functions from $\mathbb{R}^{d-1}$ to $\mathbb{R}$\\
    $\FSACFH, \Aff{d-1}{s}{t}$ & $\ACFINNUniversalClass$-single-coordinate affine coupling flows (\(s, t \in \ACFINNUniversalClass\))\\
    \(\IVP{f}{\x}{t}\) & The (unique) solution to an initial value problem evaluated at $t$ \\
    \(\ODEFlowEnds{\mathcal{F}}\) & Set of NODEs obtained from the Lipschitz continuous vector fields $\mathcal{F}$ \\
    $\ARFINNFlow$ & Generic notation for a set of invertible functions\\
    $\INN{\ARFINNFlow}$ & Set of all invertible neural networks based on \(\ARFINNFlow\)\\
    \midrule
    $d \in \Na$ & Dimensionality of the input/output Euclidean space\\
    $\ell \in \{0\} \cup \Na$ & Differentiability of the model\\
    $\CrDomainDiff$ & Set of all $C^r$-diffeomorphisms with \(C^r\)-diffeomorphic domains\\
    $\DcRDr$ $(1\leq r\leq \infty)$ & Group of compactly-supported $C^r$-diffeomorphisms (on \(\ReD\))\\
    $\FlowEnds{r}$ & Set of all flow endpoints in $\DcRDr$ \\
    $\Triangular$ & Set of all 
    $C^\infty$-increasing triangular mappings\\
    \(\CrOneDimTriangular\) &Set of all $C^r$-single-coordinate transformations\\
    \midrule
    $\mathfrak{S}_d$ & Set of all permutations of variables of $\ReD$\\
    \(\FGL\) & Set of all regular real matrices of size $d$ \\
    \(\FLin \) & Set of all affine transformations, i.e., \( \{\x \mapsto A\x + b: A \in \FGL, b \in \ReD\}\) \\
    \midrule
    $C^r$ & $r$-times continuously differentiable\\
    $C^{r,\alpha}$ & $C^r$ and any $k$-th derivative with $|k|=r$ is $\alpha$-H\"older continuous \\
    $C^r(\Re^m)$ & Set of all $C^r$ functions on $\Re^m$ equipped with local Sobolev topology \\
    $C^\infty_c(\ReD)$ & Set of all compactly-supported $C^\infty$ functions on $\ReD$ \\
    $B_{\rm loc}(\ReD, \Re^m)$ & Set of all locally bounded measurable maps from $\ReD$ to $\Re^m$ \\
    $C^r(U,\Re^n)$ & Set of all $\Re^n$-valued $C^r$ maps on $U$ \\
    $W^{r,p}_{\rm loc}(U, \mathbb{R}^n)$ & $\mathbb{R}^n$-valued local Sobolev space on $U$ \\
    $L^p_{\rm loc}(U, \Re^n)$ & $\mathbb{R}^n$-valued local Lebesgue space on $U$ (equal to $W^{0,p}_{\rm loc}(U, \mathbb{R}^n)$) \\
    \(\Lipsp\) & Set of all Lipschitz continuous maps from $\ReD$ to $\ReD$ \\
    $\Lipsp{} \cap C^r$ & Set of all Lipschitz and $C^r$ maps from $\ReD$ to $\ReD$ with $W^{r,\infty}_{\rm loc}$-topology \\
    \midrule
    $\mathcal{P}$ & Set of all probability measures on $\ReD$ \\
    $\mathcal{P}_{\rm ab}$ & Set of all absolutely continuous probability measures on $\ReD$ \\
    $\mathcal{P}^{\rm w}$ & $\mathcal{P}$ equipped with the weak convergence topology \\
    $\mathcal{P}^{\rm TV}$ & $\mathcal{P}$ equipped with the total variation topology \\
    \bottomrule
  \end{tabular}
\end{table}

\section{Locally bounded maps and piecewise diffeomorphisms}
\label{sec:appendix:piecewise diffeo}
In this section, we provide the notions of locally-ness and piecewise-ness. These notions are used to state the regularity conditions on the invertible layers $\mathcal{G}$ in Theorem~\ref{theorem:main:1} and to prove the results in Section~\ref{appendix: compatibility of approximation and composition}.

\subsection{Definition of locally-ness}
Here, we provide the definition of ``locally'' for functions.

\begin{definition}[locally bounded maps]
Let ${\bf P}$ be a property of functions such as boundedness.
Let $f$ be a map from $\Re^m$ to $\Re^n$.
We say $f$ is \emph{locally {\bf P}} if for each point $\x \in \Re^m$, there exists an open neighborhood $U$ of $\x$ such that $f$ has property ${\bf P}$ on $U$.
\end{definition}
The boundedness is a typical example of ${\bf P}$.
We easily see that a continuous function is locally bounded.

\subsection{Definition and properties of piecewise \texorpdfstring{$C^r$}{Cr}-mappings}
In this section, we define the notion of piecewise properties of functions, for example, piecewise $C^r$-functions.
Examples of piecewise $C^r$-diffeomorphisms appearing in this paper include the \(\FSACFH\) with \(\ACFINNUniversalClass\) being MLPs with ReLU activation.  
We first introduce the notion of \emph{piecewise properties}.
\begin{definition}\label{def:piecewiseC1}
Let ${\bf P}$ be a property of functions such as continuous, $C^r$, $C^{r, \alpha}$, and Lipschitz.
Let $f:\Re^m\rightarrow\Re^n$ be a map.  
We say $f$ is a \emph{piecewise {\bf P}-map} if there exists a mutually disjoint family of (at most countable)
open subsets $\{V_i\}_{i\in I}$ such that
\begin{itemize}
    \item ${\rm vol}(\Re^m\setminus U_f)=0$,
    \item for any $i\in I$, there exists an open subset $W_i$  containing the closure $\overline{V_i}$ of $V_i$, and a map $\tilde{f}_i:W_i\rightarrow\Re^n$ with the property {\bf P} such that $\tilde{f}_i|_{V_i}=f|_{V_i}$, and
    \item for any compact subset $K$, $\#\{i\in I : V_i\cap K\neq \emptyset\}<\infty$.
\end{itemize}
where \(\#(\cdot)\) denotes the cardinality of a set, and we define
\[U_f:=\bigsqcup_{i\in I} V_i.\]
\end{definition}
Although there exist several definitions of piecewise functions, we introduce a generalized definition for our purpose.
We remark that we here do not assume that piecewise $C^r$-maps are continuous everywhere and thus they might have discontinuous points.
We also remark that piecewise continuous mappings are essentially locally bounded in the sense that for any compact subset $K\subset\ReD$, ${\rm ess.sup}_{K}\Vert f\Vert=\supRangenorm{K\cap U_f}{f} <\infty$.

We define the notion of piecewise $C^r$-diffeomorphisms as follows.
\begin{definition}[Piecewise $C^r$-diffeomorphisms]\label{def:piecewiseC1diffeo}
\label{def: piecewise C1 diff}
Let $f:\ReD\rightarrow\ReD$ be a piecewise $C^r$-map.  We say $f$ is a \emph{piecewise $C^r$-diffeomorphism} if we can choose $\{V_i\}_{i\in I}$ and $\{\tilde{f}_i:W_i \to \ReD \}_{i\in I}$ in Definition~\ref{def:piecewiseC1} so that they additionally satisfy the following conditions:
\begin{enumerate}
    \item the image of a nullset (i.e., a Lebesgue-measurable subset of $\ReD$ whose measure is $0$) via $f$ is also a nullset,
    \item $f|_{U_f}$ is injective, 
    \item for $i\in I$, $\tilde{f}_i$ is a $C^r$-diffeomorphism from $W_i$ onto $\tilde{f}_i(W_i)$, 
    \item ${\rm vol}\left(\ReD\setminus f(U_f)\right)=0$, and
    \item for any compact subset $K$, $\#\{i\in I : f(V_i)\cap K\neq \emptyset\}<\infty$.
\end{enumerate}
\end{definition}
We summarize the basic properties of piecewise $C^r$-diffeomorphisms in the proposition below.
Note that for a piecewise $C^r$-diffeomorphism $f$, $Df$ is defined almost everywhere since its value is determined on $U_f$ (hence so is its determinant $|Df|$).

\begin{proposition}[Basic Properties of Piecewise $C^r$-diffeomorphisms]
\label{prop: basic properties}
Let $r \ge 1$ be a positive integer.
Let $f$ be a piecewise $C^r$-diffeomorphism. Then, we have the following:
\begin{enumerate}
    \item There exists a piecewise $C^r$-diffeomorphism $f^\dagger$ such that $f( f^\dagger(x))=x$ for $x\in U_{f^\dagger}$ and $f^\dagger(f(y))=y$ for $y\in U_f$\label{existence of inverse}.
    \item For any $h\in L^1$, we have $\int h(x) dx =\int h(f(x))|Df(x)|dx$. \label{change of var}
    \item For any compact subset $K$, $f^{-1}(K)\cap U_f$ is a bounded subset.\label{compfinvbdd}
    \item For any nullset $F$, then $f^{-1}(F)$ is also a nullset.  \label{finvzero}
    \item For any measurable set $E$ and any compact set $K$, $f^{-1}(E\cap K)$ has a finite volume.\label{finvfin}
    \item For any piecewise $C^r$-map (resp. piecewise Lipschitz map, piecewise $C^r$-diffeomorphism) $g$ , the composition $g \circ f$ is also a piecewise $C^r$-map (resp. piecewise Lipschitz map, piecewise $C^r$-diffeomorphism). \label{composition}
\end{enumerate}
\end{proposition}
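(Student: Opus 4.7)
The plan is to prove each of the six items by working with the local charts $\{(V_i, W_i, \tilde f_i)\}_{i\in I}$ guaranteed by Definition~\ref{def: piecewise C1 diff}, patching local statements via the five conditions of that definition (injectivity of $f|_{U_f}$, covering up to nullsets, and local finiteness both upstairs and downstairs through $f$).

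For item~\ref{existence of inverse}, I would build $f^\dagger$ piece by piece. On the open sets $V_i' := \tilde f_i(V_i)$, which are pairwise disjoint by the injectivity of $f|_{U_f}$, set $f^\dagger := \tilde f_i^{-1}$; extending to an open neighborhood $W_i' \supset \overline{V_i'}$ is possible because each $\tilde f_i$ is a $C^r$-diffeomorphism of $W_i$ onto the open set $\tilde f_i(W_i)$, so $\tilde f_i^{-1}$ is a $C^r$-map on a neighborhood of $\overline{V_i'}$. The equality $\mathrm{vol}(\ReD\setminus\bigsqcup_i V_i')=0$ is exactly the fourth bullet of Definition~\ref{def: piecewise C1 diff}, while the final (local finiteness) bullet of the same definition is precisely what gives the downstairs local finiteness needed for $f^\dagger$. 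The remaining axioms for $f^\dagger$ (image of nullsets is null, injectivity, etc.) transfer by swapping the roles of $f$ and $f^\dagger$. For item~\ref{change of var}, apply the classical $C^1$ change-of-variables formula to each $\tilde f_i: V_i \to V_i'$ to get $\int_{V_i'} h(x)\,dx = \int_{V_i} h(\tilde f_i(x))|D\tilde f_i(x)|\,dx$, then sum in $i$ using monotone/dominated convergence together with the covering-up-to-nullsets properties of both $\{V_i\}$ and $\{V_i'\}$.

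Items \ref{compfinvbdd}--\ref{finvfin} then fall into place. For~\ref{compfinvbdd}, the fifth condition of Definition~\ref{def: piecewise C1 diff} says only finitely many $V_i$ satisfy $f(V_i)\cap K\ne\emptyset$; on each such $V_i$, $f^{-1}(K)\cap V_i = \tilde f_i^{-1}(K\cap \tilde f_i(V_i))$ is a closed and bounded subset of $W_i\subset\ReD$ since $\tilde f_i$ is a homeomorphism onto an open set and $K$ is compact. For~\ref{finvzero}, on $U_f$ we have $f^{-1}(F)\cap V_i=\tilde f_i^{-1}(F)\cap V_i$, which has measure zero since $C^1$-diffeomorphisms preserve nullsets; summing over $i$ and using $\mathrm{vol}(\ReD\setminus U_f)=0$ finishes it. Item~\ref{finvfin} follows immediately by combining~\ref{compfinvbdd} and~\ref{finvzero}: $f^{-1}(E\cap K)\subset f^{-1}(K)$, and the latter differs from the bounded set $f^{-1}(K)\cap U_f$ by a nullset.

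Item~\ref{composition} is the main bookkeeping obstacle and where I expect most of the real work. Given $g$ piecewise $C^r$ with charts $\{(V_j', W_j', \tilde g_j)\}_{j\in J}$, define $V_{i,j} := V_i\cap \tilde f_i^{-1}(V_j')$. These are open and pairwise disjoint, and $g\circ f$ agrees with $\tilde g_j\circ \tilde f_i$ on $V_{i,j}$, which is a $C^r$-map (or Lipschitz, or $C^r$-diffeomorphism) on a neighborhood of $\overline{V_{i,j}}$. The covering condition $\mathrm{vol}(\ReD\setminus\bigsqcup_{i,j} V_{i,j})=0$ follows because the complement is contained in $(\ReD\setminus U_f)\cup f^{-1}(\ReD\setminus U_g)$, which is null by the fourth bullet of Definition~\ref{def: piecewise C1 diff} applied to $f$ and by item~\ref{finvzero} applied to $\ReD\setminus U_g$. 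The hardest point is the local finiteness of $\{V_{i,j}\}$ on compact sets $K$: I would use item~\ref{compfinvbdd} to see that $f^{-1}(K)\cap U_f$ is bounded, hence meets only finitely many $V_i$ and is contained in a compact set $K'$; then the local finiteness for $g$ restricts the indices $j$ for which $V_j'\cap K'\neq\emptyset$ to a finite set. In the diffeomorphism case, injectivity of $g\circ f$ on $\bigsqcup V_{i,j}$ follows from injectivity of $f|_{U_f}$ and $g|_{U_g}$, preservation of nullsets is item~\ref{finvzero} applied to both $f$ and $g$, and the analogous downstairs conditions follow by symmetry using item~\ref{existence of inverse} to pass to inverses.
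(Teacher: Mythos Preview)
Your overall strategy matches the paper's, but you diverge on items~\ref{finvzero} and~\ref{finvfin}. The paper derives~\ref{finvzero} from the change-of-variables formula~\ref{change of var}: writing $\int_{f^{-1}(F)}|Df(x)|\,dx = 0$ and then slicing by $\{|Df| \ge 1/n\}$ to force $\mathrm{vol}(f^{-1}(F) \cap K) = 0$. For~\ref{finvfin} it again invokes~\ref{change of var}, this time needing a uniform positive lower bound on $|Df|$ over the closure of $f^{-1}(K)\cap U_f$. Your arguments bypass the Jacobian entirely: \ref{finvzero} is the local patching of the classical fact that $C^1$-diffeomorphisms preserve nullsets, and~\ref{finvfin} drops out of $f^{-1}(E\cap K)\subset f^{-1}(K)=(f^{-1}(K)\cap U_f)\cup(f^{-1}(K)\setminus U_f)$, the first piece bounded by~\ref{compfinvbdd} and the second null because $\ReD\setminus U_f$ is. This is cleaner and avoids the Jacobian-bound step (in fact your~\ref{finvfin} does not even need~\ref{finvzero}); the paper's route has the mild advantage of making~\ref{change of var} do double duty.

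There is one slip in item~\ref{composition}. For the local finiteness of $\{V_{i,j}\}$ against a compact $K$ in the domain of $g\circ f$, you invoke~\ref{compfinvbdd} to get boundedness of $f^{-1}(K)\cap U_f$, take its closure $K'$, and then test $V_j'\cap K'$. But the sets $V_j'$ live on the codomain side of $f$, while your $K'$ is still on the domain side, so that intersection is not the relevant one: if $V_{i,j}\cap K\neq\emptyset$ then $V_j'$ meets $f(V_i\cap K)$, not $f^{-1}(K)$. What is needed is that $f(K\cap U_f)$ is bounded---which is~\ref{compfinvbdd} applied to the inverse $f^\dagger$ from~\ref{existence of inverse}---giving a compact $K'$ on the codomain side against which the local finiteness of $\{V_j'\}$ applies. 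The paper does exactly this. With that correction your argument goes through.
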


\begin{proof}
Let $\{V_i\}_{i\in I}$ and $\{\tilde{f}_i:W_i \to \ReD\}_{i\in I}$ be as in Definition~\ref{def:piecewiseC1diffeo}.

\textit{Proof of \ref{existence of inverse} }:
First we note that since $f|_{V_i}$ is a restriction of the diffeomorphism $\tilde{f}_i$, $f(V_i)$ is an open set and $f|_{V_i}^{-1}$ is a well-defined $C^r$-function on $f(V_i)$.
We also note that since $f|_{U_f}$ is injective, we have $f(U_f) = \bigsqcup_{i\in I} f(V_i)$.
Fix $a\in\ReD$.
We define $f^\dagger(x)=a$ for $x\in \ReD\setminus f(U_f)$ and define $f^\dagger(x):=f|_{V_i}^{-1}(x)$ for $x\in f(V_i)$.
Then, $f^\dagger$ is a piecewise $C^r$-mapping with respect to the family of pairwise disjoint open subsets $\{f(V_i)\}_{i\in I}$, and satisfies the conditions for a piecewise $C^r$-diffeomorphism. 

\textit{Proof of \ref{change of var} }: 
It follows by the following computation:
\begin{align*}
    \int h(x) dx&=\int_{f(U_f)}h(x)dx\\
    &=\sum_{i\in I}\int_{f(V_i)}h(x) dx\\
    &=\sum_{i\in I}\int_{V_i}h(f(x))|Df(x)|dx=\int h(f(x))|Df(x)|dx.
\end{align*}

\textit{Proof of \ref{compfinvbdd}}
It suffices to show that $f^{-1}(K)\cap U_f$ is covered by finitely many compact subsets. We remark that only finitely many $V_i$'s intersect with $f^{-1}(K)$. If not, infinitely many $f(V_i)$'s intersect with $f(f^{-1}(K))= K$, which contradicts the definition of piecewise $C^r$-diffeomorphisms. Let $I_0\subset I$ be a finite subset composed of $i\in I$ such that $V_i$ intersects with $f^{-1}(K)$. For $i\in I_0$, we define a compact subset $F_i:=\tilde{f}_i^{-1}(\tilde{f}_i(\overline{V_i})\cap K)$.  Then we see that $f^{-1}(K)\cap U_f$ is contained in $\cup_{i\in I_0}F_i$. 

\textit{Proof of \ref{finvzero} }: 
It suffices to show that for any compact subset $K$, the volume of $f^{-1}(F)\cap K$ is zero.
By applying \ref{change of var} to the case $h=\Indicator{F}$, we see that
\[\int_{f^{-1}(F)}|Df(x)|dx=0.\]
For  $n>0$, let $E_n:=f^{-1}(F)\cap K \cap \{x\in\ReD:|Df(x)|\ge 1/n\}$. Then we have
\[\frac{{\rm vol}(E_n)}{n}\le \int_{E_n}|Df(x)|dx \le  \int_{f^{-1}(F)}|Df(x)|dx=0,\]
thus ${\rm vol}(K\cap f^{-1}(F))=\lim_{n\rightarrow\infty}{\rm vol}(E_n)=0$

\textit{Proof of \ref{finvfin} }:
By applying \ref{change of var} to the case $h=\Indicator{E\cap K}$, we see that
\[\int_{f^{-1}(E\cap K)}|Df(x)|dx= {\rm vol}(E\cap K).\]

Let $F$ be a closure of $f^{-1}(K)\cap U_f$.  By \ref{compfinvbdd}, $F$ is a compact subset.  Let $I_0:=\{i\in I : F\cap V_i\neq\emptyset\}$ be a finite subset. Then we have 
\begin{align*}
    C&:=\inf_{f^{-1}(K)\cap U_f}|Df|\\
    &\ge \inf_{i\in I_0}\inf_{F\cap\overline{V_i}}|D\tilde{f}_i| >0.
\end{align*}
Thus,
\begin{align*}
\int_{f^{-1}(E\cap K)\cap U_f}|Df(x)|dx 
\geq C{\rm vol}(f^{-1}(E\cap K)),
\end{align*}
where the last equality follows from ${\rm vol}(f^{-1}(E\cap K)\setminus U_f)=0$.
Thus we have ${\rm vol}(f^{-1}(E \cap K))<\infty$

\textit{Proof of \ref{composition} }: 
We first assume that $g$ is a piecewise $C^r$-mapping and prove that $g\circ f$ is a piecewise $C^r$-mapping.
We denote by $\{V_i\}_{i\in I}$, $\{V'_j\}_{j\in J}$ the disjoint open-set families associated with $f$ and $g$, respectively. 
Let $V_{ij}:=f^{-1}(f(V_i)\cap V'_j)\cap U_f$.
We prove $\{V_{ij}\}_{(i, j)\in I \times J}$ is the open-set family associated with $g\circ f$
(i.e., $\{V_{ij}\}$ satisfies the conditions of Definition~\ref{def:piecewiseC1}).
Let $U_{g\circ f}:=\cup_{i,j}V_{ij}=f^{-1}(U_g\cap f(U_f))\cap U_f$.
Then, we have 
\[\ReD \setminus U_{g\circ f}=f^{-1}((\ReD \setminus U_g) \cup (\ReD \setminus f(U_f)))\cup (\ReD \setminus U_f).\]
Since ${\rm vol} (\ReD \setminus U_g) = 0$ and ${\rm vol} (\ReD \setminus f(U_f)) = 0$, we have 
\[{\rm vol} (f^{-1}((\ReD \setminus U_g) \cup (\ReD \setminus f(U_f)))) = 0\]
by \ref{finvzero} of Proposition~\ref{prop: basic properties}.
In addition, since ${\rm vol} (\ReD \setminus U_f) = 0$, we have ${\rm vol} (\ReD\setminus U_{g\circ f}) = 0$.
That is, the first condition is satisfied.
For the second condition, we denote by $\tilde{f}_i$ (resp. $\tilde{g}_j$) the extension of $f|_{V_i}$ (resp. $g|_{V'_j}$).
Then, $\tilde{g}_j\circ\tilde{f}_i$ is an extension of $g\circ f|_{V_{ij}}$ on each $V_{ij}$.
Finally, to prove the third condition, we take an arbitrary compact subset $K$ and prove that $\#\{(i,j)\in I\times J : K\cap V_{ij}\neq \emptyset\}<\infty$.
Indeed, since $f$ is a piecewise $C^r$-diffeomorphism, $f(U_f \cap K)$ is a bounded subset by \ref{compfinvbdd} of Proposition~\ref{prop: basic properties}.
Hence, $M := \overline{f(U_f \cap K)}$ is compact.
Since $f$ is a piecewise $C^r$-diffeomorphism, we have
\[\#\{i \in I \mid M \cap f(V_i) \not = \emptyset\} < \infty.\]
Similarly, since $g$ is a piecewise $C^r$-mapping, we have 
\[\#\{j \in J \mid M \cap V'_j \not = \emptyset \} < \infty.\]
Therefore, the number of pairs $(i, j)$ satisfying $M \cap f(V_i) \cap V'_j \not = \emptyset$ is also finite.
Note that $U_f \cap K \cap V_i \cap f^{-1}(V_j) = K \cap V_{ij}$.
Therefore, by applying the inverse of $f$ (see \ref{existence of inverse} of Proposition~\ref{prop: basic properties}),
we obtain $\#\{(i, j) \mid K \cap V_{ij} \not = \emptyset\} < \infty$.
It means the third condition is satisfied.
Combining the above discussions so far, we conclude that $g\circ f$ is a piecewise $C^r$-mapping.
In the case where $g$ is a piecewise Lipschitz, the proof is the same as above.

Next, we prove $f\circ g$ is a piecewise $C^r$-diffeomorphism when $g$ is a piecewise $C^r$-diffeomorphism.
We check the conditions in Definition \ref{def: piecewise C1 diff}.
The first, second, and third conditions follow by definition.  For the third condition, since
\[ \ReD\setminus (g\circ f(U_{g\circ f}))=\big(\ReD\setminus g(U_g)\big)\cup \big(\ReD\setminus g\big(f(U_f)\big)\subset \ReD\setminus g(f(U_f)\cap U_g),\]
it suffices to show that the volume of $\ReD\setminus g(f(U_f)\cap U_g)$ is zero.
In fact, by the injectivity of $g$ on $U_g$, we have
\[ g(f(U_f)\cap U_g)=g(U_g)\setminus g(U_g\setminus f(U_f)). \]
Thus, we have 
\[\ReD\setminus g(f(U_f)\cap U_g)=(\ReD\setminus g(U_g))\cup g(U_g\setminus f(U_f)).\]
By definition of $C^r$-diffeomorphism, we conclude $\ReD\setminus g(f(U_f)\cap U_g)$ is a null set.
For the fourth condition, let $K$ be a compact subset.  
Let $K$ be a compact set. Suppose $(i, j)\in I\times J$ satisfies $K \cap (g\circ f)(V_{ij}) \not =\emptyset$.
Since $f(V_{ij}) \subset V'_j$, we have 
\begin{equation}
    K \cap g(V'_{j}) \not =\emptyset. \label{eq:j-condition}
\end{equation}
Since $g$ is a piecewise $C^r$-diffeomorphism, there exist finitely many $j$'s satisfying~\eqref{eq:j-condition}.
On the other hand, by applying the inverse of $g$, we have $g^{-1}(K)\cap U_g \cap f(V_{ij}) \not=\emptyset$, which implies
\begin{equation}
    \overline{g^{-1}(K)\cap U_g} \cap f(V_{i}) \not=\emptyset. \label{eq:i-condition}
\end{equation}
Note that $\overline{g^{-1}(K)\cap U_g}$ is compact.
Therefore, using the fact that $f$ is a piecewise $C^r$-diffeomorphism, we see that there exist finitely many $i\in I$ satisfying~\eqref{eq:i-condition}.
Therefore, we have $\#\{ (i, j) \in I \times J \mid K \cap (g\circ f)(V_{ij}) \not =\emptyset\} < \infty$.

\end{proof}
For a measurable mapping $f: \R^m \to \R^n$ and $R>0$, we define a measurable set
\[
\mathcal{L}(R;f):=\{x\in \Re^m : \Vert f(x)-f(y) \Vert > R\Vert x-y\Vert \text{ for some $y\in U_f$}\}.
\]
Then, we have the following proposition:
\begin{proposition}
\label{prop: weak lipschitz}
Let $f:\Re^m\rightarrow\Re^n$ be a piecewise Lipschitz function. 
Assume $f$ is linearly increasing, namely, there exists $a,b>0$ such that $\Vert f(x)\Vert <a\Vert x\Vert + b$ for any $x\in \R^m$.
Then for any compact subset $K\subset \R^m$, ${\rm vol}(\mathcal{L}(R;f)\cap K)\rightarrow 0$ as $R\rightarrow\infty$.
\end{proposition}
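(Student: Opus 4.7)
My plan is to decompose $\mathcal{L}(R;f)\cap K$ according to which pieces $V_i$ of the piecewise Lipschitz structure the point $x$ and its witness $y$ belong to, show that for $R$ large only finitely many pairs $(i,j)$ contribute, and then argue that the contribution of each pair shrinks to a subset of $V_i\cap\overline{V_j}$, which is empty because $V_i$ and $V_j$ are disjoint open sets. Continuity of Lebesgue measure from above on the compact set $K$ will then deliver the conclusion.

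First, since $\mathrm{vol}(\R^m\setminus U_f)=0$, I restrict attention to $x\in U_f\cap K$, so $x\in V_i$ for some $i$ in the finite index set $I_K:=\{i : V_i\cap K\neq\emptyset\}$ (finiteness follows from Definition~\ref{def:piecewiseC1}). Next, I use the linear growth $\|f(y)\|\leq a\|y\|+b$ together with compactness of $K$ to confine the witness $y$ to a compact set $K'$ depending only on $K$ (and $a,b$): once $\|y\|$ is large enough relative to $\sup_{x\in K}\|x\|$ one has $\|x-y\|\geq\|y\|/2$, whence $\|f(x)-f(y)\|/\|x-y\|$ is bounded above by a quantity close to $2a$, and taking $R$ larger than this threshold rules such $y$ out. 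Hence $y\in V_j$ for $j$ in the finite set $J_{K'}:=\{j:V_j\cap K'\neq\emptyset\}$. Moreover, each extension $\tilde{f}_i$ is Lipschitz on a neighborhood of $\overline{V_i}\cap K$ with some constant $L_i$, so whenever $x,y$ lie in the same piece $V_i$ we have $\|f(x)-f(y)\|\leq L_i\|x-y\|$; for $R>\max_{i\in I_K}L_i$ the witness $y$ must therefore lie in a different piece than $x$.

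For each admissible pair $(i,j)$ with $i\neq j$, uniform boundedness of $\|f\|$ on $K\cup K'$ forces $\|x-y\|<C/R$ whenever $\|f(x)-f(y)\|>R\|x-y\|$, for some constant $C$ depending only on $K$ and $K'$. Hence the contribution
\[
B(i,j;R):=\{x\in V_i\cap K : \exists\, y\in V_j,\ \|f(x)-f(y)\|>R\|x-y\|\}
\]
is contained in $V_i\cap K\cap N_{C/R}(V_j)$, where $N_\epsilon(A)$ denotes the open $\epsilon$-neighborhood of $A$. The sets $B(i,j;R)$ are nonincreasing in $R$, and $\bigcap_{R>0}B(i,j;R)\subset V_i\cap K\cap\overline{V_j}$; but an interior point of $V_i$ has a whole open neighborhood inside $V_i$, which must miss the disjoint open set $V_j$, so it cannot lie in $\overline{V_j}$. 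Thus this intersection is empty, and continuity of Lebesgue measure from above on the finite-measure set $K$ yields $\mathrm{vol}(B(i,j;R))\to 0$. Summing over the finitely many relevant pairs $(i,j)$ gives $\mathrm{vol}(\mathcal{L}(R;f)\cap K)\to 0$.

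The main obstacle I expect is the uniform confinement of the witness $y$ to a compact set depending only on $K$, since this is where the linear growth hypothesis is really used and where the constants must be chosen carefully in terms of $a$, $b$, and $\sup_K\|x\|$. The remaining geometric content---that shrinking neighborhoods of one open piece intersected with a disjoint open piece collapse to the empty set---is a soft measure-theoretic fact once the reduction to finitely many pieces is in hand.
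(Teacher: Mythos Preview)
Your argument is correct. Both your proof and the paper's rest on the same two pillars: the linear-growth bound confines the witness $y$ to a fixed compact set once $R$ exceeds a threshold depending only on $a,b,K$, and the piecewise Lipschitz structure then forces the surviving bad points into a shrinking neighborhood of the piece boundaries. The organizational difference is that the paper fixes $\varepsilon>0$, chooses a single $\delta$-thickening $W_\delta$ of $\partial U_f\cup\partial B$ with $\mathrm{vol}(W_\delta)<\varepsilon$, and then shows via a three-way case split on the location of $y$ (outside a large ball $B$, inside $B\setminus W_{\delta/2}$, or inside $B\cap W_{\delta/2}$) that $\mathcal{L}(R;f)\cap K\subset W_\delta$ for all sufficiently large $R$; you instead decompose into finitely many pairwise contributions $B(i,j;R)\subset V_i\cap K\cap N_{C/R}(V_j)$ and invoke continuity of Lebesgue measure from above together with the soft fact $V_i\cap\overline{V_j}=\emptyset$ for disjoint open $V_i,V_j$. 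Your route avoids the case analysis and is a bit more streamlined; the paper's route is more explicit about the threshold on $R$ in terms of $\delta$, $L_\delta$, and $C$.
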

\begin{proof}
Let $\{V_i\}_{i\in I}$ be the disjoint family of open sets associated with $f$ satisfying the properties of Definition~\ref{def:piecewiseC1}.
Let $B$ be an $m$-dimensional open ball of radius $r$ containing $K$. Fix an arbitrary $\varepsilon>0$. 
Let $C:=\sup_{x\in \overline{B}}\Vert f (x)\Vert$.
Because the linearly increasing condition of $f$ implies its locally boundedness, we have $C<\infty$.
For $\delta>0$, we define 
\[W_\delta:=\{x\in \overline{B}: {\rm dist}\left(x,\partial{U_f}\cup\partial{B})\right)<\delta\},\] where ${\rm dist}(x,S):=\inf_{y\in S}\{\Vert x-y\Vert\}$.  
By the continuity of the Lebesgue measure, we have $\lim_{\delta \to 0} {\rm vol}(W_\delta) = 0$.
Therefore, we can choose $\delta > 0$ so that ${\rm vol}(W_\delta)<\varepsilon$ holds. 

We claim that 
\[L:=\sup_{(x,y)\in K\times (\Re^m\setminus B)}\frac{\|f(x)-f(y)\|}{\|x-y\|}\]
is finite. In fact, let $r':=\inf_{(x, y) \in K \times (\R^m \setminus B)}\|x-y\|$. Then for $x\in K$ and $y\notin B$, we have
\begin{align*}
    \frac{\|f(x)-f(y)\|}{\|x-y\|}
    &\le \frac{\|f(x)\|+ \|f(y)\|}{\|x-y\|} \\
    &\le \frac{a \|x\| + a\|y\| + 2b }{\|x-y\|} \\    
    &\le \frac{a \|x\| + a(\|x - y\| + \|x\|)+ 2b }{\|x-y\|} \\
    &\le a+\frac{2a\|x\|+2b}{\|x-y\|}\\
    &< a+\frac{2ar+2b}{r'}.
\end{align*}
Thus, $L$ is finite.

Due to the piecewise Lipschitz-ness of $f$, $\overline{B}$ intersects with finitely many $V_i$'s.
It implies that $f|_{B\setminus W_{\delta/2}}$ is a Lipschitz function.
Put $L_\delta>0$ as the Lipschitz constant of $f|_{B\setminus W_{\delta/2}}$. 

For any $R>\max(L, L_\delta, 4C/\delta)$, we claim that $\mathcal{L}(R;f)\cap K$ is contained in $W_\delta$.
To prove it, we show that $x\not \in \mathcal{L} (R;f)$ when $x \in K\setminus W_{\delta}$.
Take arbitrary $y\in \mathbb{R}^m$. (Case 1) When $y\not \in B$, since $x\in K$, we have $\frac{\|f(x)-f(y)\|}{\|x-y\|}\leq L$ by the definition of $L$.
(Case 2) When $y \in B\setminus W_{\delta/2}$, since $x\in K\setminus W_{\delta} \subset B\setminus W_{\delta/2}$, we have $\frac{\|f(x)-f(y)\|}{\|x-y\|}\leq L_{\delta}$ by the definition of $L_{\delta}$.
(Case 3) When $y\in B\cap W_{\delta/2}$, we have $\|x-y\|\geq \frac{\delta}{2}$ because $x\not \in W_{\delta}$. Thus,
\begin{align*}
    \frac{\|f(x)-f(y)\|}{\|x-y\|} \leq \frac{\|f(x)\| + \|f(y)\|}{\delta/2} \leq \frac{C + C}{\delta/2} \leq \frac{4C}{\delta}.
\end{align*}
Combining these three cases, we conclude that $x\not \in \mathcal{L}(R;f)$.
Thus we have ${\rm vol}(\mathcal{L}(R;f)\cap K)<\varepsilon$, namely, we conclude ${\rm vol}(\mathcal{L}(R;f)\cap K)\rightarrow 0$ as $R\rightarrow\infty$.
\end{proof}
\begin{remark}\label{rem:linealy_increasing}
   The linearly increasing condition is important to prove our main theorem. Our approximation targets are compactly supported diffeomorphisms, affine transformations, and the discontinuous \ACFs{} appeared in Section \ref{sec:appendix:lp ACFINN approx general}, all of which satisfy the linearly increasing condition.
\end{remark}

\section{Compatibility of approximation and composition}
\label{appendix: compatibility of approximation and composition}
In this section, we prove the following lemmas.
It enables the component-wise approximation, i.e., approximating a composition of some transformations by approximating each constituent and composing them.
The justification of this procedure is not trivial and requires a fine mathematical argument.
The results here build on the terminologies and the propositions for piecewise $C^1$-diffeomorphisms presented in Section~\ref{sec:appendix:piecewise diffeo}.

\begin{lemma}\label{prop: compatibility of approximation}
Let $p=[1,\infty)$.
Let $m \ge 1$ and let $\mathcal{F}$ be the set of $\Re^m$-valued piecewise Lipschitz mappings. 
Let  $\mathcal{G}$ be the set of piecewise $C^1$-diffeomorphisms on $\mathbb{R}^d$.
Let $\mathcal{F}_0 \subset \mathcal{F}$ and $\mathcal{G}_0 \subset \mathcal{G}$ be the subsets composed of linearly increasing mappings.
Here, a function $f$ on $\ReD$ is linearly increasing if there exists $a,b >0$ such that $\|f(x) \| < a\| x \| + b$ for all $x\in \ReD$.
Then, the map
\begin{align}
     \mathcal{C}: \mathcal{F} \times \mathcal{G}^k \longrightarrow \mathcal{F}; (h, f_1,\dots,f_k) \mapsto h\circ f_1 \circ \cdots \circ f_k
\end{align}
is continuous at any point of $\mathcal{F}_0 \times \mathcal{G}_0^k$ with respect to the relative topology of $W^{0,p}_{\rm loc}(\mathbb{R}^d, \mathbb{R}^m) \times W^{0,p}_{\rm loc}(\mathbb{R}^d, \mathbb{R}^d)^k$.
\end{lemma}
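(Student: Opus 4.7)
The plan is to reduce to the case $k=1$ by induction on $k$, then prove the two-argument continuity via a triangle inequality, change of variables for the fixed limit $g\in\mathcal{G}_0$, and Vitali-type arguments.

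For the induction, assume the $k=1$ case. If $(h_n,f_{1,n},\ldots,f_{k,n})\to(h,f_1,\ldots,f_k)\in\mathcal{F}_0\times\mathcal{G}_0^k$ in $W^{0,p}_{\rm loc}$, the $k=1$ case yields $h_n\circ f_{1,n}\to h\circ f_1$ in $W^{0,p}_{\rm loc}$. The limit $h\circ f_1$ lies in $\mathcal{F}_0$: it is piecewise Lipschitz by item~\ref{composition} of Proposition~\ref{prop: basic properties}, and the composition of linearly increasing maps is linearly increasing. Applying the inductive hypothesis to $(h_n\circ f_{1,n},f_{2,n},\ldots,f_{k,n})\to(h\circ f_1,f_2,\ldots,f_k)\in\mathcal{F}_0\times\mathcal{G}_0^{k-1}$ completes the reduction.

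For $k=1$, fix a compact $K\subset\ReD$, choose $M$ with $g(K)\subset\{y:\|y\|\le M\}=:B_M$ (finite by linear growth), and pass to a subsequence so that $g_n\to g$ a.e.\ on $K$. Triangle-split:
\[
\|h_n\circ g_n-h\circ g\|_{L^p(K)}\le\|(h_n-h)\circ g_n\|_{L^p(K)}+\|h\circ g_n-h\circ g\|_{L^p(K)}.
\]
For the second summand, Vitali's convergence theorem applies: item~\ref{finvzero} of Proposition~\ref{prop: basic properties} gives $g(x)\in U_h$ a.e.\ on $K$, and $h$ is Lipschitz (hence continuous) on each piece of $U_h$, so $h(g_n(x))\to h(g(x))$ a.e.; the linearly increasing bound $\|h(y)\|\le a\|y\|+b$ together with uniform integrability of $\{|g_n|^p\}$ on $K$ (a standard consequence of $L^p$-convergence) dominates the integrand.

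The main obstacle is the first summand $\|(h_n-h)\circ g_n\|_{L^p(K)}$, where $h_n-h\to0$ only in $L^p_{\rm loc}$ while $g_n$ varies. The key ingredient is change of variables for the fixed $g\in\mathcal{G}_0$: since $K$ meets only finitely many pieces of $g$ and $|Dg|$ extends continuously and positively to the closure of each piece, $c_g:=\inf_{K\cap U_g}|Dg|>0$, yielding $\|(h_n-h)\circ g\|_{L^p(K)}\le c_g^{-1/p}\|h_n-h\|_{L^p(B_M)}\to0$. To bridge the gap to $g_n$, I use Egorov's theorem along a subsequence with $h_n\to h$ a.e.\ on $B_{M+1}$, combined with the weak convergence of the pushforwards $(g_n)_*(\mathbf{1}_K\,dx)\to g_*(\mathbf{1}_K\,dx)$ (the limit being absolutely continuous with density bounded by $c_g^{-1}$). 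Splitting $K$ into a ``good'' set where the Egorov uniform convergence applies to $(h_n-h)\circ g_n$ and $(h_n-h)\circ g$ simultaneously and a ``bad'' set of small pushforward measure, the good contribution vanishes uniformly. The most delicate technical point is controlling $|h_n(g_n)|^p$ on the bad set without an a priori uniform bound on $h_n$; this will be handled by writing $h_n=h+(h_n-h)$ and combining the linearly increasing bound on $h$ with uniform integrability of $\{|g_n|^p\}$, then closing the argument by iteration on the $L^p$-small remainder.
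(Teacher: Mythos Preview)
Your reduction to $k=1$ and the triangle split $\|(h_n-h)\circ g_n\|_{L^p(K)}+\|h\circ g_n-h\circ g\|_{L^p(K)}$ match the paper exactly. For the second summand the paper does not use Vitali but instead the set $\mathcal{L}(R;h)$ from Proposition~\ref{prop: weak lipschitz}: it splits $K$ into $S:=g^{-1}(\mathcal{L}(R;h)\cap K')$ (small volume, use boundedness of $h$) and $K\setminus S$ (use the $R$-Lipschitz bound). Your Vitali route is a legitimate alternative for this term.

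The genuine gap is in the first summand, and it is not repairable along the lines you sketch. Your Egorov/pushforward plan leaves the ``bad-set'' integral $\int_{\{g_n\in B_{M+1}\setminus E\}}|(h_n-h)\circ g_n|^p$, and the fix you propose---writing $h_n=h+(h_n-h)$ and ``iterating''---simply reproduces that same term; it is circular. In fact this summand need \emph{not} tend to zero under your hypotheses: with $d=m=p=1$, $h=g=\mathrm{id}$, $h_n(x)=x+n\,\mathbf 1_{(0,n^{-2})}(x)$, and $g_n$ a piecewise-linear increasing homeomorphism equal to $\mathrm{id}$ off $[0,1/n]$ that sends $[0,1/(2n)]$ linearly onto $[0,1/n^2]$, one has $h_n\to h$ and $g_n\to g$ in $L^1_{\mathrm{loc}}$, yet $\|(h_n-h)\circ g_n\|_{L^1([0,1])}=n\cdot\tfrac{1}{2n}=\tfrac12$ for every $n$. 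So no argument that treats a \emph{given} sequence $(h_n,g_n)$ can work. The paper handles this term differently: it changes variables through the \emph{approximator} $G_1$ itself, obtaining $\|(F_2-F_1)\circ G_1\|_{L^p(K)}\le(\operatorname{ess\,sup}|\det DG_1^\dagger|)\,\|F_2-F_1\|_{L^p(K'')}$ with $K'':=\overline{(G_1^\dagger)^{-1}(K)\cap U_{G_1^\dagger}}$, and then \emph{imposes} $\|F_2-F_1\|_{L^p(K'')}$ small with a $G_1$-dependent threshold. In other words, the paper actually establishes an iterated approximation (first choose $G_1$ close to $G_2$, then choose $F_1$ close to $F_2$ on a compact and with a tolerance that both depend on $G_1$), which is precisely what the downstream applications (e.g.\ Lemma~\ref{lem:appendix:lp-univ for ACF}, proof of Theorem~\ref{theorem:main:1}) require. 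Your sequential framework cannot reproduce this because it couples $h_n$ to $g_n$ in advance; to salvage your argument you must abandon the symmetric-sequence viewpoint and change variables through $g_n$ as the paper does.
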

\begin{proof}
Since $\mathcal{C}(\mathcal{F}_0 \times \mathcal{G}_0) \subset \mathcal{F}_0$ (see the statement \ref{composition} of Proposition \ref{prop: basic properties}), the lemma follows from the case $k=1$ via the mathematical induction.
Thus, we only treat the case $k=1$.
Let $(F_2, G_2) \in \mathcal{F}_0 \times \mathcal{G}_0$.
Then, it suffices to show that for any $\varepsilon>0$ and compact set $K\subset\mathbb{R}^d$, 
there exist $\delta>0$ and compact set $K_0 \subset \mathbb{R}^d$ such that for any $(F_1, G_1) \in \mathcal{F} \times \mathcal{G}$ satisfying $\WspKnorm{0}{p}{K_0}{G_2 - G_1}, \WspKnorm{0}{p}{K_0}{F_2 - F_1} < \delta$, we have
\[\WspKnorm{0}{p}{K}{F_2\circ G_2-F_1\circ G_1} < \varepsilon.\]
Fix arbitrary $\varepsilon>0$ and compact set $K\subset\mathbb{R}^d$. 
Put $K':=\overline{G_2(K\cap U_{G_2})}$. 
Then, since $G_2(K\cap U_{G_2})$ is bounded (see the remark under Definition~\ref{def:piecewiseC1}), $K'$ is compact.
We claim that there exists $R>0$ such that
\[\vol{G_2^{-1}\left(\mathcal{L}(R;F_2)\cap K'\right)}^{1/p}<\frac{\varepsilon}{3\underset{K'}{\rm ess.sup}\Vert F_2\Vert},\]
which can be confirmed as follows. Take an increasing sequence $R_n>0$ $(n\geq 1)$ satisfying $\lim_{n\to \infty}R_n=\infty$.
Let $B_n := \mathcal{L}(R_n; F_2)\cap K'$ and $A_n := G_2^{-1}(B_n)$.
Then, from Proposition~\ref{prop: weak lipschitz}, we have $\vol{B_n}\to 0$, which implies  
$\vol{\bigcap_{n=1}^\infty B_n}=0$.
By Proposition~\ref{prop: basic properties} (\ref{finvzero}),
we have $\vol{\bigcap_{n=1}^\infty A_n} = \vol{G_2^{-1}(\bigcap_{n=1}^\infty B_n)}=0$.
By Proposition~\ref{prop: basic properties} (\ref{finvfin}), we have $\vol{A_1}=\vol{G_2^{-1}(B_1)}<\infty$.
Recall that if a decreasing sequence $\{S_n\}_{n=1}^\infty$ of measurable sets satisfies $\vol{S_1}<\infty$ and $\vol{\bigcap_{n=1}^\infty S_n}=0$, 
then $\lim_{n \to \infty}\vol{S_n} = 0$.
Therefore, we obtain $\lim_{n\to \infty}\vol{A_n}=0$ and we have the assertion of the claim.

Take  $G_1\in\mathcal{G}$ such that 
\[\WspKnorm{0}{p}{K_0}{G_2-G_1} <\frac{\varepsilon}{3R}.\]

Put $S:=G_2^{-1}\left(\mathcal{L}(R;F_2)\cap K'\right)$, and define a compact subset $K'':=\overline{(G_1^\dagger)^{-1}(K)\cap U_{G_1^\dagger}}$. Here, the compactness of $K''$ follows from Proposition~\ref{prop: basic properties} (\ref{compfinvbdd}).   
Next, we take $F_1\in\mathcal{F}$ such that
\[\Vert F_2-F_1\Vert_{p,K''} <\frac{\varepsilon}{3\underset{{ (G_1^\dagger)^{-1}(K)}}{\rm ess.sup}|\det(DG_1^\dagger)|}\]
where $G_1^\dagger$ is a piecewise $C^1$-diffeomorphism defined by Proposition \ref{prop: basic properties} (\ref{existence of inverse}).
Therefore, if we take 
\[\delta:=\min\left( \frac{\varepsilon}{3\underset{K'}{\rm ess.sup}\Vert F_2\Vert}, \frac{\varepsilon}{3R}\right) \]
and $K_0 := K \cup K''$, then we have
\begin{align*}
&\WspKnorm{0}{p}{K}{F_2\circ G_2-F_1\circ G_1}\\
&\le\WspKnorm{0}{p}{K_0}{F_2\circ G_2-F_2\circ G_1} + \WspKnorm{0}{p}{K_0}{F_2\circ G_1-F_1\circ G_1}\\
&\le\WspKnorm{0}{p}{K}{(F_2\circ G_2-F_2\circ G_1)\mathbf{1}_S}+ \WspKnorm{0}{p}{K}{(F_2\circ G_2-F_2\circ G_1)\mathbf{1}_{K\setminus S} }\\
& \hspace{11pt}+ \underset{{(G_1^\dagger)^{-1}(K)}}{\rm ess.sup}|\det(DG_1^{\dagger})\WspKnorm{0}{p}{K}{F_2-F_1}\\
&<\varepsilon.
\end{align*}
\end{proof}

\begin{lemma}\label{prop: compatibility of approximation p infity}
Let $m \ge 1$ and let $\mathcal{F}:= W^{0,\infty}(\ReD, \Re^m)$.
Let  $\mathcal{G} $ be a subset $W^{0,\infty}(\ReD, \ReD)$ whose inverse images of any null sets are again null sets.
Let $\mathcal{F}_0 \subset \mathcal{F}$ and $\mathcal{G}_0 \subset \mathcal{G}$ be the subsets composed of continuous mappings.
Then, the map
\begin{align}
     \mathcal{C}: \mathcal{F} \times \mathcal{G}^k \longrightarrow \mathcal{F}; (h, f_1,\dots,f_k) \mapsto h\circ f_1 \circ \cdots \circ f_k
\end{align}
is continuous at any point of $\mathcal{F}_0 \times \mathcal{G}_0^k$ with respect to the relative topology of $W^{0,\infty}_{\rm loc}(\mathbb{R}^d, \mathbb{R}^m) \times W^{0,\infty}_{\rm loc}(\mathbb{R}^d, \mathbb{R}^d)^k$.
\end{lemma}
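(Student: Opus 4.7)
The plan is to follow the same decomposition strategy as in Lemma~\ref{prop: compatibility of approximation}, noting that the $L^\infty$ setting is conceptually simpler than the $L^p$ case: no integration bounds and no Lipschitz-like cut-off argument (such as the set $\mathcal{L}(R;f)$ in Proposition~\ref{prop: weak lipschitz}) are needed. I would first reduce to $k=1$ by induction, observing that $\mathcal{C}(\mathcal{F}_0 \times \mathcal{G}_0) \subset \mathcal{F}_0$ because the composition of two continuous locally bounded maps is again continuous and locally bounded.

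For the base case, fix $(F_2, G_2) \in \mathcal{F}_0 \times \mathcal{G}_0$, a compact $K \subset \ReD$, and $\varepsilon > 0$. I would take $K_0 := K \cup \{y \in \ReD : {\rm dist}(y, G_2(K)) \le 1\}$, which is compact since $G_2(K)$ is. Because $F_2$ is continuous on the compact set $K_0$, it is uniformly continuous there, so there exists $\delta_1 \in (0, 1]$ such that $y, y' \in K_0$ with $\|y - y'\| < \delta_1$ imply $\|F_2(y) - F_2(y')\| < \varepsilon/2$. Setting $\delta := \min(\delta_1, \varepsilon/2)$, the goal reduces to showing that any $(F_1, G_1) \in \mathcal{F} \times \mathcal{G}$ with $\|F_2 - F_1\|_{K_0, 0, \infty} < \delta$ and $\|G_2 - G_1\|_{K_0, 0, \infty} < \delta$ satisfies $\|F_2 \circ G_2 - F_1 \circ G_1\|_{K, 0, \infty} < \varepsilon$.

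The standard triangle-inequality split
\begin{align*}
F_2 \circ G_2 - F_1 \circ G_1 = (F_2 \circ G_2 - F_2 \circ G_1) + (F_2 \circ G_1 - F_1 \circ G_1)
\end{align*}
handles the two pieces separately. Since $\|G_1 - G_2\|_{K, 0, \infty} < \delta \le 1$, we have $G_1(x) \in K_0$ for a.e.\ $x \in K$, and the uniform continuity of $F_2$ on $K_0$ gives $\|F_2 \circ G_2 - F_2 \circ G_1\|_{K, 0, \infty} \le \varepsilon/2$. For the second term, the defining hypothesis on $\mathcal{G}$---that $G_1^{-1}$ sends null sets to null sets---implies $\|(F_2 - F_1) \circ G_1\|_{K, 0, \infty} \le \|F_2 - F_1\|_{K_0, 0, \infty} < \delta \le \varepsilon/2$: any representative of $F_2 - F_1$ exceeds its $L^\infty(K_0)$ norm only on a null subset $N \subset K_0$, and $G_1^{-1}(N)$ is null by assumption, so the pointwise bound propagates through composition and holds outside a null subset of $K$.

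The main (if modest) obstacle is the well-definedness and stability of the compositions under the $L^\infty$-topology: elements of $\mathcal{F}$ are defined only up to null sets, and without the preimage-of-null-set-is-null hypothesis on $\mathcal{G}$, neither would $F_1 \circ G_1$ be unambiguously defined in $W^{0,\infty}_{\rm loc}$, nor would the ess.sup bound in the preceding step go through. Recognizing that this hypothesis is exactly what is required to transport $L^\infty$-bounds through composition is the essential observation; the rest of the proof is a direct and substantially shorter analogue of the $p<\infty$ argument in Lemma~\ref{prop: compatibility of approximation}.
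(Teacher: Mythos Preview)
Your proof is correct and follows essentially the same approach as the paper's own proof: reduce to $k=1$ by induction, pick a compact enlargement of $G_2(K)$, use uniform continuity of $F_2$ there for the first half of the triangle-inequality split, and invoke the null-set-preimage hypothesis on $\mathcal{G}$ for the second half. If anything, you are slightly more explicit than the paper in articulating why that hypothesis is exactly what is needed to push the $L^\infty$ bound through the composition $(F_2-F_1)\circ G_1$.
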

\begin{proof}
Since $\mathcal{C}(\mathcal{F}_0 \times \mathcal{G}_0) \subset \mathcal{F}_0$ (see the statement \ref{composition} of Proposition \ref{prop: basic properties}), the proposition follows from the case $k=1$ via the mathematical induction.
Thus, we only treat the case $k=1$.
Let $(F_2, G_2) \in \mathcal{F}_0 \times \mathcal{G}_0$.
Then, it suffices to show that for any $\varepsilon>0$ and compact set $K\subset\mathbb{R}^d$, 
there exist $\delta>0$ and compact set $K_0 \subset \mathbb{R}^d$ such that for any $(F_1, G_1) \in \mathcal{F} \times \mathcal{G}$ satisfying $\WspKnorm{0}{\infty}{K_0}{G_2 - G_1}, \WspKnorm{0}{\infty}{K_0}{F_2 - F_1} < \delta$, we have
\[\WspKnorm{0}{\infty}{K}{F_2\circ G_2-F_1\circ G_1} < \varepsilon.\]
Take any positive number $\epsilon>0$ and compact set $K\subset\R^d$. 
Put $r:=\max_{K}|G_2|$ (note that $G_2$ is continuous) and 
$K':=\{x\in \R^d: |x|\leq r+1\}$. 
Let $F_1\in \mathcal{F}$  satisfying 
\begin{align*}
{\rm vol}\{ x\in K' : |F_2(x)-F_1(x)| >  \epsilon / 2\} = 0.  \end{align*}
Since any continuous map is uniformly continuous on a compact set, we can take a positive number $\delta>0$ such that for any $x, y\in K'$ with $|x-y|<\delta$, 
\[ |F_2(x)-F_2(y)|<\frac{\varepsilon}{2}. \]
From the assumption, we can take $G_1\in \mathcal{G}$ satisfying
\begin{align*} {\rm vol}\{x \in K : |G_2(x)-G_1(x)| > \min\{1,\delta\}\} = 0. \end{align*}
Since 
\[
 |F_2\circ G_2(x)-F_1\circ G_1(x)| \leq |F_2(G_2(x))-F_2(G_1(x))| + | F_2(G_1(x)) -F_1(G_1(x))|,
\]
we see that the set of $x \in K$ such that $\varepsilon < |F_2\circ G_2(x)-F_1\circ G_1(x)|$ is a null set.
Thus, we have
\[\WspKnorm{0}{\infty}{K}{F_2\circ G_2-F_1\circ G_1} < \varepsilon.\]
\end{proof}

Let $B_{\rm loc}(\ReD, \Re^m)$ be the linear space composed of locally bounded measurable maps from $\ReD$ to $\Re^m$.
We equip $B_{\rm loc}$ with the topology generated by the seminorms $\{\| \cdot \|_{\sup, K}\}_{K}$, where $K$ runs on the set of compact subsets of $\ReD$, and define for any $h \in B_{\rm loc}$, 
\[\|h\|_{\sup, K} := \sup_{x \in K} \|h(x)\|.\]
Then, we provide a similar result for the \(sup\)-norm case as follows:
\begin{lemma}\label{lem: sup compatibility of composition}
Let $m \ge 1$ and let $\mathcal{F}:= B_{\rm loc}(\ReD, \Re^m)$ and $\mathcal{G} $ be a subset $B_{\rm loc}(\ReD, \ReD)$.
Let $\mathcal{F}_0 \subset \mathcal{F}$ and $\mathcal{G}_0 \subset \mathcal{G}$ be the subsets composed of continuous mappings.
Then, the map
\begin{align}
     \mathcal{C}: \mathcal{F} \times \mathcal{G}^k \longrightarrow \mathcal{F}; (h, f_1,\dots,f_k) \mapsto h\circ f_1 \circ \cdots \circ f_k
\end{align}
is continuous at any point of $\mathcal{F}_0 \times \mathcal{G}_0^k$ with respect to the relative topology of $W^{0,\infty}_{\rm loc}(\mathbb{R}^d, \mathbb{R}^m) \times W^{0,\infty}_{\rm loc}(\mathbb{R}^d, \mathbb{R}^d)^k$.
\end{lemma}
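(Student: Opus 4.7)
The plan is to parallel the structure of the proof of Lemma~\ref{prop: compatibility of approximation p infity}, but to exploit the fact that the natural topology on $B_{\rm loc}$ uses genuine sup-norms on compacts rather than essential sup-norms. This sidesteps the null-set complications that previously forced $\mathcal{G}$ to preserve null sets. First, I would reduce to the case $k = 1$ by induction on $k$, using that $\mathcal{C}(\mathcal{F}_0 \times \mathcal{G}_0^k) \subset \mathcal{F}_0$ because the composition of continuous maps is continuous.

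For the base case $k = 1$, fix $(F_2, G_2) \in \mathcal{F}_0 \times \mathcal{G}_0$, $\varepsilon > 0$, and a compact set $K \subset \ReD$. Set $r := \sup_{x \in K} \|G_2(x)\|$, which is finite by continuity of $G_2$, and let $K' := \{y \in \ReD : \|y\| \le r + 1\}$, a compact set that enlarges $G_2(K)$ by a unit margin. Since $F_2$ is continuous, it is uniformly continuous on $K'$, so there exists $\delta \in (0, 1)$ with $\|F_2(y) - F_2(y')\| < \varepsilon/2$ whenever $y, y' \in K'$ and $\|y - y'\| < \delta$. I would then take $K_0 := K \cup K'$ as the compact set appearing in the neighborhood specification.

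For any $G_1 \in \mathcal{G}$ with $\|G_1 - G_2\|_{\sup, K_0} < \delta$ and any $F_1 \in \mathcal{F}$ with $\|F_1 - F_2\|_{\sup, K_0} < \varepsilon/2$, three facts hold for every $x \in K$: (i)~$G_1(x) \in K'$ since $\delta < 1$ and $\|G_2(x)\| \le r$; (ii)~$\|F_2(G_1(x)) - F_2(G_2(x))\| < \varepsilon/2$ by the uniform-continuity modulus applied to the pair $(G_1(x), G_2(x)) \in K' \times K'$; and (iii)~$\|F_1(G_1(x)) - F_2(G_1(x))\| < \varepsilon/2$ since $G_1(x) \in K' \subset K_0$. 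The triangle inequality then yields $\|\mathcal{C}(F_1, G_1)(x) - \mathcal{C}(F_2, G_2)(x)\| < \varepsilon$ uniformly in $x \in K$, which is the desired continuity in the seminorm $\|\cdot\|_{\sup, K}$.

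I do not anticipate a substantive obstacle beyond the bookkeeping of enlarging $K$ to $K'$ so that both the uniform-continuity modulus of $F_2$ and the sup-closeness of $F_1$ to $F_2$ can be invoked at the composed point $G_1(x)$. Working with genuine sup-norms (instead of essential sup-norms) is the conceptual simplification: there are no exceptional null sets on which $G_1$ might concentrate, so the closeness of $F_1$ to $F_2$ transfers cleanly through composition. This is precisely the reason why the null-set preservation hypothesis required in Lemma~\ref{prop: compatibility of approximation p infity} can be dropped here.
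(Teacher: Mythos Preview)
Your proposal is correct and follows essentially the same approach as the paper's own proof: reduce to $k=1$ by induction, enlarge $K$ to a ball $K'$ containing $G_2(K)$ with a unit margin, use uniform continuity of $F_2$ on $K'$ to control $\|F_2\circ G_2 - F_2\circ G_1\|$, and use the sup-closeness of $F_1$ to $F_2$ on $K'$ (which contains $G_1(K)$) to control $\|F_2\circ G_1 - F_1\circ G_1\|$. The only cosmetic difference is that you bundle $K$ and $K'$ into a single compact $K_0 = K\cup K'$, whereas the paper keeps them separate (requiring $G_1$ close to $G_2$ on $K$ and $F_1$ close to $F_2$ on $K'$).
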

\begin{proof}
We may assume $k=1$ and let $(F_2, G_2) \in \mathcal{F}_0 \times \mathcal{G}_0$ as in the proof of Lemma \ref{prop: compatibility of approximation p infity}.
Take any positive number $\epsilon>0$ and compact set $K\subset\R^d$. 
Put $r:=\max_{k\in K}|G_2(k)|$ and 
$K':=\{x\in \R^d: |x|\leq r+1\}$. 
Let $F_1\in \mathcal{F}$  satisfying 
\begin{align*}
\sup_{x\in K'}|F_2(x)-F_1(x)| \leq  \frac{\epsilon}{2}.  \end{align*}
Since any continuous map is uniformly continuous on a compact set, we can take a positive number $\delta>0$ such that for any $x, y\in K'$ with $|x-y|<\delta$, 
\[ |F_2(x)-F_2(y)|<\frac{\varepsilon}{2}. \]
Let $G_1\in \mathcal{G}$ satisfying
\begin{align*} \sup_{x\in K}|G_2(x)-G_1(x)|\leq \min\{1,\delta\}. \end{align*}
Then, it is clear that $G_2(K)\subset K'$ by the definition of $K'$.
Moreover, we have $G_1(K)\subset K'$. 
In fact, we have 
\begin{align*}
|G_1(k)| \leq \sup_{x\in K}|G_2(x)-G_1(x)|+|G_2(k)|\leq 1+r \quad (k\in K). 
\end{align*}
Then for any $x\in K$, we have
\begin{align*}
|F_2\circ G_2(x)-F_1\circ G_1(x)|
&\leq |F_2(G_2(x))-F_2(G_1(x))| + | F_2(G_1(x)) -F_1(G_1(x))|\\
&<\epsilon. 
\end{align*}
\end{proof}

Now, we provide a general result of compatibility of composition and approximation:
\begin{corollary}\label{cor: compatibility of approximation, higher derivatives}
Let $r\ge1$ and $p \in [1,\infty]$.
Let  $\mathcal{G}$ be the set $\ReD$-valued mappings.
Assume either of the following conditions:
\begin{enumerate}
    \item $1 \le p \le \infty$, $\mathcal{G}$ is composed of $C^{r}$ and piecewise $C^{r+1}$ diffeomorphisms on $\mathbb{R}^d$, and $\mathcal{G}_0 \subset \mathcal{G}$ is the subset composed of linearly increasing mappings.
    \item $p = \infty$, $\mathcal{G}$ is composed of locally $C^{r-1,1}$-mappings whose inverse image of nullsets are again nullsets, and $\mathcal{G}_0 \subset \mathcal{G}$ is $C^r$-mappings.
\end{enumerate}
Then, for any $k \ge 1$, the map
\begin{align}
     \mathcal{G}^k \longrightarrow \mathcal{G}; (f_1,\dots,f_k) \mapsto f_1 \circ \cdots \circ f_k
\end{align}
is continuous at any point of $\mathcal{G}_0^k$ with respect to the relative topology of $W^{r,p}_{\rm loc}(\mathbb{R}^d, \mathbb{R}^d )^{k}$. 
If $\mathcal{G} \subset B_{\rm loc}(\ReD, \ReD)$ and the subset $\mathcal{G}_0 \subset \mathcal{G}$ is composed of continuous mapping, we have a similar continuity of the composition with respect to the topology of $B_{\rm loc}(\ReD, \ReD)^k$.
\end{corollary}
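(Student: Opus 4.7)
The plan is to induct on $k$ to reduce to the two-variable composition $(f,g)\mapsto f\circ g$, and then expand higher derivatives via Faà di Bruno's formula so that the already-established $r=0$ cases (Lemmas~\ref{prop: compatibility of approximation}, \ref{prop: compatibility of approximation p infity}, and \ref{lem: sup compatibility of composition}) can be applied termwise. The induction on $k$ is justified by statement~\ref{composition} of Proposition~\ref{prop: basic properties}, which guarantees $\mathcal{C}(\mathcal{G}_0^k)\subset\mathcal{G}_0$, so the inductive step reduces to showing continuity at any $(f_0,g_0)\in\mathcal{G}_0^2$.

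For $k=2$, let $(f_n,g_n)\to(f_0,g_0)$ in $W^{r,p}_{\mathrm{loc}}\times W^{r,p}_{\mathrm{loc}}$. I would prove $\|D^\alpha(f_n\circ g_n)-D^\alpha(f_0\circ g_0)\|_{L^p(K)}\to 0$ for every compact $K\subset\ReD$ and every multi-index $\alpha$ with $|\alpha|\le r$. By Faà di Bruno's formula, $D^\alpha(f\circ g)$ is a finite sum of terms of the form $c_{\alpha,\sigma}\,(D^\beta f\circ g)\prod_j D^{\gamma_j} g$ with $|\beta|\le|\alpha|$ and each $|\gamma_j|\le|\alpha|\le r$. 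For fixed $\beta$, the factor $D^\beta f_n$ converges to $D^\beta f_0$ in $W^{r-|\beta|,p}_{\mathrm{loc}}\subset L^p_{\mathrm{loc}}$, so applying the relevant $r=0$ base lemma to the pair $(D^\beta f_n,g_n)\to(D^\beta f_0,g_0)$ yields $D^\beta f_n\circ g_n\to D^\beta f_0\circ g_0$ in $L^p_{\mathrm{loc}}$; the linearly-increasing hypothesis on $\mathcal{G}_0$ (or the a.e.\ condition on $\mathcal{G}$ in case~2) is precisely what makes these lemmas applicable.

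To combine this with convergence of the derivative factors, I would split
\[
(D^\beta f_n\!\circ g_n)\!\prod_j D^{\gamma_j} g_n - (D^\beta f_0\!\circ g_0)\!\prod_j D^{\gamma_j} g_0
= \bigl[(D^\beta f_n\!\circ g_n)-(D^\beta f_0\!\circ g_0)\bigr]\!\prod_j D^{\gamma_j} g_n
+ (D^\beta f_0\!\circ g_0)\!\Bigl[\prod_j D^{\gamma_j} g_n - \prod_j D^{\gamma_j} g_0\Bigr].
\]
In the second summand, $D^\beta f_0\circ g_0$ is continuous on $K$ (hence bounded) by the $C^r$-regularity of $f_0$ and $g_0$, and the bracketed product difference is controlled by a telescoping argument: each individual factor $D^{\gamma_j} g_n-D^{\gamma_j} g_0$ converges in $L^p(K)$, while the remaining factors are handled by local boundedness of the continuous limits $D^{\gamma_j} g_0$.

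The main obstacle is controlling the products $\prod_j D^{\gamma_j} g_n$ uniformly when $p<\infty$, since $W^{r,p}_{\mathrm{loc}}$ is not a multiplicative algebra in general. I would resolve this by passing to a subsequence converging a.e., invoking dominated convergence on $K$ together with iterated Hölder estimates against the continuous (hence locally bounded) limit derivatives $D^{\gamma_j} g_0$; the piecewise $C^{r+1}$-regularity supplies the piecewise-Lipschitz structure needed at the highest-order derivatives. The $p=\infty$ and $B_{\mathrm{loc}}$ cases are substantially easier, since $W^{r,\infty}_{\mathrm{loc}}$- or $B_{\mathrm{loc}}$-convergence gives uniform boundedness of all relevant derivatives on $K$, so every product estimate becomes a direct triangle-inequality argument combined with Lemmas~\ref{prop: compatibility of approximation p infity} and \ref{lem: sup compatibility of composition}.
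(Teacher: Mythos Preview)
Your approach—expand higher derivatives via the chain rule (Fa\`a di Bruno) and the Leibniz rule, then invoke the $r=0$ Lemmas~\ref{prop: compatibility of approximation}, \ref{prop: compatibility of approximation p infity}, and \ref{lem: sup compatibility of composition} termwise—is exactly the paper's. The paper's own proof is a two-sentence sketch that simply cites the validity of the chain and Leibniz rules for weak derivatives under the stated regularity and then appeals to those three lemmas, without spelling out the product-control issue you identified for $p<\infty$.
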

\begin{proof}
The Leibniz rule and the chain rule hold for weak derivatives under the present condition (see \citet[Exercise~B.1.2]{McDuffJholomorphic2004} and \citet[Theorem~2.1.11]{ZiemerWeakly1989}).
Thus, it follows from Lemmas \ref{prop: compatibility of approximation} and \ref{prop: compatibility of approximation p infity}.
The last statement follows from Lemma \ref{lem: sup compatibility of composition} in the same way.
\end{proof}

\section{Proof of Distributional Universalities}

\subsection{Proof of Proposition~\ref{lem:body:distributional-universality}: From \texorpdfstring{\(L^p\)}{Lp}-universality to distributional universality}\label{appendix: from lp to dist}

Here, we prove Proposition~\ref{lem:appendix:lp to dist}, which corresponds to Proposition~\ref{lem:body:distributional-universality} in the main text.
We first include a proof that any probability measure on $\Re^m$ is arbitrarily approximated by an absolutely continuous probability measure in the weak convergence topology.
\begin{lemma}
\label{lem:appendix: abs aprox any}
Let $\mu \in \mathcal{P}$ be an arbitrary probability measure.
Then there exists a sequence $\{\mu_n\}_{n=1}^\infty \subset \mathcal{P}_{\rm ab}$ of absolutely continuous probability measures such that $\mu_n$ weakly converges to $\mu$.
\end{lemma}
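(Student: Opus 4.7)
The plan is to construct $\mu_n$ by convolving $\mu$ with a smooth mollifier. Concretely, let $\phi$ be any smooth probability density on $\Re^m$ with compact support (or take the standard Gaussian if compact support is not needed), and set $\phi_n(x) := n^m \phi(nx)$, which is again a probability density and whose associated measures $\nu_n(dx) := \phi_n(x)\,dx$ approximate the Dirac delta at the origin as $n\to\infty$. Define $\mu_n := \mu * \nu_n$, i.e., the law of $X+Y_n$ where $X \sim \mu$ and $Y_n \sim \nu_n$ are taken independently.

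The first step is to verify that each $\mu_n$ lies in $\mathcal{P}_{\rm ab}$. By Fubini, for any Borel set $A$,
\[
\mu_n(A) = \int_{\Re^m}\!\int_{\Re^m} \mathbf{1}_A(x+y)\,\phi_n(y)\,dy\,d\mu(x) = \int_A \left(\int_{\Re^m} \phi_n(z-x)\,d\mu(x)\right)dz,
\]
so $\mu_n$ has the density $z \mapsto \int \phi_n(z-x)\,d\mu(x)$ with respect to the Lebesgue measure; this density is nonnegative and measurable, hence $\mu_n \in \mathcal{P}_{\rm ab}$.

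The second step is to verify the weak convergence $\mu_n \Rightarrow \mu$. Fix $f \in C_b(\Re^m)$ and compute, again by Fubini,
\[
\int f\,d\mu_n = \int_{\Re^m}\!\int_{\Re^m} f(x+y)\,\phi_n(y)\,dy\,d\mu(x).
\]
For each fixed $x$, a change of variables $y = u/n$ gives $\int f(x+y)\phi_n(y)\,dy = \int f(x+u/n)\phi(u)\,du$, which tends to $f(x)$ as $n\to\infty$ by continuity of $f$ and dominated convergence (with dominating function $\|f\|_\infty\,\phi(u)$). The outer integrand is uniformly bounded by $\|f\|_\infty$, so a second application of dominated convergence yields $\int f\,d\mu_n \to \int f\,d\mu$. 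Since $f \in C_b(\Re^m)$ was arbitrary, $\mu_n$ converges weakly to $\mu$.

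There is no real obstacle: the argument is a standard mollification/convolution approximation. The only minor care is to exchange the order of integration correctly (applying Fubini to a nonnegative integrand) and to have a ready uniform bound to invoke dominated convergence. Using a Gaussian kernel simplifies the presentation since it avoids any support bookkeeping, and the same two-line dominated-convergence argument goes through verbatim.
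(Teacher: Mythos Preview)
Your proof is correct and follows essentially the same mollification/convolution approach as the paper: both define $\mu_n$ as the convolution of $\mu$ with a rescaled smooth density and verify absolute continuity and weak convergence. The only cosmetic difference is that the paper tests against Lipschitz functions and bounds directly by $LBt$, whereas you test against $C_b$ and invoke dominated convergence twice; both are standard and equivalent here.
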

\begin{proof}
Let $\phi$ be a compactly-supported positive bounded $C^\infty$ function such that $\int_{\Re^m} \phi(x) dx=1$ and $\mathrm{supp}(\phi) \subset \{x \in \Re^m : \|x\| \le B\}$ where $B > 0$.
For $t>0$, put $\phi_t(x):=t^{-m}\phi(x/t)$.
We define
\begin{align*}
   w_t(x)=\int_{\Re^m} \phi_t(x-y)d\mu(y).
\end{align*}
We prove that the absolutely continuous measure $w_tdx$ weakly converges to $\mu$ as $t\rightarrow 0$.
In fact, given an $L$-Lipschitz continuous function $f$ such that, we have
\begin{align*}
    \left|\int_{\Re^m} fw_t dx - \int fd\mu \right| 
    &= \left| \int \int_{\Re^m} \left(f(y+tx)-f(y)\right)\phi(x) dxd\mu(y) \right|\\
    &\le \int \int_{\Re^m} |f(y+tx)-f(y)|\phi(x)dx d\mu(y) \\
    &\le \int \int_{\Re^m} L t \|x\| \phi(x)dx d\mu(y) \\
    &\le L B t.
\end{align*}
Therefore, as $t\rightarrow 0$, we have
\[\int_{\Re^m} fw_t dx \rightarrow  \int fd\mu,\]
therefore, $\left\{w_{\frac{1}{n}}dx\right\}_n$ weakly converges to $\mu$.
\end{proof}

First, note that the larger \(p\), the stronger the notion of \(L^p\)-universality: if a model \(\INNModelGeneric\) is an \(L^p\)-universal approximator for \(\mathcal{F}\), it is also an $L^q$-universal approximator for \(\mathcal{F}\) for all $1 \leq q \leq p$.
In particular, we use this fact with \(q = 1\) in the following proof.
\begin{proposition}[Proposition~\ref{lem:body:distributional-universality} in the main text]\label{lem:appendix:lp to dist}
Let \(p \in [1, \infty)\).
Suppose \(\INNModelGeneric\) is an $L^p$-universal approximator for $\Triangular$. Then \(\INNModelGeneric\) is a $(\mathcal{P}^{\rm w}, \mu)$-distributional universal approximator for $\mathcal{P}$ for any $\mu \in \mathcal{P}_{\rm ab}$.
\end{proposition}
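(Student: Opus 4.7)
The plan is to chain together three approximations:
(i) approximate an arbitrary target $\nu \in \mathcal{P}$ by absolutely continuous measures,
(ii) represent any absolutely continuous target as a pushforward via an increasing triangular transport map,
and (iii) approximate that transport map by elements of $\mathcal{M}$ using the assumed $L^p$-universality for $\mathcal{T}^\infty$.
A diagonal argument at the end stitches everything into a single sequence in $\mathcal{M}$ whose $\mu$-pushforwards converge weakly to $\nu$.

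First I would reduce to the case $\nu \in \mathcal{P}_{\rm ab}$ by invoking Lemma~\ref{lem:appendix: abs aprox any}, which says $\mathcal{P}_{\rm ab}$ is weakly dense in $\mathcal{P}$; because $\mathcal{P}^{\rm w}$ is metrizable by the bounded-Lipschitz metric, a diagonal extraction reduces the problem to approximating any $\nu \in \mathcal{P}_{\rm ab}$. Next, given $\mu, \nu \in \mathcal{P}_{\rm ab}$, the Knothe--Rosenblatt / Bogachev construction \citep{BogachevTriangular2005} produces a (measurable) increasing triangular map $T$ with $T_*\mu = \nu$. To bring this into $\mathcal{T}^\infty$, I would mollify the densities of $\mu$ and $\nu$ by a positive Gaussian to obtain strictly positive $C^\infty$-densities, apply the explicit iterated-conditional-CDF formula of Knothe--Rosenblatt to the smoothed pair, and obtain $T_k \in \mathcal{T}^\infty$ with $(T_k)_*\mu_k = \nu_k$, where $\mu_k \to \mu$ and $\nu_k \to \nu$ weakly. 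One checks that $(T_k)_*\mu \to \nu$ weakly; this is a standard stability property of the Knothe--Rosenblatt map under weak convergence of both marginals to absolutely continuous limits.

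Third, by the assumed $L^p$-universality of $\mathcal{M}$ for $\mathcal{T}^\infty$, for each $k$ pick a sequence $g_{k,n} \in \mathcal{M}$ with $g_{k,n} \to T_k$ in $W^{0,p}_{\rm loc}(\ReD,\ReD)$ as $n\to\infty$. The key transfer step is then: \emph{if $h_n \to h$ in $L^p_{\rm loc}(\ReD,\ReD)$ and $\mu \in \mathcal{P}_{\rm ab}$, then $(h_n)_*\mu \to h_*\mu$ weakly.} To see this, fix $f \in C_b(\ReD)$ and $\varepsilon>0$, choose a compact $K$ with $\mu(\ReD\setminus K) < \varepsilon / (4\|f\|_\infty)$, and note that $L^p$-convergence on $K$ implies convergence in Lebesgue measure, hence in $\mu$-measure on $K$ since $\mu \ll \text{Leb}$. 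Along a subsequence $h_n \to h$ $\mu$-a.e.\ on $K$, whence by the bounded convergence theorem
\[
\left| \int_K f\circ h_n \, d\mu - \int_K f\circ h \, d\mu \right| \longrightarrow 0,
\]
and the contribution from $\ReD\setminus K$ is bounded by $2\|f\|_\infty \mu(\ReD\setminus K) < \varepsilon/2$. A further subsequence argument upgrades this to convergence of the full sequence.

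Combining these ingredients, pick $n_k$ so that $g_{k,n_k}$ is close enough to $T_k$ in $L^p_{\rm loc}$ that $(g_{k,n_k})_*\mu$ is within $1/k$ of $(T_k)_*\mu$ in the bounded-Lipschitz metric; since $(T_k)_*\mu \to \nu$ weakly, the diagonal sequence $g_{k,n_k} \in \mathcal{M}$ satisfies $(g_{k,n_k})_*\mu \to \nu$ weakly, which is the desired distributional universality. The main obstacle is step two: producing smooth triangular transports in $\mathcal{T}^\infty$ whose $\mu$-pushforwards approximate $\nu$ rather than the smoothed $\nu_k$, and ensuring stability under the mollification of both marginals; the careful statement and proof of this stability are the technical crux, everything else being a routine measure-theoretic diagonal argument.
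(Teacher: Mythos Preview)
Your approach is essentially the same as the paper's: reduce to absolutely continuous targets, mollify both densities to obtain a smooth increasing triangular transport, then approximate that transport in $L^p_{\rm loc}$ by an element of $\mathcal{M}$. The difference is structural rather than conceptual. The paper fixes a \emph{single} mollification parameter $t_0$, obtains one $T\in\Triangular$ with $T_*\mu_{t_0}=\nu_{t_0}$, approximates it once by $g\in\mathcal{M}$, and bounds the bounded-Lipschitz distance $\beta(g_*\mu,\nu)$ directly in five pieces, exploiting that for any continuous $G$ and any $f\in{\rm BL}_1$,
\[
\left|\int f\,dG_*\mu_{t_0}-\int f\,dG_*\mu\right|\le \|f\|_\infty\,\|\phi_{t_0}*p-p\|_{L^1(\R^d)}
\]
uniformly in $G$. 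This replaces your diagonal argument by a single quantitative estimate.

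Your sequential version is also workable, but the step you flag as the ``technical crux'' is mis-diagnosed. It is \emph{not} Knothe--Rosenblatt stability under weak convergence of marginals (which would be delicate and is not what you actually have). Mollification gives $\|\mu_k-\mu\|_{\rm TV}\to 0$, and pushforward by any measurable map is a contraction in total variation, so
\[
\|(T_k)_*\mu-\nu_k\|_{\rm TV}=\|(T_k)_*\mu-(T_k)_*\mu_k\|_{\rm TV}\le\|\mu-\mu_k\|_{\rm TV}\to 0;
\]
together with $\nu_k\to\nu$ this gives $(T_k)_*\mu\to\nu$ immediately, with no appeal to properties of the Knothe--Rosenblatt construction beyond $(T_k)_*\mu_k=\nu_k$. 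Once this is filled in, your transfer step ($L^p_{\rm loc}$-convergence implies weak convergence of pushforwards of an absolutely continuous $\mu$) and the diagonal extraction are correct.
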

\begin{proof}
By Lemma \ref{lem:appendix: abs aprox any}, it suffices to prove that $\INNModelGeneric$ is a $(\mathcal{P}^{\rm w}, \mu)$-distributional universal approximator for $\mathcal{P}_{\rm ab}$ for any $\mu \in \mathcal{P}_{\rm ab}$. 
We denote by ${\rm BL}_1$ the set of bounded Lipschitz functions $f\colon{}\R^d\rightarrow \R$ satisfying  $\Vert f\Vert_{\sup, \R^d}+L_f\le 1$, where $L_f$ denotes the Lipschitz constant of $f$.
Let $\mu, \nu \in \mathcal{P}_{\rm ab}$ be absolutely continuous probability measures, and 
take any $\varepsilon>0$. 
By Theorem~11.3.3 in \citet{DudleyReal2002}, it suffices to show that there exists $g\in\INNModelGeneric$ such that
\[\beta(g_*\mu, \nu):=\sup_{f\in {\rm BL}_1}\left|\int_{\R^d}f\,dg_*\mu-f\,d\nu\right|<\varepsilon.\]
Let $p,q\in L^1(\R^d)$ be the density functions of $\mu$ and $\nu$ respectively.  Let $\phi\in L^1(\R^d)$ be a positive \(C^\infty\)-function such that $\int_{\R^d} \phi(x)dx=1$ (for example, the density function of the standard Gaussian distribution), and for $t>0$, put $\phi_t(x):=t^{-d}\phi(x/t)$.  We define  $\mu_t:=\phi_t*pdx$ and $\nu_t:=\phi_t*qdx$. Since both $\Vert \phi_{t}*p-p\Vert_{1,\R^d}$ and $\Vert \phi_{t}*q-q\Vert_{1,\R^d}$ converge to 0 as $t\rightarrow0$, there exists $t_0>0$ such that for any continuous mapping $G:\R^d\rightarrow \R^d$, 
\begin{align*}
\left|\int_{\R^d}f\,dG_*\mu_{t_0}-f\,dG_*\mu\right|&<\frac{\supRangenorm{\ReD}{f}\varepsilon}{5},\quad
\left|\int_{\R^d}f\,d\nu_{t_0}-f\,d\nu\right|<\frac{\supRangenorm{\ReD}{f}\varepsilon}{5}.
\end{align*}
By using Lemma \ref{existence of transformation for probability measures} below, there exists $T\in\Triangular$ such that $T_*\mu_{t_0}=\nu_{t_0}$.  Let $K\subset \R^d$ be a compact subset such that
\[1-\mu_{t_0}(K)<\frac{\varepsilon}{5}.\]
By the assumption, there exists  $g\in\INNModelGeneric$  such that 
\[\int_K |T(x)-g(x)|dx<\frac{\varepsilon}{5\sup_{x\in K}|\phi_{t_0}*p(x)|}.\]
Thus for any $f\in{\rm BL}_1$, we have
\begin{align*}
&\left|\int_{\R^d}f\,dg_*\mu-f\,d\nu\right|\\
&\le \left|\int_{\R^d}f\,dg_*\mu_{t_0}-f\,dg_*\mu\right| +
\left|\int_{\R^d}f\,d\nu_{t_0}-f\,d\nu\right| \\
&\hspace{12pt}+\left|\int_{\R^d\setminus K}f\circ T\,d\mu_{t_0}\right|
+\left|\int_{\R^d\setminus K}f\circ g\,d\mu_{t_0}\right|
+\int_{K}\left|f(T(x))-f(g(x))\right|\,d\mu_{t_0}(x)\\
&<\frac{\supRangenorm{\ReD}{f}\varepsilon}{5}+\frac{\supRangenorm{\ReD}{f}\varepsilon}{5}+\frac{\supRangenorm{\ReD}{f}\varepsilon}{5}+\frac{\supRangenorm{\ReD}{f}\varepsilon}{5}+\frac{L_f\varepsilon}{5}\\
&\le\varepsilon,
\end{align*}
where $L_f$ is the Lipschitz constant of $f$. 
Here we used $\supRangenorm{\Re^d}{f}+L_f\leq 1$. 
Therefore, we have $\beta(g_*\mu,\nu)<\varepsilon$.
\end{proof}

The following lemma is essentially due to \cite{HyvarinenNonlinear1999}.
\begin{lemma}
\label{existence of transformation for probability measures}
Let $\mu$ be a probability measure on $\R^d$ with a \(C^\infty\) density function $p$. Let $U:=\{x\in \R^d : p(x)>0\}$. Then there exists a diffeomorphism $T:U\rightarrow (0,1)^d$ such that its Jacobian is an upper triangular matrix with positive diagonals, and $T_*\mu={\rm U}(0,1)^d$. Here, ${\rm U}(0,1)^d$ is the uniform distribution on $[0,1]^d$.
\end{lemma}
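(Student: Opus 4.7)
My plan is to construct $T$ explicitly as the Knothe--Rosenblatt (or Hyvärinen--Darmois) rearrangement, built from iterated conditional CDFs of the coordinates. For $k = 1, \ldots, d$, let
\[
p^{(k)}(x_k, \ldots, x_d) := \int_{\R^{k-1}} p(y_1, \ldots, y_{k-1}, x_k, \ldots, x_d) \, dy_1 \cdots dy_{k-1}
\]
denote the marginal density of $(X_k, \ldots, X_d)$, with $p^{(d+1)} \equiv 1$, and set
\[
T_k(x_k, \ldots, x_d) := \int_{-\infty}^{x_k} \frac{p^{(k)}(s, x_{k+1}, \ldots, x_d)}{p^{(k+1)}(x_{k+1}, \ldots, x_d)} \, ds,
\]
so that $T_k$ is the conditional CDF of $X_k$ given $X_{k+1}, \ldots, X_d$. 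By construction $T_k$ depends only on $(x_k, \ldots, x_d)$, so $T := (T_1, \ldots, T_d)$ has an upper-triangular Jacobian; moreover, the diagonal entry $\partial T_k / \partial x_k$ equals the conditional density at the point, which is strictly positive on $U$ because $p>0$ there forces the relevant marginals $p^{(k)}, p^{(k+1)}$ to be positive as well.

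Next I would verify the pushforward property and smoothness. A telescoping product yields
\[
\det DT(x) = \prod_{k=1}^d \frac{p^{(k)}(x_k, \ldots, x_d)}{p^{(k+1)}(x_{k+1}, \ldots, x_d)} = p(x),
\]
and the change-of-variables formula then shows that $T_*\mu$ has density identically $1$ on $T(U)$. Smoothness of $T$ is inherited from that of $p$ and the positivity of the marginals on $U$, while smoothness of $T^{-1}$ will follow from the inverse function theorem together with $\det DT > 0$ on $U$. Injectivity is an inductive consequence of each $T_k(\cdot, x_{k+1}, \ldots, x_d)$ being strictly increasing: starting from $u_d = T_d(x_d)$ one recovers $x_d$, then $x_{d-1}$ from $u_{d-1}$, and so on.

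The main obstacle is establishing the \emph{surjectivity} $T(U) = (0,1)^d$. Concretely, it must be shown that for each fixed $(x_{k+1}, \ldots, x_d)$ lying in the support of $p^{(k+1)}$, the conditional CDF $T_k(\cdot, x_{k+1}, \ldots, x_d)$ maps the slice $\{x_k : p^{(k)}(x_k, x_{k+1}, \ldots, x_d) > 0\}$ onto the full open interval $(0,1)$. This reduces to the fact that a positive $C^\infty$ probability density on an open interval in $\R$ has CDF ranging over all of $(0,1)$; once this one-dimensional surjectivity is in hand, the inductive reconstruction used for injectivity lifts it to surjectivity of $T$ onto $(0,1)^d$, completing the proof that $T$ is a triangular diffeomorphism $U \to (0,1)^d$ with $T_*\mu = \mathrm{U}(0,1)^d$.
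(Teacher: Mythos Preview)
Your construction is exactly the Knothe--Rosenblatt rearrangement the paper uses; the paper's proof is the one-line sketch
\[
T_i(x_1,\dots,x_d)=\int_{-\infty}^{x_i}\frac{q_i(x_1,\dots,x_{i-1},y)}{q_{i-1}(x_1,\dots,x_{i-1})}\,dy,
\]
with $q_i$ the marginal of the first $i$ coordinates. Note that the paper conditions on the \emph{leading} coordinates, which in the standard convention makes the Jacobian \emph{lower} triangular; your choice to condition on the trailing coordinates actually matches the ``upper triangular'' claim in the statement.

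Your worry about surjectivity is legitimate, and in fact the lemma as stated is not true in full generality: take $d=1$ and let $p$ be a smooth probability density supported on two disjoint intervals. Then $U$ is disconnected and cannot be diffeomorphic to $(0,1)$; concretely, the CDF maps $U$ onto $(0,1)$ minus a point. The paper's proof does not address this. What saves the paper is that the lemma is only ever invoked after mollifying the density with a strictly positive kernel, so that $p>0$ everywhere and $U=\R^d$. In that situation every marginal $p^{(k)}$ is strictly positive on all of $\R^{d-k+1}$, each conditional support is the whole real line, and the one-dimensional surjectivity you isolated is immediate (the CDF of a strictly positive density on $\R$ is a bijection $\R\to(0,1)$). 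So your argument goes through cleanly once you add the hypothesis $p>0$ on $\R^d$, which is the only case the paper needs.
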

\begin{proof}
Let $q_i(x_1,\dots,x_i):=\int_{\R^{d-i}}p(x_1,\dots,x_{i+1},\dots,x_d)\,dx_{i+1}\dots dx_d$. Then we define $T:U\rightarrow (0,1)^d$ by
\[T(x_1,\dots,x_d):=\left(\int_{-\infty}^{x_i}\frac{q_i(x_1,\dots,x_{i-1},y)}{q_{i-1}(x_1,\dots,x_{i-1})}dy\right)_i.\]
Then we see that $T$ is a diffeomorphism and its Jacobian is upper triangular with positive diagonal elements. Moreover, by direct computation, we have $T_*d\mu=U(0,1)$.
\end{proof}

\subsection{Proof of Proposition~\ref{proposition:body:generalized-distributional-universality}: From Sobolev Universality to Distributional Universality in the Total Variation Metric}\label{sec:proof-of-generalized-distributional-universality}

In this section, we prove Proposition~\ref{proposition:body:generalized-distributional-universality}.
Recall the definition of the total variation distance:
\[
\|\nu-\mu\|_{\rm TV} := \sup_{A} |\nu(A) - \mu(A)|,
\]
where the supremum is taken over all measurable sets of the underlying space.

Here, we restate the proposition.
\begin{theorem}[Proposition~\ref{proposition:body:generalized-distributional-universality} in the main text]
\label{thm: univ aprox for 0inf11 induces distribution TV approximator}
Let $r \ge 1$.
Let 
\[\mathcal{F}_0 := W^{0,\infty}_{\rm loc}(U,\mathbb{R}^d) \cap W^{1,1}_{\rm loc}(U,\mathbb{R}^d).\]
We define the topology of $\mathcal{F}_0$ as the weakest topology such that the inclusion maps $\dotlessi_0: \mathcal{F}_0 \xhookrightarrow{} W_{\rm loc}^{0,\infty}(U, \mathbb{R}^d)$ and $\dotlessi_1: \mathcal{F}_0 \xhookrightarrow{} W_{\rm loc}^{1,1}(U, \mathbb{R}^d)$ are both continuous.
Suppose any element in the model $\mathcal{M}$ is locally $C^{0,1}$ and a piecewise $C^1$-diffeomorphism.
If $\mathcal{M}$ is an $\mathcal{F}_0$-universal approximator for $\Triangular$,
then $\mathcal{M}$ is a $(\mathcal{P}^{\rm TV}, \mu)$-distributional universal approximator for $\mathcal{P}_{\rm ab}$ for any $\mu \in \mathcal{P}_{\rm ab}$.
\end{theorem}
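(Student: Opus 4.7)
The plan is to prove the theorem via a four-stage reduction, ultimately invoking Scheffé's lemma to convert the available Sobolev-type convergence into $L^1$-convergence of densities (equivalently, TV-convergence of pushforwards). Fix $\mu, \nu \in \mathcal{P}_{\rm ab}$ and $\varepsilon > 0$; the goal is to produce $g \in \mathcal{M}$ with $\| g_*\mu - \nu \|_{\rm TV} < \varepsilon$.

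First, I would reduce to the case where both $\mu$ and $\nu$ have smooth, strictly positive densities $p$ and $q$ on all of $\R^d$. The elementary TV-contractivity $\|g_*\mu_1 - g_*\mu_2\|_{\rm TV} \le \|\mu_1 - \mu_2\|_{\rm TV}$ (valid for every measurable $g$) lets me replace $\mu$ by a Gaussian mollification $\mu'$ at additive cost, while $\nu$ is approximated in TV by a mollification $\nu'$ directly. Lemma~\ref{existence of transformation for probability measures} applied to $\mu'$ and $\nu'$ then produces triangular $C^\infty$-diffeomorphisms $T_{\mu'}, T_{\nu'}\colon \R^d \to (0,1)^d$, each pushing its measure to the uniform distribution on the cube; the composition $T := T_{\nu'}^{-1}\circ T_{\mu'}$ belongs to $\Triangular$ and satisfies $T_*\mu' = \nu'$. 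The $\mathcal{F}_0$-universality hypothesis supplies a sequence $\{g_n\}\subset \mathcal{M}$ with $g_n \to T$ in $W^{0,\infty}_{\rm loc}$ and $Dg_n \to DT$ in $L^1_{\rm loc}$.

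The technical core is to show $\|g_{n,*}\mu' - T_*\mu'\|_{\rm TV}\to 0$. Setting $\phi_n := T^{-1}\circ g_n$, the smoothness of $T$ (hence local boundedness of $DT^{-1}$ and of its derivative) together with the chain rule gives $\phi_n \to \mathrm{Id}$ in $W^{0,\infty}_{\rm loc}$ and $D\phi_n \to I$ in $L^1_{\rm loc}$. The change-of-variables formula for piecewise $C^1$-diffeomorphisms (Proposition~\ref{prop: basic properties}) then yields the key identity
\[
2\|g_{n,*}\mu' - T_*\mu'\|_{\rm TV} \;=\; \int_{\R^d} \bigl| p(x) - p(\phi_n(x))\,|{\det D\phi_n(x)}|\bigr|\,dx.
\]
Extract a subsequence (not relabelled) along which $D\phi_n(x) \to I$ for a.e.~$x$; combined with $\phi_n(x) \to x$ a.e.~(obtained from the $W^{0,\infty}_{\rm loc}$-convergence after null-set modification) and the continuity of $p$, the integrands $f_n(x) := p(\phi_n(x))|{\det D\phi_n(x)}|$ converge to $p$ pointwise a.e. A further application of change-of-variables (again via Proposition~\ref{prop: basic properties}, using that $\phi_n(U_{\phi_n})$ covers $\R^d$ up to a null set) produces the mass-conservation identity $\int_{\R^d} f_n\,dx = \int_{\R^d} p\,dy = 1 = \int_{\R^d} p\,dx$. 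Since $f_n, p \ge 0$, Scheffé's lemma now forces $\| f_n - p \|_{L^1(\R^d)} \to 0$ along the subsequence, which is exactly the desired TV-convergence; a routine subsequence-of-subsequences argument upgrades this to convergence of the full sequence.

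The main obstacle I anticipate is precisely the leap from $L^1_{\rm loc}$-convergence of Jacobians to usable control on the determinant. Because $\det D\phi_n$ is a degree-$d$ polynomial in the matrix entries, $L^1$-convergence of entries does not by itself yield $L^1$-convergence of determinants unless one has uniform $L^\infty_{\rm loc}$-bounds on $D\phi_n$---bounds the hypothesis does \emph{not} provide, since the local Lipschitz constants of the approximators $g_n$ may blow up with $n$. The Scheffé route is designed to sidestep this difficulty: pointwise a.e.~convergence of the Jacobians (via subsequence extraction from $L^1$-convergence) together with the exact mass-conservation identity is already enough to force $L^1$-convergence of the nonnegative pushforward densities, with no uniform derivative bounds required.
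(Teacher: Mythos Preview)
Your Scheff\'e-based argument is correct and takes a genuinely different route from the paper's proof. The paper works directly with the pushforward densities $w(f^\dagger(y))J_{f^\dagger}(y)$ and $w(g^{-1}(y))J_{g^{-1}}(y)$, restricts attention to a large compact set $K'$ capturing most of the mass of $g_*\mu$, and then performs an explicit $\varepsilon$--$\delta$ estimate using Lipschitz constants of $g$, $g^{-1}$, $J_g\circ g^{-1}$, and the density $w$. A separate topological lemma (Lemma~\ref{lem:existence_of_K}) is needed to confine $f^{-1}(K')$ to a fixed compact set $K$ uniformly over nearby approximants~$f$, so that the local seminorms control everything. The resulting bound is quantitative in principle but requires tracking several constants.

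Your approach instead exploits that pushforward by the bijection $T$ is a TV-isometry to reduce to $\|\phi_{n,*}\mu'-\mu'\|_{\rm TV}$ with $\phi_n=T^{-1}\circ g_n\to\Identity$, extracts a.e.\ convergence of $D\phi_n$ from $L^1_{\rm loc}$-convergence along a subsequence, and then invokes the mass identity $\int p(\phi_n(x))|\det D\phi_n(x)|\,dx=1$ together with Scheff\'e's lemma. This is cleaner: it dispenses with the compact-confinement lemma entirely and, as you correctly anticipated, bypasses any need to control $\det D\phi_n$ in $L^1$ or to bound the local Lipschitz constants of the approximants uniformly in~$n$---nonnegativity plus equal total mass does all the work. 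The price is that the argument is non-quantitative (subsequence extraction) and passes through a sequence rather than a single~$\varepsilon$--close approximant, but for the stated universality conclusion this costs nothing.
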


\begin{proof}
Let $\mu$, $\nu\in \mathcal{P}_{\rm ab}$.
Take any $\varepsilon>0$. 
It is enough to show that  
there exists $f\in \mathcal{M}$ such that 
\[ 2 \|\nu-f_*\mu\|_{\rm TV}<\epsilon, \]
where $\|\cdot\|_{\rm TV}$ is the total variation norm.
By Lemmas \ref{existence of transformation for probability measures} and \ref{lem:molify_TV}, we can assume that there exist a positive smooth function $w$ satisfying $d\mu(x)=w(x)dx$ and $g \in \Triangular$ such that $\nu = g_*\mu$ and $g(\ReD) = \ReD$.
We fix a large compact set $K' \subset \ReD$ such that 
\[ \int_{\ReD \setminus K'} dg_*\mu < \frac{\varepsilon}{4}.\]
We fix an ``inverse'' $f^\dagger$  of the piecewise $C^1$-diffeomorphism $f$ as in \ref{existence of inverse} in Proposition \ref{prop: basic properties}.
We may assume $f^\dagger(K') \subset f^{-1}(K')$ if we take a suitable $f^\dagger$.
Note that $f^{-1}(K') \setminus f^\dagger(K')$ is a nullset.
Then, we can write $d(f_\ast\mu)(x)=w(f^{\dagger}(x))J_{f^{\dagger}}(x)dx$ and 
$d(g_\ast\mu)(x)=w(g^{-1}(x))J_{g^{-1}}(x)dx$. 
By Lemma~\ref{lem:existence_of_K} below, 
there exists a compact subset $K\subset \R^d$ 
such that $f^{-1}(K')\subset K$ for any $f\in \mathcal{M}$ 
satisfyting $\| f-g\|_{K,0,\infty}<\varepsilon$.

Since $g$ is a diffeomorphism, there exists $M_0>0$ such that $|J_g(g^{-1}(k') )|^{-1}<M_0$ for any $k'\in K'$.
Moreover, since the function $J_g(g^{-1}(\cdot ))$ is Lipschitz on $g(K)\cup K'$ , we can take $M_1>0$ satisfying 
$|J_g(g^{-1}(x))-J_g(g^{-1}(y))|<M_1|x-y|$ for any $x,y\in g(K)\cup K'$. 
Since the function $w$ is Lipschitz on $g^{-1}(K')\cup K$, 
we can take $L_0>0$ satisfying $|w(x)-w(y)|<L_0|x-y|$ for any $x,y \in g^{-1}(K')\cup K$. 
Since $g^{-1}$ is Lipschitz on $g(K)\cup K'$, we can take $L_1>0$ satisfying $|g^{-1}(x)-g^{-1}(y)|<L_1|x-y|$ for any $x, y\in g(K)\cup K'$. 

From the assumption, we can take $f\in \mathcal{M}$ satisfying 
\begin{align*}
\| f-g\|_{K,0,\infty}&<\frac{\varepsilon}{16M_0L_0\max\{M_1, L_1 \}\max\{\operatorname{vol}(K'), \operatorname{vol}(K), 1\}},\\
\| f-g\|_{K,1,1}&<\frac{\varepsilon}{16M_0 \max_{x\in K}|w(x)|}. 
\end{align*}
Then, since the total variation distance of probability measures is given by half the $L^1$-norm of the Radon-Nikodym derivative, we have
\begin{align*}
&2 \|g_\ast \mu-f_\ast \mu\|_{TV}\\
&\leq \int_{K'} |w(f^{\dagger}(x))J_{f^{\dagger}}(x)-w(g^{-1}(x))J_{g^{-1}}(x)|dx + \int_{\ReD \setminus K'} df_*\mu + \int_{\ReD \setminus K'} dg_*\mu\\
&\leq 
 2\int_{K'} |w(f^{\dagger}(x))J_{f^{\dagger}}(x)-w(g^{-1}(x))J_{g^{-1}}(x)|dx + 2\int_{\ReD \setminus K'} dg_*\mu\\
&\leq 2\int_{K'} | w(f^{\dagger}(x)) - w(g^{-1}(x))| |J_{g^{-1}}(x)|dx + 2\int_{K'}|J_{f^{\dagger}}(x)-J_{g^{-1}}(x)| |w(f^{\dagger}(x))| dx+ \frac{\varepsilon}{2}.
\end{align*}
As for the second equality, we use 
\begin{align*}
    \int_{\ReD \setminus K'} df_*\mu &=  1 - \int_{K'} df_*\mu \\
    &\leq \int_{K'} |w(f^{\dagger}(x))J_{f^{\dagger}}(x)-w(g^{-1}(x))J_{g^{-1}}(x)|dx  + 1- \int_{K'} dg_*\mu \\
    &= \int_{K'} |w(f^{\dagger}(x))J_{f^{\dagger}}(x)-w(g^{-1}(x))J_{g^{-1}}(x)|dx  +  \int_{\ReD \setminus K'} dg_*\mu
\end{align*}
The first term is estimated as follows:
\begin{align*}
&\int_{K'} | w(f^{\dagger}(x)) - w(g^{-1}(x))| |J_{g^{-1}}(x)|dx\\
&\leq L_0M_0 \int_{K'} | f^{\dagger}(x) - g^{-1}(x)| dx\\
&= L_0M_0 \int_{K'} |g^{-1}(g\circ f^{\dagger}(x))-g^{-1}(f\circ f^{\dagger}(x))|dx\\
&\leq L_0 M_0L_1\int_{K'}|g (f^{\dagger}(x))-f(f^{\dagger}(x))|dx\\
&\leq L_0 M_0L_1\operatorname{vol}(K') \sup_{k'\in f^{-1}(K')} |g(k')-f(k')|\\
&\leq L_0 M_0L_1\operatorname{vol}(K') \sup_{k\in K} |g(k)-f(k)|\\
&<\frac{\varepsilon}{8}. 
\end{align*}
Here, we used the fact $f^{\dagger}(K')\subset K$ in the second-to-last inequality and the bound for $\|f-g\|_{K, 0, \infty}$ in the last inequality.

Similarly, the second term is bounded as follows:
\begin{align*}
&\int_{K'}|J_{f^{\dagger}}(x)-J_{g^{-1}}(x)| |w(f^{\dagger}(x))| dx \\
&=\int_{f^{\dagger}(K')}|J_f (x)^{-1}-J_g(g^{-1}\circ f(x))^{-1}| |w(x)|J_f(x)dx\\
&\leq \int_{f^{\dagger}(K')}|1-J_f(x)J_g(g^{-1}\circ f (x))^{-1}||w(x)|dx\\
&=\int_{f^{\dagger}(K')}|J_g(g^{-1}\circ f(x))^{-1}||J_g(g^{-1}\circ f(x))-J_f(x)| |w(x)|dx\\
&\leq M_0\max_{x\in K}|w(x)|\int_{f^{\dagger}(K')}|J_g(g^{-1}\circ f(x))-J_f(x)|dx \\
&=M_0\max_{x\in K}|w(x)|\left[ \int_{f^{\dagger}(K')}|J_g(g^{-1}\circ f(x))-J_g(g^{-1}\circ g(x))| + |J_g(x)-J_f(x)|dx \right]\\
&\leq M_0\max_{x\in K}|w(x)|\left[M_1\int_{f^{\dagger}(K')}|f(x)-g(x)|dx+\int_{f^{\dagger}(K')}|J_g(x)-J_f(x)|dx\right]\\
&\leq M_0\max_{x\in K}|w(x)|\left[M_1\int_{x\in K}|f(x)-g(x)|dx+\int_{x \in K}|J_g(x)-J_f(x)|dx\right]\\
&< \frac{\varepsilon}{16}+\frac{\varepsilon}{16}=\frac{\varepsilon}{8}. 
\end{align*}
Again, we used $f^{\dagger}(K)\subset K$ in the second-to-last inequality.
In the last inequality, we used the bound for $\|f-g\|_{K, 0, \infty}$ for the first term and the bound for $\|f-g\|_{K, 1, 1}$ for the second term, respectively.
\end{proof}

\begin{lemma}\label{lem:molify_TV}
Let $\mu$ be an absolutely continuous probability measure on $\R^d$. 
For any $\varepsilon>0$, there exists an absolutely continuous probability measure $\nu$ such that 
$d\nu(x)= w(x) dx$ for some $w\in C^\infty(\R^d)$ with $w>0$ and $\|\mu-\nu\|_{\rm TV}<\varepsilon$.  
\end{lemma}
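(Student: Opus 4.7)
By the Radon--Nikodym theorem, write $d\mu(x) = p(x)\,dx$ for some nonnegative $p \in L^{1}(\R^{d})$ with $\int p\,dx = 1$. Recall that for absolutely continuous probability measures with densities $p$ and $w$, the total variation distance equals half the $L^{1}$-distance of the densities:
\[
\|\mu - \nu\|_{\rm TV} \;=\; \tfrac{1}{2}\int_{\R^{d}}|p(x) - w(x)|\,dx.
\]
Thus the goal reduces to producing a strictly positive smooth probability density $w$ with $\|p - w\|_{L^{1}} < 2\varepsilon$.

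The plan is a two-step perturbation: mollify to gain smoothness, then mix with a globally positive smooth density to gain strict positivity. For the first step, let $\phi$ be a compactly supported nonnegative $C^{\infty}$ bump with $\int \phi = 1$ (the same choice as in Lemma~\ref{lem:appendix: abs aprox any}), and set $\phi_{t}(x) := t^{-d}\phi(x/t)$. Define
\[
p_{t}(x) := (\phi_{t} * p)(x).
\]
Standard mollification theory gives $p_{t} \in C^{\infty}(\R^{d})$, $p_{t} \geq 0$, $\int p_{t} = 1$, and $\|p_{t} - p\|_{L^{1}} \to 0$ as $t \to 0$. For the second step, let $g(x) := (2\pi)^{-d/2}\exp(-\|x\|^{2}/2)$ be the standard Gaussian density, which is a smooth strictly positive probability density. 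For $\delta \in (0,1)$ and $t > 0$ define
\[
w_{t,\delta}(x) := (1-\delta)\,p_{t}(x) + \delta\,g(x).
\]
Then $w_{t,\delta} \in C^{\infty}(\R^{d})$, $w_{t,\delta}(x) \geq \delta\,g(x) > 0$ everywhere, and $\int w_{t,\delta} = (1-\delta) + \delta = 1$, so $d\nu := w_{t,\delta}\,dx$ defines an absolutely continuous probability measure with smooth strictly positive density.

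Finally, estimate by the triangle inequality:
\[
\|p - w_{t,\delta}\|_{L^{1}}
\;\leq\; \|p - p_{t}\|_{L^{1}} \;+\; \|p_{t} - w_{t,\delta}\|_{L^{1}}
\;=\; \|p - p_{t}\|_{L^{1}} \;+\; \delta\,\|p_{t} - g\|_{L^{1}}
\;\leq\; \|p - p_{t}\|_{L^{1}} \;+\; 2\delta.
\]
First choose $\delta := \varepsilon/2$, then choose $t > 0$ small enough that $\|p - p_{t}\|_{L^{1}} < \varepsilon$. This yields $\|p - w_{t,\delta}\|_{L^{1}} < 2\varepsilon$, hence $\|\mu - \nu\|_{\rm TV} < \varepsilon$, with $\nu$ having the required smooth positive density $w := w_{t,\delta}$. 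The only mild subtlety is juggling the three requirements (smoothness, strict positivity, mass one) simultaneously, which is precisely why a convex combination with a fixed globally positive smooth density is the right device; everything else is routine mollification.
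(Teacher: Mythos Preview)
Your proof is correct. The paper's proof follows the same mollification idea but is slightly more economical: instead of using a compactly supported bump and then repairing positivity via a convex combination with a Gaussian, the paper simply takes the mollifier $\phi$ itself to be a \emph{strictly positive} $C^\infty$ probability density (e.g.\ a Gaussian) from the outset. Then $\phi_t * p$ is already smooth, strictly positive, and a probability density, and $\|p - \phi_t * p\|_{L^1} \to 0$ gives the TV bound directly---no second step is needed. Your two-step construction works just as well; it merely does a little extra bookkeeping that the paper avoids by a smarter choice of mollifier.
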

\begin{proof}
Let $p\in L^1(\R^d)$ be the density function of $\mu$. Let $\phi\in L^1(\R^d)$ be a positive $C^\infty$ function satisfying $\int_{\R^d} \phi(x) dx=1$.
For $t>0$, put $\phi_t(x):=t^{-d}\phi(x/t)$. 
Then we have 
\begin{align*}
   2 \| \mu-\nu\|_{TV}=\|p-\phi_t*p\|_{L^1(\R^d)}\to 0\quad (t\to +0). 
\end{align*}
\end{proof}

\begin{lemma}\label{lem:existence_of_K}
Let the model $\mathcal{M}$ be as in Theorem~\ref{thm: univ aprox for 0inf11 induces distribution TV approximator} and let $g$ be a homeomorphism from $\ReD$ to $\ReD$.
Let $K' \subset \mathbb{R}^d$ be a compact set and $\varepsilon > 0$.
Then, there exists a compact subset $K \subset \mathbb{R}^d$ such that $f^{-1}(K')  \subset K$ for any $f \in \mathcal{M}$ satisfying $\|f - g\|_{K,0,\infty} < \varepsilon$.
\end{lemma}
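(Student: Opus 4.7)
My plan is to take $K$ as the $g$-preimage of a compact $2\varepsilon$-thickening of $K'$, and to combine a topological-degree argument with the (essential) injectivity of $f$ coming from the piecewise $C^1$-diffeomorphism structure.

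Since $g : \ReD \to \ReD$ is a homeomorphism and hence proper,
\[
K := g^{-1}\bigl(\overline{B}_{2\varepsilon}(K')\bigr), \qquad \overline{B}_r(K') := \{y \in \ReD : \operatorname{dist}(y, K') \le r\},
\]
is compact, $g^{-1}(K') \subset \operatorname{int}(K)$, and $\operatorname{dist}(g(x), K') = 2\varepsilon$ for every $x \in \partial K$. Now take any $f \in \mathcal{M}$ with $\|f - g\|_{K, 0, \infty} < \varepsilon$. Since $f$ is locally $C^{0,1}$ (hence continuous) and $g$ is continuous, the essential supremum coincides with the pointwise supremum on $K$, so $\|f(x) - g(x)\| < \varepsilon$ for all $x \in K$. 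Therefore, for every $x \in \partial K$, $y \in K'$, and $t \in [0, 1]$,
\[
\|(1-t)g(x) + tf(x) - y\| \ge \|g(x) - y\| - \|f(x) - g(x)\| > \varepsilon > 0,
\]
so the straight-line homotopy $H(x, t) := (1-t)g(x) + tf(x)$ stays away from $K'$ over $\partial K \times [0, 1]$. Homotopy invariance of the Brouwer degree then yields $\deg(f, \operatorname{int}(K), y) = \deg(g, \operatorname{int}(K), y) = \pm 1$ for every $y \in K'$; in particular, every $y \in K'$ has at least one $f$-preimage in $\operatorname{int}(K)$.

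To upgrade this to $f^{-1}(K') \subset K$, I would invoke the global injectivity of $f$. The piecewise $C^1$-diffeomorphism condition of Definition~\ref{def:piecewiseC1diffeo} gives injectivity of $f|_{U_f}$ on the open, co-null subset $U_f \subset \ReD$; combining this with the continuity of $f$ and with the $C^1$-diffeomorphic extensions $\tilde{f}_i : W_i \to \tilde{f}_i(W_i)$ defined on open neighborhoods of each $\overline{V}_i$, a standard limiting argument across the measure-zero ``seam'' $\ReD \setminus U_f$ shows that $f$ is injective on all of $\ReD$. Together with the degree computation, the unique $f$-preimage of each $y \in K'$ lies in $\operatorname{int}(K)$, so $f^{-1}(K') \subset K$.

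The main technical obstacle is this last step, namely rigorously promoting the injectivity of $f|_{U_f}$ to a global injectivity on $\ReD$; one must rule out coincidences between boundary points of different pieces $V_i$ by exploiting both the openness of each $\tilde{f}_i(W_i)$ and the continuity of $f$. For the INN models underlying $\mathcal{M}$, however, this step is automatic because such networks are bijective on $\ReD$ by construction.
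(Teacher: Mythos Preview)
Your argument is correct and follows essentially the same route as the paper: choose $K$ so that $f(\partial K)$ misses $K'$, deduce that every point of $K'$ has an $f$-preimage in $K$, and then invoke injectivity of $f$ to conclude that there are no other preimages. The only real difference is that the paper uses a more elementary connectedness argument in place of your Brouwer-degree computation: with $K=\overline{B(0,R)}$, one observes that $K' \subset f(K) \cup (\ReD \setminus f(K))$ and that $K'$ misses $f(\partial K)$, so the connected set $K'$ lies entirely in $f(K)$ or entirely in its complement; the second case is then ruled out because $g^{-1}(0)\in K$ forces $|f(g^{-1}(0))|<\varepsilon\le L$. Your degree argument buys nothing extra here but is equally valid.

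One caveat: your ``standard limiting argument'' promoting injectivity on $U_f$ to global injectivity does not follow from the piecewise $C^1$-diffeomorphism axioms and continuity alone; coincidences on the measure-zero seam are not ruled out in general. You are right, however, that for the INN models actually under consideration the maps are bijections on $\ReD$ by construction, which is all that is needed. The paper handles this step the same way, simply writing ``since $f$ is a diffeomorphism'' at the end; so your explicit acknowledgment of the issue is, if anything, more careful than the original.
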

\begin{proof}
We may assume $K' = \overline{B(0, L)}$ for sufficiently large $L>0$ such that $L \ge \varepsilon$.
Since $g$ is a homeomorphism, there exists sufficiently large $R>0$ such that $\overline{B(0, R)} \supset g^{-1}(B(0, L + 2\varepsilon))$, that is, $g(\overline{B(0, R)}) \supset B(0, L+ 2\varepsilon) (\supset K')$.
We denote $K := \overline{B(0, R)}$.
Suppose $f\in \mathcal{M}$ satisfies $\|f - g\|_{K,0,\infty} < \varepsilon$.
Then, we have $f(\partial K) \cap K' = \emptyset$ for any $f$.
Thus, we see that $K' \subset f(B(0,R)) \cup (\R^d \setminus f(K))$.
Since $K'$ is connected, we see that either $K' \subset f(K)$ or $K' \subset \R^d \setminus f(K)$.
Suppose $K' \subset \R^d \setminus f(K)$.
On the other hand, since $0 \in K' \subset g(K)$, there exists $x \in K$ such that $g(x) = 0$.
Since $f(K) \cap K' = \emptyset$, we have 
\[L <|f(x) - 0| = |f(x)  - g(x)| < \varepsilon, \] 
which is a contradiction.
Therefore, we conclude $K' \subset f(K)$.
Since $f$ is a diffeomorphism, we have $f^{-1}(K') \subset K$.
\end{proof}
\subsection{Integral Probability Metrics}\label{appendix: IPM}

The results in Subsection~\ref{sec:proof-of-generalized-distributional-universality} imply the universality of INNs with respect to the total variation (TV) topology.
Here, we consider how the theoretical guarantees in the TV topology can be transported to other notions of closeness, namely those of integral probability metrics (IPMs).

We say a measurable set $A \subset \R^n$ is a {\em continuity set} of a measure \(\mu\) if the boundary $\partial A$ of $A$ is a null set, i.e., \(\mu(\partial A) = 0\).
We say a measurable set $A \subset \Re^n$ is a \emph{non-null set} of a measure \(\mu\) if \(\mu(A) \neq 0\).
For any measurable subset \(K \subset \Re^n\) and any probability measure \(\eta\) on \(\Re^n\), let us define the truncated measure \(\truncate{K}{\eta} := \eta(\cdot \cap K) / \eta(K)\) if \(\eta(K) > 0\) and \(\truncate{K}{\eta} := \zeroMeasure\) if \(\eta(K) = 0\), where \(\zeroMeasure\) is a constant zero measure.
To state the results, we define the following notion of universality.

\begin{definition}[Compact distributional universality]\label{def: compact distributional universality}
Let $\INNModelGeneric$ be a model which is a set of measurable maps from $\R^m$ to $\R^n$.
Let $\mathcal{P}_0$ be a set of probability measures on $\R^n$ with some topology. 
Let $\mathcal{Q}$ be a subset of \(\mathcal{P}_0\).
Fix a probability measure $\mu_0$ on $\Re^m$.
We say that a model $\INNModelGeneric$ is a \emph{$(\mathcal{P}_0, \mu_0)$-compact-distributional universal approximator} for $\mathcal{Q}$ (or \emph{has the $(\mathcal{P}_0, \mu_0)$-compact-distributional universal approximation property} for $\mathcal{Q}$) if for any \(\nu \in \mathcal{Q}\) and any non-null compact continuity set \(K \subset \Re^n\) of \(\nu\), $\{\truncate{K}{(g_*\mu_0)}: g \in \INNModelGeneric\} \setminus \{\zeroMeasure\}$ is a subset of $\mathcal{P}_0$ and if its closure (in $\mathcal{P}_0$) contains $\truncate{K}{\nu}$.
\end{definition}
Note that if \(\nu\) is compactly supported and \(K\) is such that \(\supp{\nu} \subset \interior{K}\), where \(\interior{K}\) denotes the interior of \(K\), then \(K\) is a continuity set of \(\nu\). Also, in this case, \(\truncate{K}{\nu} = \nu\).
Therefore, practically, given a compact distributional universality of a model \(\mathcal{M}\) and a compactly supported approximation target \(\nu \in \mathcal{Q}\), one can regard it as an approximation guarantee for \(\nu\) by taking a sufficiently large \(K\) so that it covers any practically relevant range of values as well as \(\supp \nu\).
\begin{remark}
Let $\mathcal{P}_0$ be a set of probability measures on $\R^n$ with some topology.
For $\mu \in \mathcal{P}_0$, a compact continuity set $K$ of $\mu$, and a neighborhood $V$ of $\mu|_K$ with $\mu|_K \neq \zeroMeasure$, we define
\[W_\mu(K,V) := \{ \nu \in \mathcal{P}_0 : \nu|_K \in V \}.  \]
We define a new topology of $\mathcal{P}_0$ via the neighborhoods of $\mu$'s by those generated by $W_\mu(K,V)$'s.
We denote by $\mathcal{P}_0^\tau$ the set $\mathcal{P}_0$ equipped with the topology above.
By definition, the truncation $\cdot|_K: \mathcal{P}_0^\tau \to \mathcal{P}_0 \cup \{\zeroMeasure\}$ for any compact continuity set of $\mu$ is continuous at any $\mu$ satisfying $\mu|_K \neq \zeroMeasure$, where the topology of $\mathcal{P}_0 \cup \{\zeroMeasure\}$ is the direct sum topology.
Conversely, $\mathcal{P}_0^\tau$ is characterized as the set $\mathcal{P}_0$ equipped with the weakest topology such that the above truncations are continuous.
If we impose that the topology of $\mathcal{P}_0$ is stronger than $\mathcal{P}_0^\tau$, namely the truncation $\cdot|_K$ is continuous at $\mu$ for any continuity set $K$ of $\mu$ with respect to the topology of $\mathcal{P}_0$.
Under the assumption, the compact distributional universality in Definition \ref{def: compact distributional universality} is rephrased as the $(\mathcal{P}_0^\tau, \mu_0)$-distributional universality for $\mathcal{Q}$.
Moreover, we may immediately prove that $(\mathcal{P}_0, \mu_0)$-distributional universality implies the compact distributional universality.
In the case of $\mathcal{P}_0 = \mathcal{P}^{\rm w}$, thanks to the portmanteau lemma, we may prove that the topology of $\mathcal{P}_0$ is stronger than $\mathcal{P}_0^\tau$, namely the truncation $\cdot|_K$ is continuous at $\mu$ for any continuity set $K$ of $\mu$. 
\end{remark}

IPMs are defined as follows.
\begin{definition}[Integral probability metric; \citet{MullerIntegral1997}]
Let \(\vIPMBaseSp\) be a measurable space, \(\mu\) and \(\nu\) be probability measures on \(\vIPMBaseSp\), and \(\vIPMClass\) be \(\Re\)-valued bounded measurable functions on \(\vIPMBaseSp\).
Then, the \emph{integral probability metric} (IPM) based on \(\vIPMClass\) is defined as
\[
\IPM{\vIPMClass}{\mu, \nu} := \sup_{\vIPMFn \in \vIPMClass} \left| \int_\vIPMBaseSp \vIPMFn d\mu - \int_\vIPMBaseSp \vIPMFn d\nu \right|
\]
\end{definition}
For a comprehensive review on IPMs, see, e.g., \citet{SriperumbudurIntegral2009}.

By selecting appropriate \(\vIPMClass\), various distance measures in probability theory and statistics can be obtained as special cases of the IPM.
In the following, assume that \(\vIPMBaseSp\) is equipped with a distance metric \(\vIPMBaseDist\) and that the \(\sigma\)-algebra is the Borel \(\sigma\)-algebra induced by the metric topology of \(\vIPMBaseDist\).
Let \(\Lipnorm{f} := \sup_{x, y \in \vIPMBaseSp, x \neq y} \frac{|f(x) - f(y)|}{\vIPMBaseDist(x, y)}\) and \(\bddLipnorm{f} := \inftynorm{f} + \Lipnorm{f}\).
Let \(\vRKHS\) be a reproducing kernel Hilbert space (RKHS) induced by a positive semidefinite kernel \(\vRKHSkern: \vIPMBaseSp \times \vIPMBaseSp \to \Re\), and let \(\RKHSnorm{\cdot}\) be its RKHS norm.
\begin{definition}[\citet{SriperumbudurIntegral2009}]\label{def: IPM examples}
We define the following metrics.
\begin{itemize}
    \item \emph{Dudley metric}: \(\cIPMDudley = \{f: \bddLipnorm{f} \leq 1\}\) yields the Dudley metric \(\IPM{\cIPMDudley}{\mu, \nu}\).
    \item \emph{Wasserstein distance}: if \(\vIPMBaseSp\) is separable, then \(\cIPMOneWasserstein = \{f: \Lipnorm{f} \leq 1\}\) yields the \(1\)-Wasserstein distance \(\IPM{\cIPMOneWasserstein}{\mu, \nu}\) for \(\mu, \nu \in \cOneWassersteinDists = \{\nu': \int \vIPMBaseDist(x, y) d\nu'(x)<\infty, \forall y \in \vIPMBaseSp\}\).
    \item \emph{Total variation distance}: \(\cIPMTV = \{f: \inftynorm{f} \leq 1\}\) yields the total variation distance \(\IPM{\cIPMTV}{\mu, \nu}\).
    \item \emph{Maximum mean discrepancy} (MMD): selecting \(\cIPMMMD = \{f \in \vRKHS: \RKHSnorm{f} \leq 1\}\) yields the MMD \(\IPM{\cIPMMMD}{\mu, \nu}\).
\end{itemize}
We use \(\mathcal{P}^{\cTextDudley}\), \(\mathcal{P}^{\cTextOneWasserstein}\), and \(\mathcal{P}^{\cTextMMD}\), to denote \(\mathcal{P}\) equipped with the induced topology of \(\IPM{\cIPMDudley}{\cdot,\cdot}\), \(\IPM{\cIPMOneWasserstein}{\cdot,\cdot}\), and \(\IPM{\cIPMMMD}{\cdot,\cdot}\), respectively.
\end{definition}
Note that, if \((\vIPMBaseSp, \vIPMBaseDist)\) is separable, e.g., \(\vIPMBaseSp=\ReD\), then the convergence in the Dudley metric is equivalent to the convergence in the weak topology \citep[Theorem~11.3.3.]{DudleyReal2002}.

\begin{remark}\label{rem: IPM implications}
If we interpret \(\vIPMClass\) in Definition~\ref{def: IPM examples} as a family of \emph{statistics}, i.e., functions that take random variables as the arguments,
we can interpret an approximation guarantee in terms of an IPM as an approximation guarantee for the expectation of the statistics computed from these distributions.
More concretely, once we obtain an approximation guarantee such as \(\IPM{\vIPMClass}{\mu, \nu} < \varepsilon\) where \(\nu\) is an approximation target, \(\mu\) is a model, and \(\varepsilon > 0\), then we can deduce that \(|\mathbb{E}_{X\sim\mu}[\vIPMFn(X)] - \mathbb{E}_{Y\sim\nu}[\vIPMFn(Y)]| < \varepsilon\), where \(\mathbb{E}\) denotes the expectation, holds uniformly over the class of statistics \(\vIPMFn\in\vIPMClass\).
If, moreover, we have a theoretical guarantee that \(|\int \vIPMFn d\mu - \sum_{i=1}^N \vIPMFn(X_i)| < \varepsilon'\) for \(\{X_i\}_{i=1}^N \overset{\text{i.i.d.}}{\sim}\mu\), where \(\text{i.i.d.}\) stands for \emph{independently and identically distributed}, with high probability for some \(\vIPMFn\in\vIPMClass\),
then we can combine these inequalities to provide an upper bound on \(|\sum_{i=1}^N \vIPMFn(X_i) - \mathbb{E}_{Y\sim\nu}[\vIPMFn(Y)]|\), i.e., the error of \emph{Monte Carlo} approximation based on the samples generated by the model \(\mu\) that approximated the target distribution \(\nu\).

Depending on the IPM, we have different families of statistics, \(\vIPMClass\), over which we can obtain such theoretical guarantees.
In the case of the Dudley metric corresponding to the weak convergence topology, we can obtain such an approximation guarantee over the class of (uniformly) bounded and Lipschitz-continuous (and hence measurable) functions \(\vIPMFn\) with a uniformly bounded Lipschitz constant.
In the case of the total variation, the guarantee is stronger, and we can obtain the guarantee over the class of (uniformly) bounded measurable functions \(\vIPMFn\).
\end{remark}

We have the following elementary relations that can be easily shown from the definitions.
\begin{proposition}\label{prop: TV-Dudley-MMD}
We have the following inequalities:
\begin{align*}
\IPM{\cIPMDudley}{\mu, \nu} &\leq \IPM{\cIPMTV}{\mu, \nu}, \\
\IPM{\cIPMMMD}{\mu, \nu} &\leq
\left(\sup_{x \in \vIPMBaseSp} \vRKHSkern(x, x)\right)^{\frac{1}{2}} \IPM{\cIPMTV}{\mu, \nu}.
\end{align*}
\end{proposition}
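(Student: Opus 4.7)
The plan is to prove each inequality by establishing a suitable inclusion or rescaling relation between the function classes $\cIPMDudley$, $\cIPMMMD$, and $\cIPMTV$, and then exploiting the obvious monotonicity of the IPM with respect to the function class over which the supremum is taken.

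For the first inequality, I would observe that any $f \in \cIPMDudley$ satisfies
\[
\inftynorm{f} \;\leq\; \inftynorm{f} + \Lipnorm{f} \;=\; \bddLipnorm{f} \;\leq\; 1,
\]
so $f \in \cIPMTV$. This gives $\cIPMDudley \subseteq \cIPMTV$, and hence taking the supremum of $\bigl|\int f\,d\mu - \int f\,d\nu\bigr|$ over the smaller class $\cIPMDudley$ cannot exceed taking it over $\cIPMTV$.

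For the second inequality, let $C := \sup_{x \in \vIPMBaseSp} \vRKHSkern(x,x)$. The case $C = 0$ is trivial: by the Cauchy--Schwarz-type inequality $\vRKHSkern(x,y)^2 \leq \vRKHSkern(x,x)\vRKHSkern(y,y)$ we get $\vRKHSkern \equiv 0$, so $\vRKHS = \{0\}$ and both sides of the claimed inequality vanish. Assume $C > 0$. For any $f \in \cIPMMMD$, the reproducing property and Cauchy--Schwarz in $\vRKHS$ give
\[
|f(x)| \;=\; |\langle f,\,\vRKHSkern(\cdot, x)\rangle_\vRKHS| \;\leq\; \RKHSnorm{f}\cdot \sqrt{\vRKHSkern(x,x)} \;\leq\; \sqrt{C}.
\]
Hence $f/\sqrt{C}$ satisfies $\inftynorm{f/\sqrt{C}} \leq 1$, i.e.\ $f/\sqrt{C} \in \cIPMTV$. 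Pulling out the factor $\sqrt{C}$ and taking the supremum over $f \in \cIPMMMD$ yields
\[
\IPM{\cIPMMMD}{\mu,\nu} \;\leq\; \sqrt{C}\,\IPM{\cIPMTV}{\mu,\nu},
\]
as desired.

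There is no substantial obstacle here; the proof is a direct application of the definitions together with a single use of the reproducing property. The only minor care needed is handling the degenerate $C=0$ case and writing the rescaling $f \mapsto f/\sqrt{C}$ cleanly, after which the monotonicity of $\IPM{\cdot}{\mu,\nu}$ in its indexing class closes both statements.
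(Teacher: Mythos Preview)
Your proof is correct and follows essentially the same approach as the paper: the first inequality via the inclusion $\cIPMDudley \subset \cIPMTV$, and the second via the reproducing property plus Cauchy--Schwarz to bound $\inftynorm{f}$ by $\sqrt{C}\,\RKHSnorm{f}$. Your treatment is slightly more detailed (handling the degenerate $C=0$ case and spelling out the rescaling), but the argument is the same.
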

\begin{proof}
The first inequality follows from \(\cIPMDudley \subset \cIPMTV\), which holds by definition.
The second inequality follows from the Cauchy-Schwarz inequality:
\[
\inftynorm{f} = \sup_{x \in \vIPMBaseSp} |f(x)|
= \sup_{x \in \vIPMBaseSp} |\langle f, \vRKHSkern(x, \cdot) \rangle_{\vRKHS}|
\leq \RKHSnorm{f} \left(\sup_{x \in \vIPMBaseSp} \vRKHSkern(x, x)\right)^{\frac{1}{2}},
\]
where \(\langle \cdot, \cdot \rangle_{\vRKHS}\) denotes the inner product of \(\vRKHS\).
\end{proof}

We also have the following relation between the total variation distance and the \(1\)-Wasserstein distance for \(\vIPMBaseSp = \ReD\).
\begin{lemma}\label{lem: TV-W1}
Let \(\mu, \nu \in \mathcal{P}\), and let \(K\) be a compact non-null set of \(\nu\).
If \(\IPM{\cIPMTV}{\mu, \nu} < \nu(K)\), then
\begin{align}\label{eq: lem: TV-W1}
\IPM{\cIPMOneWasserstein}{\truncate{K}{\mu},\truncate{K}{\nu}}
\leq \frac{4 \cdot \diam{K}}{\nu(K)} \cdot \frac{\IPM{\cIPMTV}{\mu, \nu}}{\nu(K) - \IPM{\cIPMTV}{\mu, \nu}},
\end{align}
where \(\diam{K}\) denotes the diameter of \(K\).
\end{lemma}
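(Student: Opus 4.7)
The approach is to split the argument into three elementary stages. First, I would reduce the $1$-Wasserstein distance on the truncated measures to their TV distance by exploiting the compactness of $K$. Second, I would bound the TV distance between $\truncate{K}{\mu}$ and $\truncate{K}{\nu}$ by $\IPM{\cIPMTV}{\mu,\nu}/\mu(K)$ using an elementary triangle-inequality expansion. Third, I would bound $\mu(K)$ from below in terms of $\nu(K)$ and $\IPM{\cIPMTV}{\mu,\nu}$, using the fact that $|\mu(K)-\nu(K)| \le \IPM{\cIPMTV}{\mu,\nu}/2$ (which follows from $\IPM{\cIPMTV}{\mu,\nu} = 2\sup_A|\mu(A)-\nu(A)|$ via the Hahn decomposition).

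For the first step, since $\truncate{K}{\mu}$ and $\truncate{K}{\nu}$ are both supported on the compact set $K$, for any $1$-Lipschitz $f$ and any fixed $x_0 \in K$ the centered function $f - f(x_0)$ has $L^\infty$-norm on $K$ at most $\diam{K}$ while preserving the integral difference; taking the supremum over $1$-Lipschitz $f$ yields $\IPM{\cIPMOneWasserstein}{\truncate{K}{\mu},\truncate{K}{\nu}} \le \diam{K}\cdot \IPM{\cIPMTV}{\truncate{K}{\mu},\truncate{K}{\nu}}$. For the second step, I would expand, for any measurable $A$, the difference $\truncate{K}{\mu}(A)-\truncate{K}{\nu}(A)$ as $(\mu(A\cap K)-\nu(A\cap K))/\mu(K) + \nu(A\cap K)\bigl(1/\mu(K)-1/\nu(K)\bigr)$, and bound both summands in absolute value by $\IPM{\cIPMTV}{\mu,\nu}/(2\mu(K))$, giving $\IPM{\cIPMTV}{\truncate{K}{\mu},\truncate{K}{\nu}} \le 2\IPM{\cIPMTV}{\mu,\nu}/\mu(K)$.

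For the third step, the bound $\mu(K) \ge \nu(K) - \IPM{\cIPMTV}{\mu,\nu}/2$ combined with the algebraic inequality $\nu(K) - \IPM{\cIPMTV}{\mu,\nu}/2 \ge \nu(K)(\nu(K) - \IPM{\cIPMTV}{\mu,\nu})/2$ (which, after clearing denominators, reduces to $\nu(K)(2-\nu(K)) \ge \IPM{\cIPMTV}{\mu,\nu}(1-\nu(K))$ and is immediate because $\nu(K)\le 1$ and $\IPM{\cIPMTV}{\mu,\nu}<\nu(K)$) yields $1/\mu(K) \le 2/[\nu(K)(\nu(K)-\IPM{\cIPMTV}{\mu,\nu})]$. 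Chaining the three bounds then produces the stated inequality. No step is conceptually deep; the only mildly delicate point is matching the precise denominator $\nu(K)(\nu(K)-\IPM{\cIPMTV}{\mu,\nu})$ and the constant $4$, which is exactly what the last algebraic manipulation achieves.
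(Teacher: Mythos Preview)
Your proposal is correct and follows essentially the same three-stage skeleton as the paper: reduce $W_1$ on $K$ to TV via the diameter, bound the TV of the truncated measures by a triangle-inequality expansion, and bound $\mu(K)$ from below. The only difference is bookkeeping with the factor $2$ in the TV metric. You use the sharp estimate $|\mu(A)-\nu(A)|\le \tfrac12\IPM{\cIPMTV}{\mu,\nu}$, which gives the tighter intermediate bound $\IPM{\cIPMTV}{\truncate{K}{\mu},\truncate{K}{\nu}}\le 2\IPM{\cIPMTV}{\mu,\nu}/\mu(K)$ together with $\mu(K)\ge \nu(K)-\tfrac12\IPM{\cIPMTV}{\mu,\nu}$, and then you need the extra algebraic inequality $\nu(K)-\tfrac12\IPM{\cIPMTV}{\mu,\nu}\ge \tfrac12\nu(K)\bigl(\nu(K)-\IPM{\cIPMTV}{\mu,\nu}\bigr)$ to land exactly on the stated denominator. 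The paper instead uses the wasteful but simpler bound $|\mu(A)-\nu(A)|\le \IPM{\cIPMTV}{\mu,\nu}$ throughout; the second summand then carries an explicit $1/\nu(K)$, the numerator becomes $\nu(K)+\nu(A)\le 2$, and plugging in $\mu(K)\ge \nu(K)-\IPM{\cIPMTV}{\mu,\nu}$ yields the constant $4$ and the denominator $\nu(K)\bigl(\nu(K)-\IPM{\cIPMTV}{\mu,\nu}\bigr)$ directly, with no auxiliary inequality needed.
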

We defer the proof of Lemma~\ref{lem: TV-W1} to the bottom part of this subsection, and we first display the following proposition to collect Corollary~\ref{prop: TV-Dudley-MMD} and Lemma~\ref{lem: TV-W1}.
\begin{proposition}\label{prop: TV-univ to Dudley, MMD, and W1}
Let \(\mathcal{Q} \subset \mathcal{P}\) and $\mu \in \mathcal{P}$.
Assume that $\mathcal{M}$ is a $(\mathcal{P}^{\rm TV}, \mu)$-distributional universal approximator for $\mathcal{Q}$.
Then, we have the following.
\begin{itemize}
    \item[(a)] $\mathcal{M}$ is a $(\mathcal{P}^{\cTextDudley}, \mu)$-distributional universal approximator for $\mathcal{Q}$,
    \item[(b)] If \(\sup_{x \in \ReD} \vRKHSkern(x, x) < \infty\), then $\mathcal{M}$ is a $(\mathcal{P}^{\cTextMMD}, \mu)$-distributional universal approximator for $\mathcal{Q}$,
    \item[(c)] $\mathcal{M}$ is a $(\mathcal{P}^{\cTextOneWasserstein}, \mu)$-compact-distributional universal approximator for $\mathcal{Q}$.
\end{itemize}
\end{proposition}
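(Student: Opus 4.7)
The plan is to reduce each of (a), (b), (c) to a single norm-comparison inequality between the relevant IPM and the total variation, combined with the sequence furnished by the hypothesis. More precisely, for any fixed $\nu \in \mathcal{Q}$, the $(\mathcal{P}^{\rm TV}, \mu)$-distributional universality provides a sequence $\{g_i\}_{i\geq 1} \subset \mathcal{M}$ with $\varepsilon_i := \IPM{\cIPMTV}{(g_i)_*\mu, \nu} \to 0$. I will reuse this same sequence for all three parts.

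For parts (a) and (b) I will simply invoke Proposition~\ref{prop: TV-Dudley-MMD}. Its first inequality gives $\IPM{\cIPMDudley}{(g_i)_*\mu, \nu} \leq \varepsilon_i \to 0$, so $(g_i)_*\mu \to \nu$ in the Dudley topology, which establishes (a). Under the kernel boundedness $C := \sup_{x} \vRKHSkern(x,x) < \infty$ assumed in (b), the second inequality gives $\IPM{\cIPMMMD}{(g_i)_*\mu, \nu} \leq C^{1/2}\varepsilon_i \to 0$, establishing (b).

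For part (c) I will fix an arbitrary non-null compact continuity set $K$ of $\nu$ (so $\nu(K) > 0$) and pass to truncations along the same sequence. Because $\varepsilon_i \to 0$, for all sufficiently large $i$ we have $\varepsilon_i < \nu(K)$, which simultaneously verifies the hypothesis of Lemma~\ref{lem: TV-W1} and, via the elementary bound $|((g_i)_*\mu)(K) - \nu(K)| \leq \varepsilon_i$ (take $f = \mathbf{1}_K$ in the definition of the TV IPM), ensures $((g_i)_*\mu)(K) > 0$, so that the truncation $((g_i)_*\mu)|_K$ is a well-defined probability measure. Applying inequality \eqref{eq: lem: TV-W1} of Lemma~\ref{lem: TV-W1} with $\mu \leftarrow (g_i)_*\mu$ then yields $\IPM{\cIPMOneWasserstein}{((g_i)_*\mu)|_K, \nu|_K} \to 0$, since the numerator vanishes while the denominator $\nu(K) - \varepsilon_i$ stays bounded away from zero. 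As $\nu$ and $K$ were arbitrary, this is precisely $(\mathcal{P}^{\cTextOneWasserstein}, \mu)$-compact-distributional universality.

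I do not anticipate a substantial obstacle: the result is essentially a packaging of the two cited ingredients, and the only slightly fiddly step is the well-definedness check for the truncations in (c), which is immediate from the standard TV estimate $|\mu_1(A) - \mu_2(A)| \leq \IPM{\cIPMTV}{\mu_1, \mu_2}$ applied to $A = K$.
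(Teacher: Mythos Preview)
Your proposal is correct and takes essentially the same route as the paper: parts (a) and (b) by direct invocation of Proposition~\ref{prop: TV-Dudley-MMD}, and part (c) by choosing approximants with small enough TV error that Lemma~\ref{lem: TV-W1} applies and its right-hand side is small. The only cosmetic difference is that you phrase things via a convergent sequence $(g_i)_*\mu$ while the paper uses an $\varepsilon$-formulation, which are equivalent here since all the topologies involved are metric.
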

The condition part of Proposition~\ref{prop: TV-univ to Dudley, MMD, and W1} is covered by the conclusion part of Theorem~\ref{thm: univ aprox for 0inf11 induces distribution TV approximator}, where \(\mathcal{Q}\) and \(\mu\) are arbitrary \(\mathcal{Q} \subset \mathcal{P}_{\rm ab}\) and \(\mu \in \mathcal{P}_{\rm ab}\).
Therefore, we can immediately obtain the theoretical guarantee of distribution approximation using INNs with respect to these IPMs given a Sobolev universality of \(\mathcal{M}\).
\begin{proof}[Proof of Proposition~\ref{prop: TV-univ to Dudley, MMD, and W1}]
The first two immediately follow from Corollary~\ref{prop: TV-Dudley-MMD}.
The final assertion follows from Lemma~\ref{lem: TV-W1}.
To show the final assertion, one needs to show that, for any \(\nu \in \mathcal{Q}\), any non-null compact continuity set \(K \subset \ReD\) of \(\nu\), and any \(\varepsilon > 0\), there exists \(g \in \mathcal{M}\) such that \(\IPM{\cIPMOneWasserstein}{\truncate{K}{(\pushforward{g}{\mu})}, \truncate{K}{\nu}}\).
By the assumption that $\mathcal{M}$ is a $(\mathcal{P}^{\rm TV}, \mu)$-distributional universal approximator for $\mathcal{Q}$,
there exists \(g \in \mathcal{M}\) such that both \(\IPM{\cIPMTV}{\pushforward{g}{\mu}, \nu} < \nu(K)\)
and the right-hand side of Equation~\eqref{eq: lem: TV-W1} in Lemma~\ref{lem: TV-W1} is smaller than \(\varepsilon\),
so that \(\IPM{\cIPMOneWasserstein}{\truncate{K}{(\pushforward{g}{\mu})}, \truncate{K}{\nu}} < \varepsilon\).
\end{proof}
To prove Lemma~\ref{lem: TV-W1}, we use the following well-known inequality between the Wasserstein distance and the total variation distance.
\begin{fact}[{\citealp{VillaniOptimal2009}, Theorem~6.15}]\label{fact: Wasserstein bound bdd by TV}
Let \((\vIPMBaseSp, \vIPMBaseDist)\) be a separable complete metric space that is bounded with diameter \(R\), and \(\mu\) and \(\nu\) be probability measures on \(\vIPMBaseSp\).
Then, we have \(\IPM{\cIPMOneWasserstein}{\mu, \nu} \leq R \cdot \IPM{\cIPMTV}{\mu, \nu}\).
\end{fact}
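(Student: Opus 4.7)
The plan is to route the Wasserstein estimate through a total-variation estimate on the truncated probability measures \(\truncate{K}{\mu}\) and \(\truncate{K}{\nu}\), exploiting the hypothesis \(\IPM{\cIPMTV}{\mu,\nu}<\nu(K)\) to keep all normalizations strictly positive. Since both truncations are supported on \(K\), a compact (hence bounded, separable, complete) metric subspace of \(\ReD\) of diameter \(\diam{K}\), applying Fact~\ref{fact: Wasserstein bound bdd by TV} on \(K\) yields
\[
\IPM{\cIPMOneWasserstein}{\truncate{K}{\mu},\truncate{K}{\nu}}
\;\leq\; \diam{K}\cdot \IPM{\cIPMTV}{\truncate{K}{\mu},\truncate{K}{\nu}},
\]
and the problem reduces to bounding the TV distance of the normalized restrictions in terms of \(\IPM{\cIPMTV}{\mu,\nu}\) and \(\nu(K)\).

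For the TV step, I will first observe that testing the supremum defining \(\IPM{\cIPMTV}{\mu,\nu}\) against \(\Indicator{K}\) gives \(|\mu(K)-\nu(K)|\leq \IPM{\cIPMTV}{\mu,\nu}\), so the hypothesis forces \(\mu(K)\geq \nu(K)-\IPM{\cIPMTV}{\mu,\nu}>0\) and in particular \(\truncate{K}{\mu}\) is a genuine probability measure. For any bounded measurable test function \(f\) with \(\inftynorm{f}\leq 1\), I will use the common-denominator decomposition
\[
\int f\, d\truncate{K}{\mu}-\int f\, d\truncate{K}{\nu}
\;=\; \frac{1}{\mu(K)}\int_K f\, d(\mu-\nu)
\;+\; \frac{\nu(K)-\mu(K)}{\mu(K)\nu(K)}\int_K f\, d\nu,
\]
which splits the discrepancy into a ``local'' contribution from \(\mu-\nu\) on \(K\) and a ``renormalization'' contribution arising from the mismatch of the two total masses. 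Bounding the first summand by \(\IPM{\cIPMTV}{\mu,\nu}/\mu(K)\) via the definition of \(\IPM{\cIPMTV}{\cdot,\cdot}\) applied to \(f\Indicator{K}\) (whose sup-norm is at most \(1\)), and the second by the same quantity using \(|\int_K f\, d\nu|\leq \nu(K)\) together with \(|\nu(K)-\mu(K)|\leq \IPM{\cIPMTV}{\mu,\nu}\), then taking the supremum over such \(f\), yields
\[
\IPM{\cIPMTV}{\truncate{K}{\mu},\truncate{K}{\nu}}
\;\leq\; \frac{2\,\IPM{\cIPMTV}{\mu,\nu}}{\mu(K)}
\;\leq\; \frac{2\,\IPM{\cIPMTV}{\mu,\nu}}{\nu(K)-\IPM{\cIPMTV}{\mu,\nu}}.
\]

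Combining the two displays produces a Wasserstein bound of \(2\diam{K}\cdot \IPM{\cIPMTV}{\mu,\nu}/(\nu(K)-\IPM{\cIPMTV}{\mu,\nu})\), which is in fact strictly sharper than \eqref{eq: lem: TV-W1}: since \(\nu(K)\leq 1\) one has the trivial slack \(2\leq 4/\nu(K)\), and absorbing this slack yields exactly the claimed inequality. The argument is otherwise elementary measure-theoretic bookkeeping once Fact~\ref{fact: Wasserstein bound bdd by TV} is available, and the only delicate point is the clean tracking of the two normalizations \(\mu(K)\) and \(\nu(K)\) through the decomposition, so that the denominator \(\nu(K)-\IPM{\cIPMTV}{\mu,\nu}\) emerges without spurious factors.
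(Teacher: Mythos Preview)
Your proposal does not address the stated Fact~\ref{fact: Wasserstein bound bdd by TV} at all: you \emph{use} that fact as a black box and instead give a proof of Lemma~\ref{lem: TV-W1}. The paper does not prove the Fact --- it is quoted from Villani without argument --- so there is nothing to compare against for that statement.

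If your intended target was Lemma~\ref{lem: TV-W1}, then your argument is correct and follows essentially the same route as the paper's own proof: apply Fact~\ref{fact: Wasserstein bound bdd by TV} on the compact space \(K\) to reduce the Wasserstein distance of the truncations to their TV distance, then control the latter by a two-term splitting into a ``local'' \(\mu-\nu\) contribution and a ``renormalization'' contribution from the mass mismatch, finishing with \(\mu(K)\geq \nu(K)-\IPM{\cIPMTV}{\mu,\nu}\) in the denominator. The only cosmetic differences are that the paper phrases the TV estimate via measurable sets rather than bounded test functions, and that your intermediate constant \(2\) is sharper than the paper's \(4/\nu(K)\); as you note, the slack \(2\leq 4/\nu(K)\) (valid since \(\nu(K)\leq 1\)) recovers the stated inequality.
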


Lemma~\ref{lem: TV-W1} is an immediate corollary of this fact.
Note that 
\[\IPM{\cIPMTV}{\mu, \nu} = 2 \sup_{A} |\mu(A) - \nu(A)|\] holds, where \(\sup_A\) denotes the supremum over all measurable subsets of the underlying space.
\begin{proof}[Proof of Lemma~\ref{lem: TV-W1}]
Since \((K, \|\cdot\|)\) is a separable complete metric space, we have, by applying Fact~\ref{fact: Wasserstein bound bdd by TV} with \(\truncate{K}{\mu}\) and \(\truncate{K}{\nu}\),
\begin{align*}
\IPM{\cIPMOneWasserstein}{\truncate{K}{\mu},\truncate{K}{\nu}}
&= \sup_{f \in \cIPMOneWasserstein} \left|\int_{\ReD} f d(\truncate{K}{\mu}) - \int_{\ReD} f d(\truncate{K}{\nu})\right| \\
&= \sup_{f \in \Restrict{\cIPMOneWasserstein}{K}} \left|\int_K f d(\truncate{K}{\mu}) - \int_K f d(\truncate{K}{\nu})\right| \\
&\leq \diam{K} \cdot 2 \cdot \sup_{A'} |(\truncate{K}{\mu})(A') - (\truncate{K}{\nu})(A')| =: \text{(RHS)},
\end{align*}
where \(\sup_{A'}\) denotes the supremum over all measurable subsets of \(K\), and \(\Restrict{\cIPMOneWasserstein}{K} := \{\Restrict{f}{K}: f \in \cIPMOneWasserstein\}\).
Now, since we have \(\nu(K) - \mu(K) \leq |\mu(K) - \nu(K)| \leq \IPM{\cIPMTV}{\mu, \nu}\), we obtain \(\mu(K) \geq \nu(K) - \IPM{\cIPMTV}{\mu, \nu} > 0\).
Thus, \(\truncate{K}{\mu}(\cdot) = \mu(\cdot \cap K)/\mu(K)\), and hence the right-hand side (RHS) is further bounded as
\begin{align*}
\text{(RHS)}
&= 2 \cdot \diam{K}\sup_{A} |\mu(A \cap K)/\mu(K) - \nu(A \cap K)/\nu(K)| \\
&\leq 2 \cdot \diam{K}\sup_{A} |\mu(A)/\mu(K) - \nu(A)/\nu(K)|,
\end{align*}
where \(\sup_{A}\) denotes the supremum over all measurable subsets of \(\ReD\), and the inequality holds since \(\sup_{A}\) runs through all the measurable subsets of the form \(A \cap K\) as well.
Now,
\begin{align*}
\left|\frac{\mu(A)}{\mu(K)} - \frac{\nu(A)}{\nu(K)}\right|
&\leq \left|\frac{\mu(A)}{\mu(K)} - \frac{\nu(A)}{\mu(K)}\right| + \left|\frac{\nu(A)}{\mu(K)} - \frac{\nu(A)}{\nu(K)}\right| \\
&= \frac{|\mu(A) - \nu(A)|}{\mu(K)} + \left|\nu(K) - \mu(K)\right| \frac{\nu(A)}{\mu(K)\nu(K)} \\
&\leq \frac{\nu(K) + \nu(A)}{\mu(K)\nu(K)}\IPM{\cIPMTV}{\mu, \nu}.
\end{align*}
Therefore, we have
\[
\IPM{\cIPMOneWasserstein}{\truncate{K}{\pushforward{g}{\mu}},\nu}
\leq \frac{4 \cdot \diam{K}}{\nu(K)} \frac{\IPM{\cIPMTV}{\pushforward{g}{\mu}, \nu}}{\nu(K) - \IPM{\cIPMTV}{\pushforward{g}{\mu}, \nu}},
\]
where we used \(\mu(K) \geq \nu(K) - \IPM{\cIPMTV}{\mu, \nu} > 0\) and \(\nu(K) + \nu(A) \leq 2\).
\end{proof}
\section{Proof of Theorem~\ref{thm:body:diffeo-universal-equivalences}: Equivalence of universal properties}
\label{sec:appendix:universality-proof}
In this section, we provide the proof details of Theorem~\ref{theorem:main:1} in the main text.
First, we give the overall proof of Theorem~\ref{theorem:main:1} in Section~\ref{sec:universality-proof-overall}.
In later sections, we give missing proofs for lemmas used in Section~\ref{sec:universality-proof-overall}.
Specifically, Section~\ref{sec:appendix:reduction to cpt supp} explains the reduction from \(\mathcal{D}^{\max\{1,r\}}\) to \(\DcRDCmd{\infty}\), Section~\ref{sec:appendix:Dc2 to flowends} explains the reduction from \(\DcRDCmd{\infty}\) to \(\FlowEnds{\infty}\), and
Section~\ref{sec:appendix:Dc2 to Sinfty} explains the reduction from \(\FlowEnds{\infty}\) to \(\CinftyOneDimTriangular\) and permutations of variables.

\subsection{Proof of Theorem~\ref{theorem:main:1}}\label{sec:universality-proof-overall}
\begin{proof}[Proof of Theorem~\ref{theorem:main:1} and \ref{thm: sup universality}]
First, we prove the equivalence of statements \ref{main thm: Dtwo} and \ref{main thm: Xitwo}.
In light of Lemmas~\ref{red Cr to Cinf}, \ref{red to comp. supp. diff}, and \ref{lem: diffc2 is generated by flow endpoints}, for any $f\in\DrV{\max\{1,r\}}{U}$ and a compact subset $K\subset U_f$,
there exist $W \in \FLin$ and $g_1,\dots,g_m\in \FlowEnds{\max\{1,r\}}$ such that $f(x)=W\circ g_1 \circ\cdots\circ g_m(x)$ for all $x\in K$.
Since $W$ and $g_i$'s satisfy the condition to apply Corollary \ref{cor: compatibility of approximation, higher derivatives}, are linearly increasing (see Remark~\ref{rem:linealy_increasing}), 
we obtain the equivalence of statements \ref{main thm: Dtwo} and \ref{main thm: Xitwo}. 

Next, we prove the equivalence of statements \ref{main thm: Dtwo}, \ref{main thm: Tinfty}, and \ref{main thm: Scinfty}. 
Since we have $\CinftyOneDimTriangular\subset \Triangular\subset \DrV{\max\{1,r\}}{\R^d}$, 
it is sufficient to prove that the $L^p$-universal approximation property for $\CinftyOneDimTriangular$ 
implies that for $\DrV{\max\{1,r\}}{U}$ for any open subset $U\subset \R^d$ which is $C^{\max\{1,r\}}$ diffeomorphic to $\R^d$. 
The strategy is similar to the flow endpoint case in the previous paragraph.
Using Theorem~\ref{thm: singles and permutations} on top of Lemma~\ref{red to comp. supp. diff} and Lemma~\ref{lem: diffc2 is generated by flow endpoints}, for any $f\in\DrV{\max\{1,r\}}{U}$ and a compact subset $K\subset U_f$, there exist $W_1,\dots, W_k \in \FLin$ and $\tau_1,\dots,\tau_k\in \CinftyOneDimTriangular$ such that $f(x)=W_1\circ\tau_1\circ\cdots\circ W_k\circ\tau_k(x)$ for all $x\in K$.
Again, we use Corollary~\ref{cor: compatibility of approximation, higher derivatives} to prove the claim.
\end{proof}

\subsection{Step 1: From \texorpdfstring{$\mathcal{D}^r$}{Dr} to \texorpdfstring{$\DcRDCmd{\infty}$}{Xi infty}}
\label{sec:appendix:reduction to cpt supp}
In this section, we describe how the approximation of $\mathcal{D}^r$ is reduced to that of $\DcRDCmd{\infty}$ when we are only concerned with its approximation on a compact set.
We first remark that we may assume any target map is $C^\infty$ mapping:
\begin{lemma} \label{red Cr to Cinf}
For any open subset $U \subset \ReD$, $\mathcal{D}_U^\infty$ is a $W^{r,\infty}$-universal approximator for $\mathcal{D}_U^r$.
\end{lemma}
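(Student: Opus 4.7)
The plan is to combine standard mollification with a global patching argument. Given $f \in \mathcal{D}^r_U$, a compact set $K \subset U$, and $\varepsilon > 0$, I would aim to produce $g \in \mathcal{D}^\infty_U$ with $\|f - g\|_{K, r, \infty} < \varepsilon$. The construction rests on three ingredients: local smoothing of $f$ by convolution, preservation of the diffeomorphism property under $C^1$-small perturbations, and a globalization step ensuring the resulting $g$ is a $C^\infty$-diffeomorphism on all of $U$ (not merely near $K$).

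First I would choose open neighborhoods $K \subset V_1 \subset \overline{V_1} \subset V_2 \subset \overline{V_2} \subset U$ with $\overline{V_2}$ compact, a $C^\infty$ cutoff $\chi$ compactly supported in $U$ and equal to $1$ on $\overline{V_2}$, and extend $\chi f$ by zero to a compactly supported $C^r$ map on $\ReD$. Convolving with a standard mollifier $\phi_\delta$ yields $f_\delta := (\chi f) * \phi_\delta \in C^\infty(\ReD, \ReD)$. Since the derivatives of $f$ up to order $r$ are uniformly continuous on $\overline{V_2}$, one has $\|f_\delta - f\|_{V_1, r, \infty} \to 0$ as $\delta \to 0$. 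Because $|\det Df|$ is bounded below by some $c > 0$ on $\overline{V_1}$, uniform $C^1$-convergence gives $|\det D f_\delta| \ge c/2$ on $V_1$ for small $\delta$, so $f_\delta$ is a local diffeomorphism on $V_1$; injectivity on $\overline{V_1}$ follows from $f^{-1} \circ f_\delta$ being uniformly $C^1$-close to the identity, which forces injectivity on compact convex pieces and hence (via a standard cover) on all of $\overline{V_1}$.

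The main technical step is to upgrade $f_\delta$, which is only a local construction near $K$, to a globally defined $C^\infty$-diffeomorphism on $U$. A naive cutoff blend such as $\chi f_\delta + (1-\chi) f$ is only $C^r$ off the support of $\chi$, so a more careful argument is needed. Two viable routes are: (a) exhaust $U$ by compacta $K \subset K_1 \subset K_2 \subset \cdots$ with $\bigcup_n \operatorname{int}(K_n) = U$ and iteratively smooth on each shell $K_{n+1} \setminus K_n$ with mollification radius $\delta_n \to 0$ chosen small enough both to preserve the diffeomorphism property and to leave the approximation previously achieved on $K_n$ essentially unchanged; or (b) invoke the Whitney approximation theorem to obtain a global $C^\infty$ approximation of $f$ in the strong (Whitney) $C^r$ topology, combined with the classical fact that $C^r$-diffeomorphisms form an open subset of $C^r(U, \ReD)$ in this topology for $r \ge 1$. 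I would favor route (b) for brevity, since it packages both the globalization and the preservation of invertibility into a single citation.

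The hard part will be precisely this globalization: producing a single $g$ that is simultaneously $C^\infty$ on \emph{all} of $U$ and a diffeomorphism, while remaining $C^r$-close to $f$ on $K$. Mollification alone provides only local $C^\infty$ approximations, and gluing with the original $f$ via smooth cutoffs recovers only $C^r$ regularity everywhere $\chi \ne 1$, so an inductive shell-by-shell construction or an invocation of Whitney's theorem is essentially unavoidable. The other ingredients (uniform convergence of mollifications, lower bound on $|\det Df_\delta|$, local injectivity from $C^1$-closeness to $f$) are routine.
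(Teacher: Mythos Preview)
Your proposal is correct, and your route (b) is exactly what the paper does: the paper's entire proof is a one-line citation to Theorem~2.7, p.~50 of Hirsch's \emph{Differential Topology}, which packages the Whitney $C^\infty$-approximation in the strong $C^r$ topology together with the openness of embeddings/diffeomorphisms. Your mollification discussion is the content behind that theorem, but since you yourself identify route (b) as preferable, you have arrived at the same argument---just with more of the underlying mechanism spelled out.
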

\begin{proof}
It follows from Theorem 2.7, p.50 in \cite{HirschDifferential1976}.
\end{proof}
Thanks to this lemma, we can prove Theorem~\ref{thm:sobolev-universality-equivalence} without requiring the condition $r \neq d + 1$ that was required in the statement of Fact~\ref{thm: simplicity}.

The following lemma shows that we may assume the target map is compactly-supported.
\begin{lemma}
\label{lem1}
\label{red to comp. supp. diff}
Assume $r \ge 2$.
Let $U$ be an open set of $\R^d$, $K\subset U$ a compact set, and $f\in \DrV{r}{U}$.
Then, there exist $h \in \DcRDr$ and an affine transform $W \in \FLin$ such that \[\Restrict{W\circ h}{K}=\Restrict{f}{K}.\]
\end{lemma}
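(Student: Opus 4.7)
The plan is to reformulate the lemma as an extension problem and apply the $C^r$-isotopy extension theorem. Observe that the lemma is equivalent to extending $f|_K$ to a $C^r$-diffeomorphism $F:\R^d\to\R^d$ that agrees with some affine map $W$ outside a compact set: then $h := W^{-1}\circ F$ lies in $\DcRDr$, and $W\circ h = F = f$ on $K$.

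First I would shrink the working domain: choose an open set $V$ with $K\subset V\subset\overline{V}\subset U$ and $\overline{V}$ compact, arranging $\overline{V}$ as a finite disjoint union of closed balls $\overline{B}_1,\dots,\overline{B}_N$ covering $K$ inside $U$ (obtainable from a finite subcover of $K$ by small open balls in $U$, shrunk slightly to be disjoint, and possibly regrouped into components of the finite union). Next, fix a base point $x_0\in K$ and set $W(x):=f(x_0)+Df(x_0)(x-x_0)$. The map $g := W^{-1}\circ f$ is then a $C^r$-diffeomorphism of $U$ onto its image with $g(x_0)=x_0$ and $Dg(x_0)=I$, hence orientation-preserving on the component of $U$ containing $x_0$.

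The crux is to construct a $C^r$-isotopy $\{F_t\}_{t\in[0,1]}$ of $C^r$-embeddings $F_t:\overline{V}\hookrightarrow\R^d$ with $F_0=\mathrm{incl}$ and $F_1=g|_{\overline{V}}$. Each $\overline{B}_i$ is a closed ball on which $g|_{\overline{B}_i}$ is an orientation-preserving $C^r$-embedding into $\R^d$, and classically the space of such embeddings is path-connected, so each component admits a smooth isotopy to its inclusion. I would combine these componentwise isotopies into an isotopy of $\overline{V}$ as a whole by using auxiliary ambient translations in $\R^d$ to keep the image balls mutually disjoint throughout. Applying the $C^r$-isotopy extension theorem (see, e.g., \cite{HirschDifferential1976}, Chap.~8) to the resulting isotopy produces a compactly supported $C^r$-ambient isotopy $\{H_t\}$ of $\R^d$ with $H_1|_{\overline{V}}=g|_{\overline{V}}$; setting $h := H_1\in\DcRDr$ gives $W\circ h = f$ on $\overline{V}\supset K$, as desired.

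The main obstacle is coordinating the componentwise isotopies to preserve disjointness of the image balls, particularly when $f$ interchanges them spatially; this can be handled by first translating the target images to widely separated locations, performing the isotopies there, and translating back, exploiting that $\R^d\setminus(\text{finite set})$ is connected. The hypothesis $r\geq 2$ enters both to invoke the smooth isotopy extension theorem cleanly in the $C^r$-category and to give quantitative Taylor control at $x_0$ for the affine straightening. A more naive attempt to interpolate $W^{-1}\circ f$ with the identity via a bump function would be simpler in structure but generally fails to be injective, since a single affine $W$ cannot make $W^{-1}\circ f$ close to identity on the entire transition region outside $K$; the isotopy-extension approach supplies precisely the nonlinear flexibility this direct linearization lacks.
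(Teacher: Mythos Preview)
Your overall strategy—reduce to $g=W^{-1}\circ f$ with $g(x_0)=x_0$ and $Dg(x_0)=I$, then extend via an isotopy—is sound and close in spirit to the paper's argument, which builds the explicit isotopy $F(x,t)=g(tx)/t$ on a large ball and integrates the resulting time-dependent vector field (this is the auxiliary Lemma~\ref{extension lemma for function on ball}). However, your first step contains a genuine error: a compact set $K$ cannot in general be covered by finitely many \emph{pairwise disjoint} closed balls contained in $U$. Take $K$ to be a circle in $\R^2$; any finite disjoint family of balls has disconnected union and therefore misses part of $K$. Your parenthetical fix (shrink the balls of a cover to become disjoint, or ``regroup into components'') does not work—shrinking destroys the covering property, and components of a union of overlapping balls need not be balls (they may fail even to be simply connected). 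Since your subsequent argument (path-connectedness of embeddings of a \emph{ball}, and the translation trick to keep images disjoint) rests on this decomposition, the proof as written does not go through.

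The paper avoids this by using the standing hypothesis, implicit in the definition of $\mathcal{D}^r$ and invoked explicitly in the proof, that $U$ is $C^r$-diffeomorphic to $\R^d$. This lets one first extend $f|_K$ to a global $C^r$-diffeomorphism $F:\R^d\to\R^d$ (Corollary~\ref{cor: extension}), after which one works on a \emph{single} large ball containing $K$, normalizes by an affine $W$ so that $W^{-1}\circ F$ fixes $0$ with derivative $I$, and applies Lemma~\ref{extension lemma for function on ball}. Your argument is easily repaired along the same lines: use a chart $U\cong\R^d$ to take $\overline{V}$ as the preimage of a single closed ball, eliminating all multi-component bookkeeping. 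Note too that without invoking $U\cong\R^d$ your orientation argument only controls the component of $x_0$; on a disconnected $U$ where $f$ reverses orientation on one component and preserves it on another, no single affine $W$ makes $W^{-1}\circ f$ globally orientation-preserving, so the statement as you are reading it would actually be false.
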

\begin{proof}
We denote the injections of $U$ and $f(U)$ into $\mathbb{R}^d$ by $\iota_1\colon  U\hookrightarrow \mathbb{R}^d$ and $\iota_2\colon  f(U) \hookrightarrow \mathbb{R}^d$, respectively.
Since $U$ is $C^r$-diffeomorphic to $\mathbb{R}^d$ and $f$ is $C^r$-diffeomorphic, $f(U)$ is also $C^r$-diffeomorphic to $\mathbb{R}^d$.
By applying Corollary~\ref{cor: extension} below 
to $\iota_1 \circ \Restrict{f^{-1}}{f(U)}\colon  f(U) \to \mathbb{R}^d$ and the injection $\iota_2$, we can obtain $C^r$-diffeomorphisms $F_1\colon f(U)\rightarrow \mathbb{R}^d$ and $F_2\colon  f(U) \rightarrow \mathbb{R}^d$ such that $\Restrict{F_1}{f(K)} = \Restrict{f^{-1}}{f(K)}$ and $\Restrict{F_2}{f(K)} = \Identity_{f(K)}$, where $\Identity_{f(K)}$ denotes the identity map on ${f(K)}$.
Let $F:=F_2\circ F_1^{-1}\colon  \mathbb{R}^d \to \mathbb{R}^d$.
By definition, we have $\Restrict{F}{K} = \Restrict{f}{K}$.

Take a sufficiently large open ball $B$ centered at 0 such that $K\subset \frac{1}{2}B$. 
Let $W\in\FLin{}$ such that $W^{-1}(x)=\Jac{F}(0)^{-1}(x-F(0))$. Then by Lemma \ref{extension lemma for function on ball} below, 
we conclude that there exists a compactly supported diffeomorphism $h\colon \mathbb{R}^d\rightarrow\mathbb{R}^d$ such that $\Restrict{W\circ h }{K}=\Restrict{F}{K} = \Restrict{f}{K}$.
\end{proof}

Here, we remark that Lemma~\ref{extension lemma for function on ball} below is a modified version of Lemma~D.1 in \citet{BernardExpressing2018}, with a correction to make it explicit that the extended diffeomorphism is compactly supported. Their Lemma~D.1 does not explicitly state that it is compactly supported, but by Theorem~1.4 in Section~8 of \citet{HirschDifferential1976}, it can be shown that the diffeomorphism is compactly supported. We provide the proof as follows:
\begin{lemma}
\label{extension lemma for function on ball}
Let $r\geq 2$ be an integer, $R$ a positive scalar, and $B_R\subset\ReD$ an open ball of radius $R$ with origin $0$, and let $f:B_R\rightarrow f(B_R)\subset \ReD$ be a $C^r$-diffeomorphism onto its image such that $f(0)=0$ and $\Jac{f}(0)=I$.
Let $\varepsilon\in (0,R/2)$. 
Then there exists $h\in \DcRDr$ such that $f(x)=h(x)$ for any $x\in B_{R-\varepsilon}$.
\end{lemma}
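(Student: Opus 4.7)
The strategy is isotopy extension. I would build an isotopy on $B_R$ from the identity to $f$, realize it as the flow of a time-dependent vector field, multiply that vector field by a smooth cutoff so it has compact support in $\ReD$, and take the time-$1$ map of the resulting compactly supported flow.

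First, define the scaling isotopy $F_t : B_R \to \ReD$ by $F_t(x) := t^{-1} f(tx)$ for $t \in (0,1]$ and $F_0(x) := x$. Writing $f = \mathrm{id} + g$ with $g(0) = 0$ and $\Jac{g}(0) = 0$ (using $f(0) = 0$ and $\Jac{f}(0) = I$), the identity $g(tx) = t \int_0^1 \Jac{g}(stx)\,x\,ds$ gives $F_t(x) = x + \int_0^1 \Jac{g}(stx)\,x\,ds$, so the family $(F_t)_{t\in[0,1]}$ extends continuously through $t = 0$ to a family of $C^r$-open-embeddings $B_R \hookrightarrow \ReD$ with $F_0 = \mathrm{id}$, $F_1 = f$, and each $F_t$ fixing the origin. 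The orbit $t \mapsto F_t(x)$ is the integral curve, starting at $x$, of the time-dependent vector field $V(t, y) := (\partial_t F_t)(F_t^{-1}(y))$ defined on $\{(t, y) \in [0, 1] \times \ReD : y \in F_t(B_R)\}$.

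Next, cut $V$ off. The set $A := \bigcup_{t \in [0,1]} F_t\!\left(\overline{B_{R-\varepsilon}}\right)$ is compact (continuous image of a compact set) and is contained in the open set $U := \bigcup_{t \in [0,1]} F_t(B_R) \subset \ReD$. Choose $\chi \in C_c^\infty(U; [0, 1])$ with $\chi \equiv 1$ on a neighborhood of $A$, and define $\tilde V(t, y) := \chi(y) V(t, y)$, extended by zero outside $U$. Then $\tilde V$ is a compactly supported, time-dependent vector field on $\ReD$, sufficiently regular in the spatial variable for each fixed $t$. Let $\Phi_t$ denote its flow with $\Phi_0 = \mathrm{id}_{\ReD}$. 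Because $\tilde V$ has compact spatial support, $\Phi_t$ exists on all of $\ReD$ for $t \in [0, 1]$, and each $\Phi_t$ is a compactly supported $C^r$-diffeomorphism of $\ReD$. Set $h := \Phi_1 \in \DcRDr$.

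The matching is by ODE uniqueness: for any $x_0 \in \overline{B_{R-\varepsilon}}$, the curve $t \mapsto F_t(x_0)$ stays in $A$, where $\chi \equiv 1$, so $\tilde V = V$ along it; hence $\Phi_t(x_0) = F_t(x_0)$ for all $t \in [0, 1]$, and in particular $h(x_0) = F_1(x_0) = f(x_0)$, as desired. The hard part will be the regularity bookkeeping around $t = 0$: the scaling $F_t$ contains a $t^{-1}$ factor that threatens joint $(t, x)$-regularity, and it is exactly the Taylor expansion of $f$ at the origin (permitted by $\Jac{f}(0) = I$ and $r \geq 2$) that lets $F_t$ extend through $t = 0$ and ensures $V(t, \cdot)$ has enough spatial regularity for standard ODE theorems to deliver the claimed $C^r$-regularity of $h = \Phi_1$.
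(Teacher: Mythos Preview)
Your approach is essentially the paper's: both build the scaling isotopy $F_t(x) = t^{-1} f(tx)$ (extended through $t=0$ using $\Jac f(0)=I$), pass to its velocity field, multiply by a cutoff, and take the time-$1$ map of the resulting flow. The paper also works on a slightly larger open parameter interval $(-\delta, 1+\delta)$ rather than $[0,1]$, which smooths over endpoint issues.

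There is, however, a real gap in your cutoff step. The velocity field $V(t, y) = (\partial_t F_t)(F_t^{-1}(y))$ is defined only for $y \in F_t(B_R)$, and this domain moves with $t$. Your cutoff $\chi$ is purely spatial, supported in $U = \bigcup_t F_t(B_R)$, but a point $y \in \mathrm{supp}(\chi)$ need not lie in $F_t(B_R)$ for \emph{every} $t$, so $\tilde V(t, y) := \chi(y) V(t, y)$ can fail to be defined. Concretely, $y_0 := f(x_0)$ for $x_0 \in \overline{B_{R-\varepsilon}}$ lies in your set $A$ (take $t=1$), yet nothing prevents $y_0$ from lying outside $B_R = F_0(B_R)$; then $V(0, y_0)$ is undefined while $\chi(y_0) = 1$. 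The paper avoids this by taking the cutoff $\phi(y, t)$ in space \emph{and} time, compactly supported in the open set $\{(F_t(x), t)\} \subset \ReD \times \Re$, which is exactly the domain where $V$ makes sense. With that modification your argument goes through.
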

\begin{proof}
Put $\delta:=\varepsilon/(2R-\varepsilon)$, and define $I_\delta:=(-\delta,1+\delta)$.  We define $F:B_{R-\frac{\varepsilon}{2}}\times I_\delta \rightarrow\ReD$ by
\[F(x,t):=
\begin{cases}
\frac{f(tx)}{t} & \text{ if }t\neq 0,\\
x & \text{ if } t=0.
\end{cases}
\]
Here $F$ is $C^r$, $C^1$ with respect to  $x$, $t$, respectively. 
Let 
\[U:=\left\{ (F(x,t), t) : (x,t) \in B_{R-\frac{\varepsilon}{2}} \times I_\delta\right\}\subset \ReD\times \Re\]
and let $F^\dagger:U \to B_{R-\frac{\varepsilon}{2}}$ such that $F(F^\dagger(x,t),t)=x$ for any $(x,t)\in U $.
Here, $F^\dagger$ is the first component of the inverse of the map $(x,t) \mapsto (F(x,t), t)$ from $B_{R-\frac{\varepsilon}{2}} \times I_\delta$ onto $U$.
We note that $U$ is a bounded open subset in $\ReD \times \Re$.
Fix a compactly supported $C^\infty$-function $\phi$ on $\ReD\times I_\delta$ such that for $(x,t)\in F\big(\overline{B_{R-\varepsilon}}\times [0,1] \big)\times [0,1]$, $\phi(x,t)=1$, and for $(x,t)\notin U$, $\phi(x,t)=0$.
Then we define $H:\ReD\times I_\delta\rightarrow \ReD$ by
\[H(x,t):=
\begin{cases}
\phi(x,t)\frac{\partial F}{\partial t}(F^\dagger(x,t),t) & (x,t) \in U, \\
0 & \text{otherwise}.
\end{cases}
\]
Since $F^\dagger$ is $C^1$ and for fixed $t\in I_\delta$, $\frac{\partial F}{\partial t}(\cdot ,t)$ is $C^r$, 
there exists $L>0$ such that for any $t\in I_\delta$, $\Vert H(x,t)-H(y,t)\Vert < L\| x- y\|$ with $x,y\in \ReD$. Thus the differential equation
\[\frac{dz}{dt}=H(z,t),~~z(0)=x\]
has a unique solution $\phi_x(t)$. Then $h(x):=\phi_x(1)$ is the desired extension.
\end{proof}

As a corollary, we can prove a $C^r$-version of Theorem 3.3 in \citet{Bernarddiffeomorphism2015}:
\begin{corollary}\label{cor: extension}
Let $r\geq 2$ be a positive integer and $f \in \DrV{r}{U}$.
Assume $U$ is $C^r$-diffeomorphic to $\ReD$.
Then, for any compact $K \subset U$, there exists a $C^r$-diffeomorphism $F$ from $U$ to $\ReD$ with $f(U) = \ReD$ such that
\[F|_K = f|_K.\]
\end{corollary}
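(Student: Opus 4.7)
The plan is to reduce the problem to the case \(U = \ReD\) via the assumed diffeomorphism, and then apply Lemma~\ref{extension lemma for function on ball} after an affine normalization that puts us in the hypotheses of that lemma.

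First, fix a \(C^r\)-diffeomorphism \(\phi \colon U \to \ReD\) and set \(g := f \circ \phi^{-1}\), which is a \(C^r\)-diffeomorphism of \(\ReD\) onto its image \(g(\ReD)\subset\ReD\). Let \(\tilde K := \phi(K)\); this is compact. It suffices to construct a \(C^r\)-diffeomorphism \(G\) of \(\ReD\) \emph{onto} itself with \(G|_{\tilde K} = g|_{\tilde K}\), because then \(F := G \circ \phi \colon U \to \ReD\) will be a \(C^r\)-diffeomorphism with \(F(U) = G(\ReD) = \ReD\) and \(F|_K = g \circ \phi|_K = f|_K\).

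To construct \(G\), normalize \(g\) by the affine map \(W \in \FLin\) defined by \(W(y) := Dg(0)^{-1}(y - g(0))\). By the chain rule, \(h := W \circ g\) is a \(C^r\)-diffeomorphism of \(\ReD\) onto its image with \(h(0) = 0\) and \(Dh(0) = I\). Choose \(R > 0\) and \(\varepsilon \in (0, R/2)\) large enough that \(\tilde K \subset B_{R-\varepsilon}\), where \(B_R\) is the open ball of radius \(R\) centered at the origin. Applying Lemma~\ref{extension lemma for function on ball} to the restriction \(h|_{B_R}\) yields some \(\tilde h \in \DcRDr\) with \(\tilde h(x) = h(x)\) for all \(x \in B_{R-\varepsilon}\). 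Since \(\tilde h\) is compactly supported, it is a \(C^r\)-bijection of \(\ReD\) onto itself. Setting \(G := W^{-1} \circ \tilde h\) then gives a \(C^r\)-diffeomorphism of \(\ReD\) onto itself, and for any \(x \in \tilde K \subset B_{R-\varepsilon}\),
\[
G(x) = W^{-1}(\tilde h(x)) = W^{-1}(h(x)) = W^{-1}(W(g(x))) = g(x),
\]
as required.

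I do not expect any substantive obstacle: the technical core is already packaged in Lemma~\ref{extension lemma for function on ball}, and the corollary is essentially a wrapper that uses \(\phi\) to transport the problem from \(U\) to \(\ReD\) and an affine change of coordinates in the target to achieve the normalization \(h(0) = 0\), \(Dh(0) = I\) required by that lemma. The only routine verification is the chain rule computation for \(h\), which is immediate.
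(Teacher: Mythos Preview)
Your proposal is correct and follows essentially the same approach as the paper: transport the problem to \(\ReD\) via the assumed diffeomorphism \(\phi\colon U\to\ReD\), apply an affine normalization to meet the hypotheses of Lemma~\ref{extension lemma for function on ball}, and then pull the resulting compactly supported diffeomorphism back. The paper's proof is terser (it bundles the affine normalization into the invocation of the lemma), but the structure and the key ingredient are identical.
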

\begin{proof}
Fix a $C^r$-diffeomorphism $g: U \to \ReD$.
Let $\varepsilon > 0$ and take a sufficiently large $R$ such that $g^{-1}(B_{R-\varepsilon})$ contains $K$, where $B_R$ is the open ball of radius $R$ with origin $0$.
By using Lemma \ref{extension lemma for function on ball}, there exists $h \in \DcRDr$ and $W \in {\rm Aff}$ such that $h(x) = W\circ f\circ g^{-1}(x)$ for all $x \in B_{R-\varepsilon}$.
As $h$ is surjective mapping, $F:= W^{-1}\circ h \circ g$ is the desired $C^r$-diffeomorphism from $U$ onto $\ReD$.
\end{proof}

\subsection{Step 2: From \texorpdfstring{\(\DcRDCmd{\infty}\)}{Diffcinfty} to \texorpdfstring{\(\FlowEnds{\infty}\)}{Xi infty}}
\label{sec:appendix:Dc2 to flowends}
This section explains the reduction of the universality for \(\DcRDCmd{\infty}\) to $\FlowEnds{\infty}$.
We here prove a slightly general result.
The reduction involves a structure theorem from the field of differential geometry.
The results of this section are used as a building block for the proofs in Section~\ref{sec:appendix:Dc2 to Sinfty}.

Let $r$ be a positive integer or $\infty$.
The set $\DcRDr$ constitutes a group whose group operation is the function composition.
Moreover, $\DcRDr$ is a topological group with respect to the \emph{Whitney topology} \citep[Proposition~1.7.(9)]{HallerGroups1995}.
Then there is a crucial structure theorem of $\DcRDr$ attributed to Herman, Thurston \cite{ThurstonFoliations1974}, Epstein \cite{Epsteinsimplicity1970}, and Mather \cite{MatherCommutators1974, MatherCommutators1975}:
\begin{fact}
\label{thm: simplicity}
Assume $1 \leq r \leq \infty$ and $r\neq d+1$.
Then, the group $\DcRDr$ is simple, i.e., any normal subgroup $H \subset \DcRDr$ is either $\{\Identity\}$ or $\DcRDr$.
\end{fact}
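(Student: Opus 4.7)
The plan is to assemble the classical three-ingredient argument—fragmentation, commutator displacement, and perfectness—that underlies all known proofs of simplicity for diffeomorphism groups, with the deepest analytic input supplied by the Mather--Thurston theorem. The argument is in essence Epstein's general simplicity criterion specialised to $\DcRDr$.

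The first step is the fragmentation lemma: every element of $\DcRDr$ is a finite product of elements of $\DcRDr$ whose supports lie in arbitrarily small prescribed open balls. I would obtain this by isotoping the target diffeomorphism to $\Identity$ (possible since $\ReD$ is contractible, and $\DcRDr$ is connected in the Whitney topology), subdividing the isotopy in time, and applying a partition of unity in space to the associated time-dependent vector field. This canonical decomposition is the bridge between local and global structure.

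The second step is commutator displacement. Given a normal subgroup $H\neq\{\Identity\}$, pick $g\in H$ with $g\neq\Identity$; there exists an open ball $B$ with $g(B)\cap B=\emptyset$. For any $h\in\DcRDr$ with $\supp h\subset B$, the commutator $[g,h]=(ghg^{-1})\,h^{-1}\in H$ decomposes as a product of two diffeomorphisms with disjoint supports (one in $g(B)$, one in $B$). By conjugating $g$ using the transitive action of $\DcRDr$ on balls of a given size, $H$ comes to contain a large family of diffeomorphisms supported in each small ball. The crux is then the Mather--Thurston perfectness theorem: $\DcRDr = [\DcRDr,\DcRDr]$, and the same perfectness holds for the subgroup $\mathcal{D}_B$ of compactly supported $C^r$-diffeomorphisms supported in any fixed open ball $B$, provided $r\neq d+1$. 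Combining this with the displacement step forces $H\cap\mathcal{D}_B=\mathcal{D}_B$ for every small $B$, and the fragmentation lemma of step one then yields $H=\DcRDr$.

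The main obstacle is perfectness. Its proof is genuinely analytic, proceeding via a delicate study of the Lie algebra of compactly supported $C^r$-vector fields together with a rolling-up construction that expresses an arbitrary compactly supported diffeomorphism as a product of commutators; the excluded case $r=d+1$ reflects a real gap in the available techniques at that borderline regularity and is precisely why this hypothesis appears in the statement. Steps one and two, by contrast, are essentially formal once perfectness is in hand.
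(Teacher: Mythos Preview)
The paper does not prove this statement: it is stated as a \emph{Fact} and attributed to Herman, Thurston, Epstein, and Mather, with the only additional remark being that on $\ReD$ the identity component of $\DcRDr$ is all of $\DcRDr$ (citing Haller--Teichmann), so that Mather's simplicity result for the identity component yields the statement as written. No argument beyond this reduction is given.

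Your outline, by contrast, is a sketch of the classical proof itself, and the three ingredients you name---fragmentation, the Epstein displacement/commutator mechanism, and Mather--Thurston perfectness---are exactly the standard route. The sketch is essentially sound, with one point that could be tightened: in step two, having $[g,h]\in H$ with disjoint-support factors does not by itself give $\mathcal{D}_B\subset H$; one needs the further observation (Epstein's double-commutator trick) that for any $h_1,h_2$ supported in $B$ the commutator $[h_1,h_2]$ lands in $H$, after which perfectness of $\mathcal{D}_B$ finishes the job. Your phrase ``$H$ comes to contain a large family'' papers over this step. Apart from that, your plan is correct and goes well beyond what the paper does, which is simply to invoke the result as a black box.
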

The assertion is proven in \citet{MatherCommutators1975} for the connected component containing \(\Identity\), instead of the entire set of compactly-supported \(C^r\)-diffeomorphisms when the domain space is a general manifold instead of \(\ReD\). In the special case of \(\ReD\), the connected component containing \(\Identity\) is known to be \(\DcRDr\) itself \citep[Example~1.15]{HallerGroups1995}, hence Fact~\ref{thm: simplicity} follows.
For details, see \citep[Corollary~3.5 and Example~1.15]{HallerGroups1995}.
Also, \citet{BanyagaStructure1997} is an introductory monograph that explains the simplicity of $\DcRDCmd{\infty}$.

We use Fact~\ref{thm: simplicity} to prove that a compactly supported diffeomorphism can be represented as a composition of flow endpoints in $\DcRDr$.
\begin{lemma}
\label{lem: diffc2 is generated by flow endpoints}
If $r\not =d+1$, the set of compactly supported diffeomorphisms $\DcRDr$ coincides with the set of finite compositions of the elements of $\FlowEnds{r}$.
More specifically, we have 
\[\DcRDr=\{ g_1\circ \cdots \circ g_n : n\ge1, g_1,\dots,g_n \in \FlowEnds{r} \}.\]

\begin{proof}
Put $H^r:=\{ g_1\circ \cdots \circ g_n : n\ge1, g_1,\dots,g_n \in \FlowEnds{r} \}$. 
First, we prove that $H^r$ forms a subgroup of $\DcRDr$.
By definition, for any $g, h \in H^r$, it holds that $g \circ h \in H^r$.
Also, $H^r$ is closed under inversion; to see this, it suffices to show that $\FlowEnds{r}$ is closed under inversion.
Let $g= \Phi(\cdot, 1) \in \FlowEnds{r}$. Consider the map $\phi:\mathbb{R}^d\times U\rightarrow\mathbb{R}^d$ defined by $\phi(x, t) := \Phi(\cdot, t)^{-1}(x)$.
It is easy to confirm that $\phi$ satisfies the conditions of Definition~\ref{def: flow endpoints}, hence $g^{-1} = \phi(\cdot, 1)$ is an element of $\FlowEnds{r}$. Note that $\phi$ is confirmed to be $C^r$ on $\ReD \times U$ by applying the inverse function theorem (e.g., \citep[Theorem~1 of Chapter~I, Section~5]{LangDifferential1985}) to $(t, \x) \mapsto (t, \Phi(\x, t))$.

Next, we prove that $H^r$ is normal.
To show that the subgroup generated by $\FlowEnds{r}$ is normal, it suffices to show that $\FlowEnds{r}$ is closed under conjugation.
Take any $g\in \FlowEnds{r}$ and $h\in \DcRDr$, and let $\Phi$ be a flow associated with $g$.
Then, the function $\Phi': \mathbb{R}^d\times U \to \ReD$ defined by $\Phi'(\cdot, s) := h^{-1} \circ \Phi(\cdot, s) \circ h$ is a flow associated with $h^{-1}\circ g \circ h$ satisfying the conditions in Definition~\ref{def: flow endpoints}, which implies $h^{-1}\circ g \circ h\in \FlowEnds{r}$, i.e., $\FlowEnds{r}$ is closed under conjugation.

Next, we prove that $H^r$ is non-trivial by constructing an element of $\FlowEnds{r}$ that is not the identity element.
First, consider the case $d = 1$.
Let $\tilde v: \Re \to \Re_{\geq 0}$ be a non-constant $C^\infty$-function such that $\supp{\tilde v} \subset [0, 1]$ and $\tilde v^{(k)}(0) = 0$ for any \(k \in \Na\).
Then define \(v : \Re \to \Re\) by
\[v(x) = \begin{cases}\tilde v(|x|)\frac{x}{|x|} & \text{ if } x \neq 0, \\ 0 & \text{ if } x = 0,\end{cases}\]
which is a \(C^\infty\)-function on \(\Re\) with a compact support.
Since $v$ is Lipschitz continuous and $C^\infty$, there exists $\IVPFunc{v}$ that is a $C^\infty$-function over $\Re \times \Re$; see Fact~\ref{fact:ODE solution exists for Lip} and \citep[Chapter~V, Corollary~4.1]{HartmanOrdinary2002}.
Let $K_v \subset \Re$ be a compact subset that contains $\supp{v}$. Then, by considering the ordinary differential equation by which $\IVPFunc{v}$ is defined, we see that $\bigcup_{t \in \Re} \supp\IVP{v}{\cdot}{t} \subset K_v$ and also that $\IVP{v}{x}{0} = x$.
We also have $\IVP{v}{x}{s+t} = \IVP{v}{\IVP{v}{x}{s}}{t}$ for any $s, t \in \Re$. In particular, we have $\IVP{v}{\cdot}{s}^{-1} = \IVP{v}{\cdot}{-s}$ for any $s \in \Re$. Therefore, we have $\IVP{v}{\cdot}{1} \in \FlowEnds{r}$. Since $v \not \equiv 0$, $\IVP{v}{\cdot}{1}$ is not an identity map and thus $\FlowEnds{r}$ is not trivial.
Next, we consider the case $d \geq 2$.
Take a $C^\infty$-function $\phi\colon \R\to \R$ with $\supp{\phi}= [1,2]$ and a nonzero skew-symmetric matrix $A$ (i.e. $A^\top=-A$) of size $d$, and
let $X(x):=\phi(\|x\|)A$.
We define a $C^\infty$-map $\Phi\colon \R^d\times \R\to \R^d$ by 
\[ \Phi(x,t):= \exp(t X(x))x. \]
Since $\exp( tX(x))$ is an orthogonal matrix for any $t\in \R$ and $x\in \R^d$, $\Phi$ is a $C^\infty$-flow on $\R^d$. 
Now, it is enough to show that there exists a compact set $K_\Phi\subset \R^d$ satisfying $\cup_{t\in \R}\supp{\Phi(\cdot, t)}\subset K_\Phi$. 
Let $K_\Phi:=\{x\in \R^d\ |\ \|x\|\leq 2\}$. 
Then the inclusion $\supp{\Phi(\cdot, t)}\subset K_{\Phi}$ holds for any $t\in\R$ since $X(x)=0$ for $x\in \R^d\setminus K_\Phi$.
\end{proof}

\end{lemma}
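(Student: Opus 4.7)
The plan is to apply the simplicity of $\DcRDr$ (Fact~\ref{thm: simplicity}) to the set $H^r := \{g_1 \circ \cdots \circ g_n : n \ge 1, g_i \in \FlowEnds{r}\}$ generated by flow endpoints. Since $\FlowEnds{r} \subset \DcRDr$ by definition, $H^r \subset \DcRDr$, so the reverse inclusion will follow once $H^r$ is shown to be a non-trivial normal subgroup of $\DcRDr$.

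First I would check that $H^r$ is a subgroup of $\DcRDr$. Closure under composition is immediate from the definition of $H^r$, so the only nontrivial point is closure under inversion, which reduces to showing that $\FlowEnds{r}$ itself is closed under taking inverses. Given $g = \Phi(\cdot, 1) \in \FlowEnds{r}$, the reparametrized map $\Psi(x, t) := \Phi(x, -t)$ inherits the $C^r$-regularity, the group law, the initial condition, and the uniform support bound $K_\Phi$ from $\Phi$, and satisfies $\Psi(\cdot, 1) = g^{-1}$; hence $g^{-1} \in \FlowEnds{r}$.

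Next I would verify normality. For arbitrary $h \in \DcRDr$ and $g = \Phi(\cdot, 1) \in \FlowEnds{r}$, define the conjugated family $\Phi'(x, t) := h^{-1}(\Phi(h(x), t))$. Each clause of Definition~\ref{def: flow endpoints} transports cleanly under this conjugation: the $C^r$-regularity of $\Phi'$ on $\ReD \times U$ follows from composition of $C^r$-maps (using that $h, h^{-1}$ are $C^r$), the group law and initial condition are direct substitutions, and the supports $\supp{\Phi'(\cdot, t)}$ are uniformly contained in the compact set $h^{-1}(K_\Phi) \cup \supp{h}$. Thus $h^{-1} \circ g \circ h = \Phi'(\cdot, 1) \in \FlowEnds{r} \subset H^r$.

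Finally I would exhibit a single non-identity element of $\FlowEnds{r}$. For $d \ge 2$ the cleanest construction is rotational: pick any non-zero skew-symmetric matrix $A$ and a $\Cinfty$-function $\phi$ on $\Re$ with $\supp{\phi} \subset [1, 2]$ and $\phi \not\equiv 0$, and set $\Phi(x, t) := \exp(t\phi(\|x\|) A)\, x$. Since $\exp(sA')$ is orthogonal for any skew-symmetric $A'$, we have $\|\Phi(x, t)\| = \|x\|$, so every $\Phi(\cdot, t)$ is supported in the closed ball of radius $2$, providing the uniform compact support $K_\Phi$ demanded by Definition~\ref{def: flow endpoints}. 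For $d = 1$ the analogous construction uses the flow of a compactly supported smooth bump vector field, mollified at the origin to secure $C^r$-regularity. With $H^r \ne \{\Identity\}$ and $r \ne d + 1$, Fact~\ref{thm: simplicity} then forces $H^r = \DcRDr$. The main obstacle is ensuring that the exhibited flow has supports that are uniformly compact in $t$, since a generic compactly supported vector field need not generate such a flow; the rotational choice bypasses this by preserving spheres automatically.
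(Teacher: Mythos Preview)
Your proposal is correct and follows essentially the same route as the paper: show that $H^r$ is a non-trivial normal subgroup of $\DcRDr$ and invoke Fact~\ref{thm: simplicity}. One small technical point: your reparametrization $\Psi(x,t) := \Phi(x,-t)$ requires $-1 \in U$, which Definition~\ref{def: flow endpoints} does not guarantee (only $[0,1] \subset U$); the paper sidesteps this by taking $\phi(x,t) := \Phi(\cdot,t)^{-1}(x)$ on the same interval $U$, which is the same object once one observes the group law forces $\Phi(\cdot,-t) = \Phi(\cdot,t)^{-1}$ whenever both sides are defined. Also, your closing remark overstates the difficulty in the non-triviality step: any compactly supported $C^\infty$ vector field already yields a flow whose time-$t$ maps are uniformly supported in the vector field's support, so the rotational construction is convenient but not strictly necessary to secure a uniform $K_\Phi$.
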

\subsection{Step 3: From \texorpdfstring{$\FlowEnds{\infty}$}{Xiinfty} to \texorpdfstring{\(\CinftyOneDimTriangular\)}{Scinfty} and permutations}
\label{sec:appendix:Dc2 to Sinfty}
The goal of this section is to show Theorem~\ref{thm: singles and permutations}, which reduces the approximation problem of $\FlowEnds{\infty}$ to that of $\CinftyOneDimTriangular$.
We here show a slightly general result.

\begin{theorem}
\label{thm: singles and permutations}
Let $1 \le r \le \infty$.
Let $f\in \FlowEnds{r}$. Then there exist $\tau_1,\dots,\tau_n \in \CrOneDimTriangular$, and permutations of variables $\sigma_1,\dots,\sigma_n\in \mathfrak{S}_d$, such that
\[f=\tau_1\circ \sigma_1\circ\dots\circ \tau_n\circ\sigma_n.\]
\end{theorem}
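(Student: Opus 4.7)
The plan is to exploit the one-parameter group structure of the flow to reduce to the close-to-identity case, and then perform an LU-style sequential coordinate factorization.

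Write $f = \Phi(\cdot, 1)$ and use the group law $\Phi(\cdot, s+t) = \Phi(\cdot, s) \circ \Phi(\cdot, t)$ to express $f = g^n$ with $g := \Phi(\cdot, 1/n)$. Because $\Phi$ is $C^r$ on $\ReD \times U$, satisfies $\Phi(\cdot, 0) = \Identity$, and has the supports of all slices $\Phi(\cdot,t)$ contained in a fixed compact $K_\Phi$ (so outside $K_\Phi$ every slice is already the identity), the Jacobian $D_x \Phi(x,t)$ is jointly continuous and equals $I$ at $t = 0$. Hence $\sup_{x \in \ReD} \| D_x \Phi(x,t) - I\|_{\mathrm{op}} \to 0$ as $t \to 0$, and we may choose $n$ so large that $g$ is as close to the identity in this sense as we wish.

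I then reduce the theorem to the following lemma: for $\varepsilon = \varepsilon(d) > 0$ sufficiently small, every compactly supported $C^r$ diffeomorphism $g$ of $\ReD$ with $\sup_x \|Dg(x) - I\| < \varepsilon$ factors as $g = h_d \circ h_{d-1} \circ \cdots \circ h_1$, where each $h_k$ is a compactly supported $C^r$ diffeomorphism altering only the $k$-th coordinate. The construction is iterative. Set $G^{(0)} := g$. Given $G^{(k-1)}$ that fixes the first $k-1$ coordinates, define
\[
h_k(y) := \bigl(y_1, \ldots, y_{k-1},\, G^{(k-1)}_k(y),\, y_{k+1}, \ldots, y_d\bigr)
\]
and $G^{(k)} := G^{(k-1)} \circ h_k^{-1}$. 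For $h_k$ to be a $C^r$ diffeomorphism of $\ReD$ it suffices that $\partial G^{(k-1)}_k/\partial y_k > 0$ everywhere (together with compact support, which makes each one-dimensional slice a homeomorphism of $\Re$); by induction on $k$, the closeness of $DG^{(k-1)}$ to $I$ propagates through composition and inversion via standard operator-norm estimates, yielding this positivity. By construction $G^{(k)}$ fixes the first $k$ coordinates, so $G^{(d)} = \Identity$ and the factorization follows.

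Finally, each $h_k$ is converted into an element of $\CrOneDimTriangular$ by relabeling coordinates. Let $\sigma_k$ be the transposition that swaps the $k$-th and $d$-th coordinates (with $\sigma_d := \Identity$); then $\tilde h_k := \sigma_k \circ h_k \circ \sigma_k$ alters only the $d$-th coordinate, is compactly supported and $C^r$, hence lies in $\CrOneDimTriangular$, and $h_k = \sigma_k \circ \tilde h_k \circ \sigma_k$. Substituting into $g = h_d \circ \cdots \circ h_1$, then into $f = g^n$, and collapsing each adjacent pair of transpositions into a single permutation (allowing $\tau_i = \Identity$ or $\sigma_i = \Identity$ at the seams to preserve the exact alternation) produces the desired form $\tau_1 \circ \sigma_1 \circ \cdots \circ \tau_N \circ \sigma_N$. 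The hardest part is the quantitative inductive control: one must exhibit a universal threshold $\varepsilon(d)$ such that $\|DG^{(k)} - I\|$ stays small enough for $k = 0, \ldots, d-1$ to preserve the diagonal positivity on \emph{all} of $\ReD$. This is handled by the bound $\|(I+A)^{-1} - I\| \le \|A\|/(1-\|A\|)$ for $\|A\| < 1$, submultiplicativity of the operator norm, and the observation that the identity holds trivially outside the enlarged common compact support.
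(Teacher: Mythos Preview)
Your proposal is correct and follows the paper's three-step strategy exactly: slice the flow into an $n$-fold power of a near-identity map, factor that map into $d$ single-coordinate compactly supported $C^r$ diffeomorphisms, then conjugate by transpositions to land in $\CrOneDimTriangular$. The only substantive difference is in the middle step. You carry the quantitative bound $\|DG^{(k)}-I\|$ through the iteration and must choose a dimension-dependent threshold $\varepsilon(d)$ so that the growth stays under control for $d$ steps; the paper instead notes that $\|Dg-I\|_{\mathrm{op}}<1$ already forces \emph{every} trailing principal minor of $Dg$ to be nonzero (Lemma~\ref{lem: operator norm to principal minors}), and then checks in Lemma~\ref{lemma:red_to_one_dim} that this purely algebraic nondegeneracy condition is preserved verbatim at each peeling step, so no error propagation or dimension-dependent threshold is needed. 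Both routes are valid; the paper's is slightly cleaner because it requires only $\|Dg-I\|<1$ and avoids tracking how the operator-norm distance to $I$ evolves under composition and inversion.
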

\begin{proof}
Combining
Corollary~\ref{corollary: from dcrd to nearId},
Lemma~\ref{lem: operator norm to principal minors},
and Lemma~\ref{lemma:red_to_one_dim},
we have the assertion.
\end{proof}
By combining Theorem~\ref{thm: singles and permutations} with Lemma~\ref{lem: diffc2 is generated by flow endpoints}, we conclude that the same claim holds for any element $f$ in $\DcRD$.

\begin{definition}[near-$\Identity$ elements]
Let $f: \R^d\to \R^d$ be a differentiable map.
We say $f$ is \emph{near-$\Identity$} if, for any $x\in \R^d$, the Jacobian $\Jac{f}$ of $f$ at $x$ satisfies
\begin{align*}
   \opnorm{\Jac{f}(x)-I}<1,
\end{align*}
where $I$ is the unit matrix.
\end{definition}

\begin{corollary}
\label{cor1}
\label{corollary: from dcrd to nearId}
For any $f\in \FlowEnds{r}$, there exist finite elements $g_1,\dots, g_k\in \DcRDr$ such that $f=g_k\circ \dots\circ g_1$ and 
$g_i$ is near-$\Identity$ for any $i \in [k]$. 
\end{corollary}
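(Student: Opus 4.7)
Given $f \in \FlowEnds{r}$, let $\Phi \colon \R^d \times U \to \R^d$ be a flow witnessing $f = \Phi(\cdot,1)$ as in Definition~\ref{def: flow endpoints}, and abbreviate $\phi_s := \Phi(\cdot, s)$. The strategy is to pick a partition $0 = t_0 < t_1 < \cdots < t_k = 1$ of $[0,1]$ with sufficiently small mesh, define $g_i := \phi_{t_i - t_{i-1}} \in \DcRDr$, and apply the one-parameter-group property of $\Phi$ to obtain the decomposition $f = g_k \circ \cdots \circ g_1$; then verify each $g_i$ is near-$\Identity$ once the mesh is small enough.

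\textbf{Decomposition step.} By the identity $\Phi(x, s+t) = \Phi(\Phi(x, s), t)$, an immediate induction gives
\[
\phi_{t_i} = \phi_{t_i - t_{i-1}} \circ \phi_{t_{i-1} - t_{i-2}} \circ \cdots \circ \phi_{t_1 - t_0}
\]
for every $i \leq k$, and setting $i = k$ gives $f = \phi_1 = g_k \circ \cdots \circ g_1$. Each $g_i$ lies in $\DcRDr$ by the third bullet of Definition~\ref{def: flow endpoints}.

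\textbf{Near-$\Identity$ step (the main point).} It remains to choose the mesh so that $\opnorm{\Jac{\phi_s}(x) - I} < 1$ for all $x \in \R^d$ whenever $s \in [0, \delta]$, for some $\delta > 0$. The map $h(s, x) := \opnorm{\Jac{\phi_s}(x) - I}$ is continuous on $U \times \R^d$ because $\Phi$ is $C^r$ with $r \geq 1$, and $h(0, x) \equiv 0$ since $\phi_0 = \Identity$. The crucial use of the last two bullets of Definition~\ref{def: flow endpoints} is that $\phi_s(x) = x$, hence $h(s, x) = 0$, for all $x \notin K_\Phi$; therefore one only needs a uniform bound on the compact set $[0, 1] \times K_\Phi$. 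On this compact set, the closed subset $A := \{(s, x) : h(s, x) \geq 1\}$ projects to a closed subset of $[0,1]$ (compactness of $K_\Phi$ makes $\pi_1$ a closed map) not containing $0$, so there exists $\delta > 0$ with $\pi_1(A) \cap [0, \delta] = \emptyset$. Choosing the partition so that $\max_i (t_i - t_{i-1}) < \delta$ makes every $g_i$ near-$\Identity$, completing the proof.

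\textbf{Expected difficulty.} The only subtle point is the uniformity in $x$ for the near-$\Identity$ estimate; this is precisely where the compact containment $\bigcup_{t \in U} \supp{\Phi(\cdot, t)} \subset K_\Phi$ (which distinguishes Definition~\ref{def: flow endpoints} from the weaker formulation of \citet{TeshimaCouplingbased2020}) is used, and the rest is routine bookkeeping with the flow property.
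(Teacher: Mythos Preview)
Your proof is correct and follows essentially the same approach as the paper: both use the one-parameter-group property of $\Phi$ to write $f$ as a composition of short-time flows $\Phi(\cdot,s)$ and invoke continuity of $D\Phi$ together with the uniform compact support $K_\Phi$ to ensure these are near-$\Identity$ for small $s$. The paper simply takes the uniform partition $t_i=i/n$ (so all $g_i$ equal $\Phi(\cdot,1/n)$) and leaves the compactness/uniformity argument implicit, whereas you spell it out carefully; the substance is the same.
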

\begin{proof}
Let $\Phi$ be a flow associated with $f$.
Since $\Phi(\cdot, 0)$ is the identity function and $\Phi$ is continuous on $\R^d \times U$, we can take a sufficiently large $n$ such that $\tilde{h}:=\Phi(\cdot, 1/n)$ is near-Id. 
By the additive property of $\Phi$, we have 
\begin{align*}
\label{expre}
    f
    =\underbrace{\tilde{h}\circ\dots\circ\tilde{h}}_{n\ \text{times}},
\end{align*}
which completes the proof of the corollary.
\end{proof}

In the remainder of this section, we describe Lemma~\ref{lem: operator norm to principal minors}, Lemma~\ref{lemma:red_to_one_dim}, and Lemma~\ref{smoothing lemma}. 
First, Lemma~\ref{lem: operator norm to principal minors} claims that the near-$\Identity$ elements necessarily satisfy the condition of Lemma~\ref{lemma:red_to_one_dim} below.

\begin{lemma}
\label{lem: operator norm to principal minors}
Let $A=(a_{i,j})_{i,j=1,\dots,d}$ be a matrix. If $\Vert A- I_d \Vert_{\rm op}<1$, then for $k=1,\dots,d$, the $k$-th trailing principal submatrix $A_k:=(a_{i+k-1,j+k-1})_{i,j=1,\dots,d-(k-1)}$ of $A$ is invertible. Here $I_d$ is a unit matrix of degree $d$.
\end{lemma}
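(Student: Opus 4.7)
The plan is to reduce the claim to the standard fact that any square matrix $B$ with $\|B - I\|_{\rm op} < 1$ is invertible (via the Neumann series $B^{-1} = \sum_{n \geq 0} (I - B)^n$, which converges in operator norm). It therefore suffices to verify that the trailing principal submatrices satisfy the same contractive bound as $A$ itself, namely
\[
\|A_k - I_{d-k+1}\|_{\rm op} \leq \|A - I_d\|_{\rm op} < 1
\]
for every $k \in \{1, \dots, d\}$.

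To obtain this inequality I would represent the map $A_k - I_{d-k+1}$ as a compression of $A - I_d$ to the last $d-k+1$ coordinates. Concretely, let $\iota_k : \R^{d-k+1} \hookrightarrow \R^d$ be the isometric embedding $y \mapsto (0,\dots,0,y_1,\dots,y_{d-k+1})^\top$ and let $\pi_k : \R^d \to \R^{d-k+1}$ be the orthogonal projection onto the last $d-k+1$ coordinates. Because the identity matrix restricts to the identity under this compression, a direct entrywise check gives the identity
\[
A_k - I_{d-k+1} = \pi_k \,(A - I_d)\, \iota_k.
\]
Taking operator norms and using $\|\pi_k\|_{\rm op} = \|\iota_k\|_{\rm op} = 1$ yields the required bound.

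The only slightly technical step is the entrywise verification of the compression identity, and it is routine: the $(i,j)$-entry of $\pi_k(A - I_d)\iota_k$ is $a_{i+k-1,\,j+k-1} - \delta_{i+k-1,\,j+k-1} = a_{i+k-1,\,j+k-1} - \delta_{i,j}$, which is exactly the $(i,j)$-entry of $A_k - I_{d-k+1}$. With the norm bound in hand, invertibility of $A_k$ follows immediately from the Neumann series argument, completing the proof. There is no genuine obstacle; the main point is simply to notice that the trailing principal submatrix of $A - I_d$ equals $A_k - I_{d-k+1}$ and hence inherits the sub-unit operator norm.
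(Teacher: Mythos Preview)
Your proof is correct and essentially identical to the paper's: the paper also embeds a unit vector $v\in\R^{d-k+1}$ as $w=(0,\dots,0,v)\in\R^d$, observes $\|(A_k-I_{d-k+1})v\|\le\|(A-I_d)w\|<1$, and concludes via the Neumann series. Your compression formulation $A_k-I_{d-k+1}=\pi_k(A-I_d)\iota_k$ is just the operator-theoretic rephrasing of the same embedding/projection step.
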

\begin{proof}
Let $v\in\Re^{d-k+1}$ with $\Vert v\Vert=1$, and put $w:=(0,\dots,0,v)\in\ReD$. Then we have $1>\Vert (A-I_d)w\Vert^2\ge\Vert (A_k-I_k)v\Vert^2$. Thus $\Vert A_k-I_k\Vert<1$.  Since $\sum_{r=0}^\infty(I_k-A_k)^r$ absolutely converges, and it is identical to the inverse of $A_k$, we have that $A_k$ is invertible.
\end{proof}
We apply the following lemma together with Lemma~\ref{lem: operator norm to principal minors} to decompose near-$\Identity$ elements into $\CtwoOneDimTriangular$ and permutations.
For $a\in \Na$, we denote the set of $a$-by-$a$ real-valued matrices by $M(a, \mathbb{R})$.
\begin{lemma}\label{lemma:red_to_one_dim}
\label{lem6}\status{Ikeda checked, tojo modified}
Let $1 \le r \le \infty$ and
$f\colon \mathbb{R}^d\rightarrow\mathbb{R}^d$ a compactly supported $C^r$-diffeomorphism. We write $f=(f_1,\dots,f_d)$ with $f_i\colon \mathbb{R}^d\rightarrow \mathbb{R}$.  
For $k\in [d]$, let $\Delta^f_k(\bm{x})\in M(d-(k-1), \R)$ be the $k$-th trailing principal submatrix of Jacobian matrix of $f$, whose  $(i,j)$ component is given by $\left(\frac{\partial f_{i+k-1}}{\partial x_{j+k-1}}(\bm{x})\right)$ $(i, j=1,\cdots, d-(k-1))$. 
We assume 
\[\det \Delta^f_k(x)\neq 0 \text{ for any }k\in [d]\text{ and }x\in \R^d. \]
Then there exist compactly supported $C^r$-diffeomorphisms $F_1,\dots,F_d:\mathbb{R}^d\rightarrow \mathbb{R}^d$ in the forms of 
\[F_i(\bm{x}):=(x_1,\dots,x_{i-1},h_i(\bm{x}),x_{i+1},\dots,x_d)\]
for some $h_i\colon \mathbb{R}^d\rightarrow\mathbb{R}$ such that the identity holds:
\[f=F_1\circ\dots\circ F_d. \]
\end{lemma}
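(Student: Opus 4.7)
The plan is to construct the $F_k$'s via a telescoping decomposition. For $k = 1, \dots, d+1$, define the auxiliary maps
\[
\Phi_k(\bm{x}) := (x_1, \dots, x_{k-1}, f_k(\bm{x}), f_{k+1}(\bm{x}), \dots, f_d(\bm{x})),
\]
with the conventions $\Phi_{d+1} = \Identity$ and $\Phi_1 = f$. The candidates will be $F_k := \Phi_k \circ \Phi_{k+1}^{-1}$, so that $F_1 \circ F_2 \circ \cdots \circ F_d = \Phi_1 \circ \Phi_{d+1}^{-1} = f$ by telescoping. Writing $\bm{y} = \Phi_{k+1}^{-1}(\bm{z})$ one has $y_i = z_i$ for $i \le k$, so
\[
F_k(\bm{z}) = \Phi_k(\bm{y}) = (z_1, \dots, z_{k-1},\, f_k(\bm{y}),\, z_{k+1}, \dots, z_d),
\]
which is of the desired form with $h_k(\bm{z}) := f_k(\Phi_{k+1}^{-1}(\bm{z}))$; only the $k$-th coordinate changes.

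The heart of the argument is to show each $\Phi_k$ is a compactly supported $C^r$-diffeomorphism of $\R^d$. Smoothness is inherited from $f$. The Jacobian $D\Phi_k(\bm{x})$ is block-triangular with an $I_{k-1}$ block and the $k$-th trailing principal submatrix $\Delta^f_k(\bm{x})$ of $Df(\bm{x})$ in the lower-right corner, hence $\det D\Phi_k(\bm{x}) = \det \Delta^f_k(\bm{x}) \neq 0$ by the hypothesis. Thus $\Phi_k$ is a local $C^r$-diffeomorphism everywhere. Since $f$ is compactly supported, $\Phi_k$ coincides with the identity outside $\supp f$, so $\Phi_k$ is proper (the preimage of any compact set is contained in the union of that compact set with $\supp f$). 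Its image is both open (local diffeomorphism) and closed (proper), hence equals $\R^d$. Then $\Phi_k$ is a proper local diffeomorphism onto the simply-connected space $\R^d$, so it is a covering map, and by simple connectedness it is a global diffeomorphism.

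Finally, since $\Phi_k$ is bijective and restricts to the identity on $\R^d \setminus \supp f$, so does $\Phi_k^{-1}$; thus $F_k = \Phi_k \circ \Phi_{k+1}^{-1}$ is the identity outside $\supp f$, i.e., compactly supported. Each $F_k$ is also a $C^r$-diffeomorphism as a composition of such, completing the decomposition $f = F_1 \circ \cdots \circ F_d$.

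The main obstacle is upgrading local invertibility of $\Phi_k$ (furnished by the hypothesis on the trailing principal minors) to \emph{global} invertibility on all of $\R^d$. The compact-support assumption on $f$ is essential here, both to secure properness of $\Phi_k$ and to ensure that the $F_k$ inherit compact support. Without it, a non-singular Jacobian does not imply a global diffeomorphism (Hadamard-type conditions would be needed), so this lemma genuinely relies on working inside $\DcRDr$.
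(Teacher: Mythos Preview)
Your proof is correct and yields the same decomposition as the paper, but the organization differs. The paper argues by induction on the number $m$ of ``active'' coordinates: starting from $f=(f_1,\dots,f_m,x_{m+1},\dots,x_d)$, it peels off $F_m(\bm{x})=(x_1,\dots,x_{m-1},f_m(\bm{x}),x_{m+1},\dots,x_d)$ and checks that $\tilde f:=f\circ F_m^{-1}$ again satisfies the trailing-minor hypothesis, which takes a Jacobian computation (Lemma~\ref{lemma:inverse_component} and a block-matrix argument). Global invertibility of $F_m$ is obtained elementarily: since $F_m$ alters only the $m$-th coordinate and $\partial f_m/\partial x_m\neq 0$, the map $t\mapsto f_m(x_1,\dots,x_{m-1},t,x_{m+1},\dots,x_d)$ is strictly monotone, so $F_m$ is globally injective and, together with compact support and the inverse function theorem, a diffeomorphism. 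Your telescoping route defines all intermediate maps $\Phi_k=(x_1,\dots,x_{k-1},f_k,\dots,f_d)$ at once, so you never need to re-verify the hypothesis for a modified map; the price is that each $\Phi_k$ moves several coordinates, so you need the proper-map/covering argument over simply connected $\R^d$ (a Hadamard--Caccioppoli type statement) to upgrade local to global invertibility. Both approaches are clean; yours trades an inductive bookkeeping step for a short topological lemma.
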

\begin{proof}
The proof is based on induction. Suppose that $f$ is in the form of 
\[f(\bm{x})=(f_1(\bm{x}),\dots,f_m(\bm{x}),x_{m+1},\dots,x_d).\]
By means of induction with respect to $m$, we prove that there exist compactly supported $C^r$-diffeomorphisms $F_1,\dots,F_m:\mathbb{R}^d\rightarrow \mathbb{R}^d$ in the forms of 
\(F_i(\bm{x}):=(x_1,\dots,x_{i-1},h_i(\bm{x}),x_{i+1},\dots,x_d)\)
for some $h_i:\mathbb{R}^d\rightarrow\mathbb{R}$ such that
\(f=F_1\circ\dots\circ F_m\).

In the case of $m=1$, the above is clear.  Assume that the statement is true in the case of any $k<m$.
Define
\begin{align*}
    F(x_1,\dots,x_d)&:=(x_1,\dots,x_{m-1},f_m(\bm{x}),x_{m+1},\dots,x_d), \\
    \tilde{f}&:=f\circ F^{-1}. 
\end{align*}
Note that 
$F$ is a compactly supported $C^r$-diffeomorphism from $\R^d$ to $\R^d$. 
In fact, compactly supportedness and surjectivity of $F$ comes from the compactly supportedness of $f$.
Moreover, since we have $\det \Jac F_x=\frac{\partial f_m}{\partial x_m}(x)\neq 0$ for any $x\in \R^d$ by the assumption on $f$, $F$ is injective and is a $C^r$-diffeomorphism from $\R^d$ to $\R^d$ by inverse function theorem. 
Therefore, $\tilde{f}$ is also a $C^r$-diffeomorphism from $\R^d$ to $\R^d$.
We show 
that $\tilde{f}$ is of the form $\tilde{f}(\bm{x})=(g_1(\bm{x}), \cdots, g_{m-1}(\bm{x}), x_m,\cdots, x_d)$ for some $C^r$-functions $g_i\colon \R^d\to \R$ $(i=1,\cdots, m-1)$ satisfying $\det \Delta^{\tilde{f}}_k(x)\neq 0$ for any $x\in \R^d$ and $k\in [d]$. 
From Lemma~\ref{lemma:inverse_component}, 
there exist $g_i, h\in C^r(\R^d)$ $(i=1,\cdots, m)$ such that 
\begin{align*}
    f^{-1}(\bm{x})&=(g_1(\bm{x}), \cdots, g_m(\bm{x}), x_{m+1}, \cdots, x_d)\\
    F^{-1}(\bm{x})&=(x_1,\cdots, x_{m-1}, h(\bm{x}), x_{m+1}, \cdots, x_d). 
\end{align*}
Then we have 
\begin{align*}
\tilde{f}^{-1}(\bm{x})=F\circ f^{-1}(\bm{x})
&= (g_1(\bm{x}),\cdots, g_{m-1}(\bm{x}), f_m(f^{-1}(\bm{x})), x_{m+1},\cdots, x_{d})\\
&=(g_1(\bm{x}), \cdots,g_{m-1}(\bm{x}), x_m, \cdots, x_d). 
\end{align*}
Therefore, from Lemma~\ref{lemma:inverse_component}, 
$\tilde{f}$ is of the following form 
\[ \tilde{f}(x)= f\circ F^{-1}(x)= (f_1\circ F^{-1}(x), \cdots, f_{m-1}\circ F^{-1}(x), x_m,\cdots, x_d).  \]
Moreover, by the form of $F^{-1}$ and $f$, 
we have $\Jac{\tilde{f}}(x)=\Jac{f}(F^{-1}(x))\circ \Jac{F^{-1}}(x)$ and 
\[  \Jac f=
\begin{pmatrix}
A & \\
  & I
\end{pmatrix}, \quad
\Jac{(F^{-1})}=
\begin{pmatrix}
I_{m-1} & &   \\
  \frac{\partial h}{\partial x_1} & \cdots & \frac{\partial h}{\partial x_d}\\
 & & I_{d-m} 
\end{pmatrix}
\]
for some $A\in M(m,\R)$ with all the trailing principal minors nonzero. 
Therefore, we obtain $\det \Delta^f_k(x)\neq 0$ for any $x\in \R^d$ and $k\in [d]$. 
Here, by the assumption of the induction, 
there exist compactly supported $C^r$-diffeomorphisms $F_i\colon \R^d\to \R^d$ and $h_i\in C^r(\R^d)$ $(i=1,\cdots, m-1)$ such that
\[\tilde{f}=F_1\circ \cdots \circ F_{m-1},\ F_i(\bm{x})=(x_1,\cdots x_{i-1}, h_i(x), x_{i+1}, \cdots, x_d). \]
Thus $f= \tilde{f}\circ F$ has the desired form.
\end{proof}

\begin{lemma}\label{lemma:inverse_component}
\label{lem5}\status{Ikeda checked}
Let $1\le r \le \infty$ and 
$f\colon \R^d\to \R^d$ $C^r$-diffeomorphism of the form 
\[ f(\bm{x}):=(f_1(\bm{x}), \cdots, f_m(\bm{x}), x_{m+1}, \cdots, x_d), \]
where $f_i\colon \R^d\rightarrow\R$ belongs to $C^r (\R^d)$ $(i=1,\cdots, m)$. 
Then the inverse map $f^{-1}$ becomes of the form
\[ f^{-1}(\bm{x})= (g_1(\bm{x}), \cdots, g_m(\bm{x}), x_{m+1},\cdots x_d), \]
where $g_i:\R^d\rightarrow\R$ belongs to $C^r(\R^d)$ for $i=1,\cdots, m$. 
\end{lemma}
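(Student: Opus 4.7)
The plan is to read off the last $d-m$ coordinates of $f^{-1}$ directly from the identity $f \circ f^{-1} = \Identity$, and then invoke the standard fact that the inverse of a $C^r$-diffeomorphism is again $C^r$. Write $f^{-1}(\bm{x}) = (g_1(\bm{x}), \ldots, g_d(\bm{x}))$, where each $g_i\colon\R^d\to\R$ is a priori some $C^r$-function (since $f^{-1}$ is $C^r$ on $\R^d$ by the global inverse function theorem applied to the $C^r$-diffeomorphism $f$). By the hypothesized shape of $f$, the $i$-th component of $f(\bm y)$ equals $y_i$ whenever $i\geq m+1$; specializing $\bm y = f^{-1}(\bm x)$ gives
\[
x_i = \bigl(f(f^{-1}(\bm x))\bigr)_i = \bigl(f^{-1}(\bm x)\bigr)_i = g_i(\bm x)
\quad\text{for all } i\geq m+1.
\]
Thus $f^{-1}(\bm x) = (g_1(\bm x), \ldots, g_m(\bm x), x_{m+1}, \ldots, x_d)$, and the remaining $g_1,\ldots,g_m$ are $C^r$ as coordinate projections of a $C^r$-map.

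There is essentially no obstacle: the content of the lemma is a bookkeeping observation about the shape of $f$, and the only nontrivial input is the regularity of $f^{-1}$, which is a standard consequence of the inverse function theorem (e.g., \citep[Theorem~1 of Chapter~I, Section~5]{LangDifferential1985}) together with the hypothesis that $f$ is a $C^r$-diffeomorphism of $\R^d$ onto itself.
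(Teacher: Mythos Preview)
Your proof is correct and follows essentially the same approach as the paper: both write out the components of $f^{-1}$, apply $f\circ f^{-1}=\Identity$, and read off that the last $d-m$ coordinates are the identity. The only difference is cosmetic---you explicitly invoke the inverse function theorem for the $C^r$-regularity of $f^{-1}$, whereas the paper simply takes this as part of the definition of a $C^r$-diffeomorphism.
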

\begin{proof}
We write $f^{-1}(\bm{x})=(h_1(\bm{x}), \cdots, h_d(\bm{x}))$, where $h_i\in C^r(\R^d)$ $(i=1,\cdots, d)$. 
Then by the definition of the inverse map, the identity 
\[ (x_1,\cdots, x_d)=f\circ f^{-1}(\bm{x}) =(f_1(h_1(\bm{x})),\cdots, f_m(h_m(\bm{x})), h_{m+1}(\bm{x}),\cdots, h_d(\bm{x})) \]
holds for any $\bm{x}\in \R^d$, which implies that we obtain $h_i(x)=x_i$ $(i=m+1,\cdots, d)$. This completes the proof of the lemma. 
\end{proof}

\subsection{\texorpdfstring{$L^p$}{Lp} universality for continuous mappings}
Here, we prove the following lemma, which is essentially proved in \cite{LiDeep2020}.
In this section, we always assume $p \in [1, \infty)$.
For any finite subset $S \subset \ReD$, we denote by ${\rm Map}(S,\ReD)$ the set of maps from $S$ to $\ReD$ and equip it with the supremum topology.
Then, for any finite subset $S \subset \ReD$, a set of bijections $\mathcal{M}$, and a subset $\mathcal{F} \subset {\rm Map}(S, \ReD)$, $\mathcal{M}$ is an $L^\infty$-universal approximator for $\mathcal{F}$ if $\mathcal{M}$ is a ${\rm Map}(S, \ReD)$-universal approximator for $\mathcal{F}$.
\begin{lemma} \label{lem: simple Lp universality}
Let $\mathcal{M}$ be a set of bijections from $\ReD$ to $\ReD$.
We assume that $\mathcal{M}$ satisfies the following three conditions:
\begin{enumerate}[(1)]
\item all function of $\mathcal{M}$ is locally Lipschitz.
\item \label{universality for finite set} for any finite subset $S \subset \ReD$, $\mathcal{M}$ is the $L^\infty$-universal approximator for the set of all the injections from $S$ to $\ReD$.
\item \label{special universality} $\mathcal{M}$ is the $L^p$-universal approximator for 
the subset 
\[\left\{ f: [0,1]^d \to \ReD : f(x_1,\dots, x_d) = (f_i(x_i))_{i=1}^d\text{ and } f_i\text{ is nondecreasing} \right\}. \]
\end{enumerate}
Then, $\mathcal{M}\circ \mathcal{M} := \{g\circ f : g,f \in \mathcal{M}\}$ is a $L^\infty$-universal approximator for $C^0([0,1]^d, \ReD)$, where $C^0(U, V)$ is the set of continuous maps from $U$ to $V$.
\end{lemma}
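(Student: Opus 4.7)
The plan is to approximate $F$ by a composition $g_1\circ g_2$ in which $g_2$ (supplied by condition~(3)) nearly ``buckets'' a fine cubic partition of $[0,1]^d$ to distinct representative points, while $g_1$ (supplied by condition~(2)) interpolates the $F$-values at those bucketed images.

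Given $\varepsilon>0$, I would first use uniform continuity of $F$ on the compact cube to partition $[0,1]^d$ into sub-cubes $\{C_\alpha\}_{\alpha\in\{1,\dots,M\}^d}$ of diameter at most $\delta$, with $\delta$ chosen so that $F$ varies by at most $\varepsilon/3$ on each $C_\alpha$. Let $c_\alpha$ be the center of $C_\alpha$ and set $y_\alpha:=F(c_\alpha)$. Then construct an intermediate coordinate-wise nondecreasing $C^1$-bijection $\phi=(\phi_1,\dots,\phi_d)\colon\ReD\to\ReD$ where each $\phi_k$ is strictly increasing yet nearly constant equal to $\alpha_k$ on most of the $\alpha_k$-th coordinate interval, with steep monotone transitions confined to narrow boundary strips. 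Thus $\phi(c_\alpha)\approx z_\alpha:=(\alpha_1,\dots,\alpha_d)$, and the $z_\alpha$ are all distinct.

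By condition~(3), choose $g_2\in\mathcal{M}$ with $\|g_2-\phi\|_{L^p([0,1]^d)}<\eta$ for a tolerance $\eta$ to be tuned. With the finite set $S:=\{g_2(c_\alpha)\}_\alpha$, apply condition~(2) to obtain $g_1\in\mathcal{M}$ with $\|g_1(g_2(c_\alpha))-y_\alpha\|<\varepsilon/3$ for every $\alpha$. For $x\in C_\alpha$, the triangle inequality gives
\begin{align*}
\|g_1(g_2(x))-F(x)\|
&\leq \|g_1(g_2(x))-g_1(g_2(c_\alpha))\|\\
&\quad+\|g_1(g_2(c_\alpha))-y_\alpha\|+\|y_\alpha-F(x)\|,
\end{align*}
so the last two terms contribute at most $2\varepsilon/3$ by construction, while the first is bounded by $L_1 L_2 \delta$ using the local Lipschitz constants $L_1,L_2$ of $g_1,g_2$ on $g_2([0,1]^d)$ and $[0,1]^d$ respectively, which are finite by condition~(1).

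The hard part will be closing the loop between the grid scale $\delta$ and the Lipschitz constants $L_1,L_2$, compounded by the $L^p$-versus-$L^\infty$ mismatch. I would address the former through a staged construction: first fix $\phi$ and select $g_2$ from condition~(3) (so $L_2$ on $[0,1]^d$ is determined), then if necessary refine the grid, and only afterwards invoke condition~(2) on a densified $S$ that pins $g_1$ near $y_\alpha$ throughout a neighborhood of each $g_2(c_\alpha)$, thereby confining $g_1$'s oscillation so that the effective $L_1$ is controllable (with distinct small perturbations of the target values, as required by the injection hypothesis of condition~(2)). The $L^p$-to-$L^\infty$ gap is bridged by designing $\phi$ with broad near-plateaus so that a Chebyshev-type estimate on $\|g_2-\phi\|_{L^p}$ confines the deviation set to arbitrarily small measure; since $g_1\circ g_2$ is continuous on the compact domain $[0,1]^d$ (so essential supremum coincides with supremum), the uniform error can then be controlled once the defect set is both small in measure and bounded in amplitude by continuity. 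This interplay between bootstrapped grid refinement, densification of the interpolation set for $g_1$, and careful design of $\phi$ is where the technical weight of the proof lies.
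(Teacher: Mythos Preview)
Your proposal has two intertwined problems, one structural and one about the target norm.

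\textbf{The target norm.} The stated conclusion reads ``$L^\infty$-universal approximator,'' but the paper's proof (and its use in the subsequent corollary) actually establishes $L^p$-universality for the fixed $p\in[1,\infty)$ declared at the start of the section. You are trying to prove the literal $L^\infty$ statement, and your final paragraph attempts to upgrade an $L^p$ bound on $g_2-\phi$ to $L^\infty$ control on $g_1\circ g_2 - F$ via a Chebyshev-plus-continuity argument. This does not work: continuity of $g_1\circ g_2$ only guarantees that its essential supremum equals its supremum; it does not bound that supremum. On the small-measure defect set where $g_2$ departs from $\phi$, nothing prevents $g_1\circ g_2$ from taking values far from $F$, and the defect set may be spread throughout $[0,1]^d$, so no uniform-continuity argument rescues it. With only assumption~(3) (an $L^p$ hypothesis) available for the inner map, you cannot reach an $L^\infty$ conclusion.

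\textbf{The ordering and the circularity.} For the $L^p$ conclusion that the paper actually proves, the decisive difference is the order in which the two maps are chosen. You pick the \emph{inner} map $g_2$ first (via condition~(3)) and the \emph{outer} map $g_1$ second (via condition~(2)); this is what produces the circular dependency among $\delta$, $L_1$, $L_2$ that you flag. The paper reverses the order: it first fixes the grid $\{p_\alpha\}$ and uses condition~(2) to choose the \emph{outer} map $\psi_m\in\mathcal{M}$ with $\psi_m(p_\alpha)\approx f(p_\alpha)$. This determines a Lipschitz constant $L_m$ of $\psi_m$ on the relevant compact set. Only then does it invoke condition~(3) to pick the \emph{inner} map $g_m\in\mathcal{M}$ approximating the coordinatewise step function $H_m(x)=\bigl(\sum_k \tfrac{k}{m}\mathbf{1}_{[k/m,(k+1)/m)}(x_i)\bigr)_i$ in $L^p$ with error less than $\varepsilon/(2L_m)$. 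The $L^p$ bound then follows directly from
\[
\|\psi_m\circ g_m - f\|_{p,K}\le L_m\|g_m-H_m\|_{p,K}+\sup_{\alpha}\sup_{x\in\Delta_\alpha}\|\psi_m(p_\alpha)-f(x)\|,
\]
with no circularity at all. Your proposed ``densification'' fix for $L_1$ cannot succeed: condition~(2) only pins $g_1$ at finitely many points and gives no control whatsoever on its Lipschitz constant between them, so forcing agreement on a denser finite set does not bound $L_1$.
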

\begin{proof}
Let $\varepsilon>0$ be a positive number.
Let $f \in C^0([0,1]^d, \ReD)$, $m$ be a positive integer,  and $K \subset [0,1]^d$.
For any $\alpha \in \mathbb{Z}_{\ge 0}^d$ with $|\alpha| \ge m$, 
let , where
\begin{align*}
    \Delta_\alpha &:= \prod_{i=1}^d \left[\frac{\alpha_i-1}{m}, \frac{\alpha_i}{m}\right) \subset \ReD \\
    p_\alpha &:= \left(\frac{\alpha_1-1}{m}, \dots, \frac{\alpha_m-1}{m} \right)
\end{align*}
Put $y_\alpha := f(p_\alpha)$.
We define 
\[ H_m(x_1, \dots, x_m) := \left( \sum_{k=0}^m \frac{k}{m}\mathbf{1}_{[k/m, k+1/m)}(x_i) \right).\]
By \eqref{universality for finite set}, there exists $\psi_m \in \mathcal{M}$ such that
\[ \|\psi_m(p_\alpha) - y_\alpha \| < 1/m\]
for any $\alpha$ with $|\alpha| \le m$.
Since $f$ is continuous, we see that
\[\sup_{|\alpha| \le m}\sup_{x \in \Delta_\alpha}\|\psi_m(p_\alpha) - f(x)\| < \varepsilon/2 \]
if we take $m$ sufficiently large.
let $L_m$ be the Lipschitz constant for $\psi_m|_K$.
by \eqref{special universality}, there exists $g_m \in \mathcal{M}$ such that
\[ \LpKnorm{g_m - H_m} < \frac{\varepsilon}{2L_m}.\]
therefore, we have
\begin{align*}
\LpKnorm{\psi_m \circ g_m - f} &\le \LpKnorm{\psi_m \circ g_m - \psi_m \circ H_m} + \LpKnorm{\psi_m \circ H_m - f} \\
&\le L_m \LpKnorm{g_m -H_m} + \sup_{|\alpha| \le m}\sup_{x \in \Delta_\alpha}\|\psi_m(p_\alpha) - f(x)\|\\
&< \varepsilon.
\end{align*}
\end{proof}

Then, we have the following corollary:
\begin{corollary} 
\label{cor: Lp universality of D1}
let $U \subset \ReD$ be an open subset.
Then, $\mathcal{D}_{\ReD}^\infty$ is an $L^p$-universal approximator for $C^0(U, \ReD)$.
\end{corollary}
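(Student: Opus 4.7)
The plan is to apply Lemma~\ref{lem: simple Lp universality} with $\mathcal{M}$ equal to the subset of $\mathcal{D}_{\ReD}^\infty$ consisting of surjective $C^\infty$-diffeomorphisms of $\ReD$ onto $\ReD$. This set is closed under composition, so $\mathcal{M}\circ\mathcal{M}\subset\mathcal{M}\subset\mathcal{D}_{\ReD}^\infty$, and the lemma then delivers $L^\infty$-universality (and hence $L^p$-universality, since $L^\infty$-approximation on a compact set implies $L^p$-approximation for $p<\infty$) for $C^0([0,1]^d,\ReD)$.

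To handle an arbitrary open set $U$, fix $f\in C^0(U,\ReD)$, a compact set $K\subset U$, and $\varepsilon>0$. Choose an affine map $A\in\FLin$ such that $A(K)\subset(0,1)^d$; note $A,A^{-1}\in\mathcal{M}$. By Tietze's extension theorem, extend $f\circ A^{-1}|_{A(K)}$ to a continuous map $\tilde f\colon [0,1]^d\to\ReD$. If we can find $g\in\mathcal{D}_{\ReD}^\infty$ with $\|\tilde f-g\|_{[0,1]^d,0,p}$ arbitrarily small, then $A^{-1}\circ g\circ A$, lying again in $\mathcal{D}_{\ReD}^\infty$ (up to a bounded multiplicative constant coming from the Jacobian of $A$), approximates $f$ on $K$ in $L^p$-norm. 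So it suffices to treat $U=(0,1)^d$ with $K=[0,1]^d$ and an arbitrary continuous target.

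It remains to verify the three hypotheses of Lemma~\ref{lem: simple Lp universality} for $\mathcal{M}$:
\begin{enumerate}
\item[(1)] Every $C^\infty$-diffeomorphism is smooth, hence locally Lipschitz.
\item[(2)] Given a finite set $S=\{p_1,\ldots,p_N\}\subset\ReD$ and an injection $\phi\colon S\to\ReD$, we need $g\in\mathcal{M}$ with $g(p_i)=\phi(p_i)$ (exact matching suffices, which is stronger than $L^\infty$-approximation on $S$). This is the classical $N$-transitivity of $\mathrm{Diff}^{\infty}(\ReD)$ on ordered tuples of distinct points, valid for $d\geq2$: construct disjoint smooth paths $\gamma_i\colon[0,1]\to\ReD$ with $\gamma_i(0)=p_i$ and $\gamma_i(1)=\phi(p_i)$ (possible by $d\ge 2$), then integrate compactly supported smooth vector fields whose flow realizes each $\gamma_i$ in turn on the complement of the already-matched points.
\item[(3)] A continuous nondecreasing map $f_i\colon[0,1]\to\Re$ is approximated uniformly (hence in $L^p$) by smooth strictly increasing functions $\tilde f_i\colon\Re\to\Re$ (via mollification after a monotone linear extrapolation outside $[0,1]$). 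The diagonal map $(x_1,\ldots,x_d)\mapsto(\tilde f_1(x_1),\ldots,\tilde f_d(x_d))$ is then a surjective $C^\infty$-diffeomorphism of $\ReD$.
\end{enumerate}

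The only non-routine step is the $N$-transitivity claim in (2), which is a standard piece of differential topology but must be invoked rather than derived afresh; the rest of the argument is the affine/Tietze reduction and bookkeeping of the three easy hypotheses.
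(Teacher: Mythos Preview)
Your approach is essentially the same as the paper's: reduce to $K=[0,1]^d$ and verify the three hypotheses of Lemma~\ref{lem: simple Lp universality}; the paper does this tersely, pointing to Lemma~\ref{smoothing lemma} for hypothesis~(3). One slip: the approximator you want is $g\circ A$, not $A^{-1}\circ g\circ A$, since $g$ approximates $f\circ A^{-1}$ on $A(K)$ and hence $g\circ A$ approximates $f$ on $K$ (up to the Jacobian factor you mention).
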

\begin{proof}
it suffices to show that for any $f \in C^0(U, \ReD)$, $\varepsilon >0$, and compact subset $K \subset U$, there exists $g \in \mathcal{D}_{\ReD}^\infty$ such that
\[\supKnorm{g-f} < \varepsilon.\]
we may assume $U=\ReD$ and $K=[0,1]^d$.
then, we easily see that $\mathcal{D}_{\ReD}^\infty$ satisfies the three conditions in Lemma \ref{lem: simple Lp universality} (see Lemma \ref{smoothing lemma} for the third condition).
thus, it follows from Lemma \ref{lem: simple Lp universality}.
\end{proof}

We also obtain a stronger version of \eqref{main thm: A} in  Theorem \ref{theorem:main:1}:
\begin{theorem}
\label{thm:strong approximation property}
We use the same notation as in Theorem \ref{theorem:main:1}.
Assume the condition of \eqref{main thm: A} in Theorem \ref{theorem:main:1}.
Then, if $\INN{\ARFINNFlow} $ is an $L^p$ universal approximator for $\OneDimTriangular$, then it is an $L^p$-universal approximator for $C^0(U, \ReD)$ for any open subset $U \subset \ReD$.
\end{theorem}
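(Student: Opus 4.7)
The statement is essentially the $r=0$ instance of Theorem~\ref{theorem:main:1}\eqref{main thm: A}, strengthened by replacing the target class $\mathcal{D}^0$ with $C^0(U,\mathbb{R}^d)$. The strategy is to combine the decomposition pipeline used in the proof of Theorem~\ref{theorem:main:1} with one new ingredient: Corollary~\ref{cor: Lp universality of D1}, which reduces the approximation of a merely continuous map to that of a $C^\infty$-diffeomorphism on all of $\mathbb{R}^d$.

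Fix $f\in C^0(U,\mathbb{R}^d)$, a compact $K\subset U$, and $\varepsilon>0$. First, invoke Corollary~\ref{cor: Lp universality of D1} to produce $g\in\mathcal{D}^\infty_{\mathbb{R}^d}$ with $\|f-g\|_{K,0,p}<\varepsilon/2$. This is the one genuinely new step: a merely continuous target need not be injective or differentiable, so the downstream decomposition machinery—which presupposes at least $C^2$ regularity via Lemma~\ref{red to comp. supp. diff}—cannot be applied to $f$ directly, but it is applicable to the smooth diffeomorphism $g$.

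Next, rerun the decomposition pipeline from Theorem~\ref{theorem:main:1}. Since $g$ is $C^\infty$, Lemma~\ref{red to comp. supp. diff} provides $W\in\FLin$ and a compactly supported $h\in\DcRDCmd{\infty}$ with $W\circ h|_K=g|_K$. Lemma~\ref{lem: diffc2 is generated by flow endpoints} writes $h$ as a finite composition of flow endpoints in $\FlowEnds{\infty}$, and Theorem~\ref{thm: singles and permutations} further factorizes each flow endpoint into elements of $\CinftyOneDimTriangular=\OneDimTriangular$ and permutations of variables in $\mathfrak{S}_d\subset\FLin$. Hence $g|_K$ is a finite composition of affine maps, permutations, and single-coordinate smooth diffeomorphisms. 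Using the hypothesis that $\INN{\ARFINNFlow}$ is $L^p$-universal for $\OneDimTriangular$, approximate each single-coordinate factor by an element of $\INN{\ARFINNFlow}$ while retaining the affine and permutation factors verbatim; the resulting composition lies in $\INN{\ARFINNFlow}$ by definition.

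To turn these per-factor $L^p$-approximations into an $L^p$-approximation of the full composition, I will invoke Lemma~\ref{prop: compatibility of approximation}; its hypotheses require the composed factors to be piecewise $C^1$-diffeomorphisms that are linearly increasing, a condition automatic for affine maps, permutations of variables, and compactly supported single-coordinate diffeomorphisms (cf.\ Remark~\ref{rem:linealy_increasing}). Choosing each per-factor precision sufficiently small yields some $h_0\in\INN{\ARFINNFlow}$ with $\|h_0-g\|_{K,0,p}<\varepsilon/2$, and the triangle inequality then gives $\|h_0-f\|_{K,0,p}<\varepsilon$. The main technical obstacle is verifying the linearly increasing and piecewise $C^1$-diffeomorphism hypotheses of Lemma~\ref{prop: compatibility of approximation} for every intermediate factor, not just the final target; given the restricted form of the factors appearing here, this reduces to direct inspection.
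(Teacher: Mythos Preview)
Your proposal is correct and follows essentially the same route the paper implicitly uses: the theorem is stated immediately after Corollary~\ref{cor: Lp universality of D1} precisely because that corollary supplies the one new ingredient (reduce a continuous $f$ to a smooth diffeomorphism $g\in\mathcal{D}^\infty_{\mathbb{R}^d}$ in $L^p$), after which the decomposition pipeline of Theorem~\ref{theorem:main:1} (Lemmas~\ref{red to comp. supp. diff}, \ref{lem: diffc2 is generated by flow endpoints}, Theorem~\ref{thm: singles and permutations}) and the composition-stability Lemma~\ref{prop: compatibility of approximation} finish the job exactly as you describe. Your explicit check of the linearly-increasing and piecewise $C^1$-diffeomorphism hypotheses for the intermediate factors, and the observation that the approximants from $\INN{\ARFINNFlow}$ are piecewise $C^1$-diffeomorphisms by assumption~\eqref{main thm: A} and Proposition~\ref{prop: basic properties}\eqref{composition}, are the right details to verify.
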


\section{Universality of coupling-flow based INNs}
\label{appendix:sec:examples}
In this section, we give the proofs for the universal approximation properties of certain \ARFINNs{}.

\subsection{Using permutation matrices instead of \texorpdfstring{\(\FLin\)}{Aff} in the definition of \texorpdfstring{\(\INN{\ARFINNFlow}\)}{INNG}}
\label{sec:appendix:elementary matrix}
In terms of representation power, there is no essential difference if we substitute the general linear group in Definition~\ref{def: INNM} with the permutation group.
It comes from the fact that one can express the elementary operation matrices using affine coupling flows and permutations.
More formally, we have the following proposition.

\begin{proposition}
\label{prop:appendix: sign flip and permutations}
Assume that $\mathcal{H}$ includes all the  functions $\R^{d-1}\to \R$ of the following forms: $x\mapsto -x\cdot e_i$, $x\mapsto x\cdot e_i$, and $x \mapsto b$ (constant map),
where $b\in \R^{d-1}$ and $i=1,\cdots, d-1$. 
Then, we have 
\begin{align}
    \INN{\FSACFH}=\{ W_1\circ g_1 \circ \cdots \circ W_n \circ g_n ~:~ g_i\in\FSACFH, W_i\in \mathfrak{S}_d\},\label{INNHACF=HACF + permutation}
\end{align}
where $\mathfrak{S}_d$ is the permutation group of degree $d$.
\end{proposition}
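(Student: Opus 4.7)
The inclusion $\supseteq$ is immediate since $\mathfrak{S}_d \subset \FLin$. For the other direction, it suffices to show that every affine map $W \in \FLin$ can itself be written as a finite composition of elements drawn from $\FSACFH$ and $\mathfrak{S}_d$. Granted this, any word $W_1 \circ g_1 \circ \cdots \circ W_n \circ g_n \in \INN{\FSACFH}$ expands, under substitution, into a composition of factors from these two sets, and any gap between consecutive factors of the same type can be filled by the identity, which lies in both $\FSACFH$ (take $s=0$ and $t=0$, both constants available in $\mathcal{H}$) and $\mathfrak{S}_d$.

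Write $W(\x) = A\x + b$ with $A \in \mathrm{GL}_d(\R)$ and $b \in \R^d$, and treat the translation and linear parts separately. The ACF with $s=0$ and $t=c$ realizes $\x \mapsto \x + c e_d$, and translation by a multiple of any $e_i$ follows by conjugating with the swap $(i,d) \in \mathfrak{S}_d$; composing $d$ such single-coordinate translations yields the full translation by $b$. For the linear part $A$, I invoke the classical fact, provable by Gaussian elimination with pivoting, that $\mathrm{GL}_d(\R)$ is generated by permutation matrices, transvections $I + \lambda E_{ij}$, single-coordinate positive scalings, and one sign flip such as $\mathrm{diag}(-1, 1, \ldots, 1)$.

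I then realize each generator. Positive scaling $x_d \mapsto \mu x_d$ ($\mu>0$) is the ACF with constant $s = \log \mu$ and $t=0$; positive scalings of other coordinates come from conjugating with swap permutations. The unit shears $x_d \mapsto x_d \pm x_j$ ($j<d$) arise from $s=0$ and $t(y) = \pm y \cdot e_j \in \mathcal{H}$; shears acting on other coordinates follow again by permutation conjugation (so that the target slot is moved to position $d$ before the ACF is applied). A shear with arbitrary scalar coefficient $x_d \mapsto x_d + \lambda x_j$ ($\lambda \neq 0$) is then built by sandwiching a unit shear between two scalings of $x_d$: scale by $|\lambda|^{-1}$, apply the unit shear with sign $\sign{\lambda}$, then scale back by $|\lambda|$, yielding $x_d + \lambda x_j$.

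For the sign flip, use the identity
\[\begin{pmatrix} 0 & -1 \\ 1 & 0 \end{pmatrix} = \begin{pmatrix} 1 & -1 \\ 0 & 1 \end{pmatrix} \begin{pmatrix} 1 & 0 \\ 1 & 1 \end{pmatrix} \begin{pmatrix} 1 & -1 \\ 0 & 1 \end{pmatrix},\]
which writes the $90^\circ$ rotation on the $(x_{d-1}, x_d)$-plane as three unit shears (each realized as above); postcomposing with the swap $(d-1,d) \in \mathfrak{S}_d$ yields $\mathrm{diag}(1, \ldots, 1, -1)$, and sign flips on other coordinates follow by further conjugation with swaps. Combining all the above pieces expresses every $W \in \FLin$ as a composition of elements of $\FSACFH$ and $\mathfrak{S}_d$, which is the desired claim. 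The main care required throughout is bookkeeping: after each swap one must track which original coordinate now occupies position $d$, so that the subsequent ACF — acting only on the last coordinate — modifies the intended slot.
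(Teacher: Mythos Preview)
Your proof is correct and follows essentially the same strategy as the paper: both reduce to showing that every $W\in\FLin$ is a composition of elements of $\FSACFH$ and $\mathfrak{S}_d$, handle translations and positive scalings in the obvious way, and realize the sign flip as a product of unit shears. The only notable difference is cosmetic: the paper writes $\mathrm{diag}(-1,1)$ directly as a six-factor product alternating lower shears and swaps, whereas you use the three-shear identity for the $90^\circ$ rotation and then postcompose with one swap; you are also more explicit than the paper about how to obtain a shear with an arbitrary scalar coefficient via the scaling sandwich, which the paper leaves implicit in the sentence ``any lower triangular matrix with positive diagonals can be described by a composition of finite elements of $\FSACFH$.''
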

\begin{proof}
Since the multiplication of any permutation matrix is an affine transformation, the right-hand side of \eqref{INNHACF=HACF + permutation} is included in the left-hand side.

We prove the converse inclusion.
Since any translation operator (i.e., the addition of a constant vector) can be easily represented by the elements of $\FSACFH$ and permutations, it is enough to show that any element of ${\rm GL}(d,\R)$ can be realized by a finite composition of elements of $\FSACFH$ and $\mathfrak{S}_d$.
To show that, it is sufficient to consider only the elementary matrices. 
Row switching comes from $\mathfrak{S}_d$.
Moreover, element-wise sign flipping can be described by a composition of finite elements of $\FSACFH$. To see this, first observe that
\begin{align}
\left(
\begin{array}{cc}
-1&0\\
0&1
\end{array}
\right)
=
\left(
\begin{array}{cc}
1&0\\
1&1
\end{array}
\right)
\left(
\begin{array}{cc}
0&1\\
1&0
\end{array}
\right)
\left(
\begin{array}{cc}
1&0\\
-1&1
\end{array}
\right)
\left(
\begin{array}{cc}
0&1\\
1&0
\end{array}
\right)
\left(
\begin{array}{cc}
1&0\\
1&1
\end{array}
\right)
\left(
\begin{array}{cc}
0&1\\
1&0
\end{array}
\right)
\nonumber
\end{align}
holds.
Here, the linear transforms
\[
\left(
\begin{array}{cc}
1&0\\
-1&1
\end{array}
\right),
\left(
\begin{array}{cc}
1&0\\
1&1
\end{array}\right)\]
are realized by the $\FSACFH$ layers
\[
(x, y) \mapsto (x, y - x),
\quad
(x, y) \mapsto (x, y + x),
\]
respectively.
Now, any lower triangular matrix with positive diagonals can be described by a composition of finite elements of $\FSACFH$.
Therefore, any diagonal matrix whose components are $\pm1$ can be described by a composition of elements in $\FSACFH$ and $\mathfrak{S}_d$.
Therefore, any affine transform is an element of the right-hand side of \eqref{INNHACF=HACF + permutation}.
\end{proof}

This result implies that employing \(\FLin\) in Definition~\ref{def: INNM} instead of the permutation matrices is not an essential requirement for the universal approximation properties to hold.
For this reason, we believe that the empirically reported difference in the performances of Glow~\cite{KingmaGlow2018} and RealNVP~\cite{DinhDensity2016a} is mainly in the efficiency of approximation rather than the capability of approximation.

\subsection{Affine coupling flows (ACFs)}
\label{appendix: theorem 2 proof}
In this section, we provide the proof details of Theorem~\ref{theorem:main:2} in the main text.

\subsubsection{Proof of Theorem~\ref{prop:body:acfinn-Lp}: \texorpdfstring{\(L^p\)}{Lp}-universality of \texorpdfstring{\ACFHINN{}}{INNHACF}}
\label{sec:appendix:lp ACFINN approx general}
In this section, we prove the following lemma to construct an approximator for an arbitrary element of $\CzeroOneDimTriangular$ (hence for $\CinftyOneDimTriangular$) within \ACFHINN{}.
It is based on Lemma~\ref{lem: univ. approx.} proved in Section~\ref{sec:appendix:coordinate-wise independent}, which corresponds to a special case.

Here, we rephrase Theorem~\ref{theorem:main:2} as the following:
\begin{lemma}[$L^p$-universality of \ACFHINN{} for compactly supported $\CinftyOneDimTriangular$]
\label{lem:appendix:lp-univ for ACF}
Let \(p \in [1, \infty)\).
Assume \(\ACFINNUniversalClass\) is an \(L^\infty\)-universal approximator for \(\CcinftyRDminus\) and that it consists of piecewise \(C^1\)-functions.
Let $f \in \CzeroOneDimTriangular$, $\varepsilon>0$, and $K\subset \R^d$ be a compact subset.
Then, there exists $g \in \FACFHINN$ such that $\LpKnorm{f - g} < \varepsilon$.
\end{lemma}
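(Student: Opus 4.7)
The plan is to reduce the approximation of a general $f \in \CzeroOneDimTriangular$ --- which acts as $(x_1, \ldots, x_{d-1}, f_d(\bm x))$ with $f_d$ monotone in $x_d$ and equal to $x_d$ outside a compact set --- to finitely many coordinate-wise independent targets in $x_d$, each handled by the special case Lemma~\ref{lem: univ. approx.}, and to glue these into a single INN using the $L^\infty$-density of $\ACFINNUniversalClass$ in $C^0(\R^{d-1})$ together with the compatibility of approximation and composition (Lemma~\ref{prop: compatibility of approximation}). Concretely, I first fix a compact cube $Q \subset \R^{d-1}$ containing the projection of $K \cup \mathrm{supp}(f - \Identity)$ onto the first $d-1$ coordinates, partition $Q$ into finitely many small cells $\{Q_\alpha\}_{\alpha=1}^N$ with reference points $p_\alpha \in Q_\alpha$, and define the piecewise coordinate-wise independent surrogate
\[
\tilde f(\bm x) := (x_1, \ldots, x_{d-1}, \tau_\alpha(x_d)) \quad \text{for } x_{\le d-1} \in Q_\alpha,\ \tau_\alpha := f_d(p_\alpha, \cdot).
\]
Uniform continuity of $f_d$ on bounded sets ensures $\LpKnorm{f - \tilde f} < \varepsilon / 2$ provided the partition is fine enough.

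Next, on each cell, Lemma~\ref{lem: univ. approx.} produces $g^\alpha \in \INN{\FSACFH}$ that approximates the coordinate-wise independent map $\bm x \mapsto (x_1, \ldots, x_{d-1}, \tau_\alpha(x_d))$ in $L^p$ on the portion of $K$ above $Q_\alpha$. After padding with identity layers, all the $g^\alpha$ can be assumed to share a common layer structure $W_1 \circ h_1^\alpha \circ \cdots \circ W_n \circ h_n^\alpha$ with $\alpha$-independent affine parts $W_i \in \FLin$; only the ACF parameters $(s_i^\alpha, t_i^\alpha)$ inside $h_i^\alpha$ vary with the cell index.

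I then assemble a single $g \in \INN{\FSACFH}$ by choosing, for each layer $i$, a pair $(s_i, t_i) \in \ACFINNUniversalClass \times \ACFINNUniversalClass$ whose restriction to each $Q_\alpha$ uniformly approximates $(s_i^\alpha, t_i^\alpha)$. The cellwise-constant target naturally extends to a piecewise continuous function on $Q$ that, after smoothing across transition regions of arbitrarily small total Lebesgue measure, lies in $C^0(\R^{d-1})$ and is therefore $L^\infty$-approximable by $\ACFINNUniversalClass$. Applying Lemma~\ref{prop: compatibility of approximation} layer-by-layer --- which guarantees that small $L^p$ perturbations of the ACF parameters induce small $L^p$ perturbations of the composed network --- yields $\LpKnorm{\tilde f - g} < \varepsilon / 2$, and the triangle inequality combined with Step~1 completes the proof.

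The main obstacle is this assembly step. A priori the $g^\alpha$ are independent approximators, and forcing them to arise from a single sequence of globally defined ACF parameters while controlling the composition error across cell boundaries is delicate. The hypotheses on $\ACFINNUniversalClass$ (piecewise $C^1$ regularity and $L^\infty$-universality for $C^0(\R^{d-1})$) together with Lemma~\ref{prop: compatibility of approximation} are precisely what is needed, but a careful measure-theoretic bookkeeping of the transition-region contributions is required: their Lebesgue measure can be made arbitrarily small, so their contribution to the $L^p$-error is negligible, provided the layers are linearly increasing and piecewise $C^1$-diffeomorphic so that the compatibility lemma applies.
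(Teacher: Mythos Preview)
Your overall plan---partition the base $\R^{d-1}$ into cells $Q_\alpha$, freeze $\bm{x}_{\le d-1}$ at a reference point $p_\alpha$ to get a one-dimensional target $\tau_\alpha(x_d)=f_d(p_\alpha,x_d)$, and then invoke Lemma~\ref{lem: univ. approx.}---matches the first move in the paper's proof. The divergence, and the gap, is in your assembly step.

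The gap is structural, not just bookkeeping. Look at how Lemma~\ref{lem: univ. approx.} actually builds its approximator: the maps $\psi_0,\psi_1,\psi_3$ modify the $(d-1)$-st coordinate, so as elements of $\INN{\FSACFH}$ they are conjugates $\sigma\circ\Psi_{d-1,s,t}\circ\sigma$ by the swap $\sigma$ of $x_{d-1}$ and $x_d$. Consequently their parameter functions $s,t$ are evaluated at $(\bm{x}_{\le d-2},\,\text{current }x_d)$, and \emph{never} see the original $x_{d-1}$. Thus when you try to make a global $(s_i,t_i)$ whose restriction to $Q_\alpha\subset\R^{d-1}$ agrees with $(s_i^\alpha,t_i^\alpha)$, the parameter function simply has no access to the $(d-1)$-st component of the cell index; for $d=2$ it has no access to the cell index at all. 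This is not an accident of the particular construction: a stack of layers from $\FSACFH$ that fixes $\bm{x}_{\le d-1}$ identically can only produce an affine-in-$x_d$ map, so any nonlinear $x_d\mapsto\tau_\alpha(x_d)$ \emph{must} be realized by routing through $x_{d-1}$ via permutations, destroying the visibility of the original $x_{d-1}$ to subsequent ACF parameters. Your appeal to Lemma~\ref{prop: compatibility of approximation} does not help here, because the issue is not error propagation but that the target parameter functions you want to approximate are not even functions of the variables available at each layer.

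The paper closes exactly this gap by a conjugation trick that you are missing: after normalizing so that $u(\bm{x},0)=0$ and $u(\bm{x},1)=1$, it introduces a (discontinuous) single-coordinate ACF $\psi_n(\bm{x},y)=(\bm{x},\,y+(\underline{k})_n)$ that shifts the last coordinate by an integer encoding of the cell, and defines a single increasing $v_n:\R\to\R$ that on $[(\underline{k})_n,(\underline{k})_n+1)$ equals $u(p_{\underline{k}},\,\cdot-(\underline{k})_n)+(\underline{k})_n$. Then $f_n:=\psi_n^{-1}\circ(\bm{x},v_n(\cdot))\circ\psi_n$ realizes your cell-wise surrogate $\tilde f$ exactly on $K$, and each of the three factors is approximable: $\psi_n,\psi_n^{-1}$ in $L^p$ by a single $\FSACFH$ layer (step function $t$ approximated via $L^\infty$-density of $\ACFINNUniversalClass$ in $C^0$ after smoothing), and the middle factor by Lemma~\ref{lem: univ. approx.}. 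This unrolls all cells into disjoint intervals of the \emph{last} coordinate, so one one-dimensional map handles every cell simultaneously and no gluing is needed.
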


\begin{proof}
Since we can take $a>0$, $b\in \R$ satisfying $aK+b \subset [0,1]^d$,
it is enough to prove the assertion for the case $K=[0,1]^d$.

Next, we show that we can assume that
for any $(\bm{x},y)\in \R^d$, $u(\bm{x},0)=0$ and $u(\bm{x},1)=1$ for any $\bm{x}\in \R^{d-1}$. 
Since $u(\bm{x}, \cdot)$ is a homeomorphism, we have $u(\bm{x}, 0) \not = u(\bm{x}, 1)$ for any $x\in \mathbb{R}$.
By the continuity of $f$, either of $u(\bm{x}, 0) > u(\bm{x}, 1)$ for all $\bm{x}\in [0, 1]^{d-1}$ or $u(\bm{x}, 0) < u(\bm{x}, 1)$ for all $x\in [0, 1]^{d-1}$ holds.
Without loss of generality, we assume the latter case holds (if the former one holds, we just switch $u(\bm{x},0)$ and $u(\bm{x},1)$).
 We define $s(\bm{x}) = -\log(u(\bm{x}, 1) - u(\bm{x}, 0))$ and $t(\bm{x}) = -u(\bm{x}, 0)(u(\bm{x}, 1)-u(\bm{x}, 0))^{-1}$.
By a direct computation, we have
\[
    \Psi_{d-1, s, t} \circ f(\bm{x}, y) = \left(\bm{x}, \frac{u(\bm{x}, y) - u(\bm{x}, 0)}{u(\bm{x}, 1) - u(\bm{x}, 0)}\right) =: (\bm{x}, u_0(\bm{x}, y)).
\]
In particular, $\Psi_{s,t} \circ f(\bm{x}, 0) = (\bm{x}, 0)$ and $\Psi_{s,t}\circ s (\bm{x}, 1) = (\bm{x}, 1)$ hold.
, and the map $y \mapsto u_0(\bm{x}, y)$ is a diffeomorphism for each $\bm{x}$. 
Thus if we prove the existence of an approximator for $\Psi_{s,t} \circ f$, by Proposition \ref{prop: compatibility of approximation}, we can arbitrarily approximate $f$ itself.

For $\underline{k}:=(k_1,\dots,k_{d-1})\in\mathbb{Z}^{d-1}$ and $n\in \mathbb{N}$, we define $(\underline{k})_n := \sum_{i=1}^d k_i n^{i-1}\in \{0, \ldots, n^d-1\}$, that is, $\underline{k}$ is the $n$-adic expansion of $(\underline{k})_n$.
For any $n\in \mathbb{N}$, define the following discontinuous \ACF{}:
$\psi_n \colon [0,1]^d\to [0,1]^{d-1}\times [0,n^d]$ by 
\[\psi_n(\bm{x},y):=\left(\bm{x}, y+\sum_{k_1,\cdots, k_{d-1}=0}^{n-1} (\underline{k})_n 1_{\Delta^n_{\underline{k} + 1}}(\bm{x}) \right),\]
where $\underline{k} := (k_1, \ldots, k_d)$ and $\underline{k} + 1 := (k_1+1, \ldots, k_d+1)$.
We take an increasing function $v_n\colon \R\to \R$ that is 
smooth outside finite points such that 
\[
v_n(z):=
\begin{cases}
u\left(\frac{k_1}{n},\cdots, \frac{k_{d-1}}{n}, z-(\underline{k})_n\right)+(\underline{k})_n & \text{ if }z\in [(\underline{k})_n, (\underline{k})_n + 1) \\
z &\text{ if }z\notin[0,n^d).
\end{cases}
\]
We consider maps $h_n$ on $[0,1]^{d-1}\times [0,n^d]$ 
and $f_n: [0,1]^d\to [0,1]^d$ defined by 
\begin{align*} 
h_n(\bm{x},z)&:=(\bm{x},v_n(z)),\\
f_n&:=\psi_n^{-1}\circ h_n\circ \psi_n. 
\end{align*}
Then we have the following claim.\\
\textbf{Claim.} For all $k_1, \cdots, k_{d-1}=0,\cdots, n-1$, we have
\begin{equation*}
    f_n(\bm{x},y)=\left(\bm{x},u\left(\frac{k_1}{n},\ldots, \frac{k_{d-1}}{n}, y\right)\right)
\end{equation*}
on $\prod_{i=1}^{d-1}[\frac{k_i}{n},\frac{k_i+1}{n})\times [0,1)$.

In fact, we have
\begin{align*}
f_n(\bm{x},y)&=\psi_n^{-1}\circ h_n\circ \psi_n(\bm{x},y)\\
&=\psi_n^{-1}\circ h_n(\bm{x},y+(\underline{k})_n)\\
&=\psi_n^{-1}(\bm{x},v_n(y+(\underline{k})_n))\\
&=\psi_n^{-1}\left(\bm{x},u\left(\frac{k_1}{n},\ldots, \frac{k_{d-1}}{n},y\right)+(\underline{k})_n\right)\\
&=\left(\bm{x},u\left(\frac{k_1}{n},\ldots, \frac{k_{d-1}}{n}, y\right)\right). 
\end{align*}
Therefore, the claim above has been proved. 
Hence we see that $\supKnorm{f-f_n}\rightarrow 0$ as $n\rightarrow\infty$.
By Lemma~\ref{lem: univ. approx.} below and the universal approximation property of \(\ACFINNUniversalClass\), for any compact subset $K$ and $\varepsilon > 0$, there exist $g_1, g_2, g_3\in \FACFHINN$ such that $\LpKnorm{g_1-\psi_n^{-1}}<\varepsilon$, $\LpKnorm{g_2-h_n}<\varepsilon$, and $\LpKnorm{g_3-\psi_n}<\varepsilon$. Thus by Proposition \ref{prop: compatibility of approximation}, for any compact $K$ and $\varepsilon>0$, there exists $g\in \FACFHINN$ such that $\LpKnorm{g-f}<\varepsilon$.
\end{proof}

\subsubsection{Special case: Approximation of coordinate-wise independent transformation}
\label{sec:appendix:coordinate-wise independent}
In this section, we show the lemma claiming that special cases of single-coordinate transformations, namely coordinate-wise independent transformations, can be approximated by the elements of \ACFHINN{} given sufficient representational power of $\ACFINNUniversalClass$.

\begin{lemma}
\label{lem: univ. approx.}
Let \(p \in [1, \infty)\).
Assume \(\ACFINNUniversalClass\) is an \(L^\infty\)-universal approximator for \(\CcinftyRDminus\) and that it consists of piecewise \(C^1\)-functions.
Let $u:\mathbb{R}\rightarrow\mathbb{R}$ be a continuous increasing function.
Let $f:\mathbb{R}^d\rightarrow\mathbb{R}^d;(\bm{x}, y)\mapsto(\bm{x}, u(y))$ where $\bm{x}\in \R^{d-1}$ and $y\in\R$.
For any compact subset $K\subset\mathbb{R}^d$ and $\varepsilon>0$, there exists $g\in \FACFHINN$ such that $\LpKnorm{f - g} < \varepsilon$.
\end{lemma}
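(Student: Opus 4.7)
The plan is to exhibit $g \in \FACFHINN$ approximating $f$ in $L^p(K)$ via an explicit composition of $\mathcal{H}$-ACF layers and permutations, then invoke Proposition~\ref{prop: compatibility of approximation}. I would first reduce to a smoother target: because $u$ is continuous and strictly increasing on the (compact) projection of $K$ onto the $y$-axis, it can be uniformly approximated there by a smooth strictly increasing $\tilde u$ equal to the identity outside a compact interval; this introduces only a small $L^p(K)$-error and reduces the problem to approximating $\tilde f(\bm{x}, y) = (\bm{x}, \tilde u(y))$.

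Writing $\sigma \in \FLin$ for the permutation that swaps the first and last coordinates, I would construct, for a large parameter $M$, the composition
\[
g^M \;=\; \sigma \circ \Aff{d-1}{s_4}{t_4} \circ \Aff{d-1}{s_3}{t_3} \circ \sigma \circ \Aff{d-1}{s_2}{t_2} \circ \sigma \circ \Aff{d-1}{s_1}{t_1} \circ \sigma,
\]
with each $s_j, t_j$ depending only on the first of the $d-1$ conditioning coordinates and being a smooth compactly supported truncation of $s_1 \equiv -M$, $t_1(\xi) = \xi$, $s_2 \equiv 0$, $t_2(\xi) = \tilde u(\xi) - \xi$, $s_3 \equiv M$, $t_3(\xi) = -e^M \tilde u^{-1}(\xi)$, $s_4(\xi) = \log \tilde u'(\tilde u^{-1}(\xi))$, and $t_4 \equiv 0$. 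Since each $\sigma$ affects only the first coordinate $x_1$ and the last coordinate $y$, leaving $x_2, \ldots, x_{d-1}$ untouched, a direct Taylor expansion in $\eta := x_1 e^{-M}$ around $y$ shows $g^M(\bm{x}, y) = (\bm{x} + O(e^{-M}), \tilde u(y) + O(e^{-M}))$ uniformly on $K$: the first three sandwiches encode $y$ into the first coordinate, apply the shift $\tilde u - \mathrm{id}$ on the last, and partially decode the first to $x_1/\tilde u'(y) + O(e^{-M})$; the fourth, purely multiplicative, layer rescales by $\tilde u'(\tilde u^{-1}(\cdot)) \approx \tilde u'(y)$ to recover $x_1$. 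Taking $M$ large makes $\|g^M - \tilde f\|_{K,0,p} < \varepsilon/2$.

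Finally, the $L^\infty$-universality of $\mathcal{H}$ for $C^\infty_c(\mathbb{R}^{d-1})$ lets me replace each $s_j, t_j$ by an element of $\mathcal{H}$ arbitrarily close in $L^\infty$ on a compact set containing all intermediate values produced from $K$. Each resulting ACF is a linearly increasing piecewise $C^1$-diffeomorphism, so Proposition~\ref{prop: compatibility of approximation} applies and yields $g \in \FACFHINN$ with $\|g - g^M\|_{K,0,p} < \varepsilon/2$, hence $\|g - f\|_{K,0,p} < \varepsilon$. The main obstacle is the design of the fourth, purely multiplicative, layer: a naive three-layer sandwich with $(s_1, s_2, s_3) = (-M, 0, M)$ leaves a residual $1/\tilde u'(y)$-factor in the recovered first coordinate that does \emph{not} vanish with $M$, and the fourth layer is essential for cancelling it; controlling the $O(e^{-M})$ bound uniformly on $K$ requires careful tracking of higher-order Taylor terms of $\tilde u$ and $\tilde u^{-1}$.
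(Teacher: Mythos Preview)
Your approach is correct and follows essentially the same route as the paper: reduce to a smooth, compactly supported $\tilde u$ (the paper's Lemma~\ref{smoothing lemma} and Lemma~\ref{red to comp. supp. diff}), build a four-layer affine-coupling composition depending on a small parameter (your $e^{-M}$ is the paper's $\delta$) whose first-order Taylor expansion in that parameter yields $\tilde f + O(\delta)$ uniformly on $K$, and conclude via Proposition~\ref{prop: compatibility of approximation}. The only non-cosmetic difference is where the derivative correction sits: the paper applies it \emph{first} (its $\psi_0$ pre-multiplies the auxiliary coordinate by $u'(y)$), so that the remaining $s,t$ in $\psi_0,\psi_1,\psi_3$ are directly in $C_c^\infty(\mathbb{R}^{d-1})$ and $\psi_2$ is realized as an element of $\FGL$, whereas your placement at the end forces the additional truncation step for $s_1,t_1,s_3,t_3$.
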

\begin{proof}
We may assume without loss of generality, in light of Lemma~\ref{smoothing lemma}, that $u$ is a $C^\infty$-diffeomorphism on $\R$ and that the inequality $u'(y)>0$ holds for any $y\in \mathbb{R}$.
Furthermore, we may assume that $u$ is compactly supported (i.e., $u(y) = y$ outside a compact subset of $\Re$) without loss of generality because we can take a compactly supported diffeomorphism $\tilde u$ and $a, b \in \Re$ ($a \neq 0$) such that $a \tilde{u} + b = u$ on any compact set containing $K$ by Lemma~\ref{red to comp. supp. diff}, and the scaling $a$ and the offset $b$ can be realized by the elements of \ACFHINN{}.

Fix $\delta \in (0,1)$.  We define the following functions:
\begin{align*}
\psi_0(\x,y):&=(\upto{d-2}{\x}, u'(y) x_{d-1}, y)\\
&= (\upto{d-2}{\x}, \exp(\log u'(y)) x_{d-1}, y),\\
\psi_1(\x,y):&=\left(\upto{d-2}{\x}, x_{d-1} + \delta^{-1}(u(y) - y), y\right),\\
\psi_2(\x,y):&=(\upto{d-2}{\x}, x_{d-1}, y+\delta x_{d-1}),\\
\psi_3(\x,y):&=\left(\upto{d-2}{\x}, x_{d-1}-\delta^{-1}(y-u^{-1}(y)), y\right),
\end{align*}
where we denote $\bm{x}=(x_1,\dots,x_{d-1})\in\R^{d-1}$.
First, we show that $\supKnorm{f - \psi_3\circ\psi_2\circ\psi_1\circ\psi_0} \to 0$ as $\delta \to 0$.
By a direct computation, we have
\begin{align*}
    \psi_3\circ\psi_2\circ\psi_1(\x,y)
    &= \psi_3\circ\psi_2(\upto{d-2}{\x}, x_{d-1} +\delta^{-1}(u(y) - y), y)\\
    &= \psi_3(\upto{d-2}{\x}, x_{d-1}+\delta^{-1}(u(y) - y), y+\delta(x_{d-1}+\delta^{-1}(u(y)-y)))\\
    &= \psi_3(\upto{d-2}{\x}, x_{d-1}+\delta^{-1}(u(y)-y),\delta x_{d-1} + u(y))\\
    &= (\upto{d-2}{\x}, x_{d-1}-\delta^{-1}(\delta x_{d-1}+u(y)-u^{-1}(\delta x_{d-1} + u(y))), \delta x_{d-1} + u(y))\\
    &= (\upto{d-2}{\x}, \delta^{-1}u^{-1}(\delta x_{d-2} + u(y)) - \delta^{-1}y, u(y) + \delta x_{d-1}),
\end{align*}
where $\x=(x_1,\dots,x_{d-1})\in\R^{d-1}$.
Since $u\in C^\infty([-r, r])$ where $r = \max_{(\x, y) \in K} |y|$, by applying Taylor's theorem, there exists a function $R(\x, y; \delta)$ and $C = C([-r, r], u) > 0$ such that
\[u^{-1}(u(y)+\delta x)=y+u'(y)^{-1}\delta x + R(\x,y; \delta)(\delta x)^2 \quad\text{ and }\quad \sup_{\delta\in (0,1)}|R(\x, y; \delta)| \le C\]
for all $(\x, y)\in K$.
Therefore, we have 
\[\psi_3\circ\psi_2\circ\psi_1\circ\psi_0(\x, y)=(\bm{x},u(y))+\delta(R(\x,u'(y)x_{d-1};\delta)\upto{d-1}{\x},u'(y)x_{d-1}).\]
For any compact subset $K$, the last term uniformly converges to 0 as $\delta\rightarrow 0$ on $K$.

Assume $\delta$ is taken to be small enough.
Now, we approximate $\psi_3 \circ \cdots \circ \psi_0$ by the elements of \ACFHINN{}.
Since $u$ is a compactly-supported $C^\infty$-diffeomorphism on $\Re$, the functions $(\upto{d-2}{\x}, y) \mapsto \log u'(y)$, $(\upto{d-2}{\x}, y) \mapsto u(y) - y$, and $(\upto{d-2}{\x}, y) \mapsto y - u^{-1}(y)$, each appearing in $\psi_0$, $\psi_1$, $\psi_3$, respectively, belong to \(\CcinftyRDminus\).
On the other hand, $\psi_2$ can be realized by $\FGL \subset \FLin$.
Therefore, combining the above with the fact that $\ACFINNUniversalClass$ is a \(L^\infty\)-universal approximator for \(\CcinftyRDminus\), we have that for any compact subset $K'\subset\ReD$ and any $\varepsilon > 0$, there exist $\phi_0$, $\ldots, \phi_3\in\FACFHINN$ such that $\supRangenorm{K'}{\psi_i - \phi_i} < \varepsilon$.
In particular, we can find $\phi_0, \ldots, \phi_3\in\FACFHINN$ such that $\LpRangenorm{K'}{\psi_i - \phi_i} < \varepsilon$.

Now, recall that $\ACFINNUniversalClass$ consists of piecewise $C^1$-functions as well as $\psi_i$ ($i =0, \ldots, 3$).
Moreover, $\psi_0, \psi_1, \psi_3$ are compactly supported while $\psi_2 \in \FGL$, hence they are Lipschitz continuous outside a bounded open subset.
Therefore, by Proposition~\ref{prop: compatibility of approximation}, we have the assertion of the lemma.

\end{proof}

The following Lemma~\ref{smoothing lemma} is used above when reducing the approximation problem from \(\CtwoOneDimTriangular\) to \(\CinftyOneDimTriangular\).
\begin{definition}
We say that a map \(f: \ReD \to \Re\) is \emph{last-increasing} (resp. \emph{last-non-decreasing}) if, for any $(a_1,\dots,a_{d-1})\in\R^{d-1}$, the function \(f(a_1, \ldots, a_{d-1}, x)\) is strictly increasing (resp. non-decreasing) with respect to $x$.
\end{definition}
\begin{lemma}\status{Tojo added a proof}
\label{smoothing lemma}
Let $r \ge 0$ be an integer, and let $p \in [1,\infty]$.
Let $\tau\colon \R^d\rightarrow \R$ be a last-non-decreasing measurable function.
We assume that $\tau$ is locally $C^{r-1,1}$-function if $r\ge1$ or locally $L^\infty$ if $r=0$.
Then for any compact subset $K\subset \R^d$ and any $\varepsilon>0$, there exists a last-increasing \(C^\infty\)-function $\tilde{\tau}\colon\mathbb{R}^d\rightarrow \R$ satisfying
 \[ \|\tau-\tilde{\tau}\|_{K,r,p}<\varepsilon. \]
\end{lemma}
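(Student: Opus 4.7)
The plan is to combine mollification with an arbitrarily small linear perturbation in $x_d$ to simultaneously gain smoothness and strict monotonicity in the last coordinate.

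First I would fix a nonnegative standard mollifier $\phi\in C^\infty_c(\R^d)$ with $\int\phi=1$ and $\operatorname{supp}\phi\subset B(0,1)$, set $\phi_\delta(x):=\delta^{-d}\phi(x/\delta)$, and define $\tau_\delta:=\tau*\phi_\delta$. The key observation is that convolution with a nonnegative kernel preserves the last-non-decreasing property: for $h>0$,
\[
\tau_\delta(\bm{x},y+h)-\tau_\delta(\bm{x},y)=\int \phi_\delta(\bm{z}',z_d)\bigl[\tau(\bm{x}-\bm{z}',y+h-z_d)-\tau(\bm{x}-\bm{z}',y-z_d)\bigr]\,d\bm{z}'dz_d\ge 0,
\]
whence $\partial_d\tau_\delta\ge 0$ pointwise. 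Then I would set $\tilde\tau(\bm{x},y):=\tau_\delta(\bm{x},y)+\eta\,y$ for some $\eta>0$ to be chosen. By construction $\tilde\tau\in C^\infty(\R^d)$ and $\partial_d\tilde\tau\ge \eta>0$, so $\tilde\tau$ is last-increasing on all of $\R^d$.

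For the error estimate I would use the triangle inequality
\[
\|\tau-\tilde\tau\|_{K,r,p}\le \|\tau-\tau_\delta\|_{K,r,p}+\eta\,\|y\|_{K,r,p},
\]
where $\|y\|_{K,r,p}$ is a finite constant depending only on $K$, $r$, $p$. Choosing $\eta$ small makes the second summand $<\varepsilon/2$, and it then remains to send $\delta\to 0$ in the first summand. Here I would invoke standard mollification theory: the regularity hypothesis places $\tau$ in $W^{r,p}_{\mathrm{loc}}$, so $\partial^\alpha\tau_\delta=(\partial^\alpha\tau)*\phi_\delta$ for $|\alpha|\le r$, and these convolutions converge to $\partial^\alpha\tau$ in $L^p_{\mathrm{loc}}$.

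The hard part is the case $p=\infty$. For $r\ge 1$ the assumption that $\tau$ is locally $C^{r-1,1}$ means derivatives of order $\le r-1$ are (locally) Lipschitz, hence continuous, so their mollifications converge uniformly on $K$; the $r$-th derivatives, however, lie only in $L^\infty_{\mathrm{loc}}$, where mollification does not converge in norm in general. I would handle this by noting that the uniform bound $\|\partial^\alpha\tau_\delta\|_{\sup,K}\le \|\partial^\alpha\tau\|_{\sup,K'}$ (for a slightly larger $K'$) together with uniform convergence of lower-order derivatives is enough for the downstream applications of this lemma, or by absorbing the top-order error into the perturbation term by working with $r-1$ and noting that the extra $\eta y$ adds no $r$-th derivative. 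The $r=0$, $p=\infty$ case requires $\tau$ to be continuous (which holds in every application in the paper, e.g.\ the continuous increasing $u$ in Lemma~\ref{lem: univ. approx.}), since $L^\infty$-approximation of discontinuous functions by smooth ones is impossible; under continuity, uniform convergence $\tau_\delta\to\tau$ on $K$ follows from uniform continuity on a compact neighborhood of $K$.
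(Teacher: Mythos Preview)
Your approach differs from the paper's in the monotonicity step. The paper chooses a mollifier $\phi$ that is even and radially decreasing in the last coordinate and then argues that $\tau_t=\phi_t*\tau$ is already strictly increasing in $x_d$; in doing so it writes ``since $\tau$ is strictly increasing,'' which is stronger than the stated hypothesis (last-non-decreasing). Your device of adding $\eta\,y$ after mollifying is cleaner and genuinely handles the non-decreasing case: mollification alone cannot upgrade non-decreasing to strictly increasing when $\tau$ is constant in $x_d$ over a region larger than $\operatorname{supp}\phi_\delta$. So on this point your route is the more robust one.

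Your identification of the $p=\infty$ obstruction at the top order is correct, and your honesty about it is appropriate, but the two workarounds you propose do not prove the lemma as stated. ``Enough for downstream applications'' is not a proof of the lemma, and ``work with $r-1$'' does not recover the $W^{r,\infty}$ estimate. The underlying fact is that smooth functions are not dense in $W^{r,\infty}_{\mathrm{loc}}$: a locally $C^{r-1,1}$ function has $r$-th weak derivatives that are merely $L^\infty$, and mollifications of a discontinuous $L^\infty$ function do not converge in $L^\infty$. The paper's $p=\infty$ argument writes $\sup_{(x,y)}|\partial^\alpha\tau(y-tx)-\partial^\alpha\tau(y)|\to 0$ for $|\alpha|\le r$, which tacitly assumes continuity of $\partial^\alpha\tau$ at the top order and thus has the same gap. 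In practice the lemma is only invoked in the paper with $r=0$, $p<\infty$ (e.g.\ in the proof of Lemma~\ref{lem: univ. approx.} and Corollary~\ref{cor: Lp universality of D1}), and in that regime your argument is complete; your remark that the $r=0$, $p=\infty$ case needs continuity of $\tau$ is also correct, and again matches how the lemma is actually used.
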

\begin{proof}
Let $\phi:\R^d\rightarrow\R$ be a compactly supported non-negative \(C^\infty\)-function with $\int |\phi(x)|dx =1$ such that for any $(a_1,\dots,a_{d-1})\in\R^{d-1}$, the function $\phi(a_1,\dots,a_{d-1}, x)$ of $x$ is even and decreasing on $\{x>0 : \phi(a_1,\dots,a_{d-1},x)>0\}$.
For $t>0$, we define $\phi_t(x):=t^{-d}\phi(x/t)$. Then we see that $\tau_t:=\phi_t*\tau$ is a \(C^\infty\)-function. 
We take any $\bm{a}\in \R^{d-1}$. 
We verify that $\tau_t(\bm{a}, x_d)$ is strictly increasing with respect to $x_d$.
Take any $x_d, x_d'\in \R$ satisfying $x_d>x_d'$. 
Since $\tau$ is strictly increasing, we have 
\begin{align*}
\tau_t(\bm{a}, x_d)-\tau_t(\bm{a}, x_d')
&=\int_{\R^d} \phi_t(x) (\tau((\bm{a},x_d)-x)-\tau(( \bm{a},x_d')-x))dx>0. 
\end{align*}
Thus for any $(a_1,\dots,a_{d-1})\in\R^{d-1}$, 
the \(C^\infty\)-function $\tau_t(a_1,\dots,a_{d-1},x)$ is strictly increasing for with respect to $x$. 

Assume $p<\infty$. 
Take any compact subset $K\subset \R^d$. 
We show
$\|\tau_t-\tau\|_{K,r,p}\to 0$ as $t\to 0$.  
We prove $\tau_t$ converges $\tau$ as $t\rightarrow 0$.  
Take $R>0$ satisfying $K\subset B(R):=\{x\in \R^d : |x|\leq R \}$. 
We assume $0<t<1$. 
Then we have $\phi_t*\tau=\phi_t*(\Indicator{B(R+1)}\tau)$. 
Since we have $\Indicator{B(R+1)}\tau\in L^p(\R^d)$, we obtain 
\begin{align*}
\| \phi_t * \tau-\tau\|_{K,r,p}
&=\sum_{|\alpha| \le r}\|\phi_t* (\Indicator{B(R+1)}\partial_\alpha \tau)-\Indicator{B(R+1)}\partial_\alpha \tau \|_{K,0,p}\\
&=\sum_{|\alpha| \le r}\|\phi_t* (\Indicator{B(R+1)}\partial_\alpha \tau)-\Indicator{B(R+1)}\partial_\alpha \tau \|_{\ReD,0,p} \to 0 \quad (t\to 0). 
\end{align*}
Here, we used a property of mollifier $\phi_t$ (see Theorem~8.14 in \cite{FollandReal1999} for example). 

In the case of $p=\infty$, by direct computation, we have
\begin{align*}
|\tau_t-\tau|_{K,r,\infty}
& \le C \sum_{|\alpha|\le r}\sup_{(x,y)\in {\rm supp}(\phi)\times K} |\partial_\alpha\tau(y-tx)-\partial_\alpha\tau(y)|
\to 0\quad (t\to 0). 
\end{align*}
Here $C:=\sup_{x\in \R^d}|\phi(x)|$. 
Thus in both cases above,  
By taking sufficiently small $t$, we obtain the desired \(C^\infty\)-function $\tilde{\tau}=\tau_t$. 
\end{proof}

\subsection{Neural autoregressive flows (NAFs)}
In this section, we prove that \emph{neural autoregressive flows} \cite{HuangNeural2018b} yield $\sup$-universal approximators for $\ConeOneDimTriangular$ (hence for $\CinftyOneDimTriangular$).
The proof is not merely an application of a known result in \citet{HuangNeural2018b} but it requires additional non-trivial consideration to enable the adoption of Lemma~3 in \citet{HuangNeural2018b} as it is applicable only for those smooth mappings that match certain boundary conditions.
\begin{definition}
A \emph{deep sigmoidal flow} (DSF; a special case of neural autoregressive flows) \citep[Equation~(8)]{HuangNeural2018b} is a flow layer \(g = (g_1, \ldots, g_d) \colon \ReD \to \ReD\) of the following form:
\begin{equation*}\begin{aligned}
g_k(\x) &:= \sigma^{-1} \left(\sum_{j=1}^{n} w_{k,j}(\upto{k-1}{\x}) \cdot \sigma\left(\frac{x_k - b_{k,j}(\upto{k-1}{\x})}{\tau_j(\upto{k-1}{\x})}\right)\right),
\end{aligned}\end{equation*}
where \(\sigma\) is the sigmoid function, \(n \in \Na\), \(w_j, b_j, \tau_j \colon \Re^{k-1} \to \Re\) (\(j \in [n]\)) are neural networks such that \(b_j(\cdot) \in (r_0, r_1)\), \(\tau_j(\cdot) \in (0, r_2)\), \(w_j(\cdot) > 0\), and \(\sum_{j=1}^{n} w_j(\cdot) = 1\) (\(r_0, r_1 \in \Re\), \(r_2 > 0\)).
We define \(\FDSF\) to be the set of all possible DSFs.
\end{definition}
\begin{proposition}[Universality of INNs based on DSF]\label{prop:DSF}
The elements of \(\FDSF\) are locally bounded, and \(\INN{\FDSF}\) is a \(\sup\)-universal approximator for \(\ConeOneDimTriangular\).
\end{proposition}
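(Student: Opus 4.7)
First, I would verify local boundedness of any $g \in \FDSF$. Fix $g = (g_1,\ldots,g_d)$ and a compact $K \subset \ReD$. Each coordinate $g_k$ is built from the parametrizing neural networks $w_{k,j}, b_{k,j}, \tau_{k,j}$, which are continuous by construction with $\tau_{k,j} > 0$ and $b_{k,j}$ bounded in $(r_0, r_1)$. Continuity implies $\tau_{k,j}$ attains a positive infimum on the compact projection of $K$ onto $\Re^{k-1}$, so the argument $(x_k - b_{k,j}(\upto{k-1}{\x}))/\tau_{k,j}(\upto{k-1}{\x})$ stays bounded on $K$, and the inner sigmoid value lies in a compact subinterval $[c_1,c_2] \subset (0,1)$. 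Convex combinations with positive weights summing to one remain in $[c_1,c_2]$, and applying $\sigma^{-1}$ keeps $g_k$ bounded on $K$.

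For the universality claim, the plan is to show that for every $\tau \in \ConeOneDimTriangular$ of the form $(x_1,\ldots,x_{d-1},\tau_d(\x))$, every compact $K \subset \ReD$, and every $\varepsilon > 0$, there exists $g \in \INN{\FDSF}$ with $\supKnorm{g - \tau} < \varepsilon$. Using the affine layers $\FLin$ available in $\INN{\FDSF}$, I would pre- and post-compose by affine maps in order to normalize coordinates so that $K \subset [0,1]^d$ and $\tau_d$ takes values in a fixed compact subinterval of $(0,1)$. Inside $\FDSF$, the first $d-1$ coordinates can be realized as the identity via the trivial one-component DSF $\sigma^{-1}(\sigma(x_k))$, so it suffices to approximate the scalar map $\tau_d$ on $K$ by a single DSF expression $\sigma^{-1}\bigl(\sum_{j=1}^n w_j(\upto{d-1}{\x})\, \sigma((x_d - b_j(\upto{d-1}{\x}))/\tau_j(\upto{d-1}{\x}))\bigr)$.

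To construct such an approximation, I would conjugate by $\sigma$: because $\tau_d(\upto{d-1}{\x}, \cdot)$ is a $C^1$ strictly-increasing function of $x_d$ for each fixed $\upto{d-1}{\x}$, the map $u \mapsto \sigma(\tau_d(\upto{d-1}{\x}, \sigma^{-1}(u)))$ is a strictly increasing continuous self-map of a compact subinterval of $(0,1)$, precisely the setting to which the DSF ansatz $\sum_j w_j \sigma((\cdot - b_j)/\tau_j)$ applies via Lemma~3 of \citet{HuangNeural2018b}. I would discretize $\upto{d-1}{\x}$ on a finite grid, apply their one-variable approximation at each grid point to obtain parameter values $(w_j, b_j, \tau_j)$, and then invoke the classical MLP universal approximation to realize these as continuous functions on $\Re^{d-1}$; the uniform $C^1$-regularity of $\tau$, combined with the local Lipschitz dependence of the DSF expression on its parameters, controls the interpolation error. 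Finally, Lemma~\ref{lem: sup compatibility of composition} (the $\sup$-variant of the composition-compatibility statement) glues the approximated DSF with the pre- and post-composed affine maps.

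The main technical obstacle will be the joint control of two error sources: (i) the within-grid-point error coming from Huang et al.'s one-dimensional approximation lemma, and (ii) the between-grid-point error arising from continuity of $\tau_d$ in the $\upto{d-1}{\x}$ variable. These must be balanced simultaneously while keeping the parametrizing MLPs bounded on the relevant compact sets and while ensuring the $\tau_j$-parameters remain bounded below by a positive constant (so that the resulting DSF is a well-defined element of $\FDSF$) and the $w_j$ remain positive with unit sum (which can be enforced by a softmax-style reparametrization). Once this balance is achieved, the conclusion follows.
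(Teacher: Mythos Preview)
Your local-boundedness argument is fine (indeed more detailed than necessary; continuity alone suffices).

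For the universality part, there is a genuine gap in the gridding step. When you apply a one-variable DSF approximation at each grid point $p_\alpha \in [0,1]^{d-1}$, the resulting parameter tuples $(w_j^\alpha, b_j^\alpha, \tau_j^\alpha)_{j=1}^{n_\alpha}$ are in no way canonical: the approximation lemma gives \emph{existence} of good parameters, not a continuous selection. Two nearby grid points with nearly identical target functions $\tau_d(p_\alpha,\cdot)$ and $\tau_d(p_{\alpha'},\cdot)$ can be approximated by completely different parameter configurations (even different $n_\alpha$). Consequently, an MLP that interpolates these discrete choices has no reason to produce, between grid points, parameters whose associated DSF is close to $\tau_d$. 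Your appeal to ``uniform $C^1$-regularity of $\tau$ combined with local Lipschitz dependence of the DSF expression on its parameters'' controls the wrong thing: it says the DSF output varies continuously in the parameters, not that good parameters vary continuously in the target. Without a continuous-selection argument (which would itself be the hard part), the between-grid-point error is uncontrolled.

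The paper avoids this entirely by a different route. It takes the target $s \in \ConeOneDimTriangular$, adds an auxiliary function $b(x_d)$ that blows up to $\pm\infty$ at the endpoints of an enlarged box $[r_0',r_1']$, and pushes through $\sigma$ to obtain a multivariate map $S\colon [r_0',r_1']^d \to [0,1]^d$ satisfying the exact boundary conditions required by Lemma~3 of \citet{HuangNeural2018b}. A separate lemma verifies this extension is $C^1$ up to the boundary. Then Lemma~3 is applied \emph{once}, directly in the multivariate form, yielding $g \in \INN{\FDSF}$ with $\|S - \sigma\circ g\|_{\infty}$ small; Lipschitz continuity of $\sigma^{-1}$ on a compact subinterval of $(0,1)$ transfers this back to $\|s - g\|_{K,0,\infty}$. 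No gridding, no parameter interpolation. The real work is the boundary-condition construction and its $C^1$ verification, which is what makes the invocation of Lemma~3 legitimate.
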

\begin{proof}
The elements of \(\FDSF\) are continuous, hence locally bounded.
Let $s=(s_1,\cdots, s_d)\in \ConeOneDimTriangular$. 
Take any compact set $K\subset \R^d$ and $\epsilon>0$. 
Since $K$ is compact, there exist $r_0, r_1\in \R$ such that $K\subset [r_0,r_1]^d$. 
Put $r_0'=r_0-1$, $r_1'=r_1+1$. 
We take a $C^1$-function $b\colon (r_0', r_1')\to \R$ satisfying 
\begin{enumerate}
    \item $b|_{[r_0,r_1]}=0$, 
    \item $b|_{(r_0',r_0)}$ and $b|_{(r_1,r_1')}$ are strictly increasing, 
    \item $\lim_{x\to r_0'+0}b(x)=-\infty$ and $\lim_{x\to  r_1'-0}b(x)=\infty$,
    \item $\lim_{x\to r_0'+0}\frac{d(\sigma\circ b)}{dx}(x)$ and  $\lim_{x\to  r_1'-0}\frac{d(\sigma\circ b)}{dx}(x)$ exist in $\R$, 
\end{enumerate}
where $\sigma$ is the sigmoid function.
For each $k \in [d]$, we define a $C^1$-map $\tilde{s}_k\colon [r_0',r_1']^{k-1}\times (r_0',r_1')\times [r_0', r_1']^{d-k}\to \R$, which is strictly increasing with respect to $x_k$, by 
\[
\tilde{s}_k(x):=s_k(x)+b(x_k)\quad (x=(x_1,\cdots, x_d)). 
\]
Moreover, we define a map $S\colon [r_0',r_1']^d\to [0,1]^d$ by 
\begin{align*}
    S_k|_{[r_0',r_1']^{k-1}\times (r_0',r_1')\times [r_0', r_1']^{d-k}}&=\sigma \circ \tilde{s}_k,\\
    S_k(x_1,\cdots, x_{k-1}, r_0', x_{k+1}, \cdots, x_d)&=0,\\
    S_k(x_1,\cdots, x_{k-1}, r_1', x_{k+1}, \cdots, x_d)&=1,
\end{align*}
where we write $S=(S_1,\cdots, S_d)$. 
Then, by Lemma~\ref{lem:appendix:DSF extended map is smooth}, $S$ satisfies the assumptions of Lemma~3 in \cite{HuangNeural2018b}. 
Since $S([r_0,r_1]^d)\subset (0,1)^d$ is compact, 
there exists a positive number $\delta>0$ such that 
\[ 
S([r_0,r_1]^d) + B(\delta):= \{S(x)+v \ :\ x\in [r_0,r_1]^d, v\in B(\delta)\} \subset [\delta,1-\delta]^d,
\]
where $B(\delta):=\{x\in \R^d : |x|\leq \delta\}$. 
Let $L>0$ be a Lipschitz constant of $\sigma^{-1}\colon (0,1)^d\to \R^d$ on $[\delta, 1-\delta]^d$. 
By Lemma~3 in \cite{HuangNeural2018b}, 
there exists $g\in \INN{\FDSF}$ such that 
\begin{align*}
    \|S-\sigma \circ g\|_{[r_0', r_1']^d,0,\infty}<\min\left\{\delta, \frac{\epsilon}{L}\right\}. 
\end{align*}
As a result, $\sigma\circ g([r_0,r_1]^d) \subset S([r_0,r_1]^d) + B(\delta) \subset [\delta, 1-\delta]^d$.
Then we obtain 
\begin{align*}
\|s-g\|_{K, 0, \infty}
\leq 
\|s-g\|_{[r_0,r_1]^d, 0, \infty}
&=\| \sigma^{-1} \circ \sigma\circ s- \sigma^{-1}\circ \sigma \circ g\|_{[r_0,r_1]^d, 0, \infty}\\
&\leq L\| S -\sigma \circ g\|_{[r_0,r_1]^d,0,\infty}\\
&<\epsilon. 
\end{align*}
\end{proof}

\begin{lemma}
\label{lem:appendix:DSF extended map is smooth}
We denote by $\mathcal{T}^1$ the set of all $C^1$-increasing triangular mappings from $\R^d$ to $\R^d$. 
For $s=(s_1,\cdots, s_d)\in \mathcal{T}^1$, we define 
a map $S\colon [r_0',r_1']^d\to [0,1]^d$ as in the proof of Proposition~\ref{prop:DSF}. 
Then $S$ is a $C^1$-map. 
\end{lemma}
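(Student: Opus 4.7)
The map $S$ is triangular and evaluated coordinate-wise, so it suffices to verify that each component $S_k$ is $C^1$ on $[r_0',r_1']^d$. Since $s_k$ depends only on $x_1,\dots,x_k$, so does $\tilde s_k$ and hence $S_k$, so we only need to analyze $S_k$ as a function of $(x_1,\dots,x_k)\in[r_0',r_1']^k$. On the open slab $\{x_k\in(r_0',r_1')\}$, $S_k=\sigma\circ\tilde s_k$ is manifestly $C^1$ (even smooth in $x_1,\dots,x_{k-1}$ and $C^1$ in $x_k$) as a composition of $C^1$ maps. The only issue is the behavior of $S_k$ and its partials as $x_k\to r_0'{+}0$ and $x_k\to r_1'{-}0$, i.e., matching the boundary values $S_k=0$ and $S_k=1$ in a $C^1$ fashion. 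By the standard ``derivative-limit'' criterion, it is enough to show that $S_k$ is continuous on $[r_0',r_1']^k$ and that every partial derivative $\partial_{x_j}S_k$ (for $j\le k$) admits a continuous extension across $\{x_k=r_0'\}$ and $\{x_k=r_1'\}$.

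\emph{Continuity.} As $x_k\to r_0'{+}0$ we have $b(x_k)\to-\infty$, so $\tilde s_k(x)=s_k(x)+b(x_k)\to-\infty$ uniformly in the other coordinates on compacta (because $s_k$ is bounded there), giving $\sigma(\tilde s_k)\to 0$. Symmetrically $\sigma(\tilde s_k)\to 1$ as $x_k\to r_1'{-}0$. This matches the prescribed boundary values.

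\emph{Partial derivatives for $j<k$.} Here $\partial_{x_j}S_k=\sigma'(\tilde s_k)\,\partial_{x_j}s_k$. Since $\partial_{x_j}s_k$ is continuous and $\sigma'(y)\to 0$ as $y\to\pm\infty$, this tends to $0$ as $x_k\to r_0'$ or $r_1'$, matching the boundary derivative of the constant functions $0$ and $1$.

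\emph{Partial derivative for $j=k$.} The key step is
\[
\partial_{x_k}S_k \;=\; \sigma'(s_k+b(x_k))\bigl(\partial_{x_k}s_k + b'(x_k)\bigr).
\]
The term $\sigma'(s_k+b(x_k))\,\partial_{x_k}s_k$ again tends to $0$ at both boundaries. For the remaining term I will use the asymptotic identities $\sigma'(y+c)/\sigma'(y)\to e^{c}$ as $y\to-\infty$ and $\sigma'(y+c)/\sigma'(y)\to e^{-c}$ as $y\to+\infty$ (direct from $\sigma'(y)=\sigma(y)(1-\sigma(y))$). Writing
\[
\sigma'(s_k+b(x_k))\,b'(x_k) \;=\; \frac{\sigma'(s_k+b(x_k))}{\sigma'(b(x_k))}\cdot \bigl(\sigma\circ b\bigr)'(x_k),
\]
the first factor tends to $e^{s_k(\cdot,r_0')}$ (resp.\ $e^{-s_k(\cdot,r_1')}$) as $x_k\to r_0'$ (resp.\ $r_1'$), continuously in the other coordinates, while the second factor has a finite real limit by the fourth defining property of $b$. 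Thus $\partial_{x_k}S_k$ extends continuously to the boundary, and the extension is continuous in the remaining coordinates through $s_k$.

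\emph{Putting it together.} Continuity of $S_k$ on $[r_0',r_1']^k$ together with continuity of each partial derivative $\partial_{x_j}S_k$ on the open slab and the existence of their one-sided limits at $\{x_k=r_0'\}$ and $\{x_k=r_1'\}$ — which by the mean-value argument (or Theorem of differentiation under one-sided limits) equal the one-sided partials of the constant boundary pieces — yield that $\partial_{x_j}S_k$ exists and is continuous on all of $[r_0',r_1']^k$. Hence $S_k\in C^1$, and $S$ is $C^1$. The main obstacle is this last $j=k$ boundary computation: it is precisely the reason for the fourth condition imposed on $b$ in the proof of Proposition~\ref{prop:DSF}, and ensuring that the ratio of sigmoid derivatives is handled cleanly (rather than attempting to bound $\sigma'(\tilde s_k)b'(x_k)$ directly, which is a $0\cdot\infty$ indeterminate) is the delicate point.
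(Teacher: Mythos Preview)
Your proposal is correct and follows essentially the same approach as the paper: both reduce to studying each $S_k$, dispose of the $j<k$ partials via $\sigma'\to 0$, and handle the delicate $j=k$ boundary term by the factorization $\sigma'(s_k+b)\,b'=\dfrac{\sigma'(s_k+b)}{\sigma'(b)}\cdot(\sigma\circ b)'$ together with the asymptotics $\sigma'(y+c)/\sigma'(y)\to e^{\pm c}$. The only organizational difference is that the paper computes the boundary value of $\partial_{x_k}S_k$ directly from the difference quotient (via L'H\^opital and a subclaim on $\sigma(s_k+b)/\sigma(b)$), whereas you invoke the derivative-limit criterion; both lead to the same continuous extension.
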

\begin{proof}
It is enough to show that 
$S_d\colon [r_0', r_1']^d\to [0,1]$ is a $C^1$-function. 
We prove that for any $i\in [d]$, the $i$-th partial derivative of $S_d$ exists and that it is continuous on $[r_0', r_1']^d$. 
First, for $i\in [d-1]$, we consider the $i$-th partial derivative. \\
\textbf{Claim 1}. 
\begin{align*}
    \frac{\partial S_d}{\partial x_i}(x)=
    \begin{cases}
    \frac{d\sigma}{d x}(s_i(x)+b(x_d))\frac{\partial s_d}{\partial x_i}(x) & (x\in [r_0',r_1']^{d-1}\times (r_0', r_1'))\\
    0 & (x_d= r_0', r_1')
    \end{cases}
\end{align*}
In fact, 
for $x\in [r_0', r_1']^{d-1}\times (r_0',r_1')$, we have  
\begin{align*}
    \frac{\partial S_d}{\partial x_i}(x)
    =\frac{\partial (\sigma \circ \tilde{s_d})}{\partial x_i}(x)
    =\frac{d\sigma }{dx} (s_d(x)+b(x_d)) \left(\frac{\partial s_d}{\partial x_i}(x)+0\right). 
\end{align*}
For $x=(x_{\leq {d-1}}, r_0')$, we have 
\begin{align*}
    \frac{\partial S_d}{\partial x_i}(x)
    &=\lim_{h\to 0}\frac{S_d(x_{\leq i-1}, x_i+h, x_{i+1},\cdots, x_{d-1}, r_0')-S_d(x_{\leq d-1}, r_0')}{h}\\
    &=\lim_{h\to 0}\frac{0-0}{h}=0
\end{align*}
Here, note that by the definition of $S_d$, the notation $S_d(x_{\leq i-1}, x_i+h, x_{i+1},\cdots, x_{d-1}, r_0')$ makes sense even if $x_i=r_0'$ or $x_i=r_1'$. 
We can verify the case $x=(x_{\leq d-1}, r_1')$ similarly.

Next, we show that $\frac{\partial S_d}{\partial x_i}$ is continuous. 
We take any $x_{\leq d-1}\in [r_0',r_1']^{d-1}$.
Since we have $\lim_{x\to r_0'}b(x)=-\infty$, $\lim_{x\to r_1'}b(x)$,  $\lim_{x\to \pm \infty} \frac{d\sigma }{dx}(x)=0$, and 
$|\frac{\partial s_d}{\partial x_I}(x)|<\infty$ $(x\in [r_0', r_1']^d)$, 
we obtain 
\begin{align*}
    \lim_{x\to (x_{d-1}, r_0')}\frac{d\sigma}{d x}(s_i(x)+b(x_d))\frac{\partial s_d}{\partial x_i}(x)=0,\\
    \lim_{x\to (x_{d-1}, r_1')}\frac{d\sigma}{d x}(s_i(x)+b(x_d))\frac{\partial s_d}{\partial x_i}(x)=0.
\end{align*}
Therefore, the partial derivative $\frac{\partial S_d}{\partial x_i}(x)$ is continuous on $[r_0', r_1']^d$ for $i\in [d-1]$. 

Next, we consider the $d$-th derivative of $S_d$. \\
\textbf{Claim 2.}
\begin{align*}
    \frac{\partial S_d}{\partial x_d}(x)=
    \begin{cases}
    \frac{d\sigma}{d x}(s_d(x)+b(x_d)) \left( \frac{\partial s_d}{\partial x_d }(x)+\frac{d b}{d x}(x_d)\right) & (x\in [r_0', r_1']^{d-1}\times (r_0', r_1'))\\
    e^{s_d(x_{\leq d-1},r_0')}\lim_{x\to r_0'+0}\frac{d (\sigma\circ b)}{dx}(x) & (x_d=r_0')\\
    e^{-s_d(x_{\leq d-1}, r_1')}\lim_{x\to r_1'-0}\frac{d (\sigma\circ b)}{dx}(x) & (x_d=r_1')
    \end{cases}
\end{align*}
We verify Claim 2. 
Since it is clear for the case $x\in [r_0', r_1']^{d-1}\times (r_0',r_1')$ by the definition of $S_k$, we consider the case $x_d=r_0', r_1'$. \\
\textbf{Subclaim.}
For $x_{\leq d-1}'\in [r_0',r_1']^{d-1}$, 
\begin{align*}
    \lim_{x\to (x_{\leq d-1}', r_0')}&\frac{\sigma (s_d(x)+b(x_d))}{\sigma(b(x_d))}=e^{s_d(x_{\leq d-1}',r_0')}\\
    \lim_{x\to (x_{\leq d-1}', r_1')}&\frac{\sigma(s_d(x)+b(x_d))-1}{\sigma(b(x_d))-1}=e^{-s_d(x_{\leq d-1}', r_1')}
\end{align*}
We verify this subclaim. From $\lim_{x\to r_0'}b(x)=-\infty$, we have
\begin{align*}
    \frac{\sigma (s_d(x)+b(x_d))}{\sigma(b(x_d))}
    &=\frac{1+e^{-b(x_d)}}{1+e^{-s_d(x)-b(x_d)}}
    =\frac{e^{b(x_d)}+1}{e^{b(x_d)}+e^{-s_d(x)}}\\
    &\to \frac{1}{e^{-s_d(x_{\leq d-1}',r_0')}}=e^{s_d(x_{\leq d-1}',r_0')} \quad (x \to (x_{\leq {d-1}}', r_0'))
\end{align*}
Similarly, from $\lim_{x\to r_1'}b(x)=\infty$, we have 
\begin{align*}
    \frac{\sigma(s_d(x)+b(x_d))-1}{\sigma(b(x_d))-1}
    &=e^{-s_d(x)}\frac{1+e^{-b(x_d)}}{1+e^{-s_d(x)-b(x_d)}}\\
    &\to e^{-s_d(x_{\leq d-1}, r_1')} \quad (x\to (x_{\leq d-1}',r_1')).
\end{align*}
Therefore, our subclaim has been proved. 
By using L'H\^opital's rule, we have 
\begin{align*}
\lim_{h\to +0}\frac{\sigma(b(r_0'+h))}{h}=\lim_{x\to r_0'}\frac{d(\sigma\circ b)}{dx} (x),\quad 
\lim_{x\to r_1'}\frac{\sigma(b(r_1'+h))-1}{h}=\lim_{x\to r_1'}\frac{d(\sigma\circ b)}{dx}(x). 
\end{align*}
Then, from Subclaim, we obtain 
\begin{align*}
   \frac{\partial S_d}{\partial x_d}(x_{\leq d-1}, r_0')
   &=\lim_{h\to +0} \frac{\sigma(s_d(x_{\leq d-1},r_0'+h)+b(r_0'+h))-0}{h}\\
   &=\lim_{h\to +0} \frac{\sigma(s_d(x_{\leq d-1}, r_0'+h)+b(r_0'+h))}{\sigma(b(r_0+h))}\cdot \frac{\sigma(b(r_0'+h))}{h}\\
   &=e^{s_d(x_{\leq d-1}, r_0')}\lim_{x\to r_0'+0}\frac{d(\sigma\circ b)}{dx}(x), \\
   \frac{\partial S_d}{\partial x_d}(x_{\leq d-1}, r_1')
&=\lim_{h\to -0}\frac{\sigma(s_d(x_{\leq d-1},r_1'+h)+b(r_1'+h))-1}{h}\\
&=\lim_{h\to -0}\frac{\sigma(s_d(x_{\leq d-1},r_1'+h)+b(r_1'+h))-1}{\sigma(b(r_1'+h))-1}\cdot \frac{\sigma(b(r_1'+h))-1}{h}\\
&=e^{s_d(x_{\leq d-1},r_1')}\lim_{x\to r_1'}\frac{d(\sigma \circ b)}{dx}(x).  
\end{align*}
Therefore, Claim 2 was proved. 

Finally, we verify $\frac{\partial S_d}{\partial x_d}(x)$ is continuous on $[r_0',r_1']^d$. 
Fix $x'_{\leq d-1}\in [r_0', r_1']^{d-1}$. 
Since we have 
$\lim_{x\to (x'_{\leq d-1}, r_0')}\frac{d\sigma}{dx}(\sigma_d(x)+b(x_d))\frac{\partial s_d}{\partial x_d}(x)=0$, 
from Claim 2, it is enough to show the following:\\
\textbf{Claim 3.}
\begin{align*}
    \lim_{x\to (x_{\leq d-1}', r_0')}\frac{d \sigma}{d x}(s_d(x)+b(x_d))\frac{db}{d x}(x_d)
    &=e^{s_d(x_{\leq d-1},r_0')}\lim_{x\to r_0'+0}\frac{d (\sigma\circ b)}{dx}(x), \\
    \lim_{x\to (x_{\leq d-1}', r_1')}\frac{d \sigma}{d  x}(s_d(x)+b(x_d))\frac{db}{dx}(x_d)
    &=e^{-s_d(x_{\leq d-1}, r_1')}\lim_{x\to r_1'-0}\frac{d (\sigma\circ b)}{dx}(x). 
\end{align*}
We verify Claim~3. 
We have
\begin{align*}
\frac{d \sigma}{d x}(s_d(x)+b(x_d))\frac{db}{d x}(x_d)
&=\frac{\frac{d\sigma}{dx}(s_d(x)+b(x_d))}{\frac{d\sigma}{dx}(b(x_d))}\frac{d\sigma}{dx}(b(x_d))\frac{db}{dx}(x_d)\\
&=\frac{\frac{d\sigma}{dx}(s_d(x)+b(x_d))}{\frac{d\sigma}{dx}(b(x_d))}\frac{d(\sigma \circ b)}{dx}(x_d). 
\end{align*}
Since we have $\frac{d\sigma}{dx}(x)= \sigma(x)(1-\sigma(x))$, from Subclaim above, Claim 3 follows from 
\begin{align*}
\frac{\frac{d\sigma}{dx}(s_d(x)+b(x_d))}{\frac{d\sigma}{dx}(b(x_d))}
&=\frac{\sigma(s_d(x)+b(x_d))}{\sigma (b(x_d))}\cdot \frac{1-\sigma(s_d(x)+b(x_d))}{1-\sigma(b(x_d))}\\
&\to \begin{cases} 
e^{s_d(x_{\leq d-1}', r_0')} & (x\to (x_{\leq d-1}', r_0'))\\
e^{-s_d(x_{\leq d-1}', r_1')} & (x\to (x_{\leq d-1}', r_1'))
\end{cases}. 
\end{align*}
Therefore, we proved the continuity of $\frac{\partial S_d}{\partial x_d}(x)$. 
\end{proof}

\newcommand{\SoSTransformer}[2]{\mathfrak{B}_{#1}(#2)}
\subsection{Sum-of-squares polynomial flows (SoS flows)} \label{appendix: SOS}
In this section, we prove that \emph{sum-of-squares polynomial flows} \cite{DBLP:conf/icml/JainiSY19} yield \ARFINNs{} with the $\sup$-universal approximation property for $\ConeOneDimTriangular$ (hence for $\CinftyOneDimTriangular$).
Even though \citet{DBLP:conf/icml/JainiSY19} claimed the distributional universality of the SoS flows by providing a proof sketch based on the univariate Stone-Weierstrass approximation theorem, we regard the sketch to be invalid or at least incomplete as it does not discuss the smoothness of the coefficients, i.e., whether the polynomial coefficients can be realized by continuous functions. Here, we provide complete proof that takes an alternative route to prove the $\sup$-universality of the SoS flows via the multivariate Stone-Weierstrass approximation theorem.

A \emph{sum-of-squares polynomial flow} (SoS flow) \citep[Equation~(9)]{DBLP:conf/icml/JainiSY19} is a flow layer \(g = (g_1, \ldots, g_d) \colon \ReD \to \ReD\) of the following form:
\begin{equation*}\begin{aligned}
g_k(\x) &:= \SoSTransformer{2r+1}{x_k; C_k(\upto{k-1}{\x})}, \\
\SoSTransformer{2r+1}{z; (c, \ba)} &:= c + \int_0^z \sum_{b=1}^B\left(\sum_{l=0}^r a_{l, b} u^l\right)^2 du,
\end{aligned}\end{equation*}
where \(r \in \Na \cup \{0\}\), \(B \in \Na\), $c \in \Re$, $\ba \in \Re^{B (r+1)}$, and \(C_k \colon \Re^{k-1} \to \Re^{B (r+1) + 1}\) is a certain map, for example, a neural network. 

Here, we consider a small class of SoS flows as follows:
\begin{definition}
Let $\mathcal{H}$ be a function on $\Re^{d-1}$.
For $c \in \Re$ and $h_1,\dots, h_r \in \mathcal{H}$, Let
\[\tilde{\mathfrak{B}}(\x; c,h_1,\dots, h_r) := c + \int_0^{x_{d}} \left(\sum_{l=0}^r h_l(\x_{\le d-1} )u^l\right)^2 du.\]
Then, we define the set $\HSoS$ as a subset consisting of $\tilde{\mathfrak{B}}(\cdot; h_1,\dots, h_r)$ where $r \ge 1$ and $h_i$'s are elements of $\mathcal{H}$. 
\end{definition}
Then, we have the following proposition:
\begin{proposition} \label{prop: universality for sos}
Let $r \ge 0$.
Let $\mathcal{H} \subset C^r(\Re^{d-1})$ and assume that $\mathcal{H}$ is a $W^{r,\infty}$-universal approximator for the set of $(d-1)$-variable polynomials.
Then, \(\INN{\HSoS}\) is a $W^{r,\infty}$-universal approximator for $\mathcal{S}_c^{r+1}$.
\end{proposition}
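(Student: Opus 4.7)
The plan is to show that any $\tau \in \mathcal{S}_c^{r+1}$ is approximable in $W^{r,\infty}_{\rm loc}$ by a composition $T \circ h \circ T^{-1}$, with $T \in \FLin$ an affine translation in the last coordinate and $h \in \HSoS$ whose free constant $c$ equals $0$. Since $\tau$ is compactly supported, $\tau(\x) = \x$ outside a compact set; given the compact target $K \subset \ReD$, one chooses $a \in \Re$ sufficiently far below the supports of $\tau$ and of $K$ so that $\tau_d(\x_{\le d-1}, a) = a$ identically, and sets $T(\x) := (\x_{\le d-1}, x_d + a)$. Unfolding the composition and applying the fundamental theorem of calculus shows that $T \circ h \circ T^{-1}(\x)$ equals $\tau(\x)$ exactly when $Q^2(\x_{\le d-1}, u) = \phi(\x_{\le d-1}, u + a)$, where $\phi := \partial_{x_d} \tau_d$ and $Q(\x_{\le d-1}, u) := \sum_{l=0}^{k} h_l(\x_{\le d-1})\, u^l$. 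The task therefore reduces to approximating $\phi(\cdot, \cdot + a)$ by such a $Q^2$ in $W^{r,\infty}$-norm on a suitable compact $K' \subset \Re^{d-1} \times \Re$ dominating the integration range.

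Since $\tau$ is a $C^{r+1}$-diffeomorphism equal to the identity outside a compact set, $\phi$ is $C^r$, strictly positive, and identically $1$ outside a compact set; in particular, $\sqrt{\phi(\cdot,\cdot+a)}$ is a bounded $C^r$-function on $K'$. I would approximate it in two steps. First, by multivariate Weierstrass approximation in the $C^r$-norm on a compact set (standard, via mollification followed by Taylor truncation), find a polynomial $\sum_{l=0}^{k} P_l(\x_{\le d-1})\, u^l$ that is $W^{r,\infty}(K')$-close to $\sqrt{\phi(\cdot,\cdot+a)}$. Second, since each coefficient $P_l$ is itself a polynomial on $\Re^{d-1}$, the hypothesis that $\mathcal{H}$ is a $W^{r,\infty}$-universal approximator for polynomials yields $h_l \in \mathcal{H}$ close to $P_l$ in $W^{r,\infty}$-norm on the projection of $K'$. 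A telescoping estimate (using that the factors $u^l$ are uniformly bounded on $K'$) then makes $\|Q - \sqrt{\phi(\cdot, \cdot + a)}\|_{K', r, \infty}$ arbitrarily small.

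To conclude, since $W^{r,\infty}$ is closed under multiplication on compact sets and $Q, \sqrt{\phi}$ are uniformly bounded, $Q^2$ approximates $\phi(\cdot, \cdot + a)$ in $W^{r,\infty}(K')$; integration in $u$ only improves the approximation order; and composition with the fixed affine maps $T, T^{-1}$ is continuous in $W^{r,\infty}_{\rm loc}$ by Corollary~\ref{cor: compatibility of approximation, higher derivatives} (the $p=\infty$ clause applies, because $\mathcal{H} \subset C^r(\Re^{d-1})$ makes each $\HSoS$ layer $C^r$, and $T, T^{-1}$ are $C^\infty$ diffeomorphisms). The main conceptual obstacle is that an $\HSoS$ layer offers only a scalar constant $c$ as its initial integration value, which on its own cannot match the $\x_{\le d-1}$-dependent quantity $\tau_d(\x_{\le d-1}, 0)$; the affine-conjugation trick sidesteps this by exploiting the compact support of $\tau$ to anchor the integration at a point $a$ where $\tau$ acts as the identity, reducing the initial-value mismatch to a mere constant shift $a$ that the conjugation by $T$ absorbs automatically.
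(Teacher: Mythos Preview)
Your proposal is correct and follows essentially the same strategy as the paper: approximate $\sqrt{\partial_{x_d}\tau_d}$ on a compact set by a polynomial in $u$ with coefficients depending on $\x_{\le d-1}$ (via a $C^r$ Stone--Weierstrass argument), then replace those polynomial coefficients by elements of $\mathcal{H}$, and finally pass the $W^{r,\infty}$-closeness through the squaring and the integration in $u$.

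The one genuine difference is your handling of the integration constant. The paper writes the approximant as $s_d(\x_{\le d-1},0)+\int_0^{x_d}\pi^2\,du$ and then says it is ``straightforward'' to realize this in the model ``by approximating each of $s_d(\upto{d-1}{\x})$ and $\xi_l$ \dots\ using neural networks'', implicitly appealing to a function-valued offset as in the full $\FSoS$ class. Since an $\HSoS$ layer carries only a scalar $c$, your affine-conjugation trick---shifting the base point to some $a$ below the support of $\tau$, where $\tau_d(\cdot,a)\equiv a$, so that the required offset collapses to the constant $0$---is a cleaner way to close this step strictly within $\INN{\HSoS}$. It costs nothing (the translations $T,T^{-1}$ lie in $\FLin$) and makes the argument self-contained for the class actually named in the proposition; the paper's version is morally the same but leans on the broader $\FSoS$ formulation at that one point.
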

\begin{proof}
We only illustrate the proof in the cases of $r=0$ and $r=1$.
The general cases follow from a similar argument with the Leibniz rule and chain rule.

The $L^\infty$-universality follows from the Stone-Weierstrass approximation theorem as in the below.
Let \(s = (s_1, \ldots, s_d) \in \ConeOneDimTriangular\), a compact subset \(K \subset \ReD\), and \(\epsilon > 0\) be given.
Then, there exists \(R > 0\) such that \(K \subset [-R, R]^d\).
Since \(s_d(\x)\) is strictly increasing with respect to \(x_d\) and \(s\) is \(C^1\), we have \(\eta(\x) := \frac{\partial s_d}{\partial x_d}(\x) > 0\) and \(\eta\) is continuous.
Therefore, we can apply the Stone-Weierstrass approximation theorem \citep[Corollary~4.50]{FollandReal1999} to \(\sqrt{\eta(\x)}\):
for any \(\delta > 0\), there exists a polynomial \(\pi(x_1, \ldots, x_d)\) such that \(\supRangenorm{[-R, R]^d}{\sqrt{\eta}- \pi} < \delta\).
Then, by rearranging the terms, there exist \(r \in \Na\) and polynomials \(\xi_{l}(x_1, \ldots, x_{d-1})\) such that \(\pi(x_1, \ldots, x_d) = \sum_{l=0}^r \xi_l(x_1, \ldots, x_{d-1})x_d^l\).
Now, define
\begin{equation*}\begin{aligned}
\tilde g_d(\x) &:= s_d(\upto{d-1}{\x}, 0) + \int_0^{x_d} (\pi(\upto{d-1}{\x}, u))^2 du \\
&= s_d(\upto{d-1}{\x}, 0) + \int_0^{x_d} \left(\sum_{l=0}^r \xi_l(x_1, \ldots, x_{d-1})u^l\right)^2 du
\end{aligned}\end{equation*}
and \(\tilde g(\x) := (x_1, \ldots, x_{d-1}, \tilde g_d(\x))\).
Then,
\begin{equation*}\begin{aligned}
\supKnorm{s - \tilde g} &= \sup_{\x \in K} \left|s_d(\x) - \tilde g_d(\x)\right| \\
&= \sup_{\x \in K} \left|s_d(\upto{d-1}{\x}, 0) + \int_0^{x_d} \eta(\upto{d-1}{\x}, u) du - \tilde g_d(\x) \right| \\
&= \sup_{\x \in K} \left|\int_0^{x_d} (\sqrt{\eta(\upto{d-1}{\x}, u)}^2 - \pi(\upto{d-1}{\x}, u)^2) du \right| \\
&\leq R \cdot \sup_{\x \in [-R, R]^d} \left|\sqrt{\eta(\x)}^2 - \pi(\x)^2\right| \\
&= R \cdot \sup_{\x \in [-R, R]^d} |\sqrt{\eta(\x)} + \pi(\x)| \cdot |\sqrt{\eta(\x)} - \pi(\x)| \\
&\leq R \left(\sup_{\x \in [-R, R]^d} 2\sqrt{\eta(\x)} + \delta\right) \delta,
\end{aligned}\end{equation*}
where we used
\begin{equation*}\begin{aligned}
\sup_{\x \in [-R, R]^d}|\sqrt{\eta(\x)} + \pi(\x)| &\leq \sup_{\x \in [-R, R]^d}|2\sqrt{\eta(\x)}| + |\sqrt{\eta(\x)} - \pi(\x)| \\
&\leq \sup_{\x \in [-R, R]^d} 2\sqrt{\eta(\x)} + \delta.
\end{aligned}\end{equation*}
It is straightforward to show that there exists \(g \in \FSoS\) such that \(\supKnorm{\tilde g - g} < \frac{\epsilon}{2}\) by approximating each of \(s_d(\upto{d-1}{\x})\) and \(\xi_l\) on \(K\) using neural networks.
Finally, take \(\delta\) to be small enough so that \(\supKnorm{s - \tilde g} < \frac{\epsilon}{2}\) holds. 

Next, we consider the $W^{1,\infty}$-universality. We use the same notations as above. We note that since $s\in \mathcal{S}_c^2$, we have $\eta\in C^1$, and \(\eta\) is positive and continuous. This enables us to apply the Stone-Weierstrass approximation theorem \citep[Theorem 5]{peetExponentiallyStableNonlinear2009} to \(\sqrt{\eta(\x)}\) :
for any \(\delta > 0\), there exists a polynomial \(\pi(x_1, \ldots, x_d)\) such that $\|\sqrt{\eta}- \pi\|_{[-R,R]^d,1,\infty}<\delta$. We define $\tilde g_d$ and $\tilde{g}$ as above. Then we have
\begin{align*}
 \|s - \tilde g\|_{K,1,\infty}
 &=\left\|\int_0^{x_d} (\sqrt{\eta(\upto{d-1}{\x}, u)}^2 - \pi(\upto{d-1}{\x}, u)^2) du\right\|_{K,1,\infty}\\
 &\le \sup_{\x \in K} \left|\int_0^{x_d} (\sqrt{\eta(\upto{d-1}{\x}, u)}^2 - \pi(\upto{d-1}{\x}, u)^2) du \right|\\
 &\ +\sup_{\x \in K} \sum_{i=1}^{d-1}\left|\partial_{x_i}\int_0^{x_d} (\sqrt{\eta(\upto{d-1}{\x}, u)}^2 - \pi(\upto{d-1}{\x}, u)^2) du \right|\\
 &\ +\sup_{\x \in K} \left|\partial_{x_d}\int_0^{x_d} (\sqrt{\eta(\upto{d-1}{\x}, u)}^2 - \pi(\upto{d-1}{\x}, u)^2) du \right| \\
 &=:I+II+III.
\end{align*}
In a similar manner as above, we have $I\le R \left(\sup_{\x \in [-R, R]^d} 2\sqrt{\eta(\x)} + \delta\right) \delta$. We note that since $\eta\in C^1$ and \(\eta\) is positive and continuous, we have $\|\sqrt{\eta}\|_{[-R,R]^d,1,\infty}<\infty$. A direct computation gives
\begin{align*}
    II&=2\sup_{\x \in K} \sum_{i=1}^{d-1}\left|\int_0^{x_d}\left\{ \sqrt{\eta(\upto{d-1}{\x}, u)}\partial_{x_i}\sqrt{\eta(\upto{d-1}{\x}, u)} - \pi(\upto{d-1}{\x}, u)\partial_{x_i}\pi(\upto{d-1}{\x}, u)\right\} du \right|\\
    &\le 2\sup_{\x \in K} \sum_{i=1}^{d-1}
    \left|\int_0^{x_d}\left\{\sqrt{\eta(\upto{d-1}{\x}, u)}-\pi(\upto{d-1}{\x}, u)\right\}\partial_{x_i}\sqrt{\eta(\upto{d-1}{\x}, u)} du \right|\\
    &\ +2\sup_{\x \in K} \sum_{i=1}^{d-1}
    \left|\int_0^{x_d}\pi(\upto{d-1}{\x}, u)\partial_{x_i}\left\{\sqrt{\eta(\upto{d-1}{\x}, u)}-\pi(\upto{d-1}{\x}, u)\right\} du \right|\\
    &\le 2(d-1)R(2\|\sqrt{\eta}\|_{[-R,R]^d,1,\infty}+\delta)\|\sqrt{\eta}- \pi\|_{[-R,R]^d,1,\infty}\\
    &\le 2(d-1)R(2\|\sqrt{\eta}\|_{[-R,R]^d,1,\infty}+\delta)\delta.
\end{align*}
A simple computation gives
\begin{align*}
    III
    =\sup_{\x \in K} \left| \sqrt{\eta(\x)}+ \pi(\x) \right|\left| \sqrt{\eta(\x)}- \pi(\x) \right|
    \le \left(\sup_{\x \in [-R, R]^d} 2\sqrt{\eta(\x)} + \delta\right) \delta.
\end{align*}
In the similar manner as above, we can see that there exists \(g \in \FSoS\) such that $\|\tilde g - g \|_{K,1,\infty} <\frac{\epsilon}{2}$. Finally, taking \(\delta\) to be small enough so that $\|s - \tilde g\|_{K,1,\infty} < \frac{\epsilon}{2}$ holds, the assertion is proved.
\end{proof}

\section{Universality of NODE-based INNs}
\label{sec:appendix:universality-proof-NODE-version}

Here, we provide a proof of Theorem~\ref{thm: NODE is sup-universal}:
\begin{proof}[Proof of Theorem~\ref{thm: NODE is sup-universal}]
By Theorem \ref{theorem:main:1}, we only consider an approximation of the elements of $\Xi^{\infty}$.
Let $g \in \Xi^\infty$.
Then, by Definition \ref{def: flow endpoints}, there exists $f \in \Lipsp{} \cap \infty$ such that
\[f(\cdot):=\left.\frac{\partial  \Phi(\cdot,t)}{\partial t}\right|_{t=0}.
\]
for some flow $\Phi$
Therefore, $g$ is arbitrarily approximated by an element of $\INNHNODE$ by Lemma \ref{appendix:lem:ODE flow endpoint approximation}.
\end{proof}

The following lemma, used in the above proof, allows us to approximate an autonomous ODE flow endpoint by approximating the differential equation. See Definition~\ref{def: autonomous ODE flow endpoints} for the definition of $\ODEFlowEnds{\cdot}$.
\begin{lemma}[Approximation of Autonomous-ODE flow endpoints]
\label{appendix:lem:ODE flow endpoint approximation}
Let $r \ge 0$.
Assume \(\NODEJacobiFuncClass \subset \Lipsp{} \cap C^r\) is a $W^{r,\infty}$-universal approximator for $\Lipsp{} \cap C^r$.
Then, \(\ODEFlowEnds{\NODEJacobiFuncClass{}}\) is a \(W^{r,\infty}\)-universal approximator for \(\ODEFlowEnds{\Lipsp{} \cap C^r}\).
\end{lemma}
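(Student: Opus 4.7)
The argument will combine the standard continuous dependence of ODE solutions on their vector fields (Gr\"onwall's inequality) with the variational equations for the higher-order derivatives of the flow. Fix $f \in \Lipsp \cap C^r$, a compact set $K \subset \ReD$, and $\varepsilon > 0$; denote $\phi(x,t) := \IVP{f}{x}{t}$. The goal is to produce $\tilde f \in \NODEJacobiFuncClass$ such that the endpoint maps $\phi(\cdot,1)$ and $\tilde\phi(\cdot,1) := \IVP{\tilde f}{\cdot}{1}$ satisfy $\WspKnorm{r}{\infty}{K}{\phi(\cdot,1) - \tilde\phi(\cdot,1)} < \varepsilon$.

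The first step is to confine all relevant trajectories to a common compact set. Using the Lipschitz estimate $\|\phi(x,t)\| \le \|x\| + \int_0^t (\LipConst{f}\|\phi(x,s)\| + \|f(0)\|)\,ds$ and Gr\"onwall's inequality, I obtain a compact $K' \subset \ReD$ with $\{\phi(x,t) : x \in K,\ t \in [0,1]\} \subset K'$. A uniform version of the same estimate shows that whenever $\WspKnorm{0}{\infty}{K'}{f - \tilde f}$ is sufficiently small, the trajectories of $\tilde f$ starting in $K$ also remain in $K'$ on $[0,1]$. Then, by the $W^{r,\infty}$-universality of $\NODEJacobiFuncClass$ on $\Lipsp \cap C^r$, for any $\delta > 0$ I can pick $\tilde f \in \NODEJacobiFuncClass$ with $\WspKnorm{r}{\infty}{K'}{f - \tilde f} < \delta$.

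For the case $r = 0$, a direct application of Gr\"onwall's inequality to $\|\phi(x,t) - \tilde\phi(x,t)\| \le \int_0^t \|f(\phi(x,s)) - \tilde f(\tilde\phi(x,s))\|\,ds$ yields
\[\sup_{x \in K,\ t \in [0,1]} \|\phi(x,t) - \tilde\phi(x,t)\| \le \delta \cdot e^{\LipConst{f}},\]
so choosing $\delta < \varepsilon e^{-\LipConst{f}}$ gives the desired bound. For $r \ge 1$, I use induction on the derivative order $k \le r$. The Jacobian $D_x\phi$ satisfies the variational equation $\partial_t D_x\phi(x,t) = Df(\phi(x,t))\, D_x\phi(x,t)$ with $D_x\phi(x,0) = I$, and similarly for $\tilde\phi$. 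Subtracting the two variational equations and inserting the already-established $L^\infty$-closeness of $\phi$ and $\tilde\phi$, together with the $W^{1,\infty}$-closeness of $f$ and $\tilde f$ on $K'$, another Gr\"onwall application bounds $\sup_{x\in K,\ t \in [0,1]}\|D_x\phi(x,t) - D_x\tilde\phi(x,t)\|$ by a constant multiple of $\delta$. For higher $k \le r$, the $k$-th derivative $D_x^k\phi$ solves a linear ODE whose right-hand side is, by Fa\`a di Bruno, a polynomial expression in $Df, \dots, D^k f$ evaluated along $\phi$ and in $D_x\phi, \dots, D_x^{k-1}\phi$. Subtracting the analogous equation for $\tilde\phi$, using the inductive closeness estimates on the lower-order flow derivatives and the $W^{k,\infty}$-closeness of $f$ and $\tilde f$ on $K'$, one more Gr\"onwall step furnishes the $L^\infty$-bound at order $k$.

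\textbf{Main obstacle.} Conceptually, everything reduces to iterated Gr\"onwall estimates; the real work is in the bookkeeping. Each order of differentiation produces a variational ODE whose inhomogeneity is a multilinear combination (via Fa\`a di Bruno) of derivatives of $f$ and of $\phi$, and one must carefully control all of these simultaneously in terms of the $W^{r,\infty}$-norm of $f - \tilde f$ on $K'$, the $W^{r,\infty}$-norms of $f$ and $\tilde f$ on $K'$, and the lower-order differences already estimated. The compactness of $K'$ and the uniform bounds on $f, \tilde f$ there are essential to keep the constants in the inductive step finite, and this is where the preliminary trajectory-confinement step earns its keep.
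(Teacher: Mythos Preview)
Your proposal is correct. For $r=0$ it matches the paper's proof almost exactly: both confine trajectories to a compact $K'$, choose $\tilde f$ close to $f$ on $K'$, and apply Gr\"onwall. The paper makes the ``trajectories stay in $K'$'' step watertight via a short contradiction argument (assume the perturbed trajectory first exits, then Gr\"onwall up to the exit time gives a smaller distance than needed to exit), which you gloss as ``a uniform version of the same estimate''; you should spell this out, since the bound $\|f-\tilde f\|<\delta$ is only known on $K'$ and you need the trajectory to be in $K'$ before you can use it.

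For $r\ge 1$ your route is genuinely different. The paper proceeds abstractly: it cites smooth dependence of ODE solutions on parameters (Duistermaat--Kolk) to get that $(\x,f)\mapsto \IVP{f}{\x}{1}$ is $C^r$ jointly, and then uses the Berge maximum theorem to conclude that $f\mapsto \|\IVP{f}{\cdot}{1}-\IVP{F}{\cdot}{1}\|_{K,r,\infty}$ is continuous. You instead differentiate the flow explicitly, write down the linear variational ODEs for $D_x^k\phi$, and run an inductive Gr\"onwall scheme order by order, with Fa\`a di Bruno bookkeeping for the inhomogeneous terms. The paper's argument is shorter and avoids the combinatorics, at the cost of relying on a black-box smooth-dependence theorem; your argument is self-contained and gives explicit control of the constants, which could be useful if one later wanted quantitative approximation rates.
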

\begin{proof}
We first treat the case of $r > 0$.
By combining the fact that the map
\[ (\x , f) \mapsto \IVP{f}{\x}{1} \]
is $C^r$ map (Theorem B.3 (ii) in \cite{DuistermaatLie2000}) with the Berge maximum theorem \cite{AliprantisInfinite2006}, we see that
for any compact set $K \subset \ReD$ and $F \in \Lipsp{} \cap C^r$ we see that the map
\[
    f \mapsto
    \WspKnorm{K}{r}{\infty}{\IVP{f}{\cdot}{1} - \IVP{F}{\cdot}{1}
    }
    = \sum_{|\alpha| \le r}\sup_{\x \in K} \|{\IVP{f}{\x}{1} - \IVP{F}{\x}{1}}\|
\]
is continuous.
Therefore, the $W^{r, \infty}$-universality of $\ODEFlowEnds{\NODEJacobiFuncClass}$ for $\Psi(\Lipsp{} \cap C^r)$ follows from that of $\NODEJacobiFuncClass{}$ for $\Lipsp{} \cap C^r$.

We next treat the case of $r=0$.
Let $\phi \in \ODEFlowEnds{\Lipsp{}}$. Then, by definition, there exists $F \in \Lipsp{}$ such that $\phi = \IVP{F}{\cdot}{1}$.
Let $\LipConst{F}$ denote the Lipschitz constant of $F$.
In the following, we approximate $\IVP{F}{\cdot}{1}$ by approximating $F$ using an element of $\NODEJacobiFuncClass{}$.

Let $\varepsilon > 0$, and let $K \subset \R^d$ be a compact subset of $\R^d$.
We show that there exists $f \in \NODEJacobiFuncClass{}$ such that $\supKnorm{\IVP{F}{\cdot}{1} - \IVP{f}{\cdot}{1}} < \varepsilon$. Note that $\IVP{f}{\cdot}{\cdot}$ is well-defined because $\NODEJacobiFuncClass{} \subset \Lipsp{}$.
Define
\[
K' := \left\{\x \in \R^d \ \bigg|\ \inf_{\y \in \IVP{F}{K}{[0, 1]}} \|\x - \y\| \leq 2 e^{\LipConst{F}}\right\}.
\]
Then, $K'$ is compact. This follows from the compactness of $\IVP{F}{K}{[0, 1]}$: (i) $K'$ is bounded since $\IVP{F}{K}{[0, 1]}$ is bounded, and (ii) it is closed since the function $\x \mapsto \min_{\y \in \IVP{F}{K}{[0, 1]}} \|\x - \y\|$ is continuous and hence $K'$ is the inverse image of a closed interval $[0, 2e^{\LipConst{F}}]$ by a continuous map.

Since $\NODEJacobiFuncClass{}$ is assumed to be an \(L^\infty\)-universal approximator for $\Lipsp{}$, for any $\delta > 0$, we can take $f \in \NODEJacobiFuncClass{}$ such that $\supRangenorm{K'}{f - F} < \delta$.
Let $\delta$ be such that $0 < \delta < \min\{\varepsilon / (2e^{\LipConst{F}}), 1\}$, and take such an $f$.

Fix $\x_0 \in K$ and define $\targetError{t} := \|\IVP{F}{\x_0}{t} - \IVP{f}{\x_0}{t}\|$.
Let $\Bound{} := \BoundDef{}$ and we show that
\[
\targetError{t} < 2\Bound{}
\]
holds for all $t \in [0, 1]$.
We prove this by contradiction. Suppose that there exists $t'$ for which the inequality does not hold. Then, the set $\mathcal{T} := \{t \in [0, 1] | \targetError{t} \geq 2 \Bound{}\}$ is not empty and
 thus $\tau := \inf \mathcal{T} \in [0, 1]$.
For this $\tau$, we show both $\targetError{\tau} \leq \Bound{}$ and $\targetError{\tau} \geq 2\Bound{}$.
First, we have
\begin{align*}
\targetError{\tau} &= \left\|\IVP{F}{\x_0}{\tau} - \IVP{f}{\x_0}{\tau}\right\| \\
&= \left\|\x_0 + \int_0^\tau F(\IVP{F}{\x_0}{t}) dt - \x_0 - \int_0^\tau f(\IVP{f}{\x_0}{t}) dt\right\| \\
&\leq \left\|\int_0^\tau (F(\IVP{F}{\x_0}{t}) - F(\IVP{f}{\x_0}{t})) dt\right\| \\
&\qquad + \left\|\int_0^\tau (F(\IVP{f}{\x_0}{t}) - f(\IVP{f}{\x_0}{t})) dt\right\|.
\end{align*}
The last term can be bounded as
\[
\left\|\int_0^\tau (F(\IVP{f}{\x_0}{t}) - f(\IVP{f}{\x_0}{t})) dt\right\| \leq \int_0^\tau \delta dt
\]
because of the following argument.
If $\tau = 0$, then both sides are equal to zero, hence it holds with equality.
If $\tau > 0$, then for any $t < \tau$, we have $\IVP{f}{\x_0}{t} \in K'$ because $t < \tau$ implies $\targetError{t} \leq 2 \Bound{}$.
In this case, $\supRangenorm{K'}{F - f} < \delta$ implies the inequality.
Therefore, we have
\[
\targetError{\tau} \leq \LipConst{F}\int_0^\tau \targetError{t} dt + \int_0^\tau \delta dt.
\]
Now, by applying \Gronwall{}'s inequality \cite{GronwallNote1919}, we obtain
\[
\targetError{\tau} \leq \delta \tau e^{\LipConst{F} \tau} \leq \Bound{}.
\]
On the other hand, by the definition of $\mathcal{T}$ and the continuity of $\targetError{\cdot}$, we have $\targetError{\tau} \geq 2 \Bound{}$.
These two inequalities contradict.

Therefore, $\supKnorm{\IVP{F}{\cdot}{1} - \IVP{f}{\cdot}{1}} = \sup_{\x_0 \in K} \targetError{1} \leq 2 \Bound{} = 2\BoundDef{}$ holds.
Since $\delta < \varepsilon / (2e^{\LipConst{F}})$, the right-hand side is smaller than $\varepsilon$.
\end{proof}

When we construct a NODE to approximate target a diffeomorphism, we may insert any invertible affine map between flow layers by definition (see Definition \ref{def: INNM}).
However, we actually need an affine layer only in the last layer to obtain a universality of NODE, 
namely we have the following proposition:
\begin{proposition}
\label{prop: strong ver. of NODE universality}
The notation is as in Theorem \ref{thm: NODE is sup-universal}.
Then, the subset 
\[\{ W\circ g_1 \circ \dots \circ g_k : k\ge0, W \in \FLin, g_1,\dots, g_k \in \Psi(\mathcal{H}) \}  \]
of $\INNHNODE$ has a $W^{r,\infty}$-universal approximation property for $\mathcal{D}^{\max\{r,1\}}$, where $\mathcal{H}$ is a subset of $\Lipsp{} \cap C^r$ as in Theorem \ref{thm: NODE is sup-universal}.
\end{proposition}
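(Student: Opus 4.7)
The strategy is to revisit the proof of Theorem~\ref{thm: NODE is sup-universal} and track exactly where affine layers are introduced, observing that in our framework the only affine layer needed arises from the single reduction step in Lemma~\ref{red to comp. supp. diff}. Given $f \in \DrV{\max\{r,1\}}{U}$, a compact $K \subset U$, and $\varepsilon > 0$, I would first use Lemma~\ref{red Cr to Cinf} to assume $f \in \mathcal{D}^\infty_U$, and then apply Lemma~\ref{red to comp. supp. diff} to obtain $W \in \FLin$ and $h \in \DcRDCmd{\infty}$ with $f|_K = (W \circ h)|_K$. Crucially, the decomposition produces just one outer affine factor $W$, placed on the outside exactly as in the statement of the proposition.

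Next, by Lemma~\ref{lem: diffc2 is generated by flow endpoints}, decompose $h = g_1 \circ \cdots \circ g_m$ with each $g_i \in \FlowEnds{\infty}$. For each $g_i$, extract the associated $C^\infty$ flow $\Phi_i$ from Definition~\ref{def: flow endpoints} and set $f_i(x) := \partial_t \Phi_i(x, t)|_{t=0}$. The group property $\Phi_i(x, s+t) = \Phi_i(\Phi_i(x,s), t)$ gives $\partial_t \Phi_i(x, t) = f_i(\Phi_i(x, t))$, so $g_i(x) = \Phi_i(x, 1) = \IVP{f_i}{x}{1}$. Since $\supp\Phi_i(\cdot, t) \subset K_{\Phi_i}$ uniformly in $t$, the field $f_i$ is compactly supported and $C^\infty$, hence lies in $\Lipsp \cap C^r$. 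Therefore $g_i \in \ODEFlowEnds{\Lipsp \cap C^r}$, and Lemma~\ref{appendix:lem:ODE flow endpoint approximation} produces $\tilde g_i \in \ODEFlowEnds{\mathcal{H}} = \Psi(\mathcal{H})$ approximating $g_i$ in $W^{r,\infty}_{\mathrm{loc}}$ to arbitrary precision.

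The final step is to assemble the approximators: show that $W \circ \tilde g_1 \circ \cdots \circ \tilde g_m$ approximates $W \circ g_1 \circ \cdots \circ g_m = f|_K$ in $\WspKnorm{r}{\infty}{K}{\cdot}$. This is an application of Corollary~\ref{cor: compatibility of approximation, higher derivatives} in its $p = \infty$ version. The resulting approximator has exactly the shape $W \circ \tilde g_1 \circ \cdots \circ \tilde g_m$ required by the proposition.

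The main obstacle I anticipate is verifying the regularity hypotheses of Corollary~\ref{cor: compatibility of approximation, higher derivatives} for the $\tilde g_i$. Elements of $\Psi(\mathcal{H})$ are $C^r$-diffeomorphisms on $\ReD$ (hence locally $C^{r-1,1}$ and mapping null sets to null sets), since $\mathcal{H} \subset \Lipsp \cap C^r$; this handles condition~(2) of the corollary. A small check is also needed that the $\tilde g_i$ and their inverses are linearly increasing so that we may transfer local approximations across compositions: this follows from Gr\"onwall's inequality applied to the autonomous ODE $\dot z = \tilde f_i(z)$ with $\tilde f_i$ Lipschitz, which yields $\|\IVP{\tilde f_i}{x}{t}\| \leq e^{L|t|}\|x\| + C$ for some constants, and symmetrically for the time-reversed flow giving the inverse. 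With these verifications in place, the composition of approximations converges to the composition of target mappings in $W^{r,\infty}_{\mathrm{loc}}$, concluding the proof.
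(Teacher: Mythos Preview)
Your proposal is correct and follows essentially the same route as the paper's own proof: reduce to $\DcRDCmd{\infty}$ via Lemmas~\ref{red Cr to Cinf} and~\ref{red to comp. supp. diff} (producing the single outer affine factor $W$), decompose into flow endpoints via Lemma~\ref{lem: diffc2 is generated by flow endpoints}, realize each flow endpoint as $\IVP{f_i}{\cdot}{1}$ for a compactly supported smooth vector field, approximate each such factor using Lemma~\ref{appendix:lem:ODE flow endpoint approximation}, and assemble via the compatibility of composition with approximation. One minor remark: your discussion of linear growth of the $\tilde g_i$ via \Gronwall{} is not needed in the $p=\infty$ branch of Corollary~\ref{cor: compatibility of approximation, higher derivatives}, which only asks that the approximants be locally $C^{r-1,1}$ with null-set-preserving inverse images and that the targets $g_i$ be $C^r$; both are immediate here.
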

\begin{proof}
Let $F \in \mathcal{D}^{\max\{r,1\}}$.
Take any compact set $K\subset U$ and $\varepsilon>0$.
First, thanks to Lemma~\ref{red Cr to Cinf} and \ref{red to comp. supp. diff}, there exists a $G \in \DcRDCmd{\infty}$ and an affine transform $W \in \FLin$ such that \[\Restrict{W\circ G}{K}=\Restrict{F}{K}.\]
Then, we use Lemma~\ref{lem: diffc2 is generated by flow endpoints} to show that there exists a finite set of flow endpoints (Definition~\ref{def: flow endpoints}) $g_1, \ldots, g_k \in \FlowEnds{\infty}$ such that
\[
G = g_k \circ \cdots \circ g_1.
\]

We now construct $f_j \in \Lipsp{}$ such that $g_j = \IVP{f_j}{\cdot}{1}$.
By Definition~\ref{def: flow endpoints}, for each $g_j$ ($1 \leq j \leq k$), there exists an associated flow $\Phi_j$.
Now, define 
\[f_j(\cdot):=\left.\frac{\partial  \Phi_j(\cdot,t)}{\partial t}\right|_{t=0}.
\]
Then, $f_j \in \Lipsp{}$ because it is a compactly-supported $C^\infty$-map:
it is compactly supported since there exists a compact subset $K_j \subset \R^d$ containing the support of $\Phi(\cdot, t)$ for all $t$, and hence $\Phi(\cdot, t) - \Phi(\cdot, 0)$ is zero in the complement of $K_j$.

Now, $\Phi_j(\x, t) = \IVP{f_j}{\x}{t}$
since, by additivity of the flows,
\begin{align*}
    \frac{\partial \Phi_j}{\partial t}(\x,t)&=\lim_{s\rightarrow 0}\frac{\Phi_j(\x,t+s)-\Phi_j(\x,t)}{s}
    = \lim_{s\rightarrow 0}\frac{\Phi_j(\Phi_j(\x,t),s)-\Phi_j(\Phi_j(\x,t), 0)}{s}\\
    &= \left.\frac{\partial  \Phi_j(\Phi_j(\x,t),s)}{\partial s}\right|_{s=0}
    = f_j(\Phi_j(\x,t)),
\end{align*}
and hence it is a solution to the initial value problem that is unique.
As a result, we have $g_j = \Phi_j(\cdot, 1) = \IVP{f_j}{\cdot}{1}$.

By combining Lemma~\ref{prop: compatibility of approximation} and Lemma~\ref{appendix:lem:ODE flow endpoint approximation}, there exist $\phi_1, \ldots, \phi_k \in \Psi(\NODEJacobiFuncClass{})$ such that
\[
\WspKnorm{K}{r}{\infty}{g_k \circ \cdots \circ g_1 - \phi_k \circ \cdots \circ \phi_1} < \frac{\varepsilon}{\opnorm{W}},
\]
where $\opnorm{\cdot}$ denotes the operator norm.
Therefore, we have that $W \circ \phi_k \circ \cdots \circ \phi_1 \in \INNHNODE$ satisfies \begin{align*}
\WspKnorm{K}{r}{\infty}{F - W \circ \phi_k \circ \cdots \circ \phi_1}
&= \WspKnorm{K}{r}{\infty}{W \circ G - W \circ \phi_k \circ \cdots \circ \phi_1} \\
&\leq \opnorm{W}\WspKnorm{K}{r}{\infty}{g_k \circ \cdots \circ g_1 - \phi_k \circ \cdots \circ \phi_1} \\
&< \varepsilon
\end{align*}
\end{proof}

\end{document}